\pdfoutput=1
\documentclass[11pt]{article}
\usepackage[OT1]{fontenc}
\usepackage{natbib}
\usepackage{smile}
\usepackage[margin=1in]{geometry}
\usepackage{fullpage}
\usepackage[normalem]{ulem}
\def\t{{(t)}}
\def\T{{(T)}}
\newcommand{\noiseop}[1]{{\mathrm{U}_{#1}}}

\newcommand{\LQeq}{ \!\!\!\!\overset{L^2(\QQ)}{=}}

\newcommand{\Sym}{\mathtt{Sym}}
\newcommand{\VSTAT}{\mathrm{VSTAT}}
\newcommand{\SDA}{\mathrm{SDA}}

\newcommand{\hypergeom}{\mathrm{Hypergeometric}}
\newcommand{\binomial}{\mathrm{Binomial}}

\newcommand{\SNR}{\mathrm{SNR}}

\newcommand{\err}{\mathrm{err}}

\title{Can Neural Networks Achieve Optimal Computational-statistical Tradeoff? An Analysis on Single-Index Model}
\author{Siyu Chen$^1$\footnote{Equal contribution} \quad Beining Wu$^{2\,*}$ \quad Miao Lu$^3$ \quad Zhuoran Yang$^1$ \quad Tianhao Wang$^4$
\and
{\small\textit{$^1$Department of Statistics and Data Science, Yale University}}
\and
{\small\textit{$^2$Department of Statistics, University of Chicago}}
\and
{\small\textit{$^3$Department of Management Science and Engineering, Stanford University}}
\and
{\small\textit{$^4$Toyota Technological Institute at Chicago}}
\and
{
    \small\texttt{\{siyu.chen.sc3226, zhuoran.yang\}@yale.edu}
    \quad \texttt{beiningw@uchicago.edu}
}
\and
{
  \small\texttt{miaolu@stanford.edu} \qquad\texttt{tianhao.wang@ttic.edu}
}
}
\date{}

\begin{document}

\maketitle

\begin{abstract}
In this work, we tackle the following question: Can neural networks trained with gradient-based methods achieve the optimal computational-statistical tradeoff in learning Gaussian single-index models?
Prior research has shown that any polynomial-time algorithm under the statistical query (SQ) framework requires $\Omega(d^{s^\star/2}\lor d)$ samples, where $s^\star$ is the generative exponent representing the intrinsic difficulty of learning the underlying model.
However, it remains unknown whether neural networks can achieve this sample complexity.
Inspired by prior techniques such as label transformation and landscape smoothing for learning single-index models, we propose a unified gradient-based algorithm for training a two-layer neural network in polynomial time.
Our method is adaptable to a variety of loss and activation functions, covering a broad class of existing approaches.
We show that our algorithm learns a feature representation that strongly aligns with the unknown signal $\theta^\star$, with sample complexity $\tilde O (d^{s^\star/2} \lor d)$, matching the SQ lower bound up to a polylogarithmic factor for all generative exponents $s^\star\geq 1$.
Furthermore, we extend our approach to the setting where $\theta^\star$ is $k$-sparse for $k = o(\sqrt{d})$ by introducing a novel weight perturbation technique that leverages the sparsity structure.
We derive a corresponding SQ lower bound
of order $\tilde\Omega(k^{s^\star})$, matched by our method up to a polylogarithmic factor.
Our framework, especially the weight perturbation technique, is of independent interest, and suggests potential gradient-based solutions to other problems such as sparse tensor PCA.
\end{abstract}

\section{Introduction}\label{sec:intro}
The success of neural networks is largely attributed to their remarkable ability to learn rich and useful features from data during gradient-based training \citep{girshick2014rich}. This feature-learning capability allows them to outperform traditional methods like kernel-based approaches, which rely on predefined features \citep{allen2019can,ghorbani2019limitations,refinetti2021classifying}.
However, when trained using (stochastic) gradient descent, neural networks can sometimes fall into a ``kernel regime'', where their behavior resembles that of a fixed kernel method, constrained by their random initialization \citep{jacot2018neural,chizat2019lazy}. In this regime, the ability of the network to learn complex representations is severely limited, undermining the primary advantage of deep learning.
Therefore, it is crucial to understand when and how neural networks trained with gradient-based methods can perform effective feature learning to unlock their full potential, particularly in scenarios where a balance between computational efficiency and statistical performance is essential.

In this work, we approach this question in the context of Gaussian single-index models, a canonical class of problems in statistics and learning~\citep{maccullagh1989generalized,ichimura1993semiparametric,hristache2001direct,hardle2004nonparametric}.
The model is defined as follows: for covariates $z \sim \cN(0, I_d)$, the output $y$ depends on the inner product $\langle \theta^\star, z \rangle$ with an unknown signal $\theta^\star \in \RR^d$ through a link distribution $p$, i.e., $y \sim p(\cdot \mid \langle \theta^\star, z \rangle)$. The goal is to recover $\theta^\star$ using i.i.d. samples $(z_1, y_1), \ldots, (z_n, y_n)$ generated by the underlying model.
While $n = \Omega(d)$ samples suffice to recover $\theta^\star$ information-theoretically \citep{bach2017breaking,damian2024computational}, achieving this efficiently is difficult for polynomial-time algorithms, where the required sample size also depends on properties of the link distribution $p$, creating a computational-statistical gap.
For example, when $y$ is a polynomial of $\langle \theta^\star, z\rangle$, it has been shown that two-layer neural networks with square loss need $d^{\Theta(q^\star)}$ samples \citep{arous2021online,bietti2022learning,damian2024smoothing}, where $q^\star$ is the information exponent of the polynomial link function \citep{arous2021online,dudeja2018learning}.
Such sample complexity is indeed inevitable under the correlational statistical query (CSQ) framework, leading to a computational-statistical gap for $q^\star\geq 2$.

However, the CSQ framework does not capture the fundamental limits of all gradient-based algorithms. 
Recent works have shown that by leveraging higher-order terms in the gradient, neural networks can learn polynomials with as few as $\tilde O(d)$ samples \citep{lee2024neural,arnaboldi2024repetita}.
It turns out that the intrinsic learning difficulty is captured by another quantity called the \emph{generative exponent} $s^\star$, which is at most 2 for polynomial link functions, and the corresponding SQ lower bound on the sample complexity is $n=\Omega(d^{s^\star/2})$\footnote{
    This $\Omega(d^{s^\star/2})$ sample complexity lower bound is essentially for the detection problem. \citet{dudeja2021statistical} show that there is an estimation-detection gap for tensor PCA under the SQ framework, though it is unclear whether such gap exists universally.
    Throughout the paper, we always refer to the SQ lower bound as the detection lower bound, since detection in general is assumed to be easier than estimation.
} \citep{damian2024computational}.
Thus, there is no computational-statistical gap up to polylogarithmic factors for learning polynomial single-index models.
However, for general single-index models with $s^\star \geq 3$, no gradient-based algorithm for neural networks has been shown to match the SQ lower bound, leaving it an open problem~\citep{arnaboldi2024repetita, lee2024neural}.
Notably, \citet{lee2024neural} established an upper bound of $n=\tilde O(d^{(s^\star-1)\vee 1})$ for the sample complexity of gradient-based training of a two-layer neural network, which still leaves a gap of $d^{s^\star/2-1}$ for general $s^\star \geq 3$.

Furthermore, learning the Gaussian single-index model can benefit from additional structures in the signal $\theta^\star$, such as sparsity, which can significantly reduce the sample complexity compared to those depending on the ambient dimension $d$ \citep{candes2006robust, donoho2009message,raskutti2012minimax}.
Recent work by \citet{vural2024pruning} examines pruning for learning sparse features, establishing sparsity-aware correlational statistical query (CSQ) lower bounds and showing that pruned neural networks trained by gradient descent can match these bounds under their sparsity-threshold condition.
For sparse single-index models with information exponent $q^\star = 1$, gradient descent on diagonal linear networks nearly achieves the information-theoretic lower bound thanks to its implicit regularization effect \citep{fan2023understanding}.
Nonetheless, how to achieve the optimal sample complexity for general $s^\star\geq 1$ is also unknown under the sparse setting.

\begin{figure}[t]
    \centering
    \begin{tabular}{cc}
        \includegraphics[width=0.47\textwidth]{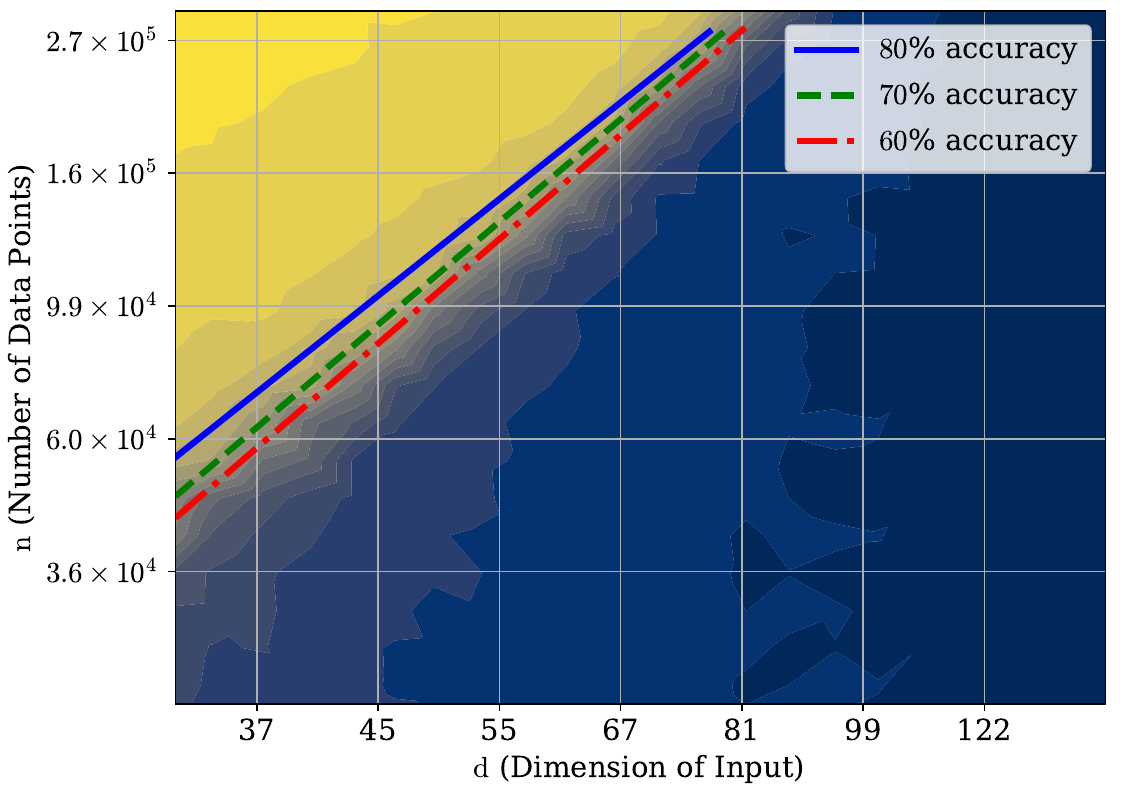} &
        \includegraphics[width=0.47\textwidth]{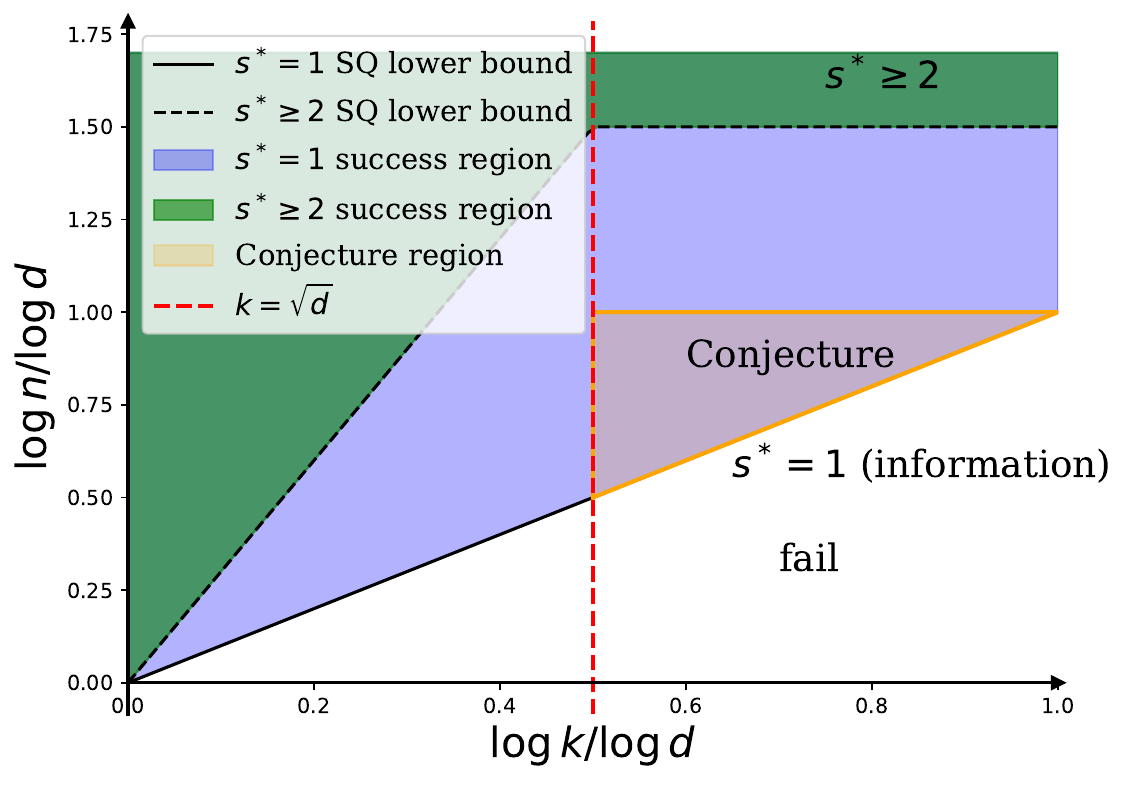}\\
        (a) Scatter plot of $(d, n)$ & (b) $(n, k)$ paradigm\\
    \end{tabular}
    \caption{(a) The contour plots of $(\log d, \log n, \texttt{acc}(d, n))$ for \Cref{alg:meta} under model $y = \langle z,\theta^\star\rangle^2 \exp(- \langle z,\theta^\star\rangle^2)$, which has generative exponent $s^\star=4$ (\Cref{exp:sgm}). 
    Here $\texttt{acc}(d, n)$ is the average of the largest 8 values of the alignment between the neuron weights and the unknown signal $\theta^\star$. 
    The slopes of these contour lines are all close to $2$, indicating a sample complexity $n \approx d^{2}$ for $s^\star = 4$. 
    (b)~The paradigm of sample complexity achieved by our algorithm for different generative exponents $s^\star$ and sparsity levels $k$, illustrating the success of achieving a nearly optimal computational-statistical tradeoff.
    }
    \label{fig:intro}
\end{figure}

\subsection{Contributions}
Towards characterizing the fundamental feature learning capability of neural networks in the Gaussian single-index model, our main result is summarized in the following theorem.
\begin{theorem}[Informal]
For the Gaussian single-index model with an unknown true signal $\theta^\star\in\RR^d$ and an unknown link distribution $p$ with any generative exponent $s^\star\geq 1$, there is a gradient-based algorithm that can train the weights of a two-layer neural network to achieve strong recovery of the true signal with $\tilde O(d^{s^\star/2} \lor d)$ samples and polynomial running time.
Moreover, if the true signal $\theta^\star$ is $k$-sparse for $k=o(\sqrt d)$, only $\tilde O(k^{s^\star})$ samples are required.
In both cases, the sample complexity matches the SQ lower bound up to a polylogarithmic factor.
\end{theorem}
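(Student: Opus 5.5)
The plan has three ingredients. First, a label transformation reduces the generative exponent to the information exponent. Second, landscape smoothing with minibatch-reused online SGD on the sphere matches the $d^{s^\star/2}$ scaling, in the spirit of the smoothing analysis for tensor PCA and single-index models. Third, for the sparse case, a weight perturbation replaces the spherical smoothing by smoothing adapted to the $k$-sparse structure.

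\textbf{Step 1 (label transformation).} By definition of the generative exponent $s^\star$, there is a transformation $\mathcal T:\RR\to\RR$ with $\EE[\mathcal T(y) \mathrm{He}_{s^\star}(\langle\theta^\star,z\rangle)]\neq 0$ and vanishing lower-order Hermite correlations. It can be taken bounded, or truncated at polylog cost. I would realize $\mathcal T$ through the network's second layer or through a family of activations/losses, and show that the population gradient of the resulting correlation loss $L(\theta)=-\EE[\mathcal T(y)\sigma(\langle\theta,z\rangle)]$ on the sphere has the form $\phi(m)=c\, m^{s^\star-1}+O(m^{s^\star})$, where $m=\langle\theta,\theta^\star\rangle$. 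Since $\mathcal T$ is unknown, I would handle it by training many neurons with randomly drawn activation/transformation pairs (e.g.\ random Hermite combinations), so that a constant fraction of neurons have $c>0$. This is where the unified algorithm covers the various loss choices.

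\textbf{Step 2 (smoothing and dense sample complexity).} I would replace $L$ by the smoothed loss $L_\lambda(\theta)=\EE_{v\sim\mathrm{Unif}(S^{d-1})}L\bigl((\theta+\lambda v)/\|\theta+\lambda v\|\bigr)$ with $\lambda=d^{1/4}$. Following the smoothing lemma, for $m\lesssim d^{-1/4}$ the effective signal is $\gtrsim d^{-(s^\star-1)/2}\,m$ up to constants, i.e.\ the degree-$s^\star$ Hermite term contributes a linear-in-$m$ drift. At the same time, the variance of the smoothed gradient is reduced to $\tilde O(d\cdot d^{-(s^\star-2)/2})$-type bounds. Starting from random initialization with $m_0\asymp d^{-1/2}$, I would track the martingale decomposition of $m_t$ under online SGD with step size $\eta$. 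In the weak-recovery phase ($m$ from $d^{-1/2}$ to $d^{-1/4}$), the drift-versus-noise comparison gives $T=\tilde O(d^{s^\star/2})$. In the next phase I would anneal $\lambda\to0$, which costs $\tilde O(d)$ further samples until $m\geq 1-\epsilon$. Together these give $\tilde O(d^{s^\star/2}\vee d)$ samples; for $s^\star\le2$ no smoothing is needed and the standard information-exponent analysis gives $\tilde O(d)$. Each step is a polynomial-cost Monte Carlo estimate, so the running time is polynomial. I would then fit the second layer by ridge regression on the learned features.

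\textbf{Step 3 (sparse case).} For $k$-sparse $\theta^\star$, I would perturb the weight as $\theta+\lambda v$ with $v$ a random $k$-sparse direction, or equivalently a coordinate-wise perturbation combined with soft-thresholding projection onto approximately sparse vectors. The aim is to get the analogue of Step 2 with $d$ replaced by $k$ in both drift and variance. The initial overlap $m_0\asymp k^{-1/2}$ can be obtained by thresholding a first crude estimate, or by random sparse initialization over many neurons. Rerunning the phase analysis then gives $\tilde O(k^{s^\star})$; the condition $k=o(\sqrt d)$ is what makes the $\log d$ support-identification noise subdominant. For the matching lower bound, I would do a standard SQ pairwise-correlation computation over the family of $k$-sparse signals: correlations scale like $\langle\theta,\theta'\rangle^{s^\star}$, and random overlaps are of order $k^{-1}$ (rather than $d^{-1/2}$), which gives $\tilde\Omega(k^{s^\star})$.

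\textbf{Main obstacle.} I expect the hardest part to be the concentration of the smoothed-gradient noise in Step 2. The argument needs the per-sample gradient variance of $L_\lambda$ to be small enough that the drift beats the noise in $d^{s^\star/2}$ rather than $d^{s^\star-1}$ steps. My first attempt would be to compute the smoothed gradient exactly in the Hermite basis via the spherical smoothing operator, bound its higher moments using hypercontractivity, and apply a Freedman-type inequality to the stopped process $m_t$. The sparse analogue of this step, controlling off-support coordinates, is the second main difficulty.
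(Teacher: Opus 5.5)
\textbf{Sparse case: this is a genuine gap.} Your Step~3 contains no mechanism that produces $\tilde O(k^{s^\star})$, and its guiding heuristic points the wrong way. If Step~2 went through with $d$ replaced by $k$ in both drift and variance, you would get $\tilde O(k^{s^\star/2}\vee k)$. For $s^\star\ge2$ that is below the $\tilde\Omega(k^{s^\star})$ SQ bound you sketch in the same paragraph. The gradient noise is not $k$-dimensional: it lives on all $d$ coordinates, and the support is unknown. What plays the role of $d^{-1/2}$ is the typical overlap of sparse exploration directions, which is $k^{-1}$, so the right correspondence is $d^{1/2}\leftrightarrow k$.

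The paper's rate comes from ingredients your sketch lacks.
(i) $M=d$ neurons are initialized at $\theta_m^{(0)}=e_m$, so neurons with $m\in\phi^\star$ start at overlap $|\theta^\star_m|\asymp k^{-1/2}$. A random $k$-sparse initialization has overlap only $\approx k^{-1}$, and only on the event (probability $\approx k^2/d$) that its support meets $\phi^\star$. So ``random sparse initialization'' gives $m_0\asymp k^{-1/2}$ only if it means coordinate vectors. ``Thresholding a crude estimate'' presupposes the very support-identification step you are trying to build.
(ii) Exploration supports are drawn from $\mathcal S_{k,m}$, i.e.\ always containing the neuron's own coordinate, with $\gamma=k^{-1/2}$ balancing $\gamma^2$ against the exploration overlap $k^{-1}$. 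The paper explicitly contrasts this with drawing the perturbation from the prior, which is what your random $k$-sparse $v$ is. Combined with (i), the good neurons start at the critical level $\gamma|\rho|\approx k^{-1}$, which is why only $T=2$ steps are needed rather than a multi-phase dynamic.
(iii) One large batch, $n\approx k^{s^\star}\log d$, makes every coordinate of the aggregated gradient concentrate, via a union bound over $[d]$, at scale $\approx k^{-(s^\star-1)/2}\sqrt{\log d/n}$. This is below the on-support signal $\approx k^{-(s^\star-1)}|\theta^\star_j|$.
(iv) A $\mathsf{Top}_k$ projection captures $\{j:|\theta^\star_j|\ge(2k)^{-1/2}\}$ and cuts the retained noise from $\sqrt d$-scale to $\sqrt k$-scale.
(v) The algorithm selects the neuron with the largest projected gradient, which is how it finds a neuron with $|\theta^\star_{\widehat m}|\gtrsim k^{-1/2}$ without knowing $\phi^\star$.

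None of this can be done by per-step soft-thresholding in online SGD, because a single-sample gradient has per-coordinate SNR $\approx k^{-s^\star}$. You would need batching or a genuinely new analysis of thresholded SGD. Also, $k=o(\sqrt d)$ is not what makes the $\log d$ term subdominant; that term stays in $n$. Its role is to make $k^2/d=o(1)$, so accidental overlaps of exploration supports with $\phi^\star$ are negligible.

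\textbf{Dense case: a different route that needs two fixes.} Label transformation plus smoothed online SGD with $\lambda=d^{1/4}$ and later annealing, as in the landscape-smoothing analysis of Damian, Nichani, Ge and Lee, is a legitimate alternative to the paper's scheme. The paper instead keeps $\gamma=d^{-1/4}(\log d)^{1/4}$ fixed and realizes the smoothing with $L=\tilde\Theta(d^{(s^\star+1)/2})$ explicit perturbed replicas. It takes only $O(\log d)$ normalized steps ($\eta>2$) on fresh batches of size $\tilde\Theta(d^{s^\star/2}\vee d)$. Each step is then controlled by a one-shot concentration bound in every direction, rather than a martingale argument over $\tilde\Theta(d^{s^\star/2})$ small steps. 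The price is that the paper reaches only alignment $1-O(\sqrt\Delta)$, while annealing can in principle go further.

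The first fix: the variance reduction requires the activation to have no Hermite mass in degrees $1,\dots,s^\star-1$. This is the paper's high-pass condition on $\psi$, realized by $\sigma\propto h_{s^\star}$. Such components do not change the population drift, since $\mathbb{E}_{\mathbb{Q}}[\mathcal T(y)\zeta_s(y)]=0$ for $s<s^\star$. But a degree-$s$ component adds per-sample gradient noise of squared norm $\approx d\lambda^{-2}(\lambda^{-2}+d^{-1/2})^{s-1}\approx d^{1-s/2}$, which swamps the $d^{1-s^\star/2}$ you need. So randomize the label transformation, not the low-degree part of the activation. The paper uses $\ell'(y)=\sum_{i\le D}\alpha_i\varphi_i(F_{\mathbb{Q}_y}(y))$ together with a non-degeneracy assumption on $\zeta_{s^\star}$.

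The second fix: in your normalization (drift $\approx d^{-(s^\star-1)/2}m$ for the gradient with respect to $\theta$), the per-sample variance must be $\tilde O(d^{1-s^\star/2})$. The bound $d\cdot d^{-(s^\star-2)/2}$ you wrote would give $d^{s^\star/2+1}$ samples. Separately, switching smoothing off at $m\approx d^{-1/4}$ costs $\tilde O(d\,m^{-2(s^\star-2)})=\tilde O(d^{s^\star/2})$ further samples, not $\tilde O(d)$. This does not change the total, but your accounting should reflect it.
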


Our contributions are summarized as follows:
\begin{enumerate}
\item We propose a unified recipe of gradient-based algorithms for training a two-layer neural network to learn the Gaussian single-index model.
Our method integrates a general gradient oracle with a weight perturbation technique, carefully designed to exploit the underlying structure of the Gaussian single-index model.
This allows the neural network to perform feature learning of the unknown signal $\theta^\star$ in a computationally efficient manner.
Our framework encompasses many existing approaches as special cases, such as batch reusing \citep{dandi2024benefits,lee2024neural}, label transformation \citep{chen2020learning}, and landscape smoothing \citep{damian2024smoothing}.

\item We show that for an \emph{unknown link distribution} $p$ with \emph{any} generative exponent $s^\star\geq 1$, the weights of the neural network achieve strong recovery of the true signal $\theta^\star$ after training by our algorithm using $\tilde O(d^{s^\star/2} \lor d)$ samples and polynomial running time. 
Our method achieves the SQ lower bound up to a polylogarithmic factor, and is the first gradient-based algorithm for training two-layer neural networks that attains the nearly optimal computational-statistical tradeoff for Gaussian single-index models with any $s^\star\geq 1$.
\Cref{fig:intro} (a) illustrates an example for $s^\star=4$.

\item Furthermore, our method is able to take advantage of additional structural information of the true signal $\theta^\star$.
Specifically, we consider the case where $\theta^\star$ is $k$-sparse for $k= o(\sqrt d)$, and develop a \emph{novel weight perturbation procedure} tailored to the sparsity of $\theta^\star$.
Equipped with this, we show that the weights of the neural network can achieve strong recovery of the sparse signal $\theta^\star$ after training with $\tilde O(k^{s^\star})$ samples in polynomial time for any generative exponent $s^\star\geq 1$.
This sample complexity is also nearly optimal according to the sample complexity lower bound we establish for SQ algorithms, which might be of independent interest.
Also, our method suggests a new approach to achieving the optimal computational-statistical tradeoff for sparse tensor PCA.
\end{enumerate}

In summary, our work provides a unified framework for training neural networks that can achieve the nearly optimal computational-statistical tradeoff for the Gaussian single-index model with any generative exponent $s^\star\geq 1$.
Our method not only tackles the intrinsic difficulty of learning the true signal $\theta^\star$ posed by the link distribution $p$, but also leverages the additional structural information of the true signal $\theta^\star$ that benefits the learning process.
Integrating these results, our method attains a nearly optimal balance between computational efficiency and statistical performance across almost all regimes of sparsity levels and generative exponents $s^\star\geq 1$, as illustrated in \Cref{fig:intro}~(b).

\subsection{Related Works.}
Our work contributes to the recent research on the computational-statistical tradeoff in learning single-index models. The information-theoretic limit for estimating the latent signal is \(n = \Omega(d)\) \citep{bach2017breaking, damian2024computational}, but the sample complexity lower bound varies across computational models, potentially revealing a computational-statistical gap.

The information exponent \(q^\star\) \citep{dudeja2018learning, arous2021online} governs the sample complexity for learning Gaussian single-index models in the CSQ framework \citep{chen2020towards, bietti2022learning, damian2022neural, dandi2023two, abbe2023sgd, ba2024learning}. Notably, \citet{arous2021online} show that online SGD has a sample complexity of \(n = \tilde{O}(d^{q^\star-1})\), which is worse than the CSQ lower bound \(n = \Omega(d^{q^\star/2})\) \citep{abbe2023sgd,damian2022neural}. 
This gap can be closed by a loss landscape smoothing technique \citep{damian2024smoothing}  originally developed for tensor PCA
\citep{anandkumar2017homotopy,biroli2020iron}.

Our work extends beyond the CSQ framework, aligning with more general SQ algorithms \citep{feldman2017statistical, feldman2017general}, where the sample complexity lower bound is \(\Omega(d^{s^\star/2})\), with \(s^\star\) as the generative exponent \citep{damian2024computational}. 
In this context, online SGD with batch reusing suffices for learning polynomial link functions \citep{dandi2024benefits, lee2024neural}, while for \(s^\star \geq 3\), only the partial trace estimator proposed by \citet{damian2024computational} can match the SQ lower bound.

In the sparse setting, including sparse linear models \citep{vaskevicius2019implicit, zhao2022high, gamarnik2017sparse}, sparse PCA \citep{arous2020free}, and planted models \citep{bandeira2022franz}, computational-statistical gaps also exist.
Related to our work, \citet{fan2023understanding} provide a $\tilde O(k)$ sample complexity for learning single index models with $q^\star=1$ using diagonal linear networks, and \citet{neykov2016agnostic} report a $\tilde O(k^2)$ result for phase retrieval where $q^\star = 2$. 
However, as previously noted, the information exponent does not fully characterize the intrinsic computational-statistical tradeoff. 
Our work completes the picture by providing a gradient-based framework that simultaneously handles all sparsity levels and any generative exponent $s^\star\geq 1$.

\section{Problem Setup}\label{sec:setup}
We begin by introducing the notation used in the paper, and then describe the problem setup.
For a probability distribution $\PP$, we denote by $L^2(\PP)$ the space of square-integrable functions with respect to $\PP$, and $\overset{L^2(\PP)}{=}$ means equality in $L^2(\PP)$.
We denote the normalized probabilist's Hermite polynomials by \( \{h_s(\cdot)\}_{s\ge 0} \), where each $ h_s(x) \defeq \frac{(-1)^s}{\sqrt{s!}} \cdot  e^{x^2/2} \cdot \frac{\rd^s}{\rd x^s} e^{-x^2/2}$.
These polynomials form an orthonormal basis for 
\( L^2(\cN(0, 1)) \), i.e., the space of square-integrable functions under the Gaussian measure.

\paragraph{Gaussian single-index model}
We study the following Gaussian single-index model:
The environment first samples an unobservable signal ${\theta^\star} \sim\pi$ from some known prior $\pi\in\sP(\SSS^{d-1})$.
Then i.i.d. data $(z_1, y_1),\ldots,(z_n,y_n)\in\RR^d\times\RR$ are generated according to the following distribution $\PP_{\theta^\star}$ given $\theta^\star$:
\begin{align}
    \PP_{\theta^\star}: \quad z \sim \cN(0, I_d), \quad y \sim p(\cdot\given \langle {\theta^\star}, z \rangle ). 
    \label{eq:single_index_model}
\end{align}
Here $p(\cdot\given \cdot):\RR\mapsto\sP(\RR)$ is referred to as the \emph{link distribution}.
A canonical example is the additive model where $y=\phi(\langle\theta^\star,z\rangle)+\epsilon$ for some deterministic link function $\phi:\RR\to\RR$ and random noise $\epsilon$.
See \citet{damian2024computational} for more complicated examples.

\paragraph{Generative exponent}
The following discussion on the generative exponent is based on the work of \citet{damian2024computational}.
We aim to learn \eqref{eq:single_index_model} where the link distribution $p$ has \emph{generative exponent} $s^\star\geq 1$, a measure of the computational-statistical gap for learning single-index models.
We let $x = \langle {\theta^\star}, z\rangle$. 
Notice that $\PP_{\theta^\star}(y,z)= \PP(y,x) \cdot \cN(z^\perp;0,I_{d-1})$ where we use $\PP$ to denote the joint distribution of $(x,y)$ as this joint distribution is independent of ${\theta^\star}$.
As the marginal distribution of $y$ is also independent of ${\theta^\star}$, we define the \emph{null distribution} $\QQ(y,z) \defeq \cN(z;0,I_d)\otimes\QQ(y)$ and denote $\QQ(y,x)\defeq \cN(x;0,1)\otimes\QQ(y)$ where $\QQ(y) = \int_\RR \PP(y,x) \rd x$.
It can be shown that under a square-integrable condition under $\mathbb{Q}$, the likelihood ratio admits a Hermite expansion with coefficient functions $\{\zeta_s(y)\}_{s\geq 1}$, i.e.,
\begin{align}
    \frac{\PP_{\theta^\star}(y,z)}{\QQ(y,z)} = \frac{\PP(y,x)}{\QQ(y,x)} \overset{L^2(\QQ)}{=} \sum_{s=0}^\infty \zeta_s(y) \cdot h_s(x),\quad 
    \text{where }\zeta_s(y) = \mathbb{E}_{\mathbb{P}}[h_s(x)|y], 
    \label{eq:likelihood-ratio-decomp}
\end{align}
and $\EE_\QQ[\zeta_s(y)^2] \le 1$ for all $s\geq1$.
Note that \eqref{eq:likelihood-ratio-decomp} makes sense only when we are working with the inner product of $\PP/\QQ$ and a square-integrable function under the null distribution $\QQ$.
\begin{definition}[Generative exponent]
    \label{def:generative-exponent}
For the Gaussian single-index model defined in \eqref{eq:single_index_model}, the generative exponent $s^\star$ of the link distribution $p$ is defined as
    $s^\star(p) \defeq \min\{s \ge 1: \EE_{\QQ}[\zeta_s(y)^2] > 0 \}$.
\end{definition}

\begin{example}[Example 2.7, \citet{damian2024computational}]\label{exp:sgm}
Consider the special case of the Gaussian single-index model \eqref{eq:single_index_model} where $y=\phi(\langle\theta^\star,z\rangle)$ for a deterministic link function $\phi:\RR\to\RR$.
When $\phi$ is a polynomial function, it holds that $s^\star(\phi)\leq 2$, and the equality holds if and only if $\phi$ is an even polynomial.
In particular, $s^{\star}(h_s)=1$ for odd $s$ and $s^{\star}(h_s)=2$ for even $s$.
While for the example of $\phi(x) = x^2\exp(-x^2)$, which is not a polynomial, it has generative exponent $s^{\star}(\phi)=4$.
\end{example}

\paragraph{Two-layer neural networks}
We consider using a two-layer neural network with $M$ hidden neurons to learn the single-index model \eqref{eq:single_index_model}.
The weight vector for each neuron $m\in[M]$ is $\theta_m\in\RR^d$, and the weights of the second layer are $a_1,\ldots,a_M\in\RR$.
We collect all the weights and denote $\btheta = (\theta_1, \ldots, \theta_M)\in\RR^{d\times M}$, $\bm{a} = (a_1,\ldots,a_M)^\top\in\RR^{M}$.
Now for any input $z\in\RR^d$, the output of the network is given by
\begin{align}
    f(z; \btheta,\bm{a}) \defeq \sum_{m=1}^M a_m \cdot \sigma\big(\langle z, \theta_m\rangle\big),
\end{align}
where $\sigma:\RR\to\RR$ is the activation.

\section{Overview of techniques}\label{sec:alignment}
In this work, we apply gradient-based methods to learn Gaussian single-index models, with a focus on feature learning in neural networks and the corresponding computational-statistical tradeoff.
To motivate the techniques involved, we begin by discussing an illustrative example that highlights such tradeoffs.
For this overview, we focus on $s^\star>2$ and the uniform prior $\pi = \unif(\SSS^{d-1})$.
It has been shown that a gap exists between the information-theoretic lower bound $\Omega(d)$ and the SQ lower bound $\Omega(d^{s^\star/2})$ under this setting when $s^\star>2$ \citep{bach2017breaking,damian2024computational}.

For illustration, let us consider training a two-layer network with a single neuron under the population square loss.
When the weight of the second layer is small, the reweighted negative gradient $g$ satisfies
\begin{align}
    g = -(2a)^{-1} \nabla_{\theta} \big(f(z;\theta, a) - y\big)^2
    = - \big(a \cdot \sigma(\langle z, \theta\rangle) - y\big) \cdot \sigma'(\langle z, \theta\rangle) \cdot z \approx y \sigma'(\langle z, \theta\rangle) \cdot z.\notag
\end{align}
Taking expectation over $(z,y)\sim\PP_{\theta^\star}$ and using the likelihood ratio decomposition in \eqref{eq:likelihood-ratio-decomp}, we have
\begin{align}
    \mathbb{E}_{\mathbb{P}_{\theta^{\star}}}[g]
    &\approx \underbrace{\EE_\QQ\left[y\right] \cdot \EE_\QQ\left[\sigma'(\langle z, \theta\rangle) \cdot z\right]}_{\dr bias} + \sum_{s\ge s^\star} \underbrace{\EE_\QQ\left[y \zeta_s(y)\right] \cdot \EE_\QQ\left[h_s(\langle \theta^\star, z\rangle) \cdot \sigma'(\langle z, \theta\rangle) \cdot z\right]}_{\dr informative~queries},
    \label{eq:informative queries}
\end{align}
where we use the fact that $y$ and $z$ are independent under the null distribution $\QQ$.
Note that the \emph{bias} term does not contain any information about $\theta^\star$, and it can be easily removed by a debiasing procedure, so we assume for simplicity that $\EE[y]=0$.

\paragraph{Failure of vanilla online minibatch SGD}
We first consider the vanilla online minibatch SGD, which updates the weight vector $\theta$ by $\theta \leftarrow \theta - \eta \sum_{i=1}^n g_i$ for a minibatch of size $n$.
The sample complexity of gradient-based methods is determined by the signal-to-noise ratio (SNR) of the one-sample gradient, which in our case is defined as $\SNR \defeq \EE[\langle g, \theta^\star\rangle]^2/\EE[\norm{g}_2^2]$.
This is the square of the alignment between $g$ and $\theta^\star$, governed primarily by the informative query corresponding to the lowest degree $s^\star$ in \eqref{eq:informative queries} assuming that $\EE_\QQ[y\zeta_{s^\star}(y)] \neq 0$.
It can be shown that the inner product between the lowest-degree informative query in \eqref{eq:informative queries} and the signal $\theta^\star$ satisfies (see \Cref{lem:g-1st-moment})
\begin{align}
    \EE_\QQ\left[h_{s^\star}(\langle \theta^\star, z\rangle) \cdot \sigma'(\langle z, \theta\rangle) \cdot \langle z, \theta^\star\rangle\right] \approx s^{\star} \cdot \hat\sigma_{s^\star} \cdot \langle \theta^\star, \theta\rangle^{s^\star -1} = \hat\sigma_{s^\star} \cdot O(d^{-(s^\star-1)/2}),
    \label{eq:1st-moment-approx}
\end{align}
where $\hat\sigma_{s^\star}$ is the $s^\star$-th coefficient in the Hermite expansion of $\sigma$.
While for $\|g\|_2$, we have
\begin{align}
    \EE_{\PP_{\theta^\star}}\left[\norm{g}_2^2\right]
    &\approx d  \cdot \EE_\QQ\left[ y^2\sigma'(\langle z, \theta\rangle)^2 \right] = \Omega(d),
\end{align}
where the high-order terms in the likelihood ratio decomposition are ignored and we come back to this point later.
Now we can argue why vanilla online minibatch SGD has difficulty achieving the SQ lower bound for generative exponent $s^\star>2$:
Suppose $\EE_\QQ[y \zeta_{s^\star}(y)]$ and $\hat\sigma_{s^\star}$ are both nonzero constants.
Then the one-sample SNR is $O(d^{-s^\star})$.
For a minibatch with $n$ samples, the SNR of the gradient averaged over the minibatch is roughly $n$ times the one-sample SNR\footnote{This argument is not fully rigorous because $\EE_{\PP_{\theta^\star}}[\norm{g}_2^2]$ also includes ``bias'' $\norm{\EE_{\PP_{\theta^\star}}[g]}_2^2$ besides the fluctuations, but it remains valid as long as $\norm{g}_2^2$ is dominated by fluctuations from all $d$ directions at initialization.}, i.e., $nd^{-s^\star}$.
To ensure one update step achieves alignment, i.e., the square root of the $n$-sample SNR, $\sqrt{n d^{-s^\star}}$, exceeding the trivial $d^{-1/2}$ threshold attained by a random vector, it requires at least $d^{s^\star -1}$ samples.
Note that the sample complexity would become even worse if $s^\star<\argmin_{s\geq s^\star} \{s: \EE_\QQ[y \zeta_s(y)]\neq 0\}$.
This contrasts with the sample complexity $O(d^{s^\star/2})$ suggested by the SQ lower bound.

The above failure of vanilla online minibatch SGD exposes three key challenges:
\begin{enumerate}[leftmargin=0.3in,nolistsep]
\item[(\romannumeral1)] (\textbf{Non-polynomial}) How to handle the infinite sum of high-order terms in the likelihood ratio?
\item[(\romannumeral2)] (\textbf{Low SNR}) How to enhance the SNR to achieve the SQ lower bound?
\item[(\romannumeral3)] (\textbf{Zero correlation}) How to ensure that the algorithm still works if $\EE_\QQ[y \zeta_{s^\star}(y)]=0$?
\end{enumerate}
Below we discuss our techniques for addressing these challenges.

\paragraph{Label transformation via general gradient oracle}
The idea to fix the zero correlation problem is to apply a nonlinear transformation $\cT: \RR\rightarrow\RR$ to $y$ such that $\cT(y)$ has nonzero correlation with $\zeta_{s^\star}(y)$.
This label transformation technique has been widely used in the literature \citep{lu2020phase,mondelli2018fundamental,dudeja2018learning,chen2020learning, damian2024computational}.
In particular, \citet{lee2024neural} show that the label transformation can be automatically realized by running two gradient steps on the same batch, a technique termed as \emph{batch-reusing} \citep{dandi2024benefits,arnaboldi2024repetita}.
In this work, we study a more \emph{general class of gradient-based methods} with gradient of form $g = \psi(y, \langle \theta,z\rangle) z$, which is an abstract form of the transformed gradient $\cT(y) \sigma'(\langle z, \theta\rangle) z$.
The desired condition becomes $\EE_\QQ[\hat \psi_{s^\star-1}(y) \zeta_{s^\star}(y)]\neq 0$, where $\hat \psi_{s}(y)$ is the $s$-th Hermite coefficient function of $\psi(y, x)$ in the Hermite basis of $x$.
One particular way to obtain such a gradient is to use a modified loss function, similar to the approach in \citet{joshi2024complexity}, while in our case the specific choice of $\psi$ is also related to the other two challenges addressed as follows.

\paragraph{Exploration by weight perturbation with high-pass activation}
The low-SNR challenge corresponds to the fact that points on the equator of $\SS^{d-1}$ orthogonal to $\theta^\star$ are all saddle points in terms of $|\langle\theta,\theta^\star\rangle|$, and random initialization typically lies near this equator.
To efficiently escape from such saddle points, we perform random weight perturbation, akin to the approach in \citet{jin2017escape} for non-convex optimization.
Specifically, suppose the activation $\sigma$ is high-pass and has the lowest degree $s^\star$, i.e., $\sigma(x) = \sum_{s \ge s^\star} \hat{\sigma}_{s} h_s(x)$, and consider for simplicity the case of odd $s^\star$.
In the extreme case where $\theta$ is perturbed into i.i.d. pure noise $\theta_1,\ldots,\theta_L \sim \unif(\SSS^{d-1})$, we compute the gradient for each $\theta_l$ and aggregate them into
$g=L^{-1}(g_1+\cdots+g_L)$.
Using the properties of the Gaussian noise operator (see \Cref{app:preliminary} for details), the second moment of this aggregated gradient satisfies
\begin{align}
    \EE\left[\norm{g}_2^2\right] \approx \frac{d}{L^2}\sum_{l, l'=1}^L\EE_\QQ[y^2] \cdot \EE_\QQ[\sigma'(\langle z, \theta_l\rangle) \sigma'(\langle z, \theta_{l'}\rangle)] \approx d \sum_{s\ge s^\star} s \cdot \hat\sigma_{s}^2 \cdot \EE_{\theta, \theta'}[\langle \theta, \theta'\rangle^{s - 1}],
    \label{eq:2nd-moment-approx}
\end{align}
where $\theta, \theta'$ are drawn independently from $\unif(\SSS^{d-1})$.
Since $\langle \theta, \theta' \rangle \approx d^{-1/2}$, we have $\EE[\|g\|_2^2]\approx O(d^{-(s^\star-3)/2})$, yielding a higher one-sample SNR as the first moment remains unchanged and pushing the sample complexity towards the SQ lower bound.
Moreover, we also see from the above calculation that the weight perturbation resolves the non-polynomial issue thanks to the near-orthogonality of the perturbed weights.
The above heuristics can be made rigorous for polynomially large $L$, thereby handling non-polynomial link and activation functions.

Our approach also draws inspiration from the landscape smoothing method in \citet{damian2024computational}, but in contrast to their problem setup, we do not require full knowledge of the link distribution in advance.
Instead, it suffices to know the generative exponent $s^\star$ to construct a high-pass activation function as well as the gradient oracle $\psi$.
See \Cref{exp:psi_3} for a detailed discussion on this.

\section{Gradient-based Algorithm for Uniform Prior}\label{sec:nonsparse}

We first present our method and results for the case of $\theta^\star\sim\unif(\SSS^{d-1})$, or equivalently, when there is no structural information on $\theta^\star$.
Motivated by the discussion in \Cref{sec:alignment}, we propose a gradient-based algorithm (\Cref{alg:meta}) that can train a two-layer neural network to learn the unknown signal $\theta^{\star}$ with $\widetilde{O}(d^{s^{\star}/2}\vee d)$ sample complexity, nearly matching the corresponding SQ lower bound.

\begin{algorithm}[t]
\caption{Gradient-based Feature Learning for Uniform Signal Prior}
\label{alg:meta}
\begin{algorithmic}[1]
\STATE \textbf{Input}: Initialization
    $\btheta^{(0)}=(\theta_1^{(0)}, \ldots, \theta_M^{(0)})\in\RR^{d\times M}$,
    where $\theta_m^{(0)}\iidfrom \unif(\SSS^{d-1})$,
    $\bm{a} = a \cdot \vone\in\RR^{M}$,
    number of iterations $T\in \NN$,
    learning rate $\eta>0$,
    batch size $n\in\mathbb{N}$, polarization level $\gamma\in(0,1)$, number of perturbations $L\in\mathbb{N}$.
\FOR{iteration $t=0, 1, \ldots, T-1$}
\STATE Sample a fresh mini-batch of data $\{(z_i^\t, y_i^\t)\}_{i=1}^{n}$.
\STATE Perturb weights $w_{m,l}^{(t)}=(\gamma\theta_m^{(t)}+\xi_{m,l}^{(t)})/\|\gamma\theta_m^{(t)}+\xi_{m,l}^{(t)}\|_2$, $\xi_{m,l}^{(t)}\iidfrom\unif(\SSS^{d-1})$ for all $m,l$. \label{algline:uniform_perturb}
\STATE Compute the gradients $g_{m,l,i}^{(t)}=(\psi(y_i^{(t)}, \langle w_{m,l}^{(t)},z_i^{(t)}\rangle) + \err_{m,l,i}^{(t)})\cdot z_i^{(t)}$ for all $m,l,i$. \label{algline:uniform_gradient}
\STATE Aggregate the gradients: $g_m^{(t)} = (nL)^{-1}\sum_{i=1}^n\sum_{l=1}^L (g_{m,l,i}^{(t)} - \hat\psi_1(y_i^\t) w_{m,l}^\t)$ for all $m$.\label{algline:uniform_aggregate}
\STATE Normalize the update step: $\bar{g}_m^{(t)}=g_m^{(t)}/\|g_m^{(t)}\|_2$ for all $m$. \label{algline:uniform_normalize}
\STATE Update the weights $\theta_m^{(t+1)} = (\theta_m^{(t)} + \eta \bar{g}_m^{(t)}) / \norm{\theta_m^{(t)} + \eta \bar{g}_m^{(t)}}_2$ for all $m$.\label{algline:uniform_update}
\ENDFOR
\STATE \textbf{Output}: Final model weights $\btheta^{(T)}$.
\end{algorithmic}
\end{algorithm}

\subsection{Gradient-based Training Algorithm (Algorithm~\ref{alg:meta})}\label{subsec:uniform_prior_alg}

We initialize each neuron $m$ with $\theta_m^{(0)}\sim \unif(\SSS^{d-1})$, and we set $a_m^{(t)} \equiv a$ for some sufficiently small $a>0$ throughout the training.
In each iteration $t\in[T]$, we sample a new data batch of size $n$.

\paragraph{Weight perturbation}
Before calculating the gradients, we first perturb the weights of each neuron to get $L$ noisy replicas, by injecting uniform noise from the sphere $\SS^{d-1}$ as in Line~\ref{algline:uniform_perturb}.
There is a simple rule for choosing the polarization level $\gamma$. In the previous section, we discussed how $\EE_{\PP_{\theta^\star}}[\norm{g}_2^2]$ in the one-sample SNR depends on the following quantity:
\begin{align}
    \EE_{\xi_{m, l}^\t, \xi_{m, l'}^\t}\langle w_{m,l}^\t, w_{m, l'}^\t\rangle^{s^\star-1} \lesssim \bigl(\gamma^2 \norm{\theta_m^\t}_2^2\bigr)^{s^\star -1} + \EE_{\xi_{m, l}^\t, \xi_{m, l'}^\t}\langle \xi_{m, l}^\t, \xi_{m, l'}^\t \rangle^{s^\star -1} \approx (\gamma^2 \lor d^{-1/2})^{s^\star - 1}.
\end{align}
In this context, $\gamma^2$ represents the bias from the \emph{exploitation} of the learned search direction, and $d^{-1/2}$ accounts for the variance from the \emph{exploration} for the unknown signal.
In fact, $\gamma$ should be set as large as possible to maximize exploitation while still ensuring that the exploration noise dominates.
This balance is necessary to fully gain the SNR enhancement from weight perturbation.
This gives rise to the choice $\gamma = \tilde\Theta(d^{-1/4})$.
Moreover, it suffices to set $L= \tilde\Omega(n/\sqrt d)$ as stated in \Cref{thm:uniform}.

\paragraph{Gradient aggregation and debiasing}
Then for each neuron $m$ and its perturbed weights $w_{m,l}^{(t)}$, we calculate the gradient $g_{m,l,i}^{(t)}$ for every sample (Line~\ref{algline:uniform_gradient}).
The gradient is expressed by decoupling the primary search direction $\psi(y_i, \langle w_{m,l},z_i\rangle)\cdot z_i$
from the error term
$\err_{m, l,i}\cdot z_i$ (omitting the time index $t$).
For two-layer neural networks, we discussed in the previous section an example where $\psi(y_i, \langle w_{m, l}, z_i\rangle) = \cT(y_i) \sigma'(\langle w_{m, l}, z_i\rangle)$ for some transformation $\cT$.
The error term, arising from neuron interactions during backpropagation, can be reduced by keeping the weights of the second layer sufficiently small.
We provide the assumptions and more examples of $\psi$ in \Cref{subsec:uniform_prior_theory}.

Next, we aggregate the gradients for each neuron $m$ by averaging over the $n$ samples and $L$ perturbations to get $g_m^{(t)}$ as shown in Line~\ref{algline:uniform_aggregate}.
Here we additionally subtract a term $\widehat{\psi}_1(y_i^{(t)})w_{m,l}^{(t)}$ to debias the gradient, where $\widehat{\psi}_1(y)$ is the first coefficient function in the Hermite expansion of the oracle function $\psi(y,x)$ with respect to $x$.
Finally, we update $\theta_m^{(t)}$ according to Line~\ref{algline:uniform_normalize} and Line~\ref{algline:uniform_update}.

\subsection{Feature Alignment and Statistical Complexity}\label{subsec:uniform_prior_theory}

\begin{assumption}\label{asp:oracle function}
For the Gaussian single-index model in \eqref{eq:single_index_model} with generative exponent $s^\star\geq1$, the oracle $\psi:\RR\times\RR\to\RR$ satisfies the following conditions:
\begin{enumerate}[
      label=(\alph*),
      ref=\Cref{asp:oracle function}(\alph*),
      leftmargin=0.25in,
      nolistsep
]
      \item (Quadruple-integrable under $\QQ$). \label{asp:quadratic_integrability}
      Both $\psi(y,x)^2x^2$ and $\psi(y,x)^2$ are square-integrable under the null distribution $\QQ$.
      Therefore, $\psi\in L^2(\QQ)$ admits a Hermite expansion
      $
        \psi(y, x) \overset{L^2(\QQ)}{=} \sum_{s=0}^\infty \hat\psi_s(y) \cdot h_s(x),
      $
      where $\hat\psi_s(y)$ is the $s$-th coefficient function and $\sum_{s=0}^\infty \EE_{\QQ}[\hat\psi_s(y)^2] <\infty$.

      \item (High-pass under $\QQ$).\label{asp:high_pass}
      For all $s=1,\ldots, s^\star - 2$, the $s$-th coefficient function is zero, i.e., $\hat\psi_s(y)\equiv 0$.
      In addition, there exists a constant $C>0$ such that $|\EE_\QQ[\zeta_{s^\star}(y) \cdot \hat\psi_{s^\star -1} (y)]|\geq C$.

      \item (Polynomial-like tail under $\PP$ and $\QQ$). \label{asp:polynomial-like}
      There exists a constant $C>0$ such that for all $r\geq 1$, $\max\{\EE_{\PP}\left[|\psi(y,x)|^r\right], \EE_{\QQ}\left[|\psi(y,x)|^r\right]\} \le C \cdot r^{C r}$.
  \end{enumerate}
\end{assumption}

The quadruple-integrability condition (\ref{asp:quadratic_integrability}) ensures that the decomposition of the likelihood ratio in \eqref{eq:likelihood-ratio-decomp} is well defined for calculations involving the second moment, i.e.,
\begin{align}
    \EE_{\PP}\left[\psi(y,x)^2 x^2\right]
    = \EE_\QQ\left[ \psi(y,x)^2 x^2 \cdot \frac{\PP(y, x)}{\QQ(y,x)}\right]
    = \EE_\QQ\left[\psi(y,x)^2 x^2 \cdot  {\sum_{s=0}^\infty} \zeta_s(y)h_s(x)\right].
\end{align}
The high-pass condition (\ref{asp:high_pass}) has been motivated in \Cref{sec:alignment} and guarantees noise reduction for the second moment.
The polynomial-like tail condition (\ref{asp:polynomial-like}) is used for concentration arguments in the proof.
Note that this condition is analogous to the Gaussian hypercontractivity property, where $\EE_{x\sim\cN(0,1)}[|f(x)|^r] \lesssim r^{Dr/2}$ if $f(x)$ is a polynomial of degree at most $D$.
In particular, $\psi$ can be constructed as $\psi(y,x)=\ell'(y)\sigma'(x)$, where $\ell$ is the loss function and $\sigma$ is the activation in the two-layer network.
It suffices to use a loss $\ell$ with bounded derivative and a polynomial activation $\sigma$ for the polynomial-like tail condition (see \Cref{sec:reusing batch} and \ref{sec:label transformation}).

Now we are ready to state our first main result on the sample complexity of \Cref{alg:meta} for uniform prior.
See \Cref{subsec:uniform_prior_sketch} for a proof sketch of the theorem and \Cref{app:proof_nonsparse} for a detailed proof.

\begin{theorem}[Sample complexity for uniform prior]\label{thm:uniform}
Under \Cref{asp:oracle function}, set the initialization of the weights as $\theta_1^{(0)},\ldots,\theta_M^{(0)}\iidfrom \unif(\SS^{d-1})$.
Suppose that the event $\cE = \{|\err_{m,l,i}^{(t)}|\leq d^{-10s^{\star}},\forall (m,l,i,t)\in [M]\times[L]\times[n]\times[T]\}$ holds with probability at least $1-O(d^{-c_0})$ for some constant $c_0>0$ with $(M,L,n,T)$ specified as follows.
Set the learning rate $\eta >2$, the polarization level $\gamma = d^{-1/4}(\log d)^{1/4}$, and the number of neurons $M=C_M\log d$ for a sufficiently large constant $C_M>0$.
Suppose
\begin{align}
    n = \Theta\bigl((d^{s^\star/2}(\log d)^{1+s^\star/2}) \lor d (\log d)^2  \bigr), \quad  L = \Theta\bigl(d^{(s^\star +1)/2} \log d\bigr).
\end{align}
Define $\tau = \eta^{-2}/(1-\eta^{-1})^2$, and let $\Delta=(\log d)^{-1/2}$ if $s^\star\leq 2$ and $\Delta = d^{-1/4}(\log d)^{1/4}$ if $s^\star\geq 3$.
Then with probability at least $1 - O(d^{-c})$ for some constant $c>0$, after running \Cref{alg:meta} for $T=O(\log d + \log(\Delta^{-1})/\log(\tau^{-1}))$ steps, there are at least $\Omega(M)$ neurons having alignment
$
    |\langle \theta_m^\T, \theta^\star\rangle| \ge 1 - O(\sqrt\Delta).
$
\end{theorem}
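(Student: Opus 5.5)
We track, for each neuron $m$, the alignment $\alpha_t \defeq \langle \theta_m^\t, \theta^\star\rangle$ and show that it grows geometrically from the random-initialization level $d^{-1/2}$ up to $1-O(\sqrt\Delta)$. For a single step, write $g_m^\t = \bar\mu^\t + \epsilon^\t$, where $\bar\mu^\t$ is the conditional expectation of the aggregated gradient given $\theta_m^\t$ and $\epsilon^\t$ is the fluctuation coming from the $n$ samples and the $L$ perturbations.

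\textbf{Mean term.} Expand $\psi$ in Hermite polynomials of $x$ and use the likelihood-ratio decomposition \eqref{eq:likelihood-ratio-decomp}. Under $\QQ$ the label $y$ is independent of $z$, so the debiasing term $\hat\psi_1(y)w$ cancels the degree-one contribution. By the high-pass condition of \Cref{asp:oracle function}, the leading surviving term is
\[
\EE_\QQ[\zeta_{s^\star}\hat\psi_{s^\star-1}]\,\sqrt{s^\star}\,\langle w,\theta^\star\rangle^{s^\star-1}\,\theta^\star ,
\]
as in \Cref{lem:g-1st-moment}. Averaging over the perturbation $w=(\gamma\theta+\xi)/\|\cdot\|$ gives $\EE_\xi\langle w,\theta^\star\rangle^{s^\star-1}$. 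For odd $s^\star-1$ this is about $(\gamma\alpha_t)^{s^\star-1}$ plus lower-order Gaussian-moment cross terms. For even $s^\star-1$ it contains an additional constant $d^{-(s^\star-1)/2}$ piece, which is aligned with $\theta^\star$ but sign-neutral. This piece must be handled; I expect that using $|\alpha|$ and the symmetric statement suffices. Higher Hermite degrees $s>s^\star$ are suppressed by extra factors $(\gamma\alpha+d^{-1/2})^{s-s^\star}$, and the moment assumptions give summability.

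\textbf{Fluctuation term.} By the second-moment computation sketched around \eqref{eq:2nd-moment-approx}, using that the $L$ perturbations are nearly orthogonal and $\gamma^2\approx d^{-1/2}\sqrt{\log d}$, we get
\[
\EE\|\epsilon\|^2 \lesssim \frac{d}{n}\Bigl(\frac{1}{L}+(\gamma^2\vee d^{-1/2})^{s^\star-1}\Bigr).
\]
The cross-sample terms $i\neq i'$ contribute only through the mean. With $L\gtrsim d^{(s^\star+1)/2}$ the $1/L$ part is negligible. The projection $\langle\epsilon,\theta^\star\rangle$ is then of order $\|\epsilon\|/\sqrt d$. Concentration in high probability, uniformly over $m,t$, follows from the polynomial-like tail in \Cref{asp:oracle function} via moment bounds and a union bound. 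The error event $\cE$ contributes at most $d^{-10s^\star}$ and is absorbed.

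\textbf{Growth and convergence.} Because of the normalization in Line~\ref{algline:uniform_normalize} and $\eta>2$, the new alignment is essentially the component of $\bar g$ along $\theta^\star$, i.e. $\alpha_{t+1}\approx \langle\bar\mu,\theta^\star\rangle/\|g\|$. Phase one is weak recovery. Since $\gamma\alpha_t$ stays small, the signal-to-noise ratio behaves like
\[
\frac{(\gamma\alpha_t)^{s^\star-1}}{\sqrt{(d/n)\,d^{-(s^\star-1)/2}}}.
\]
At $\alpha_0\sim d^{-1/2}$, a constant fraction of the $M=C_M\log d$ neurons start with $|\alpha_0|\gtrsim d^{-1/2}$ and with the correct sign relative to the fluctuations. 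With $n\approx d^{s^\star/2}\mathrm{polylog}$ this makes the ratio exceed the $d^{-1/2}$ level, and the polylog factors make it grow by a constant factor per step. Hence $O(\log d)$ steps bring $\alpha$ to a constant. For $s^\star\le 2$ the bound $n\gtrsim d\log^2 d$ controls the noise instead. Phase two is strong recovery. Once $\alpha$ is constant, the mean dominates and $\bar g\approx\theta^\star+\text{noise}$ with relative noise $\Delta$. The update $\theta\leftarrow(\theta+\eta\bar g)/\|\cdot\|$ contracts $1-\alpha$ by the factor $\tau=\eta^{-2}/(1-\eta^{-1})^2$ down to the noise floor, which yields $1-O(\sqrt\Delta)$ after $\log(\Delta^{-1})/\log(\tau^{-1})$ steps.

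\textbf{Main obstacle.} The hardest step is the precise moment calculation over the perturbations. We must show that the mean is dominated by the $\gamma\alpha$ term and not cancelled by the cross terms, and that the second moment has exactly the $(\gamma^2\vee d^{-1/2})^{s^\star-1}$ scaling for non-polynomial $\psi$. I would attack this through the Gaussian noise-operator identities of \Cref{app:preliminary}, conditioning on $\langle w_l,w_{l'}\rangle$ and bounding the tails of the Hermite series using Assumption (a) and (c).
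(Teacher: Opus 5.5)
Your outline has a genuine gap in phase one: the signal term you use is not the dominant one below $|\alpha|\approx d^{-1/4}$, and with it the growth claim fails for $s^\star\ge 3$.

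You take the mean along $\theta^\star$ to be essentially $(\gamma\alpha_t)^{s^\star-1}$. You treat the other terms of $\EE_\xi\langle w,\theta^\star\rangle^{s^\star-1}$ as lower-order corrections, and for odd $s^\star$ you treat the $d^{-(s^\star-1)/2}$ piece as a nuisance. Now evaluate your own SNR at initialization, with $\alpha_0\asymp d^{-1/2}$, $\gamma\asymp d^{-1/4}$ and $n\asymp d^{s^\star/2}$. The numerator is $d^{-3(s^\star-1)/4}$ and the denominator is $\sqrt{(d/n)\,d^{-(s^\star-1)/2}}\asymp d^{3/4-s^\star/2}$. So the alignment of the step is of order $d^{-s^\star/4}$, which is polynomially below the current alignment $d^{-1/2}$ once $s^\star\ge 3$. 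Hence ``this makes the ratio exceed the $d^{-1/2}$ level'' does not follow, and no polylogarithmic slack closes a gap of $d^{(s^\star-2)/4}$. Selecting neurons with a favorable sign relative to the fluctuations does not help either, because fresh samples are drawn at each of the $O(\log d)$ steps. With the mean you wrote down, the normalized step is dominated by noise and $\alpha_t$ does not grow.

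The missing idea is that for $\gamma|\alpha|\lesssim d^{-1/2}$, i.e.\ below $\rho^\star=d^{-1/4}(\log d)^{1/4}$, the ``cross terms'' are the signal; producing them is the purpose of the perturbation. Write $\langle w,\theta^\star\rangle\approx\gamma\alpha+\langle\xi,\theta^\star\rangle$ and keep only the even moments of $\langle\xi,\theta^\star\rangle$. This gives $\EE_\xi\langle w,\theta^\star\rangle^{s^\star-1}\simeq\gamma\alpha\,(\gamma|\alpha|+d^{-1/2})^{s^\star-2}$ for even $s^\star$, and $\simeq(\gamma|\alpha|+d^{-1/2})^{s^\star-1}$ for odd $s^\star$ (\Cref{lem:g-1st-w-expectation}). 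Every term of the binomial expansion carries the sign of $\alpha$ in the even case, or is nonnegative in the odd case. So there is no cancellation to fight, and the ``main obstacle'' you name is the wrong way around.

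With the correct mean, the two regimes behave as follows:
\begin{itemize}
\item For even $s^\star$, the step alignment is a constant multiple of $|\alpha|$. That constant can be made large through the constant in $n$, so $|\alpha|$ doubles each step until it reaches $\rho^\star$.
\item For odd $s^\star\ge 3$, the step alignment is already of order $d^{-1/4}$ after a single step.
\item Only above $\rho^\star$ does your $(\gamma\alpha)^{s^\star-1}$ form become accurate. There it yields the superlinear growth up to constant alignment.
\end{itemize}
This two-regime analysis is what the paper's Stage I/Stage II argument rests on, and it is what your proposal lacks. Your fluctuation bound and your phase-two contraction with factor $\tau$ are otherwise in line with the paper.
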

\Cref{thm:uniform} shows that the sample complexity of \Cref{alg:meta} is $nT = \tilde{\Theta}(d^{s^{\star}/2}\vee d)$, matching the SQ sample complexity lower bound for all $s^{\star}\geq 1$ established by \citet{damian2024computational}.
Compared to the partial trace method in \cite{damian2024computational}, our algorithm does not require special warm-start initialization.
Meanwhile, the computational complexity of \Cref{alg:meta} is $MLnT = \widetilde{\Theta}(d^{s^{\star}+1/2}\vee d^2)$.
So far the gradient oracle $\psi$ is still an abstract object, and next we will instantiate the above general theorem with concrete examples of $\psi$ that yield implementable algorithms.
We consider the special case of polynomial link functions with $s^\star\leq 2$ in \Cref{sec:reusing batch}, and then the general case for any $s^\star\geq 1$ in \Cref{sec:label transformation}.

\begin{remark}[Benefit of overparametrization]
\Cref{alg:meta} trains a two-layer neural network with logarithmic width $M=\Theta(\log d)$, involving $L$ times of perturbation for every neuron in each step.
Indeed, this is equivalent to training a two-layer neural network with width $LM=\widetilde{\Theta}\bigl(d^{(s^\star +1)/2}\bigr)$, where we divide the neurons into $L$ groups, each having $M$ neurons.
In each iteration we perturb the weights and compute the gradients, and then aggregate the gradients within each group of $M$ neurons.
This combination of weight perturbation and \emph{gradient sharing} exploits the \emph{benefit of overparametrization}.
This is also consistent with the interpretation of overparametrization as parallel search in \citet{edelman2024pareto}.
\end{remark}

\subsubsection{Online SGD with Batch Reusing}\label{sec:reusing batch}
The oracle function $\psi$ can be specialized to two consecutive gradient descent steps on the same batch under square loss to handle polynomial link functions corresponding to $s^{\star}\leq 2$ \citep{lee2024neural}.

\begin{example}[Batch-reusing: $\psi$ for polynomial link functions]\label{exp:psi_2}
Suppose that the link distribution is a polynomial of degree $q$.
We consider $\psi$ induced by batch-reusing on a single neuron, i.e., $\psi(y,x) = y\sigma'(x) + y\sigma'(x+y\sigma'(x))$ (see Section 4.2 of \cite{lee2024neural} for derivation) and choose $\sigma'(x) = \sum_{i=0}^{C_q}c_ih_i(x)$ where $C_q\in\mathbb{N}$ is a constant depending only on $q$ and each $c_i\sim \mathrm{Unif}([0,1])$.
\end{example}

\begin{corollary}[Batch-reusing for polynomial link function]\label{cor:psi_2}
    Suppose that the link distribution is given by a polynomial link function.
    Under the same setups in \Cref{thm:uniform} with the oracle $\psi$ given by \Cref{exp:psi_2}, the sample complexity of \Cref{alg:meta} is $\widetilde{\Theta}(d)$, recovering the result of \cite{lee2024neural}.
\end{corollary}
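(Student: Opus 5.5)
The corollary reduces to checking that the batch-reusing oracle $\psi(y,x)=y\sigma'(x)+y\sigma'(x+y\sigma'(x))$ from \Cref{exp:psi_2} satisfies \Cref{asp:oracle function} with $s^\star\le 2$. Once that is done, \Cref{thm:uniform} applies directly. With $s^\star\le 2$ it gives $n=\Theta(d(\log d)^2)$ and $T=O(\log d+\log\log d/\log(\tau^{-1}))$, so $nT=\widetilde\Theta(d)$. The error event $\cE$ is handled as in the theorem by taking the second-layer scale $a$ polynomially small: the neuron-interaction terms are then $O(a)$, and polynomial tails of $y$ and $z$ control them uniformly over the polynomially many indices. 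I will check the three conditions in order.

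\textbf{(a) and (c).} A polynomial link with noise of polynomial-like tails gives $\EE|y|^r\le C r^{Cr}$. Since $\sigma'$ is a polynomial of constant degree $C_q$, $\psi$ is a polynomial in $(y,x)$ of constant degree. Gaussian hypercontractivity in $x$, combined with Hölder to handle $y$, then gives $\EE_\PP|\psi|^r,\EE_\QQ|\psi|^r\le C r^{Cr}$. This yields (c), and square integrability of $\psi^2x^2$ under $\QQ$, i.e.\ (a), follows as well.

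\textbf{(b).} Since $s^\star\le 2$, the condition $\hat\psi_s\equiv0$ for $1\le s\le s^\star-2$ is vacuous. The only requirement is $|\EE_\QQ[\zeta_{s^\star}(y)\hat\psi_{s^\star-1}(y)]|\ge C$. I would write this correlation as a polynomial $F(c_0,\dots,c_{C_q})$ in the random coefficients:
\[
\hat\psi_{s^\star-1}(y)=\EE_x\bigl[\bigl(y\sigma'(x)+y\sigma'(x+y\sigma'(x))\bigr)h_{s^\star-1}(x)\bigr].
\]
Expanding $\sigma'(x+y\sigma'(x))$ shows that $\hat\psi_{s^\star-1}(y)$ is a polynomial in $y$ whose coefficients are polynomials in the $c_i$. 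The key step, following Section~4.2 of \citet{lee2024neural}, is that $F$ is not identically zero. The nested term $\sigma'(x+y\sigma'(x))$ generates arbitrarily high powers of $y$ once $C_q$ is large relative to $q$. Because $y$ is a polynomial of $x$ of degree $q$ and $s^\star\le2$, $\zeta_{s^\star}(y)$ is a nonzero element of $L^2(\QQ)$ of polynomial growth, so some power $y^j$ with $j\le$ const$(q)$ has nonzero correlation with $\zeta_{s^\star}$. One then has to exhibit a single choice of $c$ for which the corresponding coefficient survives. For example, one can isolate a monomial $c_{i_1}c_{i_2}$ whose contribution is exactly $\EE[y^{j}\zeta_{s^\star}]$ times a nonzero Hermite constant.

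Given $F\not\equiv0$, anti-concentration of a constant-degree polynomial under uniform measure on $[0,1]^{C_q+1}$ (Carbery--Wright) gives $|F(c)|\ge C$ with probability $1-\delta$ for constant $C=C(\delta,q)$. Conditioning on this event gives (b).

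The main obstacle is the non-degeneracy $F\not\equiv0$, specifically showing that no cancellation across the infinitely many Hermite components of $\sigma'(x+y\sigma'(x))$ kills the needed coefficient. I would handle it by viewing $F$ as a polynomial in a scaling parameter and extracting its top-degree term in $y$, which isolates a single identifiable monomial.
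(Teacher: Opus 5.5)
Your outline follows the paper's proof. Conditions (a) and (c) come from the polynomial structure of $\psi$ and of $y=p(x)$. Condition (b) comes from writing $\EE_\QQ[\zeta_{s^\star}(y)\hat\psi_{s^\star-1}(y)]=\sum_j b_j P_j(c)$, where $b_j=\EE[p(x)^j h_{s^\star}(x)]$ and $P_j$ is homogeneous of degree $j$ in the random coefficients, so different $j$ cannot cancel. Your scaling-parameter idea is exactly the paper's ``linearly independent because of distinct degrees'' argument. The event $\cE$ is handled by taking $a_m=d^{-12s^\star}$ and a union bound. Two steps need tightening.

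\emph{Existence of a bounded $j$.} That some $j\le C_q$ has $b_j\neq 0$ does not follow from $\zeta_{s^\star}$ being a nonzero function of polynomial growth. That inference would need polynomials in $y$ to be dense in $L^2(\QQ_y)$, which is not automatic, since the law of $p(x)$ need not be moment-determinate once $\deg p\ge 3$. Even granting density, it would not bound $j$ by a constant depending only on $q$, and such a bound is what lets $C_q$ be fixed in advance. The paper instead invokes Proposition~5 of \citet{lee2024neural}. For $s^\star=2$ it adds that every power of an even $p$ is even, so $q^\star(p^I)\le 2$ forces $b_I\neq 0$.

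\emph{Degree of the isolating monomial.} By the homogeneity you exploit, a degree-two monomial $c_{i_1}c_{i_2}$ can only detect $b_2$. To certify $P_j\not\equiv 0$ for the relevant $j$ you need a degree-$j$ monomial. For $s^\star=2$ and $j\ge 2$, $c_0^{j-2}c_1c_{j-1}$ works; it is the paper's $\alpha_1^{j-2}\alpha_2\alpha_j$. It is isolated because in $y^{j}(\sigma')^{(j-1)}(x)\,\sigma'(x)^{j-1}/(j-1)!$ the differentiated factor kills $h_0,\dots,h_{j-2}$, so for $j\ge 3$ it can only contribute $c_{j-1}$.

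Your Carbery--Wright step is a harmless quantitative refinement of the paper's ``nonzero almost surely'' conclusion.
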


The proof is deferred to \Cref{app:example_psi_2}.
However, batch-reusing may not be optimal for $s^{\star}\geq 3$ due to violation of the high-pass condition, necessitating a more general approach to construct $\psi$.
\subsubsection{Label Transformation via Modified Loss}
\label{sec:label transformation}

We discuss another approach to construct $\psi$ by modifying the loss function, a universal method for arbitrary generative exponent $s^\star\geq 1$.
Additional details and proofs are postponed to \Cref{app:example_psi_3}.

\begin{example}[$\psi$ based on modified loss]\label{exp:psi_3}
    Let $\psi(y,x) = \ell'(y) \sigma'(x)$ with $\ell(y)$ being a certain loss function and $\sigma(x)$ being some activation function.
    Such a form corresponds to the gradient of the loss $\ell(y-f(z;\btheta))$ (assuming that $\ba$ is fixed and has small entries), since
    \begin{align}
        a_m^{-1} \nabla_{\theta_m}\ell\big(y-f(z;\btheta)\big) &= -\ell'\big(y-f(z;\btheta)\big)\cdot \sigma'(\langle \theta_m, z\rangle) \cdot z
        = - \underbrace{\ell'(y)\cdot \sigma'(\langle \theta_m, z\rangle)}_{:=\psi(y, \langle \theta_m, z\rangle)} \cdot z  + \err_m\cdot z,
    \end{align}
    where $\err_m \defeq  [\ell'(y) - \ell'(y-f(z;\btheta, \ba))]\cdot \sigma'(\langle \theta_m, z\rangle) = O(f(z;\btheta, \ba))\cdot \sigma'(\langle \theta_m, z\rangle)$ denotes small error for sufficiently small $\ba$.
    In \Cref{app:example_psi_3}, we provide a specific choice of the activation function $\sigma(x)$ (order-$s^{\star}$ Hermite polynomial) and the loss function $\ell(y)$ (a carefully designed random loss function), satisfying all the conditions in \Cref{asp:oracle function}.
\end{example}

\begin{corollary}[Modified loss for general $s^\star$]\label{cor:psi_3}
The oracle $\psi$ given by \Cref{exp:psi_3} satisfies all the assumptions of \Cref{thm:uniform}, thus the results of \Cref{thm:uniform} hold for \Cref{alg:meta} using this $\psi$.
\end{corollary}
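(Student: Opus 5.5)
The plan is to exhibit explicit choices of $\sigma$ and $\ell$ for which $\psi(y,x)=\ell'(y)\sigma'(x)$ satisfies the three parts of \Cref{asp:oracle function}, and then to verify the error-event hypothesis $\cE$ of \Cref{thm:uniform}. Once these hold, \Cref{thm:uniform} applies directly.

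\textbf{Choice of $\sigma$ and the high-pass condition.} We take $\sigma=h_{s^\star}$. Then $\sigma'(x)=\sqrt{s^\star}\,h_{s^\star-1}(x)$, and the Hermite expansion of $\psi$ has a single term:
\[
\hat\psi_s(y)=\sqrt{s^\star}\,\ell'(y)\,\mathbb{1}\{s=s^\star-1\}.
\]
So $\hat\psi_s\equiv0$ for every $s\neq s^\star-1$, which covers $1\le s\le s^\star-2$. When $s^\star=2$ we also get $\hat\psi_1=\sqrt2\,\ell'$, and the debiasing term in Line~\ref{algline:uniform_aggregate} handles it. The remaining high-pass requirement reduces to the scalar condition
\[
|\EE_\QQ[\zeta_{s^\star}(y)\ell'(y)]|\ge C/\sqrt{s^\star}.
\]

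\textbf{Choice of $\ell$ (the main obstacle).} The difficulty is that $\zeta_{s^\star}$ is unknown and need not be bounded, while we want $\ell'$ bounded. Definition~\ref{def:generative-exponent} gives $\EE_\QQ[\zeta_{s^\star}^2]>0$, so $\zeta_{s^\star}$ is a nonzero element of $L^2(\QQ(y))$. Bounded functions are dense in $L^2(\QQ(y))$; for example, $\{\cos(\omega y),\sin(\omega y)\}$ or truncations of $\zeta_{s^\star}$ work. Hence some bounded $\phi$ satisfies $\EE_\QQ[\zeta_{s^\star}\phi]\neq0$.

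To avoid knowing $\zeta_{s^\star}$, we randomize. Set $\ell'(y)=\sum_{j=1}^J c_j\phi_j(y)$, where $\{\phi_j\}$ is a fixed family of bounded functions such as $\phi_j(y)=\tanh$-type or trigonometric features, and the $c_j\sim\unif([-1,1])$ are independent. Density ensures that for some finite $J$ the linear functional $c\mapsto \sum_j c_j\EE_\QQ[\zeta_{s^\star}\phi_j]$ is not identically zero. Anti-concentration of a nonzero linear form in uniform variables then gives $|\EE_\QQ[\zeta_{s^\star}\ell']|\ge C$ with constant probability, and with probability $1-\delta$ if $C$ is allowed to depend on $\delta$. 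This is why the paper calls it a random loss.

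One subtle point: $J$ depends on the link $p$ but not on $d$, so constants remain $d$-independent.

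\textbf{Integrability and tail conditions.} With $\|\ell'\|_\infty\le B$, both $\psi^2x^2$ and $\psi^2$ are bounded by $B^2$ times a polynomial in $x$. Their squares are therefore integrable under $\QQ$, where $x\sim\cN(0,1)$ is independent of $y$. This is condition (a).

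For condition (c), we need $\EE|\psi|^r\le B^r\EE|\sigma'(x)|^r$. Under both $\PP$ and $\QQ$, the marginal of $x$ is $\cN(0,1)$. Gaussian hypercontractivity for the degree-$(s^\star-1)$ polynomial $\sigma'$ then gives $\EE|\sigma'(x)|^r\le (r-1)^{(s^\star-1)r/2}\|\sigma'\|_2^r$, which is of the form $Cr^{Cr}$.

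\textbf{The error event $\cE$.} From \Cref{exp:psi_3}, $|\err|\le \|\ell''\|_\infty|f(z;\btheta,\ba)|\,|\sigma'(\langle w,z\rangle)|$, so we also need $\ell''$ bounded; smooth $\phi_j$ ensure this. We have $|f|\le Ma\max_m|\sigma(\langle\theta_m,z\rangle)|$. With probability $1-O(d^{-c_0})$, every inner product $\langle\cdot,z_i\rangle$ over the polynomially many $(m,l,i,t)$ is $O(\sqrt{\log d})$, by a Gaussian tail bound and a union bound. On that event, taking $a\le d^{-11s^\star}$ makes $|\err|\le d^{-10s^\star}$.

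All hypotheses of \Cref{thm:uniform} are then satisfied, which proves the corollary.
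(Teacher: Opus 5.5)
Your proof is correct. The activation step and the verification of the integrability, tail and error-event conditions match the paper, but your loss is constructed differently.

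\textbf{The paper's construction (\Cref{app:example_psi_3}).} It takes $\sigma=h_{s^\star}/\sqrt{s^\star}$ and $\ell'(y)=\sum_{i=0}^D\alpha_i\,\varphi_i(F_{\QQ_y}(y))$, with $\alpha_i\sim\unif([0,1])$ and $\{\varphi_i\}$ the Fourier basis of $[0,1]$. It relies on the extra \Cref{asp:fourier}: $\zeta_{s^\star}\circ F_{\QQ_y}^{-1}$ has a nonzero Fourier coefficient of order at most $D=O(1)$. Then $\EE_\QQ[\zeta_{s^\star}\hat\psi_{s^\star-1}]$ is a nonzero linear form in $\alpha$, hence nonzero almost surely. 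Boundedness of $\ell''=\sum_i\alpha_i\varphi_i'(F_{\QQ_y})f_{\QQ_y}$ needs access to $F_{\QQ_y}$ and an assumed bounded density. Discrete labels get a separate treatment in which $\err$ is simply set to zero.

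\textbf{What your route buys.} You replace the probability-integral transform with a fixed family of bounded smooth features of $y$ whose span is dense in $L^2(\QQ_y)$. You need neither $F_{\QQ_y}$ nor a bounded density. Continuous and discrete labels are handled uniformly, and in both cases the oracle genuinely comes from a loss derivative. You also get a quantitative lower bound from anti-concentration.

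\textbf{What the paper's route buys.} Because $\varphi_i\circ F_{\QQ_y}$ is orthonormal in $L^2(\QQ_y)$, the truncation condition becomes a clean statement about Fourier coefficients of $\zeta_{s^\star}\circ F_{\QQ_y}^{-1}$.

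\textbf{Points to make explicit.} First, the family must be countable, with dense span and bounded derivatives. An example is $\cos(\omega_j y),\sin(\omega_j y)$ with $\{\omega_j\}$ enumerating the rationals. Integer frequencies alone would give only periodic functions, which are not dense on $\RR$. Second, the truncation level $J$ must be a known, link-dependent constant; this is exactly the role \Cref{asp:fourier} plays for $D$ in the paper. Third, the zero set of a nonzero linear form in $c$ is a null hyperplane, so your high-pass condition actually holds almost surely, as in the paper, not just with constant probability.
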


\subsection{Proof Sketch of the Main Theorem for Uniform Prior}\label{subsec:uniform_prior_sketch}

For simplicity, denote $\rho_m^{(t)} := \langle\theta_m^{(t)},\theta^\star\rangle$, the alignment between the weights of neuron $m$ and the signal $\theta^\star$ at time $t$.
Recall from Line~\ref{algline:uniform_update} in \Cref{alg:meta} that the update for neuron $m$ at time step $t$ is
\begin{align}\label{eq:update_alignment}
    \theta_m^{(t+1)} = \frac{\theta_m^{(t)} + \eta \bar g_m^{(t)}}{\|\theta_m^{(t)} + \eta \bar g_m^{(t)}\|_2}.
\end{align}
This implies that the alignment of the next iteration, $\rho_m^{(t+1)}$, is a convex combination of the previous alignment $\rho_m^{(t)}$ and the alignment of the update step $\langle\bar g_m^{(t)},\theta^\star\rangle=\langle g_m^{(t)},\theta^\star\rangle / \|g_m^{(t)}\|_2$.
Therefore, to show that the alignment improves after one iteration, we need to first analyze the scale of $\langle g_m^{(t)},\theta^\star\rangle$ and $\|g_m^{(t)}\|_2$; then we will be able to characterize the improvement of $\rho_m^{(t)}$ across iterations.

\paragraph{Alignment of the update step $\langle \bar g_m^{(t)},\theta^\star\rangle$}
To this end, we calculate the first and second moments of $g_m^{(t)}$ over the randomness of the data $\{(z_i^{(t)},y_i^{(t)})\}_{i=1}^n$, and combining these leads to the concentration of $\langle g_m^{(t)},\theta^\star\rangle/\|g_m^{(t)}\|_2$.
More specifically, for the first moment of $g_m^{(t)}$, we have
\begin{align}
    \langle\EE_{\PP_{\theta^\star}} [g_m^{(t)}],\theta^\star\rangle &\approx \rho_m^{(t)} \gamma \cdot (|\rho_m^{(t)}|\gamma + d^{-1/2})^{s^\star - 2},
\end{align}
while the magnitude of $\EE_{\PP_{\theta^\star}}[g_m^{(t)}]$ in any other direction orthogonal to $\theta^\star$ is of strictly higher order.
For the second moment of $g_m^{(t)}$, setting $\gamma=\tilde \Theta(d^{-1/4})$, it can be shown that for any direction $v\in\SS^{d-1}$, $\EE_{\PP_{\theta^\star}}[\langle g_m^{(t)},v\rangle^2]=\tilde O(d^{-(s^\star-1)/2})$.
Now choosing $n=\tilde\Omega(d^{s^\star/2})$, it follows from a Bernstein-type concentration inequality that the fluctuation of $\langle g_m^{(t)},v\rangle$ is of the same order $\tilde O(d^{-s^\star/2+1/4})$ for any direction $v\in\SS^{d-1}$.
Therefore, with high probability,
\begin{align}\label{eq:g_property}
\begin{aligned}
    \langle g_m^{(t)},\theta^\star\rangle &\geq \rho_m^{(t)} \gamma \cdot (|\rho_m^{(t)}|\gamma + d^{-1/2})^{s^\star - 2} - \tilde O(d^{-s^\star/2+1/4}),\\
     \|g_m^{(t)}\|_2 &\leq \rho_m^{(t)} \gamma\cdot (|\rho_m^{(t)}|\gamma + d^{-1/2})^{s^\star - 2} + \tilde O(d^{-s^\star/2+3/4}).
\end{aligned}
\end{align}
The displayed bounds are stated for the actual implemented gradient.
Here the oracle-error term $\err_{m,l,i}^{(t)}$, incorporated in the gradient computation in Line~\ref{algline:uniform_gradient}, is the residual between the ideal oracle search direction $\psi(y_i^{(t)},\langle w_{m,l}^{(t)},z_i^{(t)}\rangle)z_i^{(t)}$ and the corresponding gradient contribution produced by the concrete training implementation.
The detailed proof first analyzes the gradient obtained by dropping this residual, and then uses the oracle-error event proved in \Cref{app:preliminary} to show that adding it changes each directional projection and the norm by at most $d^{-9s^\star}$, which is negligible compared with the signal and fluctuation scales above.

\paragraph{Phase 1: from $d^{-1/2}$ to weak alignment}
Since $M=\Theta(\log d)$ and each random initialization has $|\rho_m^{(0)}|\ge d^{-1/2}/2$ with constant probability, a Chernoff bound implies that this holds for $\Omega(M)$ many neurons with high probability.
Therefore, it suffices to consider a neuron with $|\rho_m^{(t)}|=\Omega(d^{-1/2})$.
When $\Omega(d^{-1/2})\leq \rho_m^{(t)}\leq O(1)$, by choosing $\gamma = \tilde\Theta(d^{-1/4})$, we can ensure that
the first term in the lower bound for $\langle g_m^{(t)},\theta^\star\rangle$ in \eqref{eq:g_property} dominates the $\tilde O(d^{-s^\star/2+1/4})$ fluctuation.
Based on this, we can leverage \eqref{eq:g_property} to further show that $\langle \bar g_m^{(t)},\theta^\star\rangle = \langle g_m^{(t)},\theta^\star\rangle/\|g_m^{(t)}\|_2 \geq (1+c) \rho_m^{(t)}$ for a constant $c>0$.
Consequently, it follows from \eqref{eq:update_alignment} that
\begin{align}
    \rho_m^{(t+1)} \geq (1+c) \rho_m^{(t)}\quad \text{ for some constant $c>0$.}
\end{align}
Therefore, it takes $O(\log d)$ many steps for $\rho_m^{(t)}$ to increase from $d^{-1/2}$ to $O(1)$.
During this period, the dynamics will go through two phases separated by a critical alignment level $\rho^\star$ such that
\begin{align}
  \rho_m^{(t)} \gamma\cdot (|\rho_m^{(t)}|\gamma + d^{-1/2})^{s^\star - 2} \approx \tilde O(d^{-s^\star/2+3/4}),
\end{align}
which gives that $\rho^\star=\tilde\Theta(d^{-1/4})$.
After the alignment $\rho_m^{(t)}$ reaches $\rho^\star$, there is a short period where $\rho_m^{(t)}$ grows rapidly as $\rho_m^{(t+1)}/\rho^\star\geq (\rho_m^{(t)}/\rho^\star)^{s^\star-1}$, until the alignment reaches $d^{-1/4+1/4(s^\star-1)}$.
The length of this period is very short compared to the other periods on the road to weak alignment.

\paragraph{Phase 2: from weak to strong alignment}
Finally, after $\rho_m^{(t)}$ grows to a constant scale, we need to track the value of $1-\rho_m^{(t)}$.
Again using \eqref{eq:g_property}, we can show that $1-(\rho_m^{(t+1)})^2 \leq \tau(1-(\rho_m^{(t)})^2)+o(1)$ for some constant $\tau<1$.
Hence, it takes another $O(\log d)$ steps to eventually achieve strong alignment.

\section{Exploiting the Structure: Algorithm for Sparse Prior}\label{sec:sparse}
We have seen that the sample complexity scales polynomially with the ambient dimension $d$ when the prior on $\theta^\star$ is uninformative, and our method achieves nearly optimal computational-statistical tradeoff under the SQ framework.
It is natural to ask whether our method can benefit from extra structural information on $\theta^\star$, one classic example being sparsity.
Towards this end, we consider an extension of the framework in the previous section to the setting where $\theta^\star$ is a $k$-sparse vector.
We first introduce the algorithm for extreme sparsity $k=o(\sqrt d)$ and then discuss the general sparse case.

\paragraph{Gaussian single-index model with sparse signal}
Given sparsity level $k=o(\sqrt{d})$, we consider the Gaussian single-index model in \eqref{eq:single_index_model} with $\theta^\star$ drawn from a $k$-sparse prior:
\begin{align}\label{eq:sparse_prior}
    \pi_k: \quad \theta^\star\giv \phi^\star \sim \unif\big(\SS^{k-1}(\phi^\star)\big), \quad \phi^\star \sim \unif(\cS_{k}),
\end{align}
where $\cS_k \defeq \{\phi \subset [d]: |\phi| = k\}$ is the collection of all $k$-sparse support sets, and $\SSS^{k-1}(\phi) \defeq \{x\in \RR^d: \sum_{i\in \phi} x_i^2 =1, x_j =0, \forall j\notin \phi\}$ is the associated $k$-dimensional unit sphere for any $\phi\in \cS_k$.

We show that \Cref{alg:meta} for the uniform prior can be easily adapted to leverage the sparsity of $\theta^\star$, striking a nearly optimal computational-statistical balance with $\tilde O(k^{s^\star})$ sample complexity.

\subsection{Algorithm Design: How to Leverage Sparsity?}\label{sec:sparse_design}
Note that \Cref{alg:meta} can also learn the $k$-sparse Gaussian single-index model, albeit with $\tilde O(d^{s^\star/2} \lor d)$ samples, which is apparently suboptimal in light of the classic example of sparse linear regression.
Here the key challenge is \emph{support identification} of $\phi^\star$,
and the issue of \Cref{alg:meta} lies in the weight perturbation using noise $\xi\sim\unif(\SS^{d-1})$, thus unaware of the sparsity of $\theta^\star$.
Below we discuss how to calibrate the weight perturbation with the sparse prior.

\paragraph{Perturbation by replicating the prior is not enough}
An intuitive first-cut attempt is to use perturbation noise from the same distribution as the sparse prior $\pi_k$ in \eqref{eq:sparse_prior}, which turns out to be suboptimal as well.
To illustrate this, we assume for simplicity a balanced $\theta^\star$ where every nonzero entry of $\theta^\star$ is equal to $k^{-1/2}$, and consider i.i.d. $\xi_1,\ldots,\xi_L\sim\pi_k$.
For each $j\in\phi^\star$, consider the $j$-th entry of the lowest-degree informative query (analogous to \eqref{eq:1st-moment-approx}), whose first moment satisfies
\begin{align}
    \EE_{\PP_{\theta^\star}}[g_j] \approx \EE_\QQ[y \zeta_s(y)] \cdot s^\star \hat \sigma_{s^\star} \cdot \frac 1 L \sum_{l=1}^L \langle \theta^\star, \theta_l\rangle^{s^\star - 1} \theta_j
    \approx \frac{\EE_{\theta\sim\pi_k}[\langle \theta^\star, \theta\rangle^{s^\star - 1}]}{\sqrt{k}}
    \simeq \frac{k^2}{d} \cdot \frac{k^{-(s^\star - 1)}}{\sqrt k},
\end{align}
where the last step follows from direct calculation for $\theta\sim\pi_k$.
Similarly, the second moment satisfies
\begin{align}
    \EE_{\PP_{\theta^\star}}[g_j^2] \approx \sum_{s\ge s^\star} s \cdot (\hat\sigma_{s})^2 \cdot \EE_{\theta, \theta'\sim \pi_k}[\langle \theta, \theta'\rangle^{s - 1}] \simeq \frac{k^2}{d} \cdot k^{-(s^\star - 1)},
\end{align}
where $\theta$ and $\theta'$ are drawn independently from the prior $\pi_k$.
This calculation implies that the fluctuation of each entry of the aggregated gradient is of order $\sqrt{k^2/d} \cdot k^{-(s^\star - 1)/2} \cdot n^{-1/2}$ for batch size $n$.
To successfully identify the true support $\phi^\star$, the signal must be larger than the fluctuation for entries in $\phi^\star$, resulting in a sample complexity of $n=\tilde{O}(k^{s^\star} \cdot d / k^2)$.
In comparison to the SQ lower bound in \Cref{thm:sq-lower-bound-sparse}, this is suboptimal by a factor $d/k^2$.
However, for $s^\star = 1$, we observe that $\EE_{\PP_{\theta^\star}}[g_j] = \Omega(k^{-1})$ for $j \in \phi^\star$ and $\EE_{\PP_{\theta^\star}}[g_j^2] = O(1)$, indicating that the support $\phi^\star$ can still be identified using $\tilde O(k)$ samples.
The form of the perturbation does not matter here since both $\langle \theta^\star, \theta\rangle$ and $\langle \theta, \theta'\rangle$ are degree zero in terms of $k^{-1}$.
Therefore, we conjecture that our algorithm succeeds for $s^\star = 1$ even without weight perturbation as outlined in \Cref{conj:s^star = 1}.

\paragraph{Perturbation by groups that cover the prior}
The suboptimality of the previous strategy originates from the fact that $\phi^\star$ is sampled from a uniform distribution over $\binom{d}{k}$ different $k$-sparse support sets, making it unlikely for two independent sets to overlap (only with $k^2/d$ probability).
Then how to perturb the weights in a way that guarantees a significant overlap with $\phi^\star$? The solution is to construct \emph{a polynomial-size cover} for the prior $\pi_k$.
Specifically, we divide $\cS_k$ into $d$ subsets, where the $j$-th subset is defined as $\cS_{k,j} := \{\phi \in \cS_k \mid j \in \phi\}$, which contains all $k$-sparse support sets that include the $j$-th coordinate.
Now suppose $\theta^\star\in\cS_{k,j}$, then for any perturbed weight $\theta_l$ with support from the same subset $\cS_{k,j}$, its support overlaps with $\phi^\star$ almost surely, thereby eliminating the $d/k^2$ factor.

In particular, considering a two-layer neural network with width $d$, the above strategy can be carried out by perturbing each neuron $m$ using $\theta_{m, 1},\ldots,\theta_{m,L}$ whose support sets are sampled from the same group $\cS_{k,m}$.
As a result, at least $k$ neurons will have one or more overlapping coordinates with the true signal $\theta^\star$.
For these neurons, the signal in the aggregated gradient would be strong enough for simple thresholding methods to correctly identify the true support $\phi^\star$ with $\tilde O(k^\star)$ samples.
We further refine this by first projecting the aggregated gradient for each neuron onto its top-$k$ support, and then selecting the \emph{strongest} projected gradient to update the weights.

Combining these yields \Cref{alg:sparse-warm-up} for the sparse case, where we define the support projection matrix $P_\phi\coloneqq\sum_{i\in \phi} e_i e_i^\top$ and the top-$k$ operator
$\mathsf{Top}_k (v) \defeq \argmax_{\phi\subset \cS_k } \norm{P_\phi(v)}_1$, which extracts the $k$-sparse support $\phi$ corresponding to the largest (in absolute value) $k$ entries of $v$.
We set the polarization level $\gamma = k^{-1/2}$, following the same balance between exploitation and exploration as in the uniform case, since the exploration noise is now of order $k^{-1}$.

Our sparsity-aware perturbation is related in spirit to the pruning approach of \citet{vural2024pruning}, who show that pruning neural networks to the sparsity level of the relevant features can improve the sample complexity for sparse single- and multi-index models and match sparsity-aware CSQ lower bounds in the sufficiently sparse regime.
Their guarantees for sparse single-index models are governed by the information exponent of polynomial link functions, whereas our target rate is governed by the generative exponent $s^\star$ of a general link distribution.
The mechanisms are also different: pruning starts from a dense model and adaptively selects sparse relevant features during training, whereas our method builds the sparsity structure directly into the exploration step.
Different neurons probe a cover of sparse supports, the aggregated gradients are projected onto their top-$k$ coordinates, and the strongest projected direction is shared across neurons.
This removes the $d/k^2$ overlap loss of prior-matched random perturbations and yields the $\tilde O(k^{s^\star})$ rate matching the SQ lower bound.

\subsection{Sample Complexity Analysis for Sparse Prior}\label{sec:sparse_alg}

\begin{algorithm}[t]
        \caption{Gradient-based Feature Learning for Sparse Signal Prior}
        \label{alg:sparse-warm-up}
        \begin{algorithmic}[1]
            \STATE \textbf{Input}:
            Initialization
            $\btheta^{(0)}=(\theta_1^{(0)},\ldots,\theta_M^{(0)})\in\RR^{d\times M}$, where $\theta^{(0)}_m = e_m$,
            $\bm{a} = a \cdot \vone\in\RR^{M}$ with number of neurons $M=d$,
            number of iterations $T\in\mathbb{N}$,
            batch size $n\in\mathbb{N}$,
            polarization level $\gamma\in(0,1)$,
            number of perturbations $L\in\mathbb{N}$.
            \FOR{iteration $t=0,1,\ldots,T-1$}
            \STATE Sample a fresh mini-batch $\{(z_i^\t, y_i^\t )\}_{i=1}^{n}$.
                \STATE Perturb as Line~\ref{algline:uniform_perturb} in \Cref{alg:meta} with $\xi_{m,l}^\t\overset{\iid}{\sim}\unif(\SS^{k-1}(\phi_{m,l}))$ and $\phi_{m,l}\overset{\iid}{\sim}\unif(\cS_{k,m})$.
                \STATE Compute and aggregate the gradients to get $g_m^{(t)}$, same as Lines~\ref{algline:uniform_gradient} and \ref{algline:uniform_aggregate} in \Cref{alg:meta}.
            \STATE Find the top-$k$ support of $g_m^\t$ and project: $\phi_m^\t = \mathsf{Top}_k(g_m^\t)$, $\tilde g_m^\t = P_{\phi_m^\t}(g_m^\t)$ for all $m$.
            \STATE Locate the neuron with the largest $\norm{\tilde g_m^\t}_2$: $\wh m=\argmax_m \norm{\tilde g_m^\t}_2$.
            \STATE Update weights by gradient sharing: $\theta^{(t+1)}_m = \tilde g_{\wh m}^\t / \norm{\tilde g_{\wh m}^\t}_2$ for all $m$. \label{alg:sparse_update}
            \ENDFOR
            \STATE \textbf{Output}: Model weights $\btheta^{(T)}$.
        \end{algorithmic}
\end{algorithm}

\begin{theorem}[Sample complexity for sparse prior]\label{thm:sparse}
Under \Cref{asp:oracle function},
consider the sparse prior in \eqref{eq:sparse_prior} with sparsity level $k$ satisfying $\omega(d^{\iota}) < k < o(\sqrt d)$ for a small $\iota>0$.
Suppose that the event $\cE = \{|\err_{m,l,i}^{(t)}|\leq d^{-10s^{\star}},\forall (m,l,i,t)\in [M]\times[L]\times[n]\times[T]\}$ holds with probability at least $1-O(d^{-c_0})$ for some constant $c_0>0$ with $(M,L,n,T)$ specified as follows.
Let $\gamma=k^{-1/2}$ and
\begin{align}
   n=\Omega \big((k \log^3 k)^{s^\star}\cdot \log d\big),\quad L= \Omega\big(k^{(s^\star+3)/2}\cdot \log(k)^{s^\star-1}\big).
\end{align}
Then with probability at least $1-O(k^{-c})$ for some $c>0$, after running \Cref{alg:sparse-warm-up} with $T=2$ iterations, all $M$ output neurons share a weight with alignment
\begin{align}
|\dotp{\theta^{(T)}_m }{\theta^{\star}}| \ge 1-O(\Delta),\quad \forall m\in[M],
\end{align}
where $\Delta = k^{-1} \vee \big(k^{-(s^\star-1)/2}\cdot \log(k)^{-3/2} \big) = o(1)$.
\end{theorem}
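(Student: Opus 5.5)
The plan is to analyze the two iterations of \Cref{alg:sparse-warm-up} separately. The first iteration is \emph{support identification} starting from $\theta_m^{(0)}=e_m$. The second is a \emph{refinement} step starting from a weight that is already supported on (most of) $\phi^\star$ with constant alignment. Both steps use the same first- and second-moment computations as in the proof of \Cref{thm:uniform}. The perturbation noise now has inner products of order $k^{-1/2}$ in place of $d^{-1/2}$, with $\gamma=k^{-1/2}$. We first drop the error terms $\err_{m,l,i}$. On $\cE$ they move every coordinate of $g_m$ by at most $d^{-9s^\star}$, which is negligible; we restore them at the end.

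\textbf{Step 1 (first moment for a covering neuron).} Fix $j\in\phi^\star$ and consider neuron $m=j$. For $\phi_{m,l}\sim\unif(\cS_{k,j})$ the support always contains $j$, and it typically contains $\approx k^2/d=o(1)$ further coordinates of $\phi^\star$. Hence $\langle w_{m,l},\theta^\star\rangle\approx(\gamma+\xi_{m,l,j})\theta^\star_j$ with $\xi_{m,l,j}\sim k^{-1/2}$. Hermite expansion gives $\EE[\langle g_m,\theta^\star\rangle]\approx \EE_\QQ[\zeta_{s^\star}\hat\psi_{s^\star-1}]\,\EE\langle w,\theta^\star\rangle^{s^\star-1}$ coordinatewise, and the debiasing term removes the degree-one part. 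By the high-pass condition in \Cref{asp:oracle function}, the expected coordinate $i\in\phi^\star$ of $g_m$ is of order $\theta^\star_i\,\EE_\xi\langle w_{m,l},\theta^\star\rangle^{s^\star-1}\gtrsim\theta_i^\star(k^{-1/2}|\theta^\star_j|)^{s^\star-1}$. Coordinates off $\phi^\star$ have zero mean. Choosing $j$ to be a coordinate with $|\theta^\star_j|\gtrsim k^{-1/2}$ (one exists because $\|\theta^\star\|=1$) gives a signal of order $\theta^\star_i k^{-(s^\star-1)}$.

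\textbf{Step 2 (second moment and concentration).} For any coordinate we bound $\EE[g_{m,i}^2]\lesssim\sum_{s\ge s^\star} s\hat\psi$-weights times $\EE\langle w_l,w_{l'}\rangle^{s-1}$. Since $\langle w_l,w_{l'}\rangle\approx\gamma^2+O(k^{-1})\cdot$(overlap), this is $\tilde O(k^{-(s^\star-1)})$, and the diagonal $l=l'$ terms are made negligible by taking $L\gtrsim k^{(s^\star+3)/2}$. Using the polynomial-like tail and Bernstein's inequality with a union bound over $d\cdot M$ coordinates (the source of the $\log d$ factor), each coordinate fluctuates by $\tilde O(k^{-(s^\star-1)/2}n^{-1/2})$. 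With $n=\tilde\Omega(k^{s^\star})$ this is at most $k^{-s^\star+1/2}/\mathrm{polylog}$, below the signal for all $i\in\phi^\star$ with $|\theta_i^\star|\gtrsim k^{-1/2}(\log k)^{-3/2}$. Consequently, for the good neuron, $\mathsf{Top}_k$ captures all heavy coordinates, and $\tilde g_m$ has alignment $1-O(\Delta)$ up to the mass of light coordinates.

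\textbf{Step 3 (selection and second step).} The argmax over neurons must not pick a bad neuron. A neuron with $m\notin\phi^\star$ has only mean-zero coordinates plus a small signal from accidental overlaps, so its projected norm is at most $\sqrt{k}\cdot$ the fluctuation level, which we show is smaller than the good neuron's signal norm $\approx k^{-(s^\star-1)}$. The selected neuron is then good up to a small-alignment error. After sharing, $\theta^{(1)}_m$ has constant alignment. We rerun Steps 1--2 with $w=(\gamma\theta^{(1)}+\xi)/\|\cdot\|$; the exploitation term $\gamma\rho$ now boosts every coordinate of $\phi^\star$ equally. This step cleans up the light coordinates and yields $|\langle\theta^{(2)},\theta^\star\rangle|\ge1-O(\Delta)$.

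\textbf{Main obstacle.} The hardest part is Step 3's comparison between good and bad neurons, together with the exact bookkeeping of the overlap counts under $\unif(\cS_{k,m})$, including hypergeometric tails for $|\phi_{m,l}\cap\phi^\star|$ in the second-moment bound. I would first bound everything conditional on the overlap sizes and then average using $k=o(\sqrt d)$.
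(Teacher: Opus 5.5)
Your two-step plan follows the paper's proof: support identification from $\theta_m^{(0)}=e_m$ with covering perturbations, coordinatewise moments, $\mathsf{Top}_k$ and argmax sharing, then refinement from constant alignment. Two steps, however, do not go through as written.

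\textbf{First, the diagonal terms.} The diagonal $l=l'$ terms contribute $\Theta(L^{-1})$ to the per-sample variance of every coordinate. The off-diagonal part is at most $(\gamma^2+\epsilon^2\log k)^{s^\star-1}\approx(k^{-1}\log^3k)^{s^\star-1}$, so the diagonal terms are negligible only if $L\gtrsim(k/\log^3k)^{s^\star-1}$. The value $L\approx k^{(s^\star+3)/2}\log^{s^\star-1}k$ meets this only for $s^\star\le 5$. For $s^\star\ge 6$ the coordinate noise is $\approx(nL)^{-1/2}$. At $n\approx k^{s^\star}$ this exceeds the heavy-coordinate signal $k^{-(s^\star-1/2)}$ by roughly $k^{(s^\star-5)/4}$, so the first $\mathsf{Top}_k$ is dominated by noise coordinates. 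The paper's proof has the same slip: it derives the coordinatewise bound $k^{-(s^\star-1/2)}\log^{-3/2}k$ from \Cref{prop:sparse-gradient-fluctuation} while silently dropping the $L^{-1}$ term. You need either $s^\star\le5$ or $L=\tilde\Omega\bigl(k^{(s^\star+3)/2}\vee k^{s^\star-1}\bigr)$.

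\textbf{Second, the refinement step.} You ``rerun Steps 1--2'', which inherits the claim that coordinates off $\phi^\star$ have zero mean. That claim fails once $\theta=\theta^{(1)}\neq e_m$. The term $\frac1L\sum_l\langle w_l,\theta^\star\rangle^{s^\star}w_l$ in \Cref{lem:first-moment-decomp} then contains the piece $\approx(\gamma\rho)^{s^\star}\gamma\,\theta^{(1)}$. This is a drift toward the previous iterate, of size $O(\gamma^2|\rho|)=O(k^{-1})$ relative to the signal $(\gamma\rho)^{s^\star-1}\theta^\star$. It is supported on $\mathrm{supp}(\theta^{(1)})$, which includes the off-support coordinates admitted by the first $\mathsf{Top}_k$. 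It therefore competes with weak on-support coordinates for the top-$k$ slots and adds an error of relative norm $O(k^{-1})$ to $\tilde g$. Your sketch does not account for it at all.

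The paper controls this drift through two devices:
\begin{itemize}
\item the remainder $r^a_{m,j}\le C_a k^{-(s^\star+1)/2}\bigl(|\theta_j|+(k/d)^{1/2}\bigr)$ for $j\neq m$;
\item the retained set $\phi^\dagger=\{j:|\theta^\star_j|\ge k^{-1}\}$.
\end{itemize}
This is where the $k^{-1}$ term in $\Delta$ comes from.

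\textbf{A smaller point on selection.} Your argument only rules out $m\notin\phi^\star$. The paper instead bounds $\|\tilde g_m\|_2^2\lesssim k^{-(s^\star-1)}|\theta^\star_m|^{2(s^\star-1)}+o(k^{-2(s^\star-1)})$ uniformly in $m$. Comparing this with the lower bound for heavy neurons also excludes neurons in $\phi^\star$ with small $|\theta^\star_m|$, and forces $|\theta^\star_{\hat m}|\gtrsim k^{-1/2}$.
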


\Cref{thm:sparse} shows that the sample complexity of \Cref{alg:sparse-warm-up} is $n=\tilde O(k^{s^\star})$, matching the SQ lower bound established in \Cref{thm:sq-lower-bound-sparse} which will be presented below.
Here for simplicity, we essentially use an infinitely large learning rate when updating the weights in Line~\ref{alg:sparse_update} of \Cref{alg:sparse-warm-up}, so it takes only two iterations to achieve strong alignment.
This is equivalent to running the same algorithm with a finite learning rate but with a larger number of iterations, which is omitted for brevity.

\begin{remark}[Implication for sparse tensor PCA]
The connection between the Gaussian single-index model and tensor PCA has been discussed in \citet{damian2024smoothing}, by showing that estimating $\theta^\star$ corresponds to a tensor PCA problem defined over the empirical Hermite tensors.
Our weight perturbation technique can be potentially applied to iteratively solve sparse tensor PCA problems.
\end{remark}

Next we present the conjecture on the success of our algorithm for $s^\star=1$ whose intuition is already discussed in \Cref{sec:sparse_design}.
\begin{conjecture}\label{conj:s^star = 1}
For $d^\iota < k < o(d)$ with $s^\star=1$, \Cref{alg:sparse-warm-up} succeeds with sample complexity $n=\tilde O(k)$.
Furthermore, the same guarantee applies even without perturbing the weights.
\end{conjecture}

Finally, we present the following SQ lower bound for the sparse prior, complementing \Cref{thm:sparse}.
\begin{theorem}[SQ lower bound]\label{thm:sq-lower-bound-sparse}
Consider the Gaussian single-index model in \eqref{eq:single_index_model} with generative exponent $s^\star\geq 1$.
Suppose $\theta^\star$ is $k$-sparse for $\omega((\log d)^2) \le k \le d/2$.
Take $c>2$ as a constant.
For any (stochastic) algorithm using $\exp(\Omega((\log d)^c))$ calls to the $\VSTAT(\PP_{\theta^\star}, n)$ oracle with sample size $n$, in order to achieve nontrivial alignment $|\langle \hat\theta, \theta^\star\rangle|>\rho$ with probability at least $2/3$, it requires
\begin{align}
     n \gtrsim \frac{k^{s^\star}}{(\log d)^{cs^\star}}, &\where \rho = \tilde\omega(k^{-1}) \quad \mbox{ if } (\log d)^2 < k < \sqrt{d (\log d)^c }, \label{eq:sparse_lb}\\
    n \gtrsim \frac{d^{s^\star/2}}{(\log d)^{c s^\star/2}}, &\where \rho = \tilde\omega(d^{-1/2}) \quad \mbox{ if } \sqrt{d(\log d)^c } \le k \le d/2.\label{eq:nonsparse_lb}
\end{align}
\end{theorem}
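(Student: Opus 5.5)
We prove \Cref{thm:sq-lower-bound-sparse} with the standard statistical-dimension machinery for $\VSTAT$ oracles, following the template of \citet{damian2024computational} but with the uniform prior replaced by the sparse prior $\pi_k$ in \eqref{eq:sparse_prior}. Take the null distribution $\QQ(y,z)=\cN(z;0,I_d)\otimes\QQ(y)$. For two signals $\theta,\theta'$, the correlation of their likelihood ratios under $\QQ$ can be computed from the Hermite expansion \eqref{eq:likelihood-ratio-decomp} together with Mehler's formula $\EE_{z}[h_s(\langle\theta,z\rangle)h_{s'}(\langle\theta',z\rangle)]=\delta_{ss'}\langle\theta,\theta'\rangle^s$:
\begin{align}
\chi_\QQ(\PP_\theta,\PP_{\theta'}) \defeq \EE_\QQ\Big[\Big(\tfrac{\PP_\theta}{\QQ}-1\Big)\Big(\tfrac{\PP_{\theta'}}{\QQ}-1\Big)\Big]=\sum_{s\ge s^\star}\EE_\QQ[\zeta_s(y)^2]\,\langle\theta,\theta'\rangle^s .
\end{align}
Since $\EE_\QQ[\zeta_s^2]\le1$, we get $|\chi_\QQ|\le \sum_{s\ge s^\star}|\langle\theta,\theta'\rangle|^{s}\le 2|\langle\theta,\theta'\rangle|^{s^\star}$ whenever $|\langle\theta,\theta'\rangle|\le1/2$. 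Before applying the generic $\VSTAT$ lower bound, we reduce estimation to this correlation structure. An algorithm achieving alignment $\rho$ with probability $2/3$ can be converted, with one extra query, into a test distinguishing $\PP_{\theta^\star}$ from the planted family, or it can be handled directly by the ``random $\theta^\star$'' statistical-dimension argument with average correlation of Feldman et al.

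The main step is to build a large, nearly orthogonal packing inside the support of $\pi_k$. Consider the case $k<\sqrt{d(\log d)^c}$. Draw $\theta,\theta'$ independently from $\pi_k$; in the balanced version, each has entries $\pm k^{-1/2}$ on a random support. The overlap $|\phi\cap\phi'|$ is hypergeometric with mean $k^2/d\lesssim(\log d)^c$. Conditioned on the overlap $J$, $\langle\theta,\theta'\rangle$ is $k^{-1}$ times a sum of $J$ random signs, so $|\langle\theta,\theta'\rangle|\lesssim \sqrt{J\log(1/\delta)}/k$. Choose the tail level $\delta=\exp(-(\log d)^{c})$. By hypergeometric tail bounds, $J\le (\log d)^c$ with probability $1-\delta$, giving $|\langle\theta,\theta'\rangle|\le (\log d)^{c}/k$ outside an event of probability $\exp(-\Omega((\log d)^c))$. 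A greedy or probabilistic packing then yields $N=\exp(\Omega((\log d)^c))$ signals with pairwise $|\chi_\QQ|\le \tau:=2((\log d)^c/k)^{s^\star}$ and $\chi_\QQ(\PP_\theta,\PP_\theta)\le \EE_\QQ[(\PP/\QQ)^2]=O(1)$; this self-correlation bound is where square integrability is needed. The standard SDA bound then states that any algorithm making fewer than $\approx N\tau$ queries needs $n\gtrsim 1/\tau = k^{s^\star}/(\log d)^{cs^\star}$, which is \eqref{eq:sparse_lb}. In the regime $k\ge\sqrt{d(\log d)^c}$, the overlap is large and concentrates, so the inner product behaves like that of dense vectors: $|\langle\theta,\theta'\rangle|\lesssim\sqrt{(\log d)^c/d}$. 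The same argument then gives \eqref{eq:nonsparse_lb}.

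For the estimation-to-detection step, we follow \citet{damian2024computational}. If the output $\hat\theta$ satisfies $|\langle\hat\theta,\theta^\star\rangle|>\rho$, one further query can verify the alignment. More cleanly, we use the fact that for any fixed $\hat\theta$, $\Pr_{\theta^\star\sim\pi_k}[|\langle\hat\theta,\theta^\star\rangle|>\rho]$ is tiny once $\rho=\tilde\omega(k^{-1})$ (resp.\ $\tilde\omega(d^{-1/2})$). This tail estimate is obtained by the same hypergeometric and sign-concentration computation, here applied to an arbitrary unit vector $\hat\theta$ rather than a prior sample. Success with probability $2/3$ against a random $\theta^\star$ then forces the algorithm to have ``located'' $\theta^\star$ among exponentially many near-orthogonal candidates, which contradicts the SDA bound.

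The main obstacle is controlling $\langle\hat\theta,\theta^\star\rangle$ for an arbitrary dense $\hat\theta$ under the sparse prior, since $\hat\theta$ need not be sparse. To handle it, we would bound $\sum_{j\in\phi^\star}\hat\theta_j\epsilon_j/\sqrt k$ by Hoeffding conditionally on $\phi^\star$, using $\sum_{j\in\phi^\star}\hat\theta_j^2\le 1$. This gives $\tilde O(k^{-1/2})$, which is weaker than $k^{-1}$. Reaching $\rho=\tilde\omega(k^{-1})$ therefore requires the concentration of $\sum_{j\in\phi^\star}\hat\theta_j^2$ around $k/d$ plus its top-$k$ tail. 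If this turns out to be too lossy, we would instead run the argument at the level of average correlation, as in the stronger SDA variant, rather than pairwise packing, and accept the threshold $\rho$ that the computation yields.
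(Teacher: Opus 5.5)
Your packing and correlation argument is the paper's: balanced $\pm k^{-1/2}$ vectors on uniformly random supports, hypergeometric overlap plus sign concentration giving $\exp(\Omega((\log d)^c))$ candidates with pairwise overlap $t\approx(\log d)^c/k$ (resp.\ $\sqrt{(\log d)^c/d}$), and $\chi_\QQ(\PP_\theta,\PP_{\theta'})=\sum_{s\ge s^\star}\EE_\QQ[\zeta_s(y)^2]\langle\theta,\theta'\rangle^s=O(t^{s^\star})$. The chain then runs from $(\gamma,\beta)$-correlation to the SDA bound to the $\VSTAT$ bound. The divergence, and the gap, is the passage from ``alignment $>\rho$'' to the search problem the SDA bound controls.

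The paper does this in one line: pairwise overlaps are below $t$, so alignment $2t$ forces exact identification of $\PP_{\theta^\star}$. Your suspicion of that is justified. For $k\gg(\log d)^{2c}$, $\hat\theta=e_j$ has overlap $k^{-1/2}\gg 2t$ with every packing element whose support contains $j$. However, you never supply a replacement, for three reasons:
\begin{itemize}
\item The ``one extra query'' verification is unavailable at this sample size. A single $\VSTAT(n)$ query sees overlap $\rho$ only at scale $\rho^{s^\star}$, so it needs $n\gtrsim\rho^{-2s^\star}$.
\item The claim that $\Pr_{\theta^\star}[|\langle\hat\theta,\theta^\star\rangle|>\rho]$ is ``tiny'' is false: it is about $k/d$ for $\hat\theta=e_j$.
\item Your fallback leaves $\rho$ undetermined.
\end{itemize}
As written, the proposal does not reach the stated threshold.

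The missing observation is that the search version of the statistical dimension (Feldman et al.) only needs each fixed output $\hat\theta$ to be valid for at most an $\eta$-fraction of the candidates, with $\eta<\alpha=2/3$. Nothing superpolynomially small is required. So the worst-case bound $\sum_{j\in\phi}\hat\theta_j^2\le1$ that made your Hoeffding estimate $\tilde O(k^{-1/2})$ can be replaced by Markov's inequality.

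For $\omega$ with support $\phi\sim\unif(\cS_k)$ and i.i.d.\ signs, $\EE_\phi\|P_\phi\hat\theta\|_2^2=k/d$. Conditioning on $\phi$, applying Hoeffding, and taking $\lambda=k\rho^2/(2\log d)$ gives
\[
\Pr\big(|\langle\hat\theta,\omega\rangle|>\rho\big)\le\frac{k}{d\lambda}+2\exp\Big(-\frac{k\rho^2}{2\lambda}\Big)=\frac{2\log d}{d\rho^2}+\frac{2}{d}.
\]
In both regimes $d\rho^2\ge(\log d)^c$; in the sparse one this is because $k^2<d(\log d)^c$. So the bound is $O((\log d)^{1-c})=o(1)$.

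To pass from the random draw to the finite packing uniformly in $\hat\theta$:
\begin{itemize}
\item Take a $(\rho/2)$-net of $\SS^{d-1}$, of size $e^{O(d\log d)}$. For each net point $\hat\theta'$, apply Chernoff to the number of $i$ with $|\langle\hat\theta',\omega^{(i)}\rangle|>\rho/2$. The Chernoff exponent is $\Omega(m(\log d)^{1-c})$ with $m=\exp(\Omega((\log d)^c))$, so the net size is negligible.
\item Any $\hat\theta$ with overlap $>\rho$ has a net point with overlap $>\rho/2$, so the count transfers.
\item Intersect this event with the pairwise-overlap event.
\end{itemize}
With $\eta=o(1)$, the SDA and $\VSTAT$ steps then go through exactly as in the paper, at the claimed $\rho$.
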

In fact, the effective SQ lower bound should be no smaller than the information-theoretic lower bound $\Omega(k\log(d/k))$ \citep{neykov2016agnostic}.
For $k = o(\sqrt d)$, running \Cref{alg:sparse-warm-up} will succeed with $\tilde O(k^{s^\star})$ samples, matching the lower bound in \eqref{eq:sparse_lb} for every $s^\star \ge 1$.
For $k = \Omega(\sqrt{d})$, running \Cref{alg:meta} will succeed with $\tilde O(d^{s^\star/2})$ samples, matching the lower bound in \eqref{eq:nonsparse_lb} for $s^\star \ge 2$.
In addition, for $k = \Omega(\sqrt{d})$ with $s^\star = 1$, we conjecture $n=\tilde O(k)$ samples to be sufficient, where the information-theoretic lower bound is $\Omega(k \log(d/k))$.
This gives rise to the paradigm in \Cref{fig:intro} (b).

\section{Conclusions and Future Directions}
\label{sec:conclusion}

We studied whether neural networks trained by gradient-based methods can attain the optimal computational-statistical tradeoff for learning Gaussian single-index models.
For an unknown link distribution with generative exponent $s^\star$, we gave a unified gradient-based procedure for training a two-layer network whose weights strongly align with the planted direction using $\widetilde O(d^{s^\star/2}\vee d)$ samples.
This matches the SQ lower bound up to polylogarithmic factors for every $s^\star\geq 1$.
The algorithm combines two ingredients: an oracle view of the gradient, which captures label transformation and modified losses, and a weight perturbation scheme, which uses random exploration to suppress the low-SNR directions that obstruct vanilla online SGD.
In this sense, the result identifies a route by which feature learning can go beyond the kernel or CSQ limitations while remaining within a gradient-based training framework.

We also showed that the same perspective can exploit structure in the hidden direction.
When $\theta^\star$ is $k$-sparse with $k=o(\sqrt d)$, perturbing weights according to a polynomial-size cover of the sparse prior removes the overlap loss that would arise from naive sparse random perturbations.
This yields a gradient-based algorithm with sample complexity $\widetilde O(k^{s^\star})$, together with a matching SQ lower bound up to logarithmic factors in the relevant sparse regime.
Thus, the right notion of exploration is not necessarily isotropic: it should be matched to the geometry of the prior.
This observation may be useful beyond single-index models, for example in designing iterative methods for sparse tensor PCA and related planted problems.

Several questions remain open.
First, our dense-prior algorithm uses fresh weight perturbations at each step to realize the smoothing effect needed in the proof.
We expect that, at least in the nonsparse setting, sufficiently overparameterized random initialization may already contain enough exploratory directions, so that explicit resampling of perturbation noise is not essential.
Making this precise would bring the theory closer to standard neural-network training, where the randomness is often injected only at initialization.

Second, the present analysis is naturally online: each iteration uses a fresh mini-batch, and the proof tracks the signal-to-noise ratio of the aggregated gradient.
It would be valuable to understand whether the same optimal rates can be obtained by genuinely single-pass SGD, by fixed mini-batch protocols with more data reuse, or by full-batch gradient descent.
These variants differ in how gradient noise, batch reuse, and weight perturbation interact, and a sharp comparison may clarify which part of the algorithm is doing the statistical work; see also the recent work of \citet{wei2026improved}, which shows that Langevin dynamics with iterate averaging can emulate smoothing and conjectures that minibatch SGD may achieve the same rate without additional noise.

Finally, the Gaussian design assumption is used throughout the analysis, most visibly through the Hermite expansion and the Gaussian noise operator.
Extending the picture beyond isotropic Gaussian inputs is an important direction, especially in light of recent progress on feature learning and single-index models under more general covariate distributions \citep{mousavi2023gradient,ba2024learning}.
Such an extension would require replacing the exact Hermite orthogonality and rotational invariance used here with tools that capture the relevant low-dimensional structure of non-Gaussian designs.

\subsubsection*{Acknowledgments}
We thank Jason D. Lee and Denny Wu for helpful discussions.

\bibliography{reference}

@article{edelman2024pareto,
  title={Pareto frontiers in deep feature learning: Data, compute, width, and luck},
  author={Edelman, Benjamin and Goel, Surbhi and Kakade, Sham and Malach, Eran and Zhang, Cyril},
  journal={Advances in Neural Information Processing Systems},
  volume={36},
  year={2024}
}

@article{mousavi2023gradient,
  title={Gradient-based feature learning under structured data},
  author={Mousavi-Hosseini, Alireza and Wu, Denny and Suzuki, Taiji and Erdogdu, Murat A},
  journal={Advances in Neural Information Processing Systems},
  volume={36},
  pages={71449--71485},
  year={2023}
}

@article{vural2024pruning,
  title={Pruning is Optimal for Learning Sparse Features in High-Dimensions},
  author={Vural, Nuri Mert and Erdogdu, Murat A},
  journal={arXiv preprint arXiv:2406.08658},
  year={2024}
}

@inproceedings{refinetti2021classifying,
  title={Classifying high-dimensional gaussian mixtures: Where kernel methods fail and neural networks succeed},
  author={Refinetti, Maria and Goldt, Sebastian and Krzakala, Florent and Zdeborov{\'a}, Lenka},
  booktitle={International Conference on Machine Learning},
  pages={8936--8947},
  year={2021},
  organization={PMLR}
}

@inproceedings{mondelli2018fundamental,
  title={Fundamental limits of weak recovery with applications to phase retrieval},
  author={Mondelli, Marco and Montanari, Andrea},
  booktitle={Conference On Learning Theory},
  pages={1445--1450},
  year={2018},
  organization={PMLR}
}

@article{lu2020phase,
  title={Phase transitions of spectral initialization for high-dimensional non-convex estimation},
  author={Lu, Yue M and Li, Gen},
  journal={Information and Inference: A Journal of the IMA},
  volume={9},
  number={3},
  pages={507--541},
  year={2020},
  publisher={Oxford University Press}
}

@inproceedings{jin2017escape,
  title={How to escape saddle points efficiently},
  author={Jin, Chi and Ge, Rong and Netrapalli, Praneeth and Kakade, Sham M and Jordan, Michael I},
  booktitle={International conference on machine learning},
  pages={1724--1732},
  year={2017},
  organization={PMLR}
}

@article{vaskevicius2019implicit,
  title={Implicit regularization for optimal sparse recovery},
  author={Vaskevicius, Tomas and Kanade, Varun and Rebeschini, Patrick},
  journal={Advances in Neural Information Processing Systems},
  volume={32},
  year={2019}
}

@article{zhao2022high,
  title={High-dimensional linear regression via implicit regularization},
  author={Zhao, Peng and Yang, Yun and He, Qiao-Chu},
  journal={Biometrika},
  volume={109},
  number={4},
  pages={1033--1046},
  year={2022},
  publisher={Oxford University Press}
}

@article{candes2006robust,
  title={Robust uncertainty principles: Exact signal reconstruction from highly incomplete frequency information},
  author={Cand{\`e}s, Emmanuel J and Romberg, Justin and Tao, Terence},
  journal={IEEE Transactions on information theory},
  volume={52},
  number={2},
  pages={489--509},
  year={2006},
  publisher={IEEE}
}

@article{donoho2009message,
  title={Message-passing algorithms for compressed sensing},
  author={Donoho, David L and Maleki, Arian and Montanari, Andrea},
  journal={Proceedings of the National Academy of Sciences},
  volume={106},
  number={45},
  pages={18914--18919},
  year={2009},
  publisher={National Acad Sciences}
}

@article{raskutti2012minimax,
  title={Minimax-optimal rates for sparse additive models over kernel classes via convex programming.},
  author={Raskutti, Garvesh and J Wainwright, Martin and Yu, Bin},
  journal={Journal of machine learning research},
  volume={13},
  number={2},
  year={2012}
}

@inproceedings{arnaboldi2024repetita,
  title={Repetita Iuvant: Data Repetition Allows SGD to Learn High-Dimensional Multi-Index Functions},
  author={Arnaboldi, Luca and Dandi, Yatin and Krzakala, Florent and Pesce, Luca and Stephan, Ludovic},
  booktitle={High-dimensional Learning Dynamics 2024: The Emergence of Structure and Reasoning},
  year={2024}
}

@article{biroli2020iron,
  title={How to iron out rough landscapes and get optimal performances: averaged gradient descent and its application to tensor PCA},
  author={Biroli, Giulio and Cammarota, Chiara and Ricci-Tersenghi, Federico},
  journal={Journal of Physics A: Mathematical and Theoretical},
  volume={53},
  number={17},
  pages={174003},
  year={2020},
  publisher={IOP Publishing}
}

@inproceedings{anandkumar2017homotopy,
  title={Homotopy analysis for tensor PCA},
  author={Anandkumar, Anima and Deng, Yuan and Ge, Rong and Mobahi, Hossein},
  booktitle={Conference on Learning Theory},
  pages={79--104},
  year={2017},
  organization={PMLR}
}

@article{chizat2019lazy,
  title={On lazy training in differentiable programming},
  author={Chizat, Lenaic and Oyallon, Edouard and Bach, Francis},
  journal={Advances in neural information processing systems},
  volume={32},
  year={2019}
}

@article{jacot2018neural,
  title={Neural tangent kernel: Convergence and generalization in neural networks},
  author={Jacot, Arthur and Gabriel, Franck and Hongler, Cl{\'e}ment},
  journal={Advances in neural information processing systems},
  volume={31},
  year={2018}
}

@article{ghorbani2019limitations,
  title={Limitations of lazy training of two-layers neural network},
  author={Ghorbani, Behrooz and Mei, Song and Misiakiewicz, Theodor and Montanari, Andrea},
  journal={Advances in Neural Information Processing Systems},
  volume={32},
  year={2019}
}

@article{allen2019can,
  title={What can resnet learn efficiently, going beyond kernels?},
  author={Allen-Zhu, Zeyuan and Li, Yuanzhi},
  journal={Advances in Neural Information Processing Systems},
  volume={32},
  year={2019}
}

@inproceedings{girshick2014rich,
  title={Rich feature hierarchies for accurate object detection and semantic segmentation},
  author={Girshick, Ross and Donahue, Jeff and Darrell, Trevor and Malik, Jitendra},
  booktitle={Proceedings of the IEEE conference on computer vision and pattern recognition},
  pages={580--587},
  year={2014}
}

@inproceedings{dudeja2018learning,
  title={Learning single-index models in gaussian space},
  author={Dudeja, Rishabh and Hsu, Daniel},
  booktitle={Conference On Learning Theory},
  pages={1887--1930},
  year={2018},
  organization={PMLR}
}

@book{hardle2004nonparametric,
  title={Nonparametric and semiparametric models},
  author={H{\"a}rdle, Wolfgang and M{\"u}ller, Marlene and Sperlich, Stefan and Werwatz, Axel and others},
  volume={1},
  year={2004},
  publisher={Springer}
}

@article{hristache2001direct,
  title={Direct estimation of the index coefficient in a single-index model},
  author={Hristache, Marian and Juditsky, Anatoli and Spokoiny, Vladimir},
  journal={Annals of Statistics},
  pages={595--623},
  year={2001},
  publisher={JSTOR}
}

@article{ichimura1993semiparametric,
  title={Semiparametric least squares (SLS) and weighted SLS estimation of single-index models},
  author={Ichimura, Hidehiko},
  journal={Journal of econometrics},
  volume={58},
  number={1-2},
  pages={71--120},
  year={1993},
  publisher={Elsevier}
}

@article{maccullagh1989generalized,
  title={Generalized linear models},
  author={MacCullagh, P and Nelder, JA},
  journal={Monographs on statistics and applied probability (37)},
  year={1989},
}

@article{joshi2024complexity,
  title={On the Complexity of Learning Sparse Functions with Statistical and Gradient Queries},
  author={Joshi, Nirmit and Misiakiewicz, Theodor and Srebro, Nathan},
  journal={arXiv preprint arXiv:2407.05622},
  year={2024}
}

@article{lee2024neural,
  title={Neural network learns low-dimensional polynomials with SGD near the information-theoretic limit},
  author={Lee, Jason D and Oko, Kazusato and Suzuki, Taiji and Wu, Denny},
  journal={arXiv preprint arXiv:2406.01581},
  year={2024}
}

@inproceedings{dandi2024benefits,
  title={The Benefits of Reusing Batches for Gradient Descent in Two-Layer Networks: Breaking the Curse of Information and Leap Exponents},
  author={Dandi, Yatin and Troiani, Emanuele and Arnaboldi, Luca and Pesce, Luca and Zdeborova, Lenka and Krzakala, Florent},
  booktitle={Forty-first International Conference on Machine Learning},
  year={2024}
}

@article{damian2024smoothing,
  title={Smoothing the landscape boosts the signal for sgd: Optimal sample complexity for learning single index models},
  author={Damian, Alex and Nichani, Eshaan and Ge, Rong and Lee, Jason D},
  journal={Advances in Neural Information Processing Systems},
  volume={36},
  year={2023}
}

@article{damian2024computational,
  title={The computational complexity of learning gaussian single-index models},
  author={Damian, Alex and Pillaud-Vivien, Loucas and Lee, Jason D and Bruna, Joan},
  journal={arXiv preprint arXiv:2403.05529},
  year={2024}
}

@book{o2014analysis,
  title={Analysis of boolean functions},
  author={O'Donnell, Ryan},
  year={2014},
  publisher={Cambridge University Press}
}

@article{folland2001integrate,
  title={How to integrate a polynomial over a sphere},
  author={Folland, Gerald B},
  journal={The American Mathematical Monthly},
  volume={108},
  number={5},
  pages={446--448},
  year={2001},
  publisher={Taylor \& Francis}
}

@article{feldman2017statistical,
  title={Statistical algorithms and a lower bound for detecting planted cliques},
  author={Feldman, Vitaly and Grigorescu, Elena and Reyzin, Lev and Vempala, Santosh S and Xiao, Ying},
  journal={Journal of the ACM (JACM)},
  volume={64},
  number={2},
  pages={1--37},
  year={2017},
  publisher={ACM New York, NY, USA}
}

@article{klenke2010stochastic,
  title={Stochastic ordering of classical discrete distributions},
  author={Klenke, Achim and Mattner, Lutz},
  journal={Advances in Applied probability},
  volume={42},
  number={2},
  pages={392--410},
  year={2010},
  publisher={Cambridge University Press}
}

@book{szekli2012stochastic,
  title={Stochastic ordering and dependence in applied probability},
  author={Szekli, Ryszard},
  volume={97},
  year={2012},
  publisher={Springer Science \& Business Media}
}

@article{arous2021online,
  title={Online stochastic gradient descent on non-convex losses from high-dimensional inference},
  author={Arous, Gerard Ben and Gheissari, Reza and Jagannath, Aukosh},
  journal={Journal of Machine Learning Research},
  volume={22},
  number={106},
  pages={1--51},
  year={2021}
}

@article{bietti2022learning,
  title={Learning single-index models with shallow neural networks},
  author={Bietti, Alberto and Bruna, Joan and Sanford, Clayton and Song, Min Jae},
  journal={Advances in Neural Information Processing Systems},
  volume={35},
  pages={9768--9783},
  year={2022}
}

@inproceedings{damian2022neural,
  title={Neural networks can learn representations with gradient descent},
  author={Damian, Alexandru and Lee, Jason and Soltanolkotabi, Mahdi},
  booktitle={Conference on Learning Theory},
  pages={5413--5452},
  year={2022},
  organization={PMLR}
}

@inproceedings{dandi2023two,
  title={How Two-Layer Neural Networks Learn, One (Giant) Step at a Time},
  author={Dandi, Yatin and Krzakala, Florent and Loureiro, Bruno and Pesce, Luca and Stephan, Ludovic},
  booktitle={NeurIPS 2023 Workshop on Mathematics of Modern Machine Learning},
  year={2023}
}

@inproceedings{abbe2023sgd,
  title={Sgd learning on neural networks: leap complexity and saddle-to-saddle dynamics},
  author={Abbe, Emmanuel and Adsera, Enric Boix and Misiakiewicz, Theodor},
  booktitle={The Thirty Sixth Annual Conference on Learning Theory},
  pages={2552--2623},
  year={2023},
  organization={PMLR}
}

@article{chen2020towards,
  title={Towards understanding hierarchical learning: Benefits of neural representations},
  author={Chen, Minshuo and Bai, Yu and Lee, Jason D and Zhao, Tuo and Wang, Huan and Xiong, Caiming and Socher, Richard},
  journal={Advances in Neural Information Processing Systems},
  volume={33},
  pages={22134--22145},
  year={2020}
}

@inproceedings{chen2020learning,
  title={Learning polynomials in few relevant dimensions},
  author={Chen, Sitan and Meka, Raghu},
  booktitle={Conference on Learning Theory},
  pages={1161--1227},
  year={2020},
  organization={PMLR}
}

@article{ba2024learning,
  title={Learning in the presence of low-dimensional structure: a spiked random matrix perspective},
  author={Ba, Jimmy and Erdogdu, Murat A and Suzuki, Taiji and Wang, Zhichao and Wu, Denny},
  journal={Advances in Neural Information Processing Systems},
  volume={36},
  year={2023}
}

@article{bach2017breaking,
  title={Breaking the curse of dimensionality with convex neural networks},
  author={Bach, Francis},
  journal={Journal of Machine Learning Research},
  volume={18},
  number={19},
  pages={1--53},
  year={2017}
}

@article{neykov2016agnostic,
  title={Agnostic estimation for misspecified phase retrieval models},
  author={Neykov, Matey and Wang, Zhaoran and Liu, Han},
  journal={Advances in Neural Information Processing Systems},
  volume={29},
  year={2016}
}

@article{fan2023understanding,
  title={Understanding implicit regularization in over-parameterized single index model},
  author={Fan, Jianqing and Yang, Zhuoran and Yu, Mengxin},
  journal={Journal of the American Statistical Association},
  volume={118},
  number={544},
  pages={2315--2328},
  year={2023},
  publisher={Taylor \& Francis}
}

@article{dudeja2021statistical,
  title={Statistical query lower bounds for tensor pca},
  author={Dudeja, Rishabh and Hsu, Daniel},
  journal={Journal of Machine Learning Research},
  volume={22},
  number={83},
  pages={1--51},
  year={2021}
}

@inproceedings{arous2020free,
  title={Free energy wells and overlap gap property in sparse PCA},
  author={Arous, G{\'e}rard Ben and Wein, Alexander S and Zadik, Ilias},
  booktitle={Conference on Learning Theory},
  pages={479--482},
  year={2020},
  organization={PMLR}
}

@article{gamarnik2017sparse,
  title={Sparse high-dimensional linear regression. algorithmic barriers and a local search algorithm},
  author={Gamarnik, David and Zadik, Ilias},
  journal={arXiv preprint arXiv:1711.04952},
  year={2017}
}

@article{bandeira2022franz,
  title={The Franz-Parisi criterion and computational trade-offs in high dimensional statistics},
  author={Bandeira, Afonso S and El Alaoui, Ahmed and Hopkins, Samuel and Schramm, Tselil and Wein, Alexander S and Zadik, Ilias},
  journal={Advances in Neural Information Processing Systems},
  volume={35},
  pages={33831--33844},
  year={2022}
}

@inproceedings{feldman2017general,
  title={A general characterization of the statistical query complexity},
  author={Feldman, Vitaly},
  booktitle={Conference on learning theory},
  pages={785--830},
  year={2017},
  organization={PMLR}
}

@article{wei2026improved,
  title={Improved high-dimensional estimation with Langevin dynamics and stochastic weight averaging},
  author={Wei, Stanley and Damian, Alex and Lee, Jason D.},
  journal={arXiv preprint arXiv:2603.06028},
  year={2026}
}
\bibliographystyle{ims}

\newpage
\appendix
\tableofcontents

\section{Numerical Experiments}\label{sec:experiments}

We conduct extensive simulation experiments to validate the sample complexity result established in \cref{thm:uniform}.
Specifically, for a fixed Gaussian single-index model, we run \cref{alg:meta} over a variety of problem instances with diverse scales and report the average accuracy in terms of the alignment.
We lay out the details of the experimental setup as follows.

{\noindent \bf $\bullet$ Gaussian single-index model.} We focus on the Gaussian single-index model introduced in \eqref{eq:single_index_model} with a deterministic link function and Gaussian additive noise.
We set $y = \phi(\langle z,\theta^\star\rangle)+\epsilon$, where $\phi(x) = x^2 \cdot \exp(-x^2)$ and $\epsilon \sim \cN(0, \sigma^2)$ with $\sigma^2 = 0.5$. As shown in \cref{exp:sgm}, the generative exponent of the function $\phi$ is $s^\star(\phi) = 4$.
In addition, the signal parameter $\theta^\star$ is uniformly sampled from the unit sphere in $\RR^d$.

{\bf \noindent $\bullet$ Neural network architecture.} We adopt the two-layer neural network introduced in \cref{sec:setup} with $M$ set to $15$ and $a_m = 1$ for all $m \in [M]$ in all experiments.
Since $s^\star(\phi) = 4$, we set the activation function as $\sigma(x) = h_4(x)$, i.e., the fourth-order Hermite polynomial.

{\bf \noindent $\bullet$ Training using \cref{alg:meta}.}
In \cref{alg:meta}, we set $\psi(y,x) = y \cdot \sigma'(x)$, as stated in \cref{exp:psi_3}. Such a $\psi$ is justified by considering the following alignment loss:
\begin{align}\label{eq:alignment_loss}
L(\btheta) = 1 - {y \cdot f(z; \btheta, \ba )} = 1 - \sum_{m=1}^M a \cdot \sigma(\la z, \theta_m \ra) \cdot y,
\end{align}
where each entry of $\ba $ is equal to $a $.
By \eqref{eq:alignment_loss}, we have
\begin{align}
    \label{eq:grad_alignment_loss}
   -a^{-1} \cdot \nabla_{\theta_m} L(\btheta) = y \cdot \sigma'(\la z, \theta_m \ra) \cdot z = \psi(y, \la z, \theta_m \ra) \cdot z.
\end{align}
Thus, we can alternatively interpret the gradients in \cref{alg:meta}
as negative gradients with respect to the alignment loss $L(\btheta)$.
Thus, the choice of $a$ does not matter in this case, and we set $a = 1$ for simplicity.
Furthermore,
other details of \cref{alg:meta} are specified as follows:
\begin{itemize}
    \item The parameters $\{ \theta_m \}_{m\in [M]}$ are initialized as i.i.d. random vectors in $\RR^d$ uniformly sampled from the unit sphere.
    \item We fix $M = 15$, $a = 1$, $\eta = 3$, $T = 24$, and $L = 500$ throughout all experiments with different values of $n$ and $d$.
    \item We enumerate $n$ and $d$ over a grid with $d \in [32, 499]$ and $n \in [5\times 10^3, 3\times 10^6]$. Since $\log d \in (3, 7)$, our choice of $T$ satisfies the requirement in Theorem \ref{thm:uniform}.
\end{itemize}

{\noindent \bf  $\bullet$ Choices of $(d, n)$.}
We select 40 different values of $d$ and 30 different values of $n$ within the ranges $d \in [32, 499]$ and $n \in [5 \times 10^3, 3 \times 10^6]$, respectively. These values form an evenly spaced grid in terms of $\log n$ and $\log d$. See Figure \ref{fig:grid} for an illustration.

\begin{figure}[ht]
\centering
\begin{tabular}{@{}cc@{}}
    \includegraphics[width=0.48\textwidth]{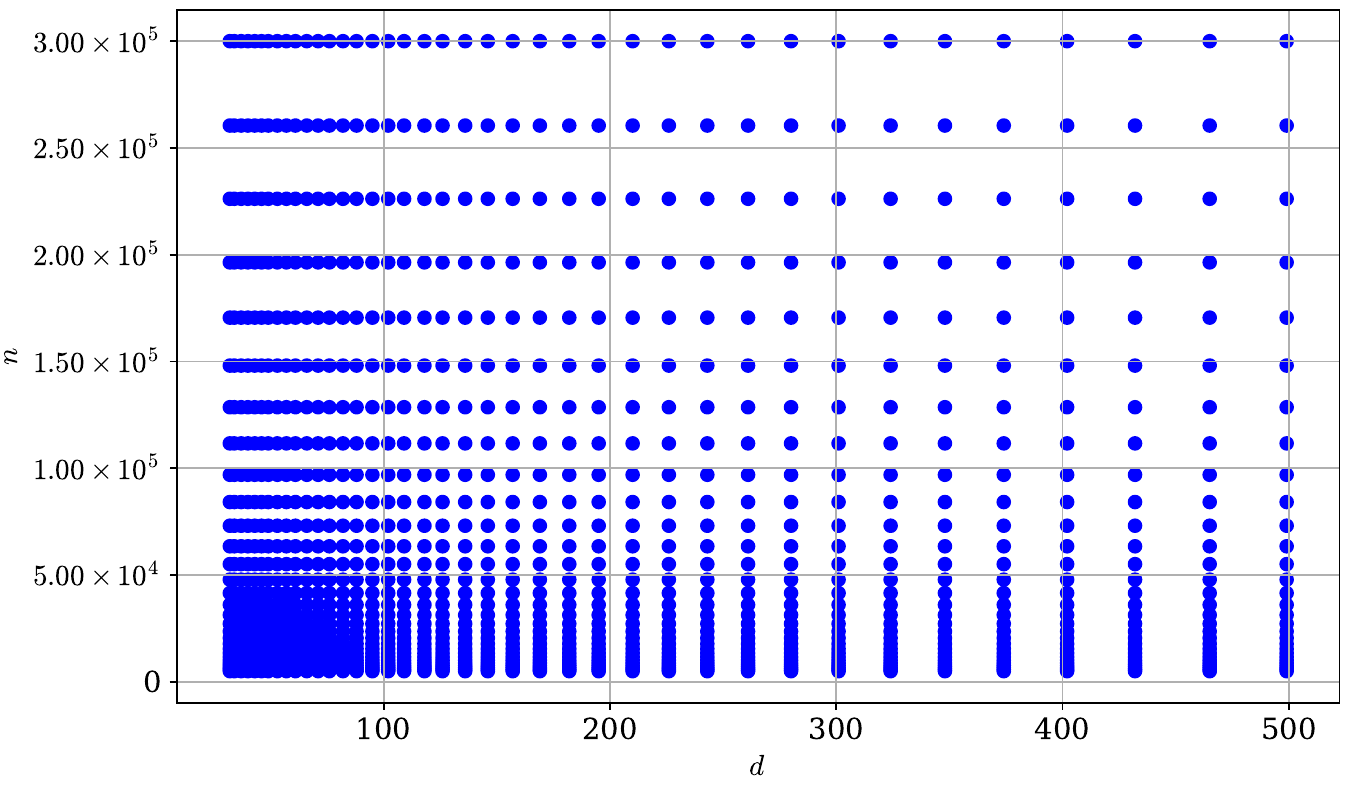} &
    \includegraphics[width=0.48\textwidth]{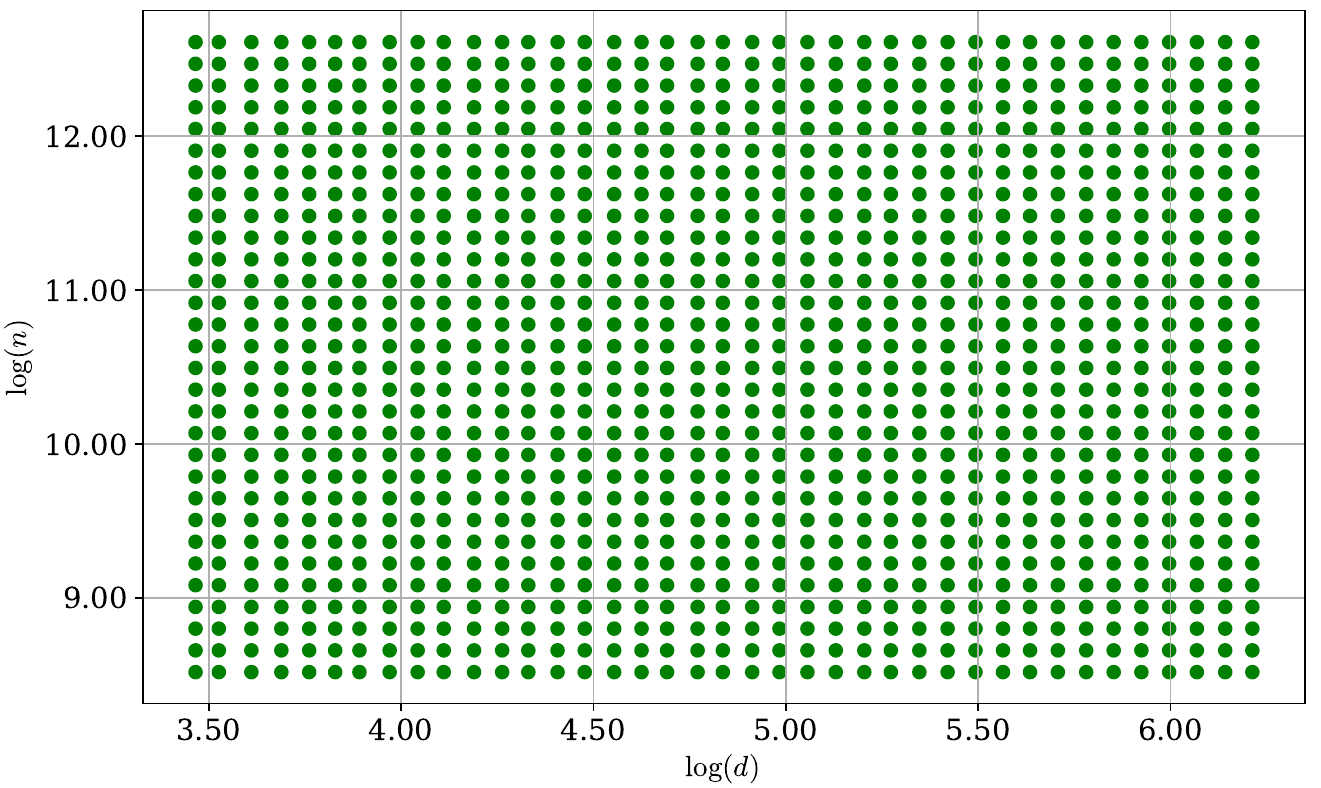} \\
    (a) Scatter plot of $(d, n)$ & (b) Scatter plot of $(\log d, \log n)$ \\
\end{tabular}
\caption{Scatter plots of $(d, n)$ and $(\log d, \log n)$. In (a) we plot $n$ against $d$ and in (b) we plot $\log n$ against $\log d$. As shown in (b), we choose $n$ and $d$ such that they form an evenly spaced grid after taking logarithms.}
\label{fig:grid}
\end{figure}

{\noindent \bf $\bullet$ Evaluation.} We report the accuracy of \cref{alg:meta} based on $25$ repeated experiments for every choice of $(n, d)$. We report two types of accuracy metrics:
\begin{itemize}
    \item [(i)] Average accuracy: We report $M^{-1} \sum_{m=1}^M | \la \theta_m , \theta^\star\ra | $ in each experiment and then average over the $25$ experiments.
    \item [(ii)] Top-8 accuracy: Given $\{\theta_m \}_{m \in [M]}$ returned by the algorithm, we sort the alignment values $ \{ | \la \theta_m , \theta^\star\ra |\}_{m \in [M]}$.
    Then we report the average of the largest $8$ numbers. The rationale is that if the top-8 accuracy is close to one, at least half of the neurons correctly find $\theta^\star$.
\end{itemize}

{\noindent \bf Contour plots.} After calculating these two accuracy metrics for every $(d, n)$ pair,
we generate the contour plots based on $(\log d, \log n, \texttt{acc}(d, n))$, where $\texttt{acc}(d, n)$ is one of the two accuracy metrics introduced above.
We report these two contour plots in Figure \ref{fig:contour_1} and Figure \ref{fig:contour_2}, where in Figure \ref{fig:contour_1} we zoom in to a smaller range of $d$ for better visualization.
In these plots, points with the same color indicate $(\log d, \log n)$ with the same level of accuracy.

{\bf \noindent Validation of $\tilde \Theta(d^{s^\star / 2} )$ sample complexity.} As shown in these figures, the average accuracy and the top-8 accuracy clearly exhibit {\bf a linear relationship}.
That is, for a fixed accuracy level $\delta$, pairs $(d, n)$ with $\texttt{acc}(d, n) \approx \delta$ are well fit by a line of the form $\log n = c_1 \cdot \log d + c_2$.
To determine $c_1$ and $c_2$, we further fit linear models to the points $(\log d, \log n)$ with the same accuracy level $\delta$, where $\delta \in \{ 0.6, 0.7, 0.8\}$.
For both the average accuracy and the top-8 accuracy, the coefficient $c_1$ in the linear models is close to $2$.
We report the linear models corresponding to different accuracy levels in Table \ref{tab:linear_model}.
Since $s^\star = 4$, this finding indicates that $n \propto d^2$.
Moreover, since we compute the accuracy for all $(d, n)$ on the grid, the fact that $c_1 \approx 2$ indicates that the $\tilde \Theta(d^{s^\star/2})$ sample complexity is sharp.

\begin{figure}[ht]
\centering
\begin{tabular}{@{}cc@{}}
    \includegraphics[width=0.52\textwidth]{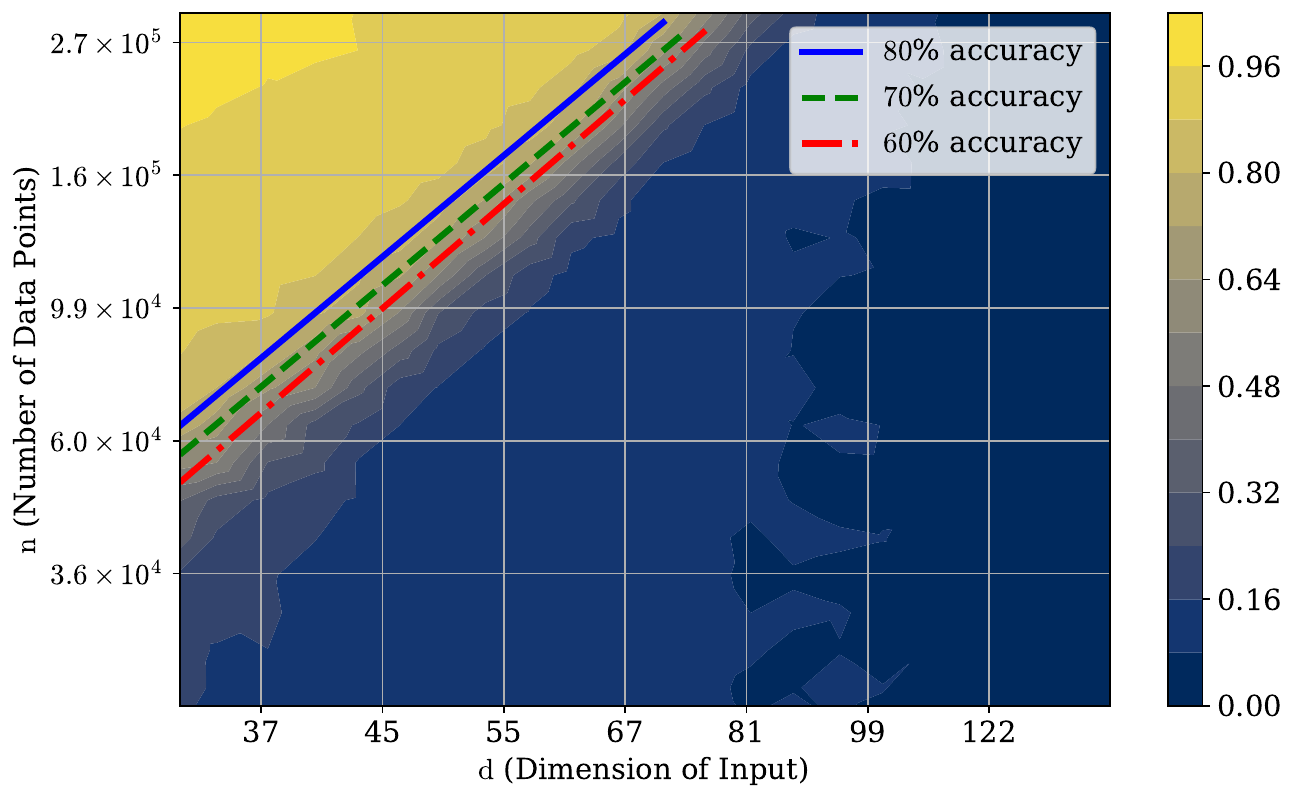} &
    \includegraphics[width=0.45\textwidth]{figures/topK_error_final_small.pdf} \\
    (a) Average accuracy & (b) Top-8 accuracy \\
\end{tabular}
\caption{The contour plots of $(\log d, \log n, \texttt{acc}(d, n))$, where $\texttt{acc}(d, n)$ is either the average accuracy or the top-8 accuracy.
Here we zoom in to a smaller subset of $d$'s for better visualization.
We also plot the lines containing $(\log d, \log n)$ with the same accuracy level among $\{0.6, 0.7, 0.8\}$.
The slopes of these lines are all close to $2$. This indicates that $n \approx d^{2}$ samples are sufficient and necessary for accurate estimation.}
\label{fig:contour_1}
\end{figure}

\begin{figure}[ht]
\centering
\begin{tabular}{@{}cc@{}}
    \includegraphics[width=0.52\textwidth]{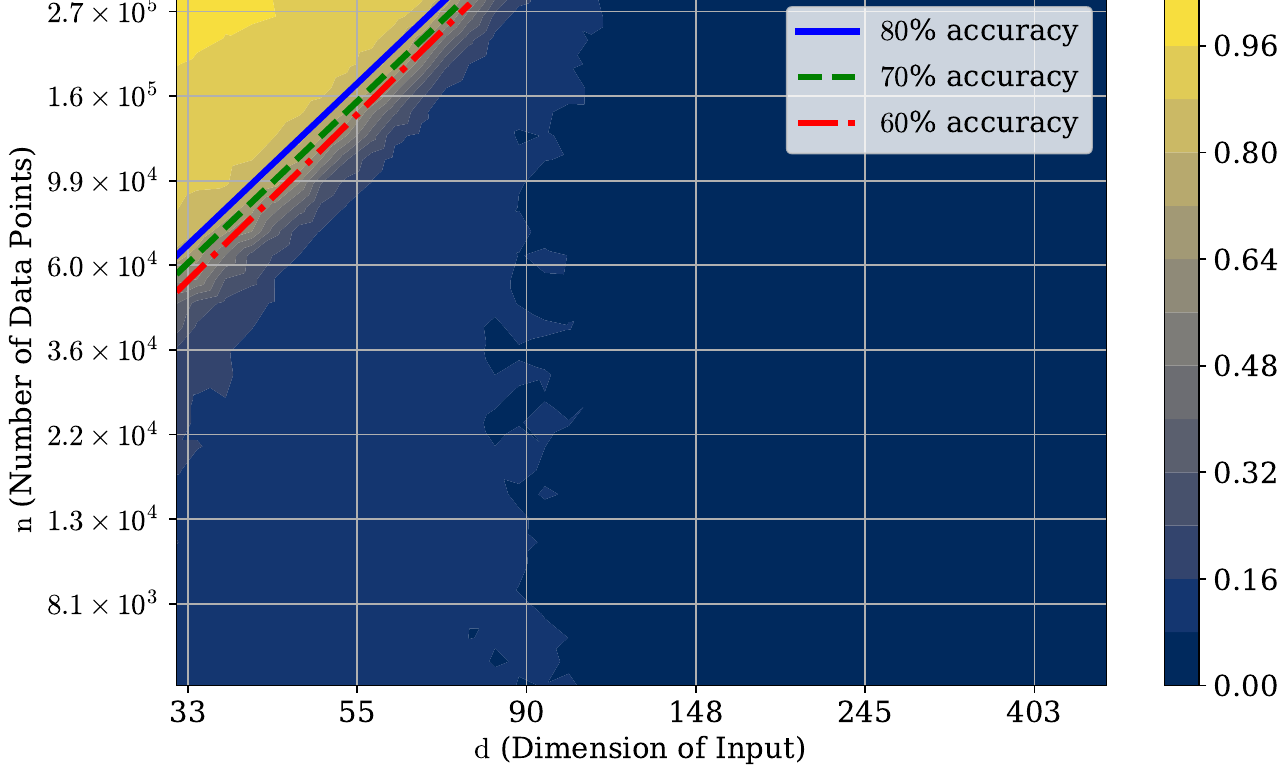} &
    \includegraphics[width=0.45\textwidth]{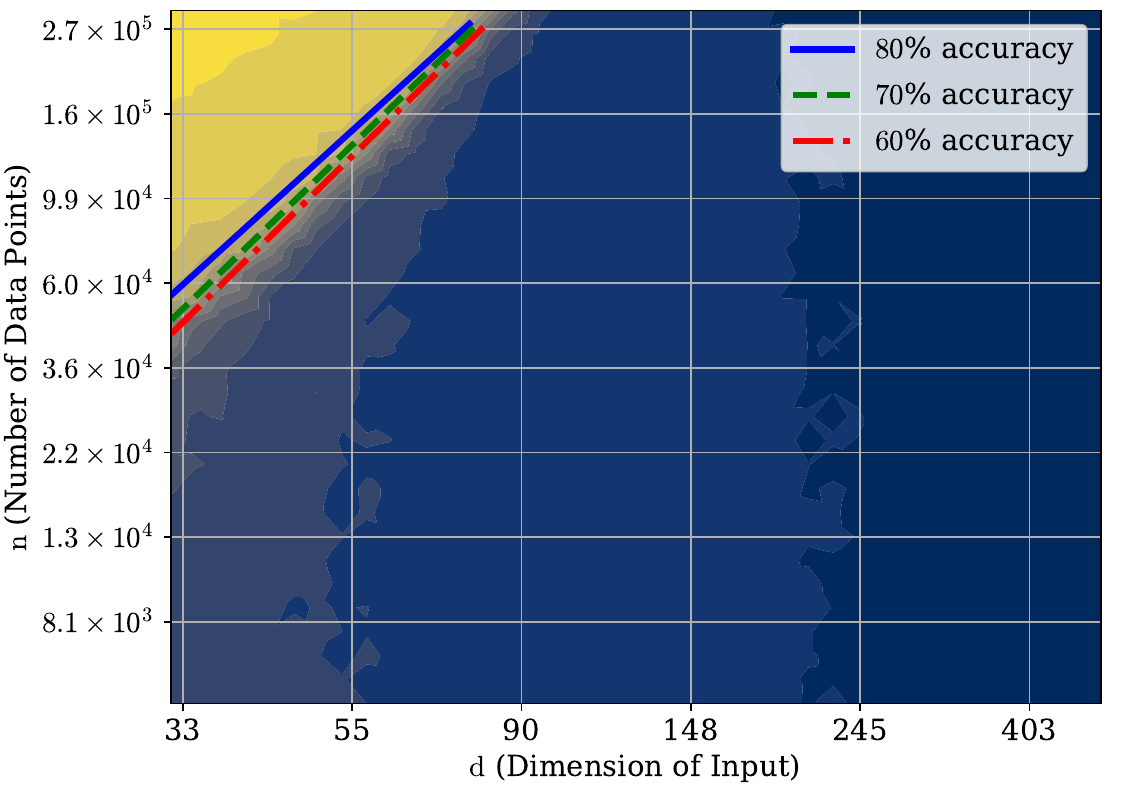} \\
    (a) Average accuracy & (b) Top-8 accuracy \\
\end{tabular}
\caption{The contour plots of $(\log d, \log n, \texttt{acc}(d, n))$, where $\texttt{acc}(d, n)$ is either the average accuracy or the top-8 accuracy.
We also plot the lines containing $(\log d, \log n)$ with the same accuracy level among $\{0.6, 0.7, 0.8\}$. The slopes of these lines are all close to $2$. This indicates that $n \approx d^{2}$ samples are sufficient and necessary for accurate estimation.}
\label{fig:contour_2}
\end{figure}

\begin{table}[ht]
\centering
\caption{Fitted linear equations of the form $\log n = c_1 \cdot \log d + c_2$ for $n, d$ with the desired accuracy level. Notably, the slopes of these equations are all close to $s^\star / 2 = 2$, which shows that $n \propto d^{s^\star /2}$.}
\begin{tabular}{|c|c|c|}
\hline
Accuracy level & Average accuracy & Top-8 accuracy \\ \hline
0.8 & $\log(n) = 1.9058 \cdot \log(d) + 4.4516$ & $\log(n) = 1.8201 \cdot
  \log(d) + 4.6218$ \\ \hline
0.7 & $\log(n) = 1.9103 \cdot \log(d) + 4.3273$ & $\log(n) = 1.9343 \cdot \log(d) + 4.0790$ \\ \hline
0.6 & $\log(n) = 1.9640 \cdot \log(d) + 4.0361$ & $\log(n) = 1.9653 \cdot \log(d) + 3.8901$ \\ \hline
\end{tabular}
\label{tab:linear_model}
\end{table}

\section{Notation and Preliminaries}
\label{app:preliminary}

\paragraph{Notations}
We use $\NN$ to denote the set of positive integers and $\NN_0$ to denote the set of nonnegative integers.
For vector $z\in\RR^d$, we denote by $\RR_{n}[z]$ the set of polynomials of degree at most $n$ in $z$ with real coefficients.
For $s\in\NN$, we denote by $\varPi_s$ the symmetric group of all permutations of $[s]$.
We denote by $\cN_d(\cdot)$ and $\cN(\cdot)$ the standard normal distribution in $\RR^d$ and $\RR$, respectively.

For two tensors $S \in(\RR^d)^{\otimes s}$ and $T \in (\RR^d)^{\otimes t}$ where $s\ge t$,
\begin{align}
    (S[T])_{j_1, \ldots, j_{s-t}} \defeq \sum_{i_1, \ldots, i_t=1}^d S_{j_1, \ldots, j_{s-t}, i_1, \ldots, i_t} T_{i_1, \ldots, i_t}.
\end{align}
Here, $S[T]$ produces a tensor of order $s-t$ and dimension $d$. 
We also define the symmetrization operation for a tensor $T\in(\RR^d)^{\otimes t}$ as
\begin{align}\label{eq:def_symmetrization}
    \Sym(T)_{i_1, \ldots, i_t} \defeq \frac{1}{t!} \sum_{\pi\in \varPi_t} T_{i_{\pi(1)}, \ldots, i_{\pi(t)}}.
\end{align}

The following are some notations for the relationship between two quantities or matrices:
\begin{description}
    \item[$a \simeq b$:] There exists a constant $C=O(1)$ such that $a \le C b$ and $b \le C a$. Note that $a$ and $b$ should have the same sign. $a = \Theta(b)$ also has the same meaning.
    \item[$a \approxeq b$:] $a \le \polylog(d) \cdot b$ and $b \le \polylog(d) \cdot a$, and the same for $a = \tilde \Theta(b)$.
    \item[$a \lesssim b$:] There exists a constant $C=O(1)$ such that $a \le C b$, and the same for $a=O(b)$. The use of $a \gtrsim b$ and $a=\Omega(b)$ is similar.
    \item[$a \lessapprox b$:] $a \le \polylog(d) \cdot b$, and the same for $a = \tilde O(b)$.
    The use of $a \gtrapprox b$ and $a = \tilde \Omega(b)$ is similar.
    \item[$a\ll b$:] $a \le (\polylog(d))^{-1} \cdot b$. The use of $a \gg b$ is similar.
\end{description}
In addition, we denote by $a = b\pm \varepsilon$, $a \simeq b \pm \varepsilon$, $a \approxeq b \pm \varepsilon$ that $b - \varepsilon \le a \le b + \varepsilon$, $a - \varepsilon \lesssim b \lesssim a + \varepsilon$, $a - \varepsilon \lessapprox b \lessapprox a + \varepsilon$, respectively.

For square matrices $A$ and $B$, $A \preceq B$ means that $B - A$ is positive semidefinite, and $A \precsim B$ means that there exists a constant $C = O(1)$ such that $C\cdot B - A$ is positive semidefinite.

\subsection{Background on Hermite Polynomials}
The probabilists' Hermite polynomials satisfy the following recurrence relations
\begin{align}
    h_s(x)' = \sqrt{s} \cdot h_{s-1}(x),\quad 
    x \cdot h_s(x) = \sqrt{s+1} \cdot h_{s+1}(x) + \sqrt{s} \cdot h_{s-1}(x), 
    \label{eq:hermite-recurrence}
\end{align}
where we adopt the convention that $h_{-1}(x) \equiv 0$.

For any function $f\in L^2(\cN(0,1))$, its Hermite expansion is given by 
\begin{align}
    f(x) = \sum_{s=0}^\infty \hat f_s \cdot h_s(x),
\end{align}
where we denote by $\hat f_s$ the $s$-th coefficient of the Hermite expansion of $f$.

\paragraph{Gaussian noise operator}
For $\rho\in[-1,1]$, define the Gaussian noise operator as 
\begin{align}
    \noiseop{\rho} f(x) = \EE_{x'\sim\cN(0,1)}\big[
        f(\rho x + \sqrt{1-\rho^2} \cdot x')\big].
\end{align}
Proposition 11.37 of \citet{o2014analysis} shows that the Hermite expansion of $\noiseop{\rho} f$ is given by
\begin{align}
    \noiseop{\rho} f(x) 
    \overset{L^2(\cN(0,1))}{=}
    \sum_{s=0}^\infty \rho^s \cdot \hat f_s \cdot h_s(x).  \label{eq:hermite-noise-spectral}
\end{align}
A direct implication of this identity is 
\begin{align}
    \EE_{x\sim \cN(0, 1)}[\noiseop{\rho}f(x) g(x)] = \EE_{x\sim \cN(0, 1)}[f(x) \noiseop{\rho}g(x)] = \sum_{s=0}^\infty \rho^s \hat f_s \hat g_s. \label{eq:hermite-noise-swap}
\end{align}
As a result, for any fixed $w, \theta\in\SSS^{d-1}$, it holds that 
\begin{align}
    \EE_{z\sim\cN(0,I_d)}\left[f(\langle w, z\rangle) h_s(\langle \theta, z\rangle)\right] = \EE_{x\sim \cN(0,1)}\left[\noiseop{\rho} f \cdot h_s(x)\right] = \langle w,\theta\rangle^s \cdot \hat f_s.
    \label{eq:hermite-noise-operator}
\end{align}

\paragraph{Hermite tensor}
Corresponding to the Hermite polynomials defined for scalar variables, we define the Hermite tensors over $z\in\RR^d$:
\begin{align}
    \bh_{s}(z) \defeq \frac{(-1)^s}{\sqrt{s!}} \cdot  e^{\norm{z}_2^2/2} \cdot \nabla^{s} e^{-\norm{z}_2^2/2} \in (\RR^d)^{\otimes s}, \text{ for }s\geq 0.
    \label{eq:hermite-tensor}
\end{align}
The scalar-valued Hermite polynomials and tensor-valued Hermite tensors are related as follows:
\begin{align}
    h_s(\langle \theta, z\rangle) = \bh_s(z)[\theta^{\otimes s}], \quad \forall \theta\in\SSS^{d-1}.
    \label{eq:hermite-tensor-directional}
\end{align}
Now let $f:\RR^d\to\RR$ be a $s$-times differentiable function such that for all $k\leq s$, every component of $\nabla^k f$ belongs to $L^2(\cN(0,I_d))$.
Then it follows from integration by parts that
\begin{align}
    \EE_{z\sim\cN(0,I_d)}\left[f(z) \bh_s(z)\right] = \frac{1}{\sqrt{s!}} \cdot \EE_{z\sim\cN(0,I_d)}\left[\nabla^s f(z)\right].
    \label{eq:hermite-tensor-Stein's lemma}
\end{align}
This is a version of Stein's lemma for tensor-valued functions.

\section{Supplementary Proofs for the Main Text}
\label{app:main-paper-supplementary}

\subsection{Proofs for \texorpdfstring{\Cref{sec:alignment}}{Section 3}}

In this section, we first argue why $\EE_{x\sim\cN(0, 1)}[y \zeta_{s^\star}(y)] = 0$ is the major difficulty for vanilla (stochastic) gradient descent to achieve the information-theoretic lower bound $O(d)$ (the same for the SQ lower bound) when the information exponent $q^\star$ is larger than $2$.
It has been shown by \citet{damian2024computational} that the generative exponent $s^\star$ for polynomial models is either $1$ or $2$.
Consider the information exponent $q^\star > 2$.
We have the following lemma saying that the correlation $\EE_{x\sim\cN(0, 1)}[y \zeta_{s}(y)] = 0$ for any $s < q^\star$.
\begin{lemma}
    \label{lem:IE-coef}
    Recall that $\zeta_s$ is the coefficient function for degree $s$ in the decomposition of the likelihood ratio $\PP(x, y)/\QQ(x, y)$ in \eqref{eq:likelihood-ratio-decomp}.
    For any $q^\star\geq 2$, consider the Gaussian single-index model given by $y= \beta_0 + \sum_{p\ge p^\star} \beta_p h_p(x)$ with $x\sim\cN(0, 1)$. 
    Then for any $1\le s < p^\star$, $\EE_\QQ[y \zeta_s(y)] = 0$.
\end{lemma}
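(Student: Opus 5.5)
The plan is to rewrite $\EE_\QQ[y\,\zeta_s(y)]$ as a joint expectation under $\PP$ and then use Hermite orthogonality in $x$. Here $y$ is a deterministic function of $x$, and the statement's $p^\star$ plays the role of the information exponent $q^\star$: the lowest nonconstant Hermite degree of the link.

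First I would unfold the definition in \eqref{eq:likelihood-ratio-decomp}, namely $\zeta_s(y) = \EE_\PP[h_s(x)\mid y]$. The marginal of $y$ under $\QQ$ coincides with its marginal under $\PP$, since $\QQ(y)=\int\PP(y,x)\,\rd x$. Hence, by the tower property,
\begin{align}
\EE_\QQ[y\,\zeta_s(y)] = \EE_{\PP}\bigl[y\,\EE_\PP[h_s(x)\mid y]\bigr] = \EE_\PP[y\, h_s(x)].
\end{align}
This step only needs $y\in L^2$, which holds because $y$ is a polynomial of a Gaussian. It also needs $h_s(x)\in L^2$, so the conditional expectation and the product are integrable by Cauchy--Schwarz.

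Second, I would substitute $y=\beta_0+\sum_{p\ge p^\star}\beta_p h_p(x)$ with $x\sim\cN(0,1)$ under $\PP$. This gives
\begin{align}
\EE_\PP[y\,h_s(x)] = \beta_0\,\EE[h_s(x)] + \sum_{p\ge p^\star}\beta_p\,\EE[h_p(x)h_s(x)] = \beta_0\,\delta_{s,0} + \sum_{p\ge p^\star}\beta_p\,\delta_{p,s}.
\end{align}
This uses the orthonormality of the normalized Hermite polynomials in $L^2(\cN(0,1))$. For $1\le s<p^\star$ we have $s\neq 0$ and $s\neq p$ for every $p\ge p^\star$, so every term vanishes.

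There is essentially no obstacle; the only delicate point is the first step. There I would justify replacing $\QQ$ by $\PP$ on functions of $y$ alone, and applying the tower property with $\zeta_s$ defined as a conditional expectation. If $\zeta_s$ is only specified as an $L^2(\QQ)$ coefficient function, I would instead use $\EE_\QQ[y\,\zeta_s(y)] = \EE_\QQ\bigl[y\,h_s(x)\,\PP(y,x)/\QQ(y,x)\bigr]$. This holds by orthonormality of $\{h_s(x)\}$ under $\QQ$ together with the independence of $x$ and $y$ under $\QQ$, and it reduces to the same $\EE_\PP[y\,h_s(x)]$.
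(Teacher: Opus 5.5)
Your proposal is correct and matches the paper's proof. Both reduce $\EE_\QQ[y\,\zeta_s(y)]$ to $\EE_\PP[y\,h_s(x)]$: the paper does this via $\zeta_s(y)=\EE_\QQ[h_s(x)\,\PP(x,y)/\QQ(x,y)\mid y]$, which is your fallback route. Both then make every term vanish by Hermite orthonormality, since $1\le s<p^\star$.

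One minor point: as written, the link $\beta_0+\sum_{p\ge p^\star}\beta_p h_p(x)$ may be an infinite series. The termwise exchange of sum and expectation should therefore be justified by $\sum_p\beta_p^2<\infty$ (i.e.\ $y\in L^2$), not by $y$ being a polynomial.
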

\begin{proof} 
The proof can be done by noting that $\zeta_s(y) = \EE_{\QQ}[\PP(x, y)/\QQ(x, y) \cdot h_s(x)\given y]$, and 
\begin{align}
    \EE_\QQ[y \zeta_{s}(y)] 
    &= \EE_\QQ\Bigl[y \cdot \EE_{\QQ}\Bigl[\frac{\PP(x, y)}{\QQ(x, y)} \cdot h_{s}(x)\Biggiven y\Bigr]\Bigr] = \EE_\PP\left[y \cdot h_{s}(x)\right] \\
    &= \beta_0\cdot \EE_{x\sim\cN(0, 1)}[h_s(x)] + \sum_{i\ge p^\star} \beta_i \cdot\EE_{x\sim \cN(0, 1)}[h_i(x) h_{s}(x)] = 0, 
\end{align}
where the second equality follows from the independence between $x$ and $y$ under $\QQ$.
\end{proof}
Therefore, the first nonzero term in the informative queries of \eqref{eq:informative queries} is of order at least $p^\star$. 
This gives rise to sample complexity $d^{p^\star-1}$ for vanilla online SGD \citep{arous2021online} and $d^{p^\star/2}$ for SGD after smoothing the landscape \citep{damian2024smoothing}. 
This sample complexity $d^{p^\star/2}$ matches the correlated statistical query (CSQ) lower bound with gradient of the form $y\phi(z)$ \citep{abbe2023sgd, damian2022neural}.

\subsection{Proofs for Examples of Oracle Function}\label{app:example_psi}

Here we complete the discussion of the specific examples of $\psi$ in \Cref{exp:psi_2} and \Cref{exp:psi_3}.

\subsubsection{Batch-reusing for polynomial link function}\label{app:example_psi_2}

We consider a polynomial link function $y = p(x) = \sum_{q^\star\leq q'\leq q} \beta_{q'}h_{q'}(x)$ for general $q^\star\in\NN$ and $\beta_{q'}\in\RR$, where $q^\star$ is the information exponent of the link function, and we also denote it by $q^\star(p)$ in the sequel. 
For batch-reusing, we take $\psi(y,x) =  y\sigma'(x) + y\sigma'(x+y\sigma'(x))$,
where the activation function $\sigma(x)$ satisfies
\begin{align}
    \sigma(x) = \sum_{j=0}^{C_q}\alpha_j\cdot h_j(x),\quad \sigma'(x) = \sum_{j=1}^{C_q} \sqrt j \cdot \alpha_j\cdot h_{j-1}(x).\label{eq:psi_2_sigma}
\end{align}
Here the degree $C_q\in\mathbb{N}_+$ only depends on the degree $q$ of the link function and is specified later, and each coefficient $\alpha_j\sim \mathrm{Unif}([0,1])$. 
The second equality in \eqref{eq:psi_2_sigma} follows from the property of Hermite polynomials in \eqref{eq:hermite-recurrence}.
The error term $\err_{m,l,i}^{(t)}$ now comes from the difference between $\psi(y_i^{(t)}, \langle w_{m,l}^{(t)},z_i^{(t)}\rangle)\cdot z_i^{(t)}$ and the exact form of the update step obtained from two consecutive gradient descent steps on the same data under the square loss.
More specifically, let us consider a single neuron whose weight is $w_{m,l}$ and a single data point $(z_i, y_i)$. 
Here we omit the time index $t$ for convenience.
Then two gradient descent steps on $(z_i,y_i)$ give
\begin{align}
    -g^{\mathrm{Re}}_{m,l}(z_i, y_i) &= \big(y_i - f(z_i;\{w_{m,l}\}_{m\in[M]})\big)\cdot \sigma'(\langle w_{m,l},z_i\rangle)\cdot z_i\\
    &\qquad + \big(y_i - f(z_i;\{w_{m,l}^+\}_{m\in[M]})\big)\cdot \sigma'(\langle w_{m,l}^+,z_i\rangle)\cdot z_i,\label{eq:psi_2_0}
\end{align}
where $w_{m,l}^+ = w_{m,l} + \eta^{\mathrm{Re}}_i\cdot (y_i - f(z_i;\{w_{m,l}\}_{m\in[M]}))\cdot \sigma'(\langle w_{m,l},z_i\rangle)\cdot z_i$.
Here $\eta^{\mathrm{Re}}_i$ is the learning rate for batch-reusing, different from the learning rate $\eta$ in our algorithm.
More specifically, to fit the gradient form \eqref{eq:psi_2_0} into our general framework with oracle function $\psi(y,x)$, we take the batch-reusing learning rate $\eta^{\mathrm{Re}}_i = 1/\|z_i\|_2^2$. 
Then the error term is given by
\begin{align}
    \err_{m,l,i} = -g^{\mathrm{Re}}_{m,l}(z_i, y_i) - \psi(y_i, \langle w_{m,l},z_i\rangle) = \err_{m,l,i, 1} + \err_{m,l,i, 2} + \err_{m,l,i, 3},\label{eq:psi_2_0+}
\end{align}
where $\err_{m,l,i, 1}$, $\err_{m,l,i, 2}$, and $\err_{m,l,i, 3}$ are given by 
\begin{align}
    \err_{m,l,i, 1} &= -f(z_i;\{w_{m,l}\}_{m\in[M]}) \cdot \sigma'(\langle w_{m,l},z_i\rangle),\\
    \err_{m,l,i, 2} &= -f(z_i;\{w_{m,l}^+\}_{m\in[M]}) \cdot \sigma'(\langle w_{m,l}^+,z_i\rangle),\\
    \err_{m,l,i, 3} & = y_i\sigma'\big(\langle w_{m,l},z_i\rangle + y_i\sigma'(\langle w_{m,l},z_i\rangle) + \err_{m,l,i, 1}\big)\\
    &\qquad -y_i\sigma'\big(\langle w_{m,l},z_i\rangle + y_i\sigma'(\langle w_{m,l},z_i\rangle)\big).\label{eq:psi_2_0++}
\end{align}

\begin{proof}[Proof of Corollary~\ref{cor:psi_2}]

To prove Corollary~\ref{cor:psi_2}, it suffices to show that (i) \Cref{asp:oracle function} holds, and (ii) the event $\cE$ holds with the desired high probability. 
In the following, we first verify \Cref{asp:oracle function}, and then check the event $\cE$.

\textbf{Verifying \Cref{asp:oracle function}.}
Note that the fact that $y=p(x)$ is a polynomial immediately implies that both the square-integrability condition (\ref{asp:quadratic_integrability}) and the polynomial-like tail condition (\ref{asp:polynomial-like}) are satisfied.
It remains to check the high-pass condition (\ref{asp:high_pass}). 
Since now $s^{\star}\leq 2$, we only need to check the condition that $|\EE_\QQ[\zeta_{s^\star}(y) \cdot \hat\psi_{s^\star -1} (y)]|>0$.

\textbf{Case 1: $s^{\star}=2$.} In this case, we have that
\begin{align}
    \widehat{\psi}_1(y) &= \mathbb{E}_{x\sim \cN}\Big[x\cdot\Big(y\sigma'(x) + y\sigma'\big(x+y\sigma'(x)\big)\Big)\Big] \\
    &=y\cdot\mathbb{E}_{x\sim \cN}\bigg[\sum_{j=1}^{C_q} \sqrt j\cdot \alpha_j\cdot x\cdot h_{j-1}(x) + \sum_{j=1}^{C_q} \sqrt j \cdot \alpha_j\cdot x \cdot h_{j-1}\big(x+y\sigma'(x)\big)\bigg]\label{eq:psi_2_1}
\end{align}
For the first summation in \eqref{eq:psi_2_1}, only the $j=2$ summand is nonzero, so we obtain
\begin{align}
    y\cdot\mathbb{E}_{x\sim \cN}\bigg[\sum_{j=1}^{C_q} \sqrt j\cdot \alpha_j\cdot x\cdot h_{j-1}(x)\bigg] = \sqrt 2 \alpha_2\cdot y.\label{eq:psi_2_1+}
\end{align}
For the second summation in \eqref{eq:psi_2_1}, we have the following expansion:
\allowdisplaybreaks
\begin{align}
    &y\cdot\mathbb{E}_{x\sim \cN}\bigg[\sum_{j=1}^{C_q}j\cdot \alpha_j\cdot x\cdot h_{j-1}\big(x+y\sigma'(x)\big)\bigg] \notag\\
    &\qquad = y\cdot \mathbb{E}_{x\sim \cN}\bigg[\sum_{j=1}^{C_q}j\cdot \alpha_j\cdot x\cdot \sum_{k=0}^{j-1} r_{j-1,k} \cdot h_{j-k-1}(x)\cdot\big(y\sigma'(x)\big)^k\bigg]\notag\\
    &\qquad = \sum_{k=0}^{C_q-1}\underbrace{\left\{\sum_{j=k+1}^{C_q}j\cdot \alpha_j\cdot r_{j-1,k}\cdot\mathbb{E}_{x\sim \cN}\left[x\cdot h_{j-k-1}(x)\cdot \big(\sigma'(x)\big)^k\right] \right\}}_{\displaystyle{:=\varsigma_k(\alpha)}}  \cdot  y^{k+1}. \label{eq:psi_2_1++}
\end{align}
where $\varsigma_0(\alpha),\ldots,\varsigma_{C_q-1}(\alpha)$ are just polynomials of $\alpha = (\alpha_1,\cdots,\alpha_{C_q})$ (recall the definition of $\sigma'(x)$ in \eqref{eq:psi_2_sigma}) and each $r_{j-1,k}$ is a positive number. 
Combining \eqref{eq:psi_2_1+} and \eqref{eq:psi_2_1++}, we get the following decomposition of $\hat\psi_1(y)$:
\begin{align}
    \hat\psi_1(y) = \sqrt{2}\alpha_2\cdot y + \sum_{k=0}^{C_q-1}\varsigma_k(\alpha)\cdot y^{k+1}.
\end{align}
Further using $y=p(x)$ and the definition of $\zeta_2(y)$, we get
\begin{align}
    \EE_\QQ[\zeta_{2}(y) \cdot \hat\psi_{1} (y)] 
    & = \mathbb{E}_{\mathbb{P}}[h_2(x)\cdot \widehat{\psi}_1(y)] \\
    & = \sqrt 2 \alpha_2\cdot \mathbb{E}_{x\sim \cN}\left[h_2(x)\cdot p(x)\right]  + \sum_{k=0}^{C_q-1}\varsigma_k(\alpha)\cdot  \mathbb{E}_{x\sim \cN}\left[h_2(x)\cdot p(x)^{k+1}\right].
\end{align}
According to Proposition 5 of \cite{lee2024neural}, we can set $C_q\in\mathbb{N}^+$ (only depending on $q$) such that there exists a smallest $I\leq C_q$ such that the information exponent $q^{\star}(p^I)\leq 2$. 
We notice that in this case $s^{\star}(p) = 2$, where we abuse notation and let $s^\star(p)$ be the generative exponent of the polynomial $p$.
In fact, $s^\star(p) = 1$ means $\EE_{\PP}[\cT(y) \cdot h_1(x)] \equiv 0$ for all label transformations $\cT$.
Hence, the only possibility is that $q^{\star}(p^I)=  2$ since $p^I$ is just a special case of a label transformation and we cannot get any first-order term from $p^I$.
Therefore, we further simplify the target quantity as
\begin{align}
    \EE_\QQ[\zeta_{s^\star}(y) \cdot \hat\psi_{s^\star -1} (y)] &= \sqrt 2 \alpha_2 \cdot \underbrace{\mathbb{E}_{x\sim \cN}\left[h_2(x)\cdot p(x)\right]}_{\displaystyle{\defeq b_1}} 
    + \sum_{k=I}^{C_q}\varsigma_{k-1}(\alpha) \cdot \underbrace{\mathbb{E}_{x\sim \cN}\left[h_2(x)\cdot p(x)^{k}\right]}_{\ds \defeq b_k}\\
    &= b_1\cdot \sqrt{2} \alpha_2 + \sum_{k=I}^{C_q} b_k\cdot \varsigma_{k-1}(\alpha),\label{eq:psi_2_target}
\end{align}
where $b_I\neq 0$ according to the definition of $I$.
Now we take a closer look at the polynomials $\{\varsigma_k(\alpha_1,\cdots,\alpha_{C_q})\}_{k=I-1}^{C_q-1}$.
We claim that: (i) they are different polynomials and are linearly independent, and (ii) in particular, they are all nonzero.
To see these, recall that 
\begin{align}
    \varsigma_k(\alpha_1,\cdots,\alpha_{C_q}) = \sum_{j=k+1}^{C_q}\sqrt j \cdot \alpha_j\cdot r_{j-1,k}\cdot\mathbb{E}_{x\sim \cN}\left[x\cdot h_{j-k-1}(x)\cdot \big(\sigma'(x)\big)^k\right],
\end{align}
where $\sigma'(x) = \sum_{j=1}^{C_q}\sqrt j \cdot \alpha_j\cdot h_{j-1}(x)$. 
We can calculate that for $k=0$, $\varsigma_0(\alpha_1,\cdots,\alpha_{C_q}) = \sqrt 2 \alpha_2$, which is a nonzero polynomial.
For $k=1$, we have that
\begin{align}
    \varsigma_1(\alpha_1,\cdots,\alpha_{C_q}) &= \sum_{j=2}^{C_q}\sqrt j \cdot \alpha_j\cdot r_{j-1,1} \cdot\mathbb{E}_{x\sim \cN}\left[x\cdot h_{j-2}(x)\cdot \sigma'(x)\right] \\
    & = 2 r_{1,1} \cdot \alpha_2^2  + \sum_{j=3}^{C_q}\sqrt j \cdot \alpha_j\cdot r_{j-1,1} \cdot\mathbb{E}_{x\sim \cN}\left[x\cdot h_{j-2}(x)\cdot \sigma'(x)\right]
\end{align}
which is nonzero (since in the summation from $j=3$ to $C_q$ there would be no term in the form of $\alpha_2^2$) and is linearly independent of $\varsigma_0$ because each term in the summation here has degree exactly $2$.
Now consider for $k\geq 2$, 
\begin{align}
    \varsigma_k(\alpha_1,\cdots,\alpha_{C_q}) &= \sqrt{2(k+1)} k \cdot r_{k,k} \cdot  \alpha_1^{k-1} \alpha_2 \alpha_{k+1} \\
    &\qquad + \sum_{j=k+2}^{C_q} \sqrt j \alpha_j r_{j-1,k}\cdot\mathbb{E}_{x\sim \cN}\left[x\cdot h_{j-k-1}(x)\cdot \big(\sigma'(x)\big)^k\right].
\end{align}
Again, this polynomial is nonzero (since in the summation from $j=k+2$ to $C_q$ there would be no term in the form of $\alpha_1^{k-1} \alpha_2 \alpha_{k+1}$) and is linearly independent of $\varsigma_0,\cdots,\varsigma_{k-1}$ because the highest degree of these polynomials is no larger than $k$, and the degree of each term in $\varsigma_{k}$ is exactly $k+1$.
Thus we have proved the two claims by induction.
Now recall that we are aiming to prove that the RHS of \eqref{eq:psi_2_target} is nonzero.
By our two claims just proved, the RHS of \eqref{eq:psi_2_target} is a linear combination of $C_q - I +2$ linearly independent and nonzero polynomials where at least one of the combination coefficients is nonzero (which is $b_I$).
Thus we obtain that the RHS of \eqref{eq:psi_2_target} is a nonzero polynomial of $(\alpha_1,\cdots,\alpha_{C_q})$ and its zeros form a measure-zero set.
This proves that with probability $1$ over the randomness of $(\alpha_1,\cdots,\alpha_{C_q})$, the high-pass condition holds.

\textbf{Case 2: $s^{\star}=1$.} For this case of $s^{\star}=1$, the proof is almost the same as that for $s^{\star}=2$, where we additionally utilize the fact that a polynomial link function with generative exponent $s^{\star}=1$ cannot be an even polynomial (\Cref{exp:sgm}) and thus there always exists some $I\leq C_q\in\mathbb{N}_+$ such that the information exponent $q^{\star}(p^I)=1$ (see Proposition 5 of \cite{lee2024neural}). With this fact, repeating the above argument gives the desired high-pass property.

\textbf{Verifying the event $\cE$.}
Now we verify that the desired event 
\begin{align}
    \cE=\big\{|\err_{m,l,i}^{(t)}|\leq d^{-10s^{\star}},\forall (m,l,i,t)\in [M]\times[L]\times[n]\times[T]\big\}
\end{align}
holds with probability at least $1-O(d^{-c_0})$ for some constant $c_0>0$ that we specify later.
With \eqref{eq:psi_2_0+} and \eqref{eq:psi_2_0++}, it suffices to look at the error terms $\err_{m,l,i, 1}^{(t)}$, $\err_{m,l,i, 2}^{(t)}$, and $\err_{m,l,i, 3}^{(t)}$ separately.
For $\err_{m,l,i, 1}^{(t)}$, 
\begin{align}
    \left|\err_{m,l,i, 1}^{(t)}\right| \leq \sum_{m'=1}^M |a_{m'}|\cdot \left|\sigma(\langle w_{m',l}^{(t)},z_i\rangle)\right|\cdot\left|\sigma'(\langle w_{m,l}^{(t)},z_i\rangle)\right|\label{eq:psi_2_2}
\end{align}
Note that $\{\langle w_{m,l}^{(t)},z_i\rangle\}_{m\in[M]}$ are standard Gaussians since $\{w_{m,l}^{(t)}\}_{m\in[M]}\subset\SSS^{d-1}$. 
Therefore, with probability at least $1-d^{-c_0}$ for some constant $c_0>0$, we have that $|\langle w_{m,l}^{(t)},z_i\rangle| = \widetilde{O}(1)$.
Meanwhile, since $\sigma$ and $\sigma'$ are both polynomials with constant order and bounded coefficients, and since $M = O(d)$, by taking $a_m = d^{-12s^{\star}}$, we conclude from \eqref{eq:psi_2_2} that
\begin{align}
    \big|\err_{m,l,i, 1}^{(t)}\big|\leq \widetilde{O}(d^{-10s^{\star}}).\label{eq:psi_2_2+}
\end{align}
For the second error term $\err_{m,l,i, 2}^{(t)}$, similarly we have that 
\begin{align}
     \left|\err_{m,l,i, 2}^{(t)}\right| \leq \sum_{m'\in[M]}|a_{m'}|\cdot \left|\sigma(\langle (w^{(t)}_{m',l})^+,z_i\rangle)\right|\cdot\left|\sigma'(\langle (w^{(t)}_{m,l})^+,z_i\rangle)\right|.\label{eq:psi_2_3}
\end{align}
Note that the one-step updated weights satisfy
\begin{align}
    \left|\langle (w^{(t)}_{m,l})^+,z_i\rangle\right| = \left|\langle w_{m,l}^{(t)},z_i\rangle + y_i\cdot \sigma'(\langle w_{m,l}^{(t)},z_i\rangle) + \err_{m,l,i, 1}^{(t)}\right| = \widetilde{O}(1),\label{eq:psi_2_4}
\end{align}
with probability at least $1-d^{-c_0}$ since $y_i=p(\langle \theta^{\star},z_i\rangle)$ and $p$ is also a polynomial of constant degree and coefficients. 
Therefore, with the choices of $a_m$, by \eqref{eq:psi_2_3}, we conclude that
\begin{align}
     \left|\err_{m,l,i, 2}^{(t)}\right| \leq \widetilde{O}(d^{-10s^{\star}}).\label{eq:psi_2_4_+}
\end{align}
Finally, regarding $\err_{m,l,i, 3}^{(t)}$, note that with the same argument as \eqref{eq:psi_2_4}, we know that with probability at least $1-d^{-c_0}$, both $\langle w_{m,l}^{(t)},z_i\rangle + y_i\cdot \sigma'(\langle w_{m,l}^{(t)},z_i\rangle) + \err_{m,l,i, 1}^{(t)}$ and $\langle w_{m,l}^{(t)},z_i\rangle + y_i\cdot \sigma'(\langle w_{m,l}^{(t)},z_i\rangle)$ are $\widetilde{O}(1)$. 
Since $\sigma'$ is a polynomial, it is $\widetilde{O}(1)$-Lipschitz continuous for inputs that are $\widetilde{O}(1)$.
Therefore, combining this with \eqref{eq:psi_2_2+}, we obtain that
\begin{align}
    \big|\err_{m,l,i, 3}^{(t)}\big| \leq |y_i|\cdot \widetilde{O}\big(|\err_{m,l,i, 1}^{(t)}|\big) = \widetilde{O}(d^{-10s^{\star}}).\label{eq:psi_2_4_++}
\end{align}
Finally, combining \eqref{eq:psi_2_2+}, \eqref{eq:psi_2_4_+}, and \eqref{eq:psi_2_4_++}, we obtain that for a given $(m,l,i,t)$, with probability at least $1-O(d^{-c_0})$, it holds that
\begin{align}
    \big|\err_{m,l,i}^{(t)}\big| \le d^{-10s^{\star}}
\end{align}
for all sufficiently large $d$.
Under the parameter choices in \Cref{thm:uniform}, $MLnT=\widetilde{\Theta}(d^{(s^\star+1)/2+(s^\star/2\vee 1)})$.
Finally, taking $c_0$ larger than $(s^\star+1)/2+(s^\star/2\vee 1)$ and applying a union bound, we obtain that
\begin{align}
    \mathrm{Pr}(\cE)\geq 1-MLnT\cdot \widetilde{O}(d^{-c_0}) \geq 1-\widetilde{\Theta}\left(d^{(s^\star+1)/2+(s^\star/2\vee 1)}\right)\cdot \widetilde{O}(d^{-c_0})  \geq 1 - \widetilde{O}(d^{-c_0'})  
\end{align}
for some other constant $c_0'>0$. 
Here we have applied our choice of $(M,L,n,T)$ in \Cref{thm:uniform}. 
Thus we verify the property of the event $\cE$, proving Corollary~\ref{cor:psi_2}.
\end{proof}

\subsubsection{Modified loss for general \texorpdfstring{$s^{\star}\geq 1$}{s geq 1}}\label{app:example_psi_3}
Here we give a specific choice of the activation function $\sigma$ and the loss function $\ell$. 
We mainly focus on the situation where $\mathbb{Q}_y$ has a continuous cumulative distribution function $F_{\mathbb{Q}_y}$ with bounded density $f_{\mathbb{Q}_y}$. 
For the situation where $\mathbb{Q}_{y}$ is a discrete distribution (e.g., classification tasks), we discuss it at the end of this section.
For the activation function $\sigma$, we let $\sigma(x):= (1/\sqrt{s^{\star}})\cdot h_{s^{\star}}(x)$. 
Since then,
\begin{align}
    \hat\psi_s(y) = \mathbb{E}_{\mathbb{Q}}[\psi(y,x)\cdot h_s(x)\given y] = \mathbb{E}_{\mathbb{Q}}[\sigma'(x)\cdot h_s(x)]\cdot \ell'(y) = 0,\quad \forall s<s^{\star}-1. \label{eq:psi_3_0}
\end{align}
Regarding the choice of the loss function $\ell$, we remark that if one chooses a fixed loss function, there always exist instances such that the second assumption in the high-pass condition fails.
To address this issue, we propose to construct a random loss function $\ell$. 
To rule out pathological examples of the underlying distribution $\mathbb{P}$, we make the following assumption on the coefficient function $\zeta_{s^{\star}}$. 
\begin{assumption}\label{asp:fourier}
    We assume that the expansion of $\widetilde{\zeta}_{s^{\star}}:=\zeta_{s^\star}\circ F_{\mathbb{Q}_y}^{-1}:[0,1]\mapsto\mathbb{R}$ on the Fourier basis $\{\varphi_i(x)\}_{i\geq 0}$ of $[0,1]$ has a non-zero coefficient of order at most $D = O(1)$.
\end{assumption}
We then choose the loss function $\ell$ as the following,
    \begin{align}
        \ell'(y) = \sum_{i=0}^D \alpha_i\cdot \varphi_i\circ F_{\mathbb{Q}_y}(y),\quad \alpha_i\sim \mathrm{Unif}([0,1]),\quad\forall 0\leq i\leq D.
    \end{align}
Notice that $F_{\QQ_y}$ can be estimated from data using a one-dimensional density estimator.
Thus here we directly assume the accessibility of the function $F_{\QQ_y}$.
This further gives that
    \begin{align}
        \EE_\QQ[\zeta_{s^\star}(y) \cdot \hat\psi_{s^\star -1} (y)] = \mathbb{E}_{\mathbb{Q}}[\zeta_{s^{\star}}(y)\cdot\ell'(y)] = \sum_{i=0}^D \alpha_i\cdot \mathbb{E}_{\mathrm{Unif}([0,1])}\left[\widetilde{\zeta}_{s^{\star}}(\widetilde{y})\cdot\varphi_i(\widetilde{y})\right],\label{eq:psi_3_1}
    \end{align}
which is a non-zero polynomial of the coefficients $\{\alpha_i\}_{i\leq D}$ due to \Cref{asp:fourier}.

\begin{proof}[Proof of Corollary~\ref{cor:psi_3}]
    To prove \Cref{cor:psi_3}, it suffices to show that (i) \Cref{asp:oracle function} holds, and (ii) the event $\cE$ holds with the desired high probability. 
    In the following, we first verify \Cref{asp:oracle function}, and then check the event $\cE$.

    \textbf{Verifying \Cref{asp:oracle function}.} 
    First, since $\sigma'$ is a polynomial and $\ell'$ is bounded (the Fourier basis is bounded and $D=O(1)$), we know that both \ref{asp:quadratic_integrability} and \ref{asp:polynomial-like} are satisfied. 
    Then, by the discussion before the proof, we know from \eqref{eq:psi_3_0} that the first condition in the high-pass assumption (\ref{asp:high_pass}) is satisfied.
    Furthermore, according to \eqref{eq:psi_3_1} and \Cref{asp:fourier}, $\EE_\QQ[\zeta_{s^\star}(y) \cdot \hat\psi_{s^\star -1} (y)]$ is a nonzero polynomial of the coefficients $\{\alpha_i\}_{i\leq D}$ and thus its zeros form a measure-zero set.
    This means that with probability $1$ over the randomness of $(\alpha_0,\cdots,\alpha_D)$, the second condition in the high-pass assumption is also satisfied. This verifies \Cref{asp:oracle function}.

    \textbf{Verifying the event $\cE$.} 
    Recall our definition in \Cref{exp:psi_3}, the error term is defined as 
    \begin{align}
        \err_{m,l,i}^{(t)} = \left(\ell'(y_i) - \ell'\left(y_i - f(z_i;\{w_{m,l}^{(t)}\}_{m\in[M]}\})\right)\right)\cdot \sigma'(\langle w_{m,l}^{(t)},z_i\rangle).
    \end{align}
    First, since $w_{m,l}^{(t)}\in\SSS^{d-1}$, $\langle w_{m,l}^{(t)},z_i\rangle$ is a standard Gaussian and therefore $|\langle w_{m,l}^{(t)},z_i\rangle| = \widetilde{O}(1)$ with probability at least $1-d^{-c_0}$ for some constant $c_0>0$. 
    Since $\sigma'$ is a polynomial of constant degree, we then obtain that $|\sigma'(\langle w_{m,l}^{(t)},z_i\rangle)| = \widetilde{O}(1)$ with probability at least $1-d^{-c_0}$. 
    Second, consider that the second derivative of the loss function $\ell''(y)$ is given by 
    \begin{align}
        \ell''(y) = \sum_{i=0}^D\alpha_i\cdot \varphi_i'\left(F_{\QQ_y}(y)\right)\cdot f_{\QQ_y}(y),
    \end{align}
    which satisfies $|\ell''(y)| = O(1)$ since the derivative of the Fourier basis is still bounded and the density of $\QQ_y$ is assumed to be bounded.
    Therefore, we have 
    \begin{align}
        \ell'(y_i) - \ell'\left(y_i - f(z_i;\{w_{m,l}^{(t)}\}_{m\in[M]}\})\right) &=O\left(\left|f(z_i;\{w_{m,l}^{(t)}\}_{m\in[M]}\})\right|\right) \\
        &= O\left(\sum_{m=1}^M |a_{m}|\cdot \left|\sigma(\langle w_{m,l}^{(t)},z_i\rangle)\right|\right).
    \end{align}
    Since $\sigma$ is a polynomial of constant degree and $\langle w_{m,l}^{(t)},z_i\rangle$ are all standard Gaussians, we have that $|\sigma(\langle w_{m,l}^{(t)},z_i\rangle)| = \widetilde{O}(1)$ with probability at least $1-O(d^{-c_0})$.
    Now given that $M = O(d)$ and taking $a_m = d^{-12s^{\star}}$, we have that
    \begin{align}
        \ell'(y_i) - \ell'\left(y_i - f(z_i;\{w_{m,l}^{(t)}\}_{m\in[M]}\})\right) &= O(d^{-10 s^{\star}}),
    \end{align}
    with probability at least $1-O(d^{-c_0})$.
    Therefore, for any given $(m,l,i,t)$, with probability at least $1-O(d^{-c_0})$, it holds that
    \begin{align}
        \big|\err_{m,l,i}^{(t)}\big| \le d^{-10s^{\star}}
    \end{align}
    for all sufficiently large $d$.
    Finally, as in the proof of \Cref{cor:psi_2}, we take $c_0$ larger than $(s^\star+1)/2+(s^\star/2\vee 1)$ and apply a union bound to obtain that
    \begin{align}
        \mathrm{Pr}(\cE)\geq 1-MLnT\cdot \widetilde{O}(d^{-c_0}) \geq 1-\widetilde{\Theta}\left(d^{(s^\star+1)/2+(s^\star/2\vee 1)}\right)\cdot \widetilde{O}(d^{-c_0})  \geq 1 - \widetilde{O}(d^{-c_0'})  
    \end{align}
    for some other constant $c_0'>0$. 
    Thus we verify the property of the event $\cE$, proving Corollary~\ref{cor:psi_3}.
\end{proof}

\begin{remark}[Discrete labels]
    For the case of a discrete label $y$ supported on a finite set $\cY$ (e.g., classification tasks), the construction of $\psi$ is more direct.
    In this case, we can still consider an oracle function in the form of $\psi(y,x) = \sigma'(x)\cdot \varphi(y)$ for some function $\varphi(y)$.
    The activation function $\sigma(x) = (1/\sqrt{s^{\star}})\cdot h_{s^{\star}}(x)$ and the function $\varphi(y)$ are given by
    \begin{align}
        \varphi(y)\sim \mathrm{Unif}([0,1]),\quad \forall y\in\cY.\label{eq:psi_3_2}
    \end{align}
    On the one hand, we can directly conclude as in \eqref{eq:psi_3_0} that $\widehat{\psi}_s(y)=0$ for all $s<s^{\star}-1$.
    On the other hand, we have that 
    \begin{align}
         \EE_\QQ[\zeta_{s^\star}(y) \cdot \hat\psi_{s^\star -1} (y)] = \mathbb{E}_{\mathbb{Q}}[\zeta_{s^{\star}}(y)\cdot\varphi(y)] = \sum_{y\in\cY}\QQ_y(y)\cdot \zeta_{s^{\star}}(y)\cdot\varphi(y).
    \end{align}
    By the definition of generative exponent (\Cref{def:generative-exponent}), $\mathbb{E}_{\QQ_y}[\zeta_{s^{\star}}(y)^2]>0$ and thus at least one of $\{\QQ_y(y)\cdot \zeta_{s^{\star}}(y)\}_{y\in\cY}$ is non-zero. 
    Thus under \eqref{eq:psi_3_2}, $\EE_\QQ[\zeta_{s^\star}(y) \cdot \hat\psi_{s^\star -1} (y)]$ is non-zero with probability $1$ over the randomness in $\varphi$.
    Thus we have verified \ref{asp:high_pass}.
    \ref{asp:quadratic_integrability} and \ref{asp:polynomial-like} can be verified in the same way as for the continuous case, and thus \cref{asp:oracle function} is checked.
    Finally, we remark that in the discrete case we do not attempt to derive the oracle function from a loss derivative and thus simply set the error terms $\err$ as zero.
    Thus all the conditions in Theorem~\ref{thm:uniform} hold and Corollary~\ref{cor:psi_3} is proved.
\end{remark}

\section{Proof of the Main Theorem for the Uniform Prior}\label{app:proof_nonsparse}
Now we present the proof for \Cref{thm:uniform}.
For convenience, we introduce the following shorthand.

\begin{definition}[Shorthand for the alignment]\label{def:alignment}
For $\theta\in\SS^{d-1}$, we denote $\rho = \langle\theta,\theta^\star\rangle$ as the alignment between $\theta$ and $\theta^\star$, which also inherits the subscript and superscript of $\theta$ as well, i.e., $\rho_m^{(t)} = \langle\theta_m^{(t)},\theta^\star\rangle$.
\end{definition}

Recall from \Cref{alg:meta} that in each step, given the normalized gradient step $\barg_m^{(t)}=g_m^{(t)} / \|g_m^{(t)}\|_2$, the updated weight parameter is given by
\begin{align}
    \theta_m^{(t+1)} = \frac{\theta_m^{(t)} + \eta \barg_m^{(t)}}{\big\|\theta_m^{(t)} + \eta \barg_m^{(t)}\big\|_2}.
\end{align}
It is clear that the alignment $\rho_m^{(t+1)}=\langle\theta_m^{(t+1)},\theta^\star\rangle$ is determined jointly by the alignment $\rho_m^{(t)}$ of the previous iterate and the alignment $\langle\barg_m^{(t)},\theta^\star\rangle$ of the normalized gradient in the current step.
Therefore, we will first analyze the properties of the normalized gradient $\barg_m^{(t)}$ in \Cref{sec:nonsparse-gradient-step-property}, and then use these properties to analyze the dynamics of $\rho_m^{(t)}$.

\subsection{Properties of the Gradient Step} \label{sec:nonsparse-gradient-step-property}

Recall from \Cref{alg:meta} that each gradient $g_{m,l,i}^{(t)}$ contains an error part $\err_{m,l,i}^{(t)} z_i^{(t)}$.
This error term can be made negligible by setting the weights of the second layer to be sufficiently small, and the moments of $\barg_m^{(t)}$ are determined primarily by $\{\psi(y_i^{(t)},\langle w_{m,l}^{(t)},z_i^{(t)}\rangle) z_i^{(t)}\}$.
Therefore, we first analyze the properties of the normalized gradient $\barg_m^{(t)}$ without the error part, while the error part will be taken into account in \Cref{sec:nonsparse-proof-main-theorem}.
To this end, we introduce the following definition for the gradient without the error term.

\begin{definition}\label{def:elem_setting}
Under \Cref{asp:oracle function}, consider any fixed $\theta\in\SS^{d-1}$ and data $(z_1,y_1),\ldots,(z_n,y_n) \iidfrom \PP_{\theta^\star}$.
For polarization level $\gamma\in(0,1/2)$, define $w_l=(\gamma\theta+\xi_l)/\|\gamma\theta+\xi_l\|_2$ for $l=1,\ldots,L$, where $\xi_1,\ldots\xi_L \iidfrom\unif(\SS^{d-1})$ are independent of $\{(z_i,y_i)\}_{i=1}^n$, and define the corresponding gradient as
\begin{align}
    \hatg = \frac{1}{nL}\sum_{i=1}^n\sum_{l=1}^L \big(\psi(y_i,\langle w_l,z_i\rangle)\cdot  z_i - \hat\psi_1(y_i) \cdot w_l\big).
\end{align}
Moreover, let $v_1=\theta^\star$ and $v_2=(\theta-\rho\theta^\star)/\sqrt{1-\rho^2}$, and further expand them to the following orthonormal basis for $\RR^d$:
\begin{align}
    v_1 =  \theta^\star, v_2= \frac{\theta - \rho  \theta^\star  }{\sqrt{1-\rho^2}}, v_3,\ldots,v_d.  \label{eq:nonsparse-basis}
\end{align}
Corresponding to the above orthonormal basis, define the following nice events for $\epsilon,\tilde\epsilon>0$:
\begin{align}
    \cE(\epsilon)&= \Big\{
        \max_{1 \le i \le d } |\langle \xi_l, v_i  \rangle| < \epsilon, \quad \forall l\in[L]\Big\}; \label{eq:nice-event-cE}
        \\
    \tilde \cE(\tilde \epsilon) &= \left\{
        |\langle \xi_l, \xi_{l'}\rangle| < \tilde\epsilon, \quad \forall l, l'\in[L] \st l\neq l'\right\}.  \label{eq:nice-event-cE-tilde}
\end{align}
\end{definition}

\paragraph{Perturbed weights: near uniformity and orthogonality}
When $d$ is large, the perturbation noise vectors $\xi_1,\ldots,\xi_L$ are nearly orthogonal to each other and to any fixed direction with high probability.
Also, we will use a small polarization level $\gamma$ (roughly $d^{-1/4}$), which implies that the perturbed weights $w_1,\ldots,w_L$ are nearly orthogonal as well, which is formalized in the lemma below.
\begin{lemma}[Perturbed weights on the nice event]
    \label{lem:polarized-weight-nice-event}
Under the setting introduced in \Cref{def:elem_setting}, on the nice event $\cE(\epsilon)$, it holds for all $l\in[L]$ that
\begin{gather}
    |\langle w_{l}, \theta^\star\rangle| \le 2 (\gamma |\langle \theta, \theta^\star\rangle|  + \epsilon), \quad |\langle w_{l}, v_{2}\rangle| \le 2 \Big(\gamma\sqrt{1-\langle \theta, \theta^\star\rangle^2} + \epsilon\Big), \\
    |\langle w_{l}, v_{i}\rangle| \le 2 \cdot \epsilon, \quad 3 \le i \le d.
\end{gather}
Additionally, on the nice event $\cE(\epsilon)\cap\widetilde\cE(\tilde\epsilon)$, $\dotp{w_{l}}{w_{l'}} \le 4 (\gamma^2 + 2\gamma \epsilon+ \tilde \epsilon)$ for any $l \neq l'$.
\end{lemma}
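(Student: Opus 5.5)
The plan is to control the normalization factor $\|\gamma\theta+\xi_l\|_2$ from below and then bound each projection of the unnormalized vector $\gamma\theta+\xi_l$ using the event $\cE(\epsilon)$. Since $\theta,\xi_l\in\SS^{d-1}$, we have
\begin{align}
    \|\gamma\theta+\xi_l\|_2^2 = 1+\gamma^2+2\gamma\langle\theta,\xi_l\rangle \ge 1+\gamma^2-2\gamma = (1-\gamma)^2 \ge 1/4,
\end{align}
because $\gamma<1/2$. Hence $\|\gamma\theta+\xi_l\|_2\ge 1/2$, and every projection of $w_l$ is at most twice the corresponding projection of $\gamma\theta+\xi_l$. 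This gives the factor $2$ in the statement without needing $\epsilon$ to be small.

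For the projections, expand $\theta$ in the basis \eqref{eq:nonsparse-basis}. By construction $\theta=\rho v_1+\sqrt{1-\rho^2}\,v_2$ with $\rho=\langle\theta,\theta^\star\rangle$, so $\langle\theta,v_i\rangle=0$ for $i\ge 3$. Therefore
\begin{align}
    |\langle\gamma\theta+\xi_l,v_1\rangle|\le\gamma|\rho|+|\langle\xi_l,v_1\rangle|<\gamma|\rho|+\epsilon,
\end{align}
and analogously $|\langle\gamma\theta+\xi_l,v_2\rangle|<\gamma\sqrt{1-\rho^2}+\epsilon$, while $|\langle\gamma\theta+\xi_l,v_i\rangle|=|\langle\xi_l,v_i\rangle|<\epsilon$ for $i\ge3$. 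Multiplying by the factor $2$ yields the three claimed bounds.

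For the inner product bound on $\cE(\epsilon)\cap\tilde\cE(\tilde\epsilon)$, expand the numerator:
\begin{align}
    \langle\gamma\theta+\xi_l,\gamma\theta+\xi_{l'}\rangle=\gamma^2+\gamma\langle\theta,\xi_{l'}\rangle+\gamma\langle\theta,\xi_l\rangle+\langle\xi_l,\xi_{l'}\rangle.
\end{align}
To bound $\langle\theta,\xi_l\rangle$, write it as $\rho\langle v_1,\xi_l\rangle+\sqrt{1-\rho^2}\langle v_2,\xi_l\rangle$. Its absolute value is at most $(|\rho|+\sqrt{1-\rho^2})\epsilon\le\sqrt2\,\epsilon$, which I will loosen to $2\epsilon$ if needed. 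The cross term $\langle\xi_l,\xi_{l'}\rangle$ is below $\tilde\epsilon$ by $\tilde\cE(\tilde\epsilon)$.

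The numerator is therefore at most $\gamma^2+2\sqrt2\gamma\epsilon+\tilde\epsilon$, and dividing by the product of the two norms, each at least $1/2$, gives the factor $4$.

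The only mild obstacle is the constant $2\sqrt2$ exceeding the $2$ in the stated $2\gamma\epsilon$. I would handle it with a sharper lower bound on the normalization. On $\cE(\epsilon)$ with $\epsilon$ small, $\langle\theta,\xi_l\rangle$ is $O(\epsilon)$, so the product of the norms is close to $1$ rather than $1/4$. The factor $4$ then easily absorbs the $\sqrt2$ whenever $\epsilon\le 1/4$. Alternatively, I can treat the bound as holding up to this absolute constant, which is all that is used downstream.
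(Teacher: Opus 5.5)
Your proposal is correct and follows essentially the paper's proof. Both use $\|\gamma\theta+\xi_l\|_2\ge 1-\gamma\ge 1/2$, project onto the basis using $\theta=\rho v_1+\sqrt{1-\rho^2}\,v_2$, and expand the numerator of $\langle w_l,w_{l'}\rangle$, bounding $|\langle\xi_l,\xi_{l'}\rangle|$ via $\tilde\cE(\tilde\epsilon)$. The paper uses the cruder bound $|\langle\theta,\xi_l\rangle|\le 2\epsilon$, so its proof only reaches $4(\gamma^2+\tilde\epsilon+4\gamma\epsilon)$, and only absolute constants matter downstream. Your $\sqrt2$ refinement combined with the sharper normalization is therefore tighter than the paper's own argument. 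You can also drop your restriction $\epsilon\le 1/4$ by splitting into two cases. If $2\sqrt2\gamma\epsilon\le 1/2$, each squared norm is at least $1/2$. Otherwise, the right-hand side is at least $8\gamma\epsilon>1\ge|\langle w_l,w_{l'}\rangle|$.
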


\begin{proof}[Proof of \Cref{lem:polarized-weight-nice-event}]
    See \Cref{proof:polarized-weight-nice-event}.
\end{proof}

\paragraph{Characterizing error-free gradient}
Next, we characterize $\dotp{\hatg}{\theta^\star} / \norm{\hatg}_2$ in two steps:
\begin{enumerate}
\item In \Cref{prop:nonsparse-first-moment}, we analyze the expected magnitude of $\dotp{\hatg}{\theta^\star}$ and $P_{\theta^\star}^\perp \hatg$ under $\PP_{\theta^\star}$.
\item  In \Cref{prop:nonsparse-sample-concentration}, we establish the concentration of $\hatg$ along each direction.
\end{enumerate}
Both propositions are established under the setting of \Cref{def:elem_setting}, and the proof of these propositions is deferred to \Cref{sec:nonsparse-proof-key-results}.
A key ingredient in the proof is the following lemma for the decomposition of the first moment of the gradient (see \Cref{proof:first-moment-decomp} for a proof).
\begin{replemma}{lem:first-moment-decomp}[Decomposition of the gradient]
Under the setting of \Cref{def:elem_setting}, it holds that
\begin{align}
     \EE_{\PP_{\theta^\star}}[\hatg] &=
     \sum_{s\ge s^\star} \EE_{\QQ}[\zeta_s(y)\cdot \hat\psi_{s-1}(y)] \cdot \frac{\sqrt{s}}{L} \sum_{l=1}^L  \langle  w_l, \theta^\star\rangle^{s-1} \cdot \theta^\star \\
     &\qquad +\sum_{s\ge s^\star} \EE_{\QQ}[\zeta_s(y)\cdot \hat\psi_{s+1}(y)] \cdot \frac{\sqrt{s+1}}{L} \sum_{l=1}^L  \langle  w_l, \theta^\star\rangle^{s}\cdot w_l  .
\end{align}
\end{replemma}
We see from the above expression that $\EE_{\PP_{\theta^\star}}[\hatg]$ is a linear combination of $\theta^\star$ and $w_l$.
Since $\langle w_l,\theta^\star\rangle$ is small on the nice events by \Cref{lem:polarized-weight-nice-event}, the dominant term in the signal direction corresponds to the coefficient $\langle w_l,\theta^\star\rangle^{s^\star-1}$.
This is indeed a consequence of the high-pass assumption in \ref{asp:high_pass} and the recursive structure of the Hermite polynomials.
The following proposition provides a quantitative characterization for this.

\begin{proposition}[Alignment of expected gradient] \label{prop:nonsparse-first-moment}
Under the setting of \Cref{def:elem_setting}, let $T=\Theta(\log d)$, and suppose that $1-\Pr(\cE(\epsilon)) =O( d^{-s^\star/2 })$ for $\gamma = o(1)$ and $\epsilon = o(1)$.
Set
\begin{align}\label{eq:alignment_expected_gradient_condition_L}
    L \gtrsim \Big(\big((\epsilon\vee \gamma)^{s^\star-1}\cdot d^{s^\star/2}\big) \vee d\Big)\cdot\log d.
\end{align}
Then for any constant $c>0$, there exists a $\{\xi_l\}_{l\in [L]}$-measurable event $\cE_{1}$ with $\Pr(\cE_1) \ge 1- d^{-c} (MT)^{-1}$, such that on the event $\cE_{1}\cap \cE (\epsilon)$, the following holds:
\begin{itemize}
\item The dominant term in the alignment of $\EE_{\PP_{\theta^\star}}[\hatg]$ satisfies
\begin{align}
    \sqrt{s^\star}\cdot \EE_{\QQ} [ \zeta_{s^\star}(y) \cdot \hatpsi_{s^\star -1}(y)]\cdot  \EE_{w_l}[\dotp{w_l}{\theta^\star}^{s^\star-1}] \simeq  \begin{cases}
    \gamma \rho  \cdot(\gamma |\rho| + d^{-1/2 })^{s^\star - 2} & \text{if } s^\star \text{ is even};\\
    (\gamma |\rho| + d^{-1/2 })^{s^\star - 1} & \text{if } s^\star \text{ is odd}.
    \end{cases}
\end{align}

\item The residual error can be bounded as
\begin{align}
    \big|\dotp{\EE_{\PP_{\theta^\star}}[\hatg]}{\theta^\star} -\sqrt{s^\star}\cdot \EE_{\QQ} [ \zeta_{s^\star}(y) \cdot \hatpsi_{s^\star -1}(y)]\cdot  \EE_{w_l}[\dotp{w_l}{\theta^\star}^{s^\star-1}]\big| &\lesssim  (\gamma|\rho| +  d^{-1/2})^{s^\star}.
\end{align}

\item Also, $|\norm{\EE_{\PP_{\theta^\star}}[\hatg]}_2  -   |\dotp{\EE_{\PP_{\theta^\star} }[\hatg]}{\theta^\star}|| \lesssim  (\gamma|\rho| +  d^{-1/2})^{s^\star}$.
\end{itemize}
\end{proposition}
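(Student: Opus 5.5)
Starting from \Cref{lem:first-moment-decomp}, the inner product with $\theta^\star$ reads
\begin{align}
\langle \EE_{\PP_{\theta^\star}}[\hatg],\theta^\star\rangle = \sum_{s\ge s^\star} \Bigl(\sqrt{s}\, c_s \cdot \frac1L\sum_{l=1}^L \rho_l^{s-1} + \sqrt{s+1}\, c'_s\cdot \frac1L\sum_{l=1}^L \rho_l^{s+1}\Bigr),
\end{align}
where $\rho_l=\langle w_l,\theta^\star\rangle$, $c_s=\EE_\QQ[\zeta_s\hat\psi_{s-1}]$ and $c'_s=\EE_\QQ[\zeta_s\hat\psi_{s+1}]$. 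By Cauchy--Schwarz and $\EE_\QQ[\zeta_s^2]\le 1$, both $|c_s|$ and $|c'_s|$ are at most $(\EE_\QQ[\hat\psi_{s\pm1}^2])^{1/2}$, and these are square-summable by \ref{asp:quadratic_integrability}. I would prove the three bullets in order. First the population estimate on $\EE_{w}[\rho_w^{s^\star-1}]$. Then the claim that the empirical average over $l$ matches it, which defines $\cE_1$. Finally the tail of the series and the orthogonal component.

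For the first bullet, I write $\langle\gamma\theta+\xi,\theta^\star\rangle=\gamma\rho+\xi_1$, where $\xi_1=\langle\xi,\theta^\star\rangle$ is approximately $\cN(0,1/d)$, and the normalization $\|\gamma\theta+\xi\|_2=1+O(\gamma^2+\gamma\epsilon)$ is harmless on $\cE(\epsilon)$. Expanding $(\gamma\rho+\xi_1)^{s^\star-1}$ binomially, only even moments of $\xi_1$ survive, each of order $d^{-j/2}$ for even $j$.
\begin{itemize}
\item If $s^\star-1$ is even ($s^\star$ odd), every surviving term is nonnegative. The sum is therefore $\simeq(\gamma|\rho|+d^{-1/2})^{s^\star-1}$, bounded below by the $j=0$ term and the $j=s^\star-1$ term.
\item If $s^\star$ is even, every surviving term carries an odd power of $\gamma\rho$ with a positive coefficient. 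Factoring out $\gamma\rho$ leaves a sum of nonnegative terms of the previous type, which gives $\gamma\rho(\gamma|\rho|+d^{-1/2})^{s^\star-2}$.
\end{itemize}
Multiplying by $\sqrt{s^\star}c_{s^\star}$, with $|c_{s^\star}|\ge C$ by \ref{asp:high_pass}, gives the display. The sign convention is absorbed by $\simeq$, or one can flip $\psi$.

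For the residual, I split into two pieces.
\begin{itemize}
\item The fluctuation $\frac1L\sum_l\rho_l^{s^\star-1}-\EE[\rho_w^{s^\star-1}]$. The $\rho_l$ are i.i.d. and sub-Gaussian at scale $\gamma|\rho|+d^{-1/2}$. A Bernstein bound for polynomials of Gaussians, conditioned on $\cE(\epsilon)$ so that $|\rho_l|\le 2(\gamma|\rho|+\epsilon)$, gives fluctuation of order $\sqrt{\log(dMT)/L}\,(\gamma|\rho|+d^{-1/2})^{s^\star-1}$ at worst. The requirement is that this be $\lesssim(\gamma|\rho|+d^{-1/2})^{s^\star}$ uniformly in $\rho$, with worst case $\rho\approx 0$, where the target is $d^{-s^\star/2}$. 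I would use the bounded-range Bernstein variant, with range $(\epsilon\vee\gamma)^{s^\star-1}$ and variance $d^{-(s^\star-1)}$. This yields $L\gtrsim((\epsilon\vee\gamma)^{s^\star-1}d^{s^\star/2}\vee d)\log d$, exactly \eqref{eq:alignment_expected_gradient_condition_L}.
\item The higher-order terms. For $s>s^\star$ and the $c'_s$ series, on $\cE(\epsilon)$ I bound $\frac1L\sum_l|\rho_l|^{s-1}\le(2(\gamma|\rho|+\epsilon))^{s-1-s^\star}\cdot\frac1L\sum_l|\rho_l|^{s^\star}$. The average $\frac1L\sum_l|\rho_l|^{s^\star}$ concentrates to $\lesssim(\gamma|\rho|+d^{-1/2})^{s^\star}$ by the same Bernstein argument. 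Since $\gamma,\epsilon=o(1)$, the geometric factor together with Cauchy--Schwarz on $\sum_s\sqrt{s}|c_s|(2(\gamma+\epsilon))^{s-s^\star-1}$ sums to $O(1)$.
\end{itemize}
The restriction to the complement of $\cE(\epsilon)$ costs at most $O(d^{-s^\star/2})$ times bounded moments, which is within the target.

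For the third bullet, I note that $\EE[\hatg]-\langle\EE[\hatg],\theta^\star\rangle\theta^\star$ comes only from the second sum, namely $\frac1L\sum_l\beta_l w_l$ projected away from $\theta^\star$, where $\beta_l=\sum_s\sqrt{s+1}\,c'_s\rho_l^{s}$. Its norm is at most $\frac1L\sum_l|\beta_l|$, which is again $\lesssim(\gamma|\rho|+d^{-1/2})^{s^\star}$ by the bound above. Then $|\|a\|-|a_1||\le\|a_\perp\|$ finishes the proof.

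The main obstacle is the uniform concentration in the second bullet near $\rho\approx 0$ when $s^\star$ is even: there the mean vanishes and the fluctuation must be controlled at scale $d^{-s^\star/2}$, which forces the precise form of $L$.
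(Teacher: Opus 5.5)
Your decomposition via \Cref{lem:first-moment-decomp} matches the paper. So do the Bernstein step with range $(\epsilon\vee\gamma)^{s^\star-1}$ and variance $(\gamma|\rho|+d^{-1/2})^{2(s^\star-1)}$, which is exactly where the condition on $L$ comes from, the geometric summation of the higher-order terms, and the treatment of the orthogonal component. The first bullet is also fine for odd $s^\star$: there $\langle w,\theta^\star\rangle^{s^\star-1}\ge 0$ and $1-\gamma\le\|\gamma\theta+\xi\|_2\le 1+\gamma$ deterministically, so the normalization costs only a constant factor.

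The gap is the first bullet for even $s^\star$. There the power $s^\star-1$ is odd and the quantity is the unconditional expectation $\EE_{w}[\langle w,\theta^\star\rangle^{s^\star-1}]$. The normalization $r=\|\gamma\theta+\xi\|_2$ depends on $\langle\xi,\theta\rangle=\rho\langle\xi,\theta^\star\rangle+\sqrt{1-\rho^2}\langle\xi,v_2\rangle$, so it is correlated with $\langle\xi,\theta^\star\rangle$. After dividing by $r^{s^\star-1}$, odd moments of $\langle\xi,\theta^\star\rangle$ no longer vanish; for example $\EE[\langle\xi,\theta^\star\rangle/r]\approx-\gamma\rho/d\neq 0$. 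You call the normalization harmless via $|r^{-(s^\star-1)}-1|\lesssim\gamma^2+\gamma\epsilon$ on $\cE(\epsilon)$, plus a crude bound off it. That leaves an additive error of order $(\gamma^2+\gamma\epsilon)(\gamma|\rho|+d^{-1/2})^{s^\star-1}+\Pr(\cE(\epsilon)^c)$, and your bound on it does not carry the factor $\rho$. Compared with the claimed main term $\gamma|\rho|(\gamma|\rho|+d^{-1/2})^{s^\star-2}$, this error is not negligible once $|\rho|\lesssim(\gamma+\epsilon)d^{-1/2}$. The off-event piece, under the assumed $\Pr(\cE(\epsilon)^c)=O(d^{-s^\star/2})$, is not negligible once $|\rho|\lesssim d^{-1}/\gamma$. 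So the two-sided $\simeq$, which the statement claims for every $\rho$, is not established.

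The paper avoids this in two steps. First, it expands in the basis $(\theta,v_2)$, where every surviving monomial $\EE[\langle w,\theta\rangle^{j}\langle w,v_2\rangle^{s^\star-1-j}]$ has $j$ odd and therefore carries $\rho^j$ exactly (\Cref{lem:g-1st-w-expectation}). Second, it controls these odd-$j$ moments of the normalized vector in Case~2 of \Cref{lem:polar-moment}. Symmetrizing over $\pm\xi$, with $r_\pm=\|\pm\xi+\gamma\theta\|_2$ and $r_-^2-r_+^2=-4\gamma\langle\xi,\theta\rangle$, shows that the normalization-induced odd terms equal $-\gamma$ times the expectation of a nonnegative quantity. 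They are smaller than the even terms by a factor $O(1/d)$, so the main term keeps its sign and order for every $\rho$. You need this argument, or an equivalent exact symmetry argument, to get the first bullet as stated. As written, your argument covers only $|\rho|\gtrsim d^{-1/2}$ with $\Pr(\cE(\epsilon)^c)$ negligible, which is the regime the proof of \Cref{thm:uniform} actually uses.
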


\begin{proof}[Proof of \Cref{prop:nonsparse-first-moment}]
    See \Cref{proof:nonsparse-first-moment}.
\end{proof}

Complementary to the above result on the expected gradient, the next proposition provides a concentration result for $\hatg$.

\begin{proposition}[Concentration of the gradient] \label{prop:nonsparse-sample-concentration}
Under the settings in \Cref{def:elem_setting}, suppose $\epsilon^2\le \tilde \epsilon\ll 1$ and $2\gamma \epsilon \le     \tilde\epsilon$.
Also, suppose that sample size $n$ satisfies
\begin{align}
    n = \Omega\Big(\big( (\gamma^2 + \tilde\epsilon) ^{s^\star - 1 } +L^{-1}\big)^{-1}\cdot\log(d)^{2C_p+2} \Big),
\end{align}
where $C_p$ is the constant defined in \ref{asp:polynomial-like}.
Then there exists a $\{(z_i,y_i)\}_{i\in [n]}$-measurable event $\cE_{2}$ with $\Pr({\cE_2}) \ge 1-  d^{-c} \cdot (MT)^{-1}$ such that the following holds on $\cE_{2}\cap \cE (\epsilon)\cap \tilde \cE(\tilde \epsilon)$:
\begin{itemize}
\item First, $\langle\hat g,\theta^\star\rangle$ concentrates around its expectation as
\begin{align}
    \big|\dotp{\hatg}{\theta^\star} - \dotp{\EE_{\PP_{\theta^\star}}[\hatg]}{\theta^\star}\big| \lesssim  \sqrt{\frac{((\gamma^2 + \tilde\epsilon)^{s^\star - 1} +L^{-1})\cdot \log (d)}{n}}.
\end{align}
\item Second, $\hat g$ also concentrates around its expectation as
\begin{align}
    \big\|\hatg - \EE_{\PP_{\theta^\star}}[\hatg ]\big\|_2
    \lesssim  \sqrt{\frac{((\gamma^2 + \tilde\epsilon)^{s^\star - 1} +L^{-1})\cdot d  \log (d)}{n}}.
\end{align}
\end{itemize}
\end{proposition}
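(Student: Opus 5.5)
We condition on the perturbations $\{\xi_l\}$ (which are independent of the data) and on the event $\cE(\epsilon)\cap\tilde\cE(\tilde\epsilon)$. The gradient is then $\hatg=n^{-1}\sum_{i=1}^n G_i$, where
\begin{align}
G_i \defeq L^{-1}\sum_{l=1}^L \bigl(\psi(y_i,\langle w_l,z_i\rangle) z_i - \hat\psi_1(y_i) w_l\bigr)
\end{align}
are i.i.d. given $\{\xi_l\}$. For any fixed unit vector $v$, the scalars $X_i=\langle G_i,v\rangle$ are i.i.d., and we apply a truncated Bernstein inequality. The first bullet takes $v=\theta^\star$. For the second bullet we apply the scalar bound to each basis vector $v_1,\ldots,v_d$ of \eqref{eq:nonsparse-basis} and sum the squares, which produces the factor $d$. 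A union bound over the $d$ directions, together with a union bound over the $MT$ uses, costs only a $\log d$ factor in the tail probability. Two ingredients are needed: a variance bound $\EE_{\PP_{\theta^\star}}[X_i^2]\lesssim(\gamma^2+\tilde\epsilon)^{s^\star-1}+L^{-1}$, and a polylogarithmic high-probability envelope $|X_i|\lesssim\log(d)^{C_p+1}$.

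\textbf{Variance bound (main step).} Expand the square:
\begin{align}
\EE[X_i^2]=L^{-2}\sum_{l,l'}\EE\bigl[\psi_l\psi_{l'}\langle z,v\rangle^2\bigr]+(\text{debiasing cross terms}).
\end{align}
The $L$ diagonal terms $l=l'$ are $O(1)$ by the quadruple-integrability \ref{asp:quadratic_integrability} together with the likelihood-ratio decomposition, giving the $L^{-1}$ contribution.

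For the off-diagonal terms, first pass from $\PP$ to $\QQ$ using \eqref{eq:likelihood-ratio-decomp}. Under $\QQ$, $y$ is independent of $z$, and the pair $(\langle w_l,z\rangle,\langle w_{l'},z\rangle)$ is Gaussian with correlation $\langle w_l,w_{l'}\rangle$. Write $\langle z,v\rangle^2=1+\sqrt2 h_2(\langle z,v\rangle)$ and use Hermite expansions in the multivariate form (Stein's lemma \eqref{eq:hermite-tensor-Stein's lemma} together with the noise-operator identities \eqref{eq:hermite-noise-operator}). The resulting expansion is a sum of terms of the form $\EE_\QQ[\hat\psi_s\hat\psi_{s'}]$ multiplied by monomials in $\langle w_l,w_{l'}\rangle$, $\langle w_l,v\rangle$, $\langle w_{l'},v\rangle$.

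The high-pass condition \ref{asp:high_pass} removes all degrees $1,\ldots,s^\star-2$. The degree-$0$ and degree-$1$ pieces are exactly what the subtraction of $\hat\psi_1(y)w_l$ is designed to cancel: for the $h_1$ part, Stein's lemma gives $\EE[\psi z\mid y]=\hat\psi_1(y)w_l$. The surviving off-diagonal terms are therefore of order $|\langle w_l,w_{l'}\rangle|^{s^\star-1}$ plus terms carrying extra factors $\langle w_l,v\rangle\langle w_{l'},v\rangle\le4(\gamma+\epsilon)^2$. By \Cref{lem:polarized-weight-nice-event}, $|\langle w_l,w_{l'}\rangle|\lesssim\gamma^2+\tilde\epsilon$, using the hypothesis $2\gamma\epsilon\le\tilde\epsilon$, and $\epsilon^2\le\tilde\epsilon$ absorbs the $v$-factors.

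The infinite tail $\sum_s$ is controlled by geometric decay in $(\gamma^2+\tilde\epsilon)^s$ combined with $\sum_s\EE_\QQ[\hat\psi_s^2]<\infty$. The higher-order likelihood terms $\zeta_s$ are handled by Cauchy–Schwarz with $\EE_\QQ[\zeta_s^2]\le1$, since they again come multiplied by powers of small overlaps.

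\textbf{Envelope and concentration.} This is the step I expect to be most delicate: the $\PP$-versus-$\QQ$ bookkeeping has to retain enough orthogonality to keep the $s^\star-1$ exponent. For the envelope, \ref{asp:polynomial-like} and Gaussian tails of $\langle z,v\rangle$ give moments $\EE|X_i|^r\le C r^{(C_p+1)r}$. Hence $|X_i|\le\log(d)^{C_p+1}$ except with probability $d^{-c'}$, and the truncation bias is negligible. Bernstein then yields deviation
\begin{align}
\sqrt{\sigma^2\log d/n}+\log(d)^{C_p+2}/n,
\end{align}
where $\sigma^2\lesssim(\gamma^2+\tilde\epsilon)^{s^\star-1}+L^{-1}$. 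The stated lower bound on $n$ makes the second term at most the first. Collecting the failure probabilities defines $\cE_2$, which is measurable with respect to the data once $\{\xi_l\}$ is fixed.
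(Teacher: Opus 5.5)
Your overall plan follows the paper: condition on $\{\xi_l\}$, split the per-direction variance into $l=l'$ terms (giving $L^{-1}$) and $l\neq l'$ terms (likelihood-ratio decomposition, Hermite-tensor expansion, near-orthogonality from \Cref{lem:polarized-weight-nice-event}), then apply a polynomial-tail Bernstein bound. Your coordinatewise union bound over $v_1,\ldots,v_d$ is a fine substitute for the paper's vector form of \Cref{lem:poly_tail_bound}. There is, however, a genuine gap in how you remove the $O(1)$ part of the off-diagonal terms.

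Subtracting $\hat\psi_1(y)w_l$ removes only the $\QQ$-conditional mean $\EE_\QQ[\psi(y,\langle w_l,z\rangle)z\mid y]=\hat\psi_1(y)w_l$, which comes from the degree-one coefficient. It does nothing to the degree-zero component $\hat\psi_0(y)z$ of each summand. That component is the same for every $l$, so it does not average out over $l$. It contributes $\EE_\QQ[\hat\psi_0(y)^2]\,\EE[\langle z,v\rangle^2]=\EE_\QQ[\hat\psi_0(y)^2]$ to every off-diagonal term. So if $s^\star\ge2$ and $\hat\psi_0\not\equiv0$, the variance is $\Theta(1)$ and the bound you are proving is false.

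What you need is $\hat\psi_0\equiv0$ for $s^\star\ge2$, i.e.\ the high-pass property in the form $\hat\psi_i\equiv0$ for all $i<s^\star-1$. This is exactly the hypothesis of \Cref{lem:psi-2nd-moment}, through which the paper's \Cref{lem:g-2nd-moment} bounds the off-diagonal terms. Note that \ref{asp:high_pass} as displayed starts at $s=1$, so the degree-zero condition must be read into it, as the paper's lemmas do; it holds for the construction $\sigma\propto h_{s^\star}$.

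In the paper the debiasing term cancels nothing. The second moment is split via $(a-b)^2\le2a^2+2b^2$, and the $\hat\psi_1$ part is bounded separately by $\epsilon_0^2\,\ind\{s^\star\le2\}\lesssim\gamma^2+\tilde\epsilon$, using $\hat\psi_1\equiv0$ once $s^\star\ge3$. Even at degree one your cancellation is only partial: after centering, $\hat\psi_1(y)^2(\langle w_l,w_{l'}\rangle+\langle w_l,v\rangle\langle w_{l'},v\rangle)$ survives. This is harmless only because it is $O(\gamma^2+\tilde\epsilon)$ and present only when $s^\star\le2$.

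There are also two smaller issues.

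First, \ref{asp:polynomial-like} controls moments of $\psi(y,x)$ with $x=\langle\theta^\star,z\rangle$. Your envelope and your diagonal terms instead need moments of $\psi(y,\langle w_l,z\rangle)$ under $\PP_{\theta^\star}$, and the law of $(y,\langle w_l,z\rangle)$ is neither $\PP(y,x)$ nor $\QQ(y,x)$. The paper transfers the bound with the Gaussian noise operator: $\EE_{\PP_{\theta^\star}}[|\psi(y,\langle w_l,z\rangle)|^{2r}]=\EE_\QQ[|\psi(y,x)|^{2r}\,\noiseop{\rho_l}(\PP/\QQ)]$ with $\rho_l=\langle w_l,\theta^\star\rangle$. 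It then applies Cauchy--Schwarz together with $\EE_\QQ[(\noiseop{\rho_l}(\PP/\QQ))^2]=1+\sum_{s\ge s^\star}\rho_l^{2s}\EE_\QQ[\zeta_s(y)^2]\le3$ on $\cE(\epsilon)$. The contraction $|\rho_l|<1/2$ is what makes this work, because $\sum_s\zeta_s h_s$ need not converge in $L^2(\QQ)$ (e.g.\ for deterministic links). So quadruple-integrability plus the likelihood-ratio decomposition does not by itself give the $O(1)$ diagonal bound.

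Second, your logarithms are off by $\sqrt{\log d}$. With $\EE[|X_i|^r]^{1/r}\lesssim r^{C_p+1}$, the Bernstein remainder is $\log(d)^{C_p+2}/n$. The assumed $n\gtrsim\sigma^{-2}\log(d)^{2C_p+2}$ then only guarantees this is at most $\sqrt{\log d}$ times $\sqrt{\sigma^2\log d/n}$. Use the sharper bound $\EE[|X_i|^r]^{1/r}\lesssim r^{C_p+1/2}$, obtained from $\|\psi(y,\langle w_l,z\rangle)\|_{L^{2r}}\,\|\langle z,v\rangle\|_{L^{2r}}$. This gives remainder $\log(d)^{C_p+3/2}/n$, which matches the hypothesis exactly, as in the paper's use of \Cref{lem:poly_tail_bound}.
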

\begin{proof}[Proof of \Cref{prop:nonsparse-sample-concentration}]
See \Cref{proof:nonsparse-sample-concentration}.
\end{proof}

\subsection{Proof of the Main Theorem for Uniform Prior} \label{sec:nonsparse-proof-main-theorem}
Now we are ready to present the proof of \Cref{thm:uniform}.
\begin{proof}[Proof of \Cref{thm:uniform}]
To start with, we clarify the event we will work with by verifying that our configuration is compatible with the conditions in \Cref{prop:nonsparse-first-moment} and \Cref{prop:nonsparse-sample-concentration}. We denote
\begin{align}
\hatg_m^\t = \frac{1}{nL} \sum_{i=1}^n \sum_{l=1}^L \big(\psi(y_i^\t, \langle w_{m,l}^\t, z_i^\t\rangle)\cdot z_i^\t  - \widehat{\psi}_1(y_i^\t) \cdot w_{m,l}^\t\big).
\end{align}
Let $\theta_m^\t$ be the $m$-th neuron at the $t$-th step, and denote $\rho_m^\t =\dotp{\theta_m^\t}{\theta^\star}$. We choose the corresponding orthonormal basis as
\begin{align}
    v_{m,1}^\t = \theta^\star, v_{m,2}^\t = \frac{\theta_m^\t - \rho_m^\t \theta^\star}{\sqrt{1-{\rho_m^\t}^2}}, v_{m,3}^\t,\ldots,v_{m,d}^\t .
\end{align}
Also, we define the associated good events as
\begin{align}
    \cE_m^\t(\epsilon) &= \big\{ |\langle \xi_{m,l}^\t, \theta^\star\rangle| < \epsilon, \quad \max_{2 \le i \le d } |\langle \xi_{m,l}^\t, v_{m,i}^\t  \rangle| < \epsilon, \quad \forall l\in[L]\big\}; \\
    \tilde \cE_m^\t(\tilde \epsilon) &= \big\{ |\langle \xi_{m,l}^\t, \xi_{m,l'}^\t\rangle| < \tilde\epsilon, \quad \forall l, l'\in[L] \st l\neq l'\big\}.
\end{align}
With these notations, we see that $\hatg_m^\t$, the mini-batch data $\{(y_i^\t, z_i^\t)\}_{i\in[n]}$, and the good events $\cE_m^\t(\epsilon)$ and $\tilde\cE_m^\t(\tepsilon)$ together form an instance in \Cref{def:elem_setting}.

Recall that $\gamma = (d^{-1}\cdot \log d)^{1/4}=o(1)$ is fixed throughout the algorithm.
We set $\epsilon = d^{-1/2}\log d$. \Cref{lem:sphere_coordinate_tail} implies that for any $t$ and $m$, it holds that
    \begin{align}
        \Pr \Big( {\cE_m^\t(\epsilon)}^c\Big) \le Ld\cdot \big( \exp(-d/16) + d^{-\log d/4}\big)
    \end{align}
For the number of perturbations, we choose $L = \Omega\big( d^{(s^\star +1)/2} \log d\big)$, which is at most polynomial in $d$.
Therefore $\Pr\Big( {\cE_m^\t(\epsilon)}^c\Big)$ decays faster than any constant-degree polynomial in $d$.
For sufficiently large $d$, it holds that $\Pr\Big( {\cE_m^\t(\epsilon)}^c \Big) = O(d^{-s^\star/2})$.
On the other hand, the choice of $L$ ensures that
\begin{align}
L \gtrsim \big(d^{(s^\star+1)/4} \cdot \log(d)^{s^\star -1})\vee d\cdot \log d \simeq \big( (\epsilon \vee \gamma)^{s^\star -1}\cdot d^{s^\star/2}\vee d \big)\cdot \log d.
\end{align}
So far, we see that all the conditions in \Cref{prop:nonsparse-first-moment} are satisfied and we denote the associated event therein as $\cE_{m,1}^\t$.

Next, we verify the conditions in \Cref{prop:nonsparse-sample-concentration}. We choose $\tilde\epsilon =  \sqrt{4\big(c+\log_d (MTL^2)\big)\cdot d^{-1}\log d}$, then it holds by \Cref{lem:sphere_coordinate_tail} that
    \begin{align}
         \Pr\Big( \tilde \cE_m^\t(\tilde \epsilon)^c \Big) \le  L^2\cdot \big( \exp(-d/16) + d^{-(c+\log_d(MTL^2))}\big)\lesssim d^{-c} /MT.
    \end{align}
Additionally, we see that both $\epsilon^2 \le \tepsilon \ll 1$ and $2\gamma \epsilon\le  \tepsilon$ are satisfied for sufficiently large $d$.
It is easily verified that our choice of $L = \Omega\big( d^{(s^\star +1)/2 }\vee (d\log d)\big)$ clearly meets the condition that
    \begin{align}
        L \gtrsim d^{(s^\star-1)/2}\gtrsim   (\gamma^2 + \tepsilon)^{-(s^\star-1)}.
    \end{align}
Therefore, plugging in $\gamma,\tepsilon$ and the lower bound on $L$, we have that
\begin{align}
    \frac{\log(d)^{2C_p+2}  }{ \big((\gamma^2 + \tilde\epsilon)^{s^\star - 1} +L^{-1}\big)} \lesssim  \log(d)^{2C_p+2}  \cdot d^{(s^\star -1)/2}
\end{align}
Since we choose  $n\gtrsim (d\log d)^{s^\star/2}\vee d\log d$, it follows from the last inequality that
\begin{align}
     n\gtrsim  \frac{\log(d)^{2C_p+2}  }{ \big((\gamma^2 + \tilde\epsilon)^{s^\star - 1} +L^{-1}\big)}.
\end{align}
Hence, all the conditions in \Cref{prop:nonsparse-sample-concentration} are satisfied and we denote the associated event therein as $\cE_{m,2}^\t$.

Noting that the gradient in \Cref{def:elem_setting} does not include the error term $\err_{m,l,i}^\t$, we need an upper bound on $\norm{z_i^\t}_2$ for the purpose of controlling $\norm{\err_{m,l,i}^\t \cdot z_i^\t}_2$. We define
    \begin{align}
        \cE_{m,3}^\t = \left\{ \max_{i\in[n]} \norm{z_i^\t}_2 \le \sqrt{d} \right\}.
    \end{align}
By standard Bernstein's inequality, we have that $\Pr\Big({\cE_{m,3}^\t}^c\Big)\le Ld \cdot \exp\{-d/8\} = O( \exp\{-C' d\})$ for some $C'>0$.
Thanks to the event $\cE_{m,3}^\t$, we always have for any $v\in\SS^{d-1}$ that
    \begin{align}
        \big| \dotp{g_m^\t}{v} - \dotp{\hatg_m^\t}{v} \big| \vee \big|\norm{g_m^\t}_2 - \norm{\hatg_m^\t}_2\big| & \le  \norm{g_m^\t -\hatg_m^\t}_2 \\
        &\le d^{1/2} \cdot \max_{l,i} |\err_{m,l,i}^\t|  \\& \le d^{-9s^\star}. \label{eq:err-bound}
    \end{align}
To put things together, we work on the following event:
    \begin{align}
        \cE =\bigcap_{m=1}^M \bigcap_{t=1}^T \big(\cE_{m}^\t (\epsilon)\cap \tilde\cE_m^\t(\tepsilon)\cap \cE_{m,1}^\t \cap \cE_{m,2}^\t\cap \cE_{m,3}^\t\big) ,
    \end{align}
which satisfies $\Pr(\cE) \ge 1- O(d^{-c})$ for some $c>0$ by the union bound argument.

In the sequel, we restrict our attention to neurons that have at least $d^{-1/2}/2$ alignment, i.e., the indices $m$ such that $|\dotp{\theta_m^{(0)}}{\theta^\star}|\ge d^{-1/2}/2$.
From now on, we drop the neuron index $m$ and the iteration index $\t$ in the following analysis for simplicity. The updated weight parameter is denoted as $\theta'$, and the alignment after the update is denoted as $\rho' = \dotp{\theta'}{\theta^\star}$.
    By rotational invariance of the random initialization, there exists an absolute constant $p_0>0$ such that for all sufficiently large $d$,
    \begin{align}
        \Pr\big(|\dotp{\theta_m^{(0)}}{\theta^\star}|\ge d^{-1/2}/2\big)\ge p_0.
    \end{align}
    Since $M=C_M\log d$ with $C_M$ sufficiently large, a Chernoff bound implies that with probability at least $1-O(d^{-c})$ for some constant $c>0$, the number of neurons with initial alignment $|\rho| \ge d^{-1/2} /2$ is at least $\Omega(M)$.
    For convenience, define
    \begin{align}
        \kappa \defeq \frac{n}{(d\log d)^{s^\star/2} \lor d \log d} \cdot (\log d)^{-1} = \Omega(1).
    \end{align}
    Recall that we defined $\hatg$ as the error-free gradient.
    Under the preceding configuration, \Cref{prop:nonsparse-sample-concentration} and Eq.~\eqref{eq:err-bound} together imply that the error of $\langle g, \theta^\star\rangle$ can be bounded by
    \begin{align}
        \big| \dotp{g}{\theta^\star} - \EE_{\PP_{\theta^\star}}[\dotp{\hatg}{\theta^\star}]\big|
        &\lesssim \sqrt{\frac{((\gamma^2 + \tilde\epsilon)^{s^\star - 1} + L^{-1}) \cdot \log (d) }{n}}  + \big| \dotp{\hatg}{\theta^\star} - \dotp{g}{\theta^\star } \big| \\
        &\lesssim \sqrt{\frac{(d^{-1}\log d)^{(s^\star - 1)/2} \cdot \log (d) }{n}} + d^{-9s^\star}\\
        &\simeq\begin{cases}
            d^{-(2s^\star - 1)/4} \cdot (\log d)^{-1/4} \cdot \kappa^{-1/2} & \text{if $s^\star \ge 2$},\\
            d^{-1/2} \cdot (\log d)^{-1/2}\cdot \kappa^{-1/2} & \text{if $s^\star = 1$}.
        \end{cases}
        \label{eq:alignment-update-1}
    \end{align}
    On the other hand, we have by \Cref{prop:nonsparse-first-moment} that
    \begin{align}
        \big|\EE_{\PP_{\theta^\star}}[\dotp{\hatg}{\theta^\star}]\big|& \simeq (|\rho|\gamma)^{\ind\{s^\star \text{ is even}\}} \cdot(|\rho|\gamma +d^{-1/2})^{s^\star - 1 - \ind\{s^\star \text{ is even}\}}
        \\&\ge |\rho|\gamma (|\rho|\gamma + d^{-1/2})^{s^\star - 2}
        \\ &\ge d^{-(2s^\star - 1)/4} \cdot (\log d)^{1/4},
    \end{align}
    where the last line holds since we focus on the neurons with $|\rho| \gtrsim d^{-1/2}$ and $\gamma = d^{-1/4}(\log d)^{1/4}$.
    Therefore, when $\kappa$ is sufficiently large, the fluctuations are strictly bounded by half of the signal strength.
    Thus, we have
    \begin{align}
        |\langle g, \theta^\star\rangle|
        &\ge \frac 1 2 \cdot |\langle \EE_{\PP_{\theta^\star}}[\hatg], \theta^\star\rangle|.
    \end{align}
    For the norm of $g$, we have by the triangle inequality that
    \begin{align}
        \norm{g}_2 &\le  \big|\norm{g}_2 - \norm{\hatg}_2 \big| + \big|\norm{\hatg}_2 - \norm{\EE_{\PP_{\theta^\star}}[\hatg]}_2\big|\\&\qquad + \big|\norm{\EE_{\PP_{\theta^\star}}[\hatg]}_2 - |\EE_{\PP_{\theta^\star}}[\dotp{\hatg}{\theta^\star}]|\big| + \big|\EE_{\PP_{\theta^\star}}[\dotp{\hatg}{\theta^\star}]\big|\\
        &\le d^{- 9s^\star} + \norm{ \hatg- \EE_{\PP_{\theta^\star}}[\hatg]}_2 + \big|\norm{\EE_{\PP_{\theta^\star}}[ \hatg]}_2  - |\dotp{\EE_{\PP_{\theta^\star}}[\hatg]}{\theta^\star}| \big| + | \dotp{\EE_{\PP_{\theta^\star}} [\hatg]}{\theta^\star}|
        \\
        &\lesssim  \big|\EE_{\PP_{\theta^\star}}[\dotp{\hatg}{\theta^\star}]\big|+(\gamma|\rho| +d^{-1/2})^{s^\star} + \sqrt{\frac{(d^{-1}\log d)^{(s^\star -1)/2}\cdot d \log d}{n}},  \label{eq:alignment-update-1.4}
    \end{align}
    where in the second inequality, we use Eq.~\eqref{eq:err-bound} and the fact that $\big|\norm{\hatg} -  \bignorm{\EE_{\PP_{\theta^{\star}}} [\hatg]}\big| \le  \bignorm{ \hatg - \EE_{\PP_{\theta ^\star}}[\hatg]}$. The last inequality is deduced by combining \Cref{prop:nonsparse-first-moment} and the fact that $d^{-9s^\star} \ll  (|\rho|\gamma +d^{-1/2})^{s^\star}$. For the leading term, it holds by \Cref{prop:nonsparse-first-moment} that
    \begin{align}
        |\langle \EE_{\PP_{\theta^\star}}[\hatg], \theta^\star\rangle| \simeq \begin{cases}
            (|\rho|\gamma + d^{-1/2})^{s^\star - 1} & \text{if $s^\star$ is odd},\\
            |\rho| \gamma (|\rho|\gamma + d^{-1/2})^{s^\star - 2} & \text{if $s^\star$ is even}. \label{eq:alignment-update-1.5}
        \end{cases}
    \end{align}
    Recall that the alignment admits the following iterative update rule:
    \begin{align}
        |\langle\theta',\theta^\star\rangle| = \left|\left\langle \frac{\theta + \eta g / \norm{g}_2}{\norm{\theta + \eta g / \norm{g}_2}_2}, \theta^\star\right\rangle \right| \geq \frac{|\langle g / \norm{g}_2, \theta^\star\rangle| - \eta^{-1} |\langle\theta,\theta^\star\rangle|}{1 + \eta^{-1}}. \label{eq:alignment-update-1.6}
    \end{align}
    In the following, let $\Delta$ be the quantity defined in \Cref{thm:uniform}, and define
    \[
        \rho^\star = d^{-1/4}(\log d)^{1/4}.
    \]
    For the fixed neuron $m$, we first consider the case $s^\star\ge 2$ and introduce two stopping times. The first one marks the end of the initial weak-alignment stage:
    \[
        T_{1,m}=\min\{t\ge 0: |\rho_m^\t|\ge \rho^\star\}.
    \]
    The second one marks the first time the alignment reaches a constant level:
    \[
        T_{2,m}=\min\{t\ge T_{1,m}: |\rho_m^\t|\ge c_{\mathrm w}\},
    \]
    where $c_{\mathrm w}>0$ is a sufficiently small absolute constant to be chosen below.

    \paragraph{Stage \RNum{1}: reaching $\rho^\star$}
    If $|\rho_m^{(0)}|\ge \rho^\star$, then $T_{1,m}=0$ and there is nothing to prove in this stage.
    Otherwise, $|\rho_m^{(0)}|<\rho^\star$.
    Suppose first that $s^\star\ge 3$ is odd and $|\rho|\le \rho^\star$. Combining Eq.~\eqref{eq:alignment-update-1}, Eq.~\eqref{eq:alignment-update-1.4}, and Eq.~\eqref{eq:alignment-update-1.5}, we have
    \begin{align}
        |\langle g, \theta^\star\rangle|
        &\gtrsim (|\rho| \cdot (d\log d)^{-1/4} + d^{-1/2})^{s^\star-1}, \\
        \norm{g}_2 &\lesssim (|\rho|\cdot (d\log d)^{-1/4}+ d^{-1/2})^{s^\star-1} + d^{-(2s^\star - 3)/4} \cdot (\log d)^{-1/4} \cdot \kappa^{-1/2}.
    \end{align}
    Therefore,
    \begin{align}
        \frac{|\langle g, \theta^\star\rangle|}{\norm{g}_2}
        &\gtrsim
        \frac{(|\rho|/\rho^\star + 1)^{s^\star-1}}{(|\rho|/\rho^\star + 1)^{s^\star-1} +  \kappa^{-1/2}/\rho^\star}.
        \label{eq:alignment-update-2}
    \end{align}
    Since $|\rho|\le\rho^\star$ and $\kappa$ is a sufficiently large constant, Eq.~\eqref{eq:alignment-update-1.6} implies that $|\rho'|\ge \rho^\star$. Hence, for odd $s^\star\ge3$, we have $T_{1,m}\le 1$.

    Now suppose that $s^\star$ is even. Before $T_{1,m}$, the same estimates yield
    \begin{align}
        \frac{|\langle g, \theta^\star\rangle|}{\norm{g}_2}
        &\gtrsim
        \frac{|\rho|\cdot (|\rho|/\rho^\star + 1)^{s^\star-2}}{|\rho|\cdot (|\rho|/\rho^\star + 1)^{s^\star-2} +  \kappa^{-1/2}}.
        \label{eq:alignment-update-3}
    \end{align}
    Thus, by taking $\kappa$ to be a sufficiently large constant, Eq.~\eqref{eq:alignment-update-1.6} gives $|\rho'|\ge 2|\rho|$ whenever $|\rho|<\rho^\star$. If $T_{1,m}=0$, the claim is trivial. Otherwise,
    \begin{align}
        \rho^\star \ge |\rho_m^{(T_{1,m}-1)}| \ge  2^{T_{1,m}-1} \cdot d^{-1/2}/2,
    \end{align}
    and therefore $T_{1,m}\le \frac{\log(d\log d)}{4\log 2}+2=O(\log d)$. Combining the odd and even cases, $T_{1,m}=O(\log d)$ for every $s^\star\ge2$.

    \paragraph{Stage \RNum{2}: reaching constant alignment}
    We next show that $T_{2,m}-T_{1,m}=O(\log d)$. For $t\in[T_{1,m},T_{2,m}-1]$, define $r_m^\t=|\rho_m^\t|/\rho^\star$. Then $1\le r_m^\t<c_{\mathrm w}/\rho^\star$.
    In this regime, $\gamma|\rho_m^\t|=r_m^\t d^{-1/2}(\log d)^{1/2}$ dominates the smoothing term up to logarithmic factors. Combining Eq.~\eqref{eq:alignment-update-1}, Eq.~\eqref{eq:alignment-update-1.4}, and Eq.~\eqref{eq:alignment-update-1.5} again, and using the weaker lower bound obtained by dropping these logarithmic factors, there exists an absolute constant $C_1>0$ such that
    \begin{align}
        \frac{|\dotp{g_m^\t}{\theta^\star}|}{\norm{g_m^\t}_2}
        &\ge C_1\cdot \frac{(r_m^\t)^{s^\star-1}}{(r_m^\t)^{s^\star-1}+\kappa^{-1/2}/\rho^\star} \\
        &\ge C_1\cdot \left(1\wedge \sqrt\kappa\,\rho^\star (r_m^\t)^{s^\star-1}\right).
    \end{align}
    Applying Eq.~\eqref{eq:alignment-update-1.6} and dividing by $\rho^\star$ gives
    \begin{align}
        r_m^{(t+1)}
        &\ge C_2\cdot \left((\rho^\star)^{-1}\wedge \sqrt\kappa (r_m^\t)^{s^\star-1}\right)-\frac{1}{3}r_m^\t
    \end{align}
    for another absolute constant $C_2>0$, where we used $\eta>2$. Choose $\kappa$ sufficiently large and then choose $c_{\mathrm w}>0$ sufficiently small so that for all $1\le r<c_{\mathrm w}/\rho^\star$,
    \[
        C_2\cdot \left((\rho^\star)^{-1}\wedge \sqrt\kappa r^{s^\star-1}\right)-\frac{1}{3}r \ge 2r.
    \]
    It follows that $r_m^{(t+1)}\ge 2r_m^\t$ for all $t\in[T_{1,m},T_{2,m}-1]$. Therefore, if $T_{2,m}>T_{1,m}$,
    \begin{align}
        2^{T_{2,m}-T_{1,m}-1}\le r_m^{(T_{2,m}-1)} < c_{\mathrm w}/\rho^\star,
    \end{align}
    which implies $T_{2,m}-T_{1,m}\le \lceil \log_2(c_{\mathrm w}/\rho^\star)\rceil+1=O(\log d)$.

    \paragraph{Strong alignment for $s^\star\ge2$}
    For all $t\ge T_{2,m}$ such that $|\rho_m^\t|\ge c_{\mathrm w}$, the estimates above imply
    \begin{align}
        |\langle g_m^\t,\theta^\star\rangle|
        &\ge \gamma^{s^\star-1}\cdot\left(B|\rho_m^\t|^{s^\star-1}-O(\Delta)\right),\\
        \norm{g_m^\t}_2
        &\le \gamma^{s^\star-1}\cdot\left(B|\rho_m^\t|^{s^\star-1}+O(\Delta)\right),
    \end{align}
    where $B=\Omega(1)$ may depend on $\rho_m^\t$ and the perturbations but is bounded away from zero on the constant-alignment region. Thus, for sufficiently large $d$,
    \begin{align}
        A_m^\t\defeq \frac{|\dotp{g_m^\t}{\theta^\star}|}{\norm{g_m^\t}_2}\ge 1-C_3\Delta.
    \end{align}
    This also preserves the constant-alignment region: by Eq.~\eqref{eq:alignment-update-1.6}, $|\rho_m^{(t+1)}|\ge c_{\mathrm w}$ after decreasing $c_{\mathrm w}$ if necessary.

    Let $G_m^\t=g_m^\t/\norm{g_m^\t}_2$. Using $A_m^\t\ge 1-C_3\Delta$, the triangle inequality, and the Pythagorean theorem, we obtain
    \begin{align}
        |\rho_m^{(t+1)}|^2
        &\ge
        \frac{(1-C_3\Delta-\eta^{-1}|\rho_m^\t|)^2}
        {(1-C_3\Delta-\eta^{-1}|\rho_m^\t|)^2+
        (\sqrt{1-(1-C_3\Delta)^2}+\eta^{-1}\sqrt{1-(\rho_m^\t)^2})^2}\\
        &\ge
        \frac{(1-\eta^{-1})^2}
        {(1-\eta^{-1})^2+\eta^{-2}(1-(\rho_m^\t)^2)}
        -O(\sqrt\Delta).
    \end{align}
    With $\tau=\eta^{-2}/(1-\eta^{-1})^2<1$, this implies
    \begin{align}
        1-|\rho_m^{(t+1)}|^2
        \le \tau\big(1-|\rho_m^\t|^2\big)+O(\sqrt\Delta).
        \label{eq:strong-alignment-recursion}
    \end{align}
    Iterating \eqref{eq:strong-alignment-recursion}, for every $\ell\ge0$,
    \begin{align}
        1-|\rho_m^{(T_{2,m}+\ell)}|^2
        \le \tau^\ell + O\big((1-\tau)^{-1}\sqrt\Delta\big).
    \end{align}
    Hence, after $\ell=O(\log(\Delta^{-1})/\log(\tau^{-1}))$ additional steps, it holds that $|\rho_m^{(T_{2,m}+\ell)}|\ge 1-O(\sqrt\Delta)$.

    \paragraph{Case $s^\star=1$}
    When $s^\star=1$, no weak-alignment phase is needed. The preceding estimates give, uniformly over the alignment level,
    \begin{align}
        |\langle \EE_{\PP_{\theta^\star}}[\hatg], \theta^\star\rangle| = B = \Omega(1),
    \end{align}
    and hence, using $n=\kappa d(\log d)^2$,
    \begin{align}
        \frac{|\langle g_m^\t,\theta^\star\rangle|}{\norm{g_m^\t}_2}
        \ge 1-O((\log d)^{-1/2}\kappa^{-1/2})
        \ge 1-O(\Delta),
    \end{align}
    where now $\Delta=(\log d)^{-1/2}$. Therefore, the same strong-alignment recursion \eqref{eq:strong-alignment-recursion} applies from $t=0$, and after $O(\log(\Delta^{-1})/\log(\tau^{-1}))$ steps we again obtain $|\rho_m^\t|\ge 1-O(\sqrt\Delta)$.

    Combining the stages above, every neuron with $|\rho_m^{(0)}|\ge d^{-1/2}/2$ reaches alignment at least $1-O(\sqrt\Delta)$ within $O(\log d+\log(\Delta^{-1})/\log(\tau^{-1}))$ steps. Since there are at least $\Omega(M)$ such neurons, the proof is complete.
\end{proof}

\subsection{Proof of Key Results} \label{sec:nonsparse-proof-key-results}

\subsubsection{Proof of \texorpdfstring{\Cref{lem:polarized-weight-nice-event}}{the polarized-weight lemma}}\label{proof:polarized-weight-nice-event}
\begin{proof}[Proof of \Cref{lem:polarized-weight-nice-event}]
For each $l\in[L]$, since both $\theta$ and $\xi_l$ are unit vectors and $\gamma\in(0,1/2)$,
\begin{align}
    \norm{\gamma \theta+\xi_l}_2\ge \norm{\xi_l}_2 - \gamma \norm{\theta}_2
    = 1-\gamma \ge 1/2. \label{eq:norm-perturbation-lower-bound}
\end{align}
Hence, we can further control $\langle w_l,\theta^\star\rangle$ as follows:
\begin{align}
    |\langle w_{l}, \theta^\star\rangle| = \frac{|\gamma \langle \theta, \theta^\star\rangle + \langle \xi_{l}, \theta^\star\rangle|}{\norm{\gamma \theta + \xi_{l}}_2}
    \leq 2 (\gamma |\rho|  + |\dotp{\xi_{l}}{\theta^\star}|).
\end{align}
Then on the event $\cE(\epsilon)$ where $\max_{l\in[L]}| \dotp{\xi_l}{\theta^\star}|\le \epsilon$, we conclude for all $l\in[L]$ that
\begin{align}
    |\dotp{w_l }{\theta^\star}| &\le  2(\gamma|\rho|+\epsilon).
\end{align}

Next, for the second direction $v_{2} = (\theta - \rho \theta^\star) / \sqrt{1-\rho^2}$, we have $\langle\theta,v_2\rangle=\sqrt{1-\rho^2}$.
Consequently, it follows from \eqref{eq:norm-perturbation-lower-bound} that
\begin{align}
    |\langle w_{l}, v_{2}\rangle| &= \frac{|\gamma \langle \theta, v_{2}\rangle + \langle \xi_{l}, v_{2}\rangle|}{\norm{\gamma \theta + \xi_{l}}_2}
    \le 2\big( |\gamma \langle \theta, v_{2}\rangle| + |\langle \xi_{l}, v_{2}\rangle|\big)
    \le 2  ( \gamma \sqrt{1-\rho^2} + \epsilon),
\end{align}
where the second inequality holds on the event $\cE(\epsilon)$.
For the remaining directions $v_3,\ldots,v_d$, we again apply \eqref{eq:norm-perturbation-lower-bound} to obtain that on the event $\cE(\epsilon)$,
\begin{align}
    |\langle w_{l}, v_{i}\rangle| &= \frac{|\gamma \langle \theta, v_{i}\rangle + \langle \xi_{l}, v_{i}\rangle|}{\norm{\gamma \theta + \xi_{l}}_2}
    \le 2 \big(|\gamma \langle \theta, v_{i}\rangle| + |\langle \xi_{l}, v_{i}\rangle| \big)
    \le  2\epsilon,
\end{align}
where for the second inequality we use the fact that $\langle \theta, v_{i}\rangle = 0$ for $i\ge 2$.

For the second part, we first note that $\theta =\sqrt{1-\rho^2}\cdot v_2 + \rho v_1$.
Then on the event $\cE(\epsilon)$,
\begin{align}
    |\dotp{\theta}{\xi_{l}}|
    \le \sqrt{1-\rho^2}\cdot|\dotp{v_2}{\xi_l}| + |\rho| \cdot |\dotp{v_1}{\xi_l}|
    \le 2\epsilon. \label{eq:dotp-theta-xi}
\end{align}
where the second inequality holds because $\rho\in[-1,1]$ and $|\dotp{v_1}{\xi_l}|\vee  |\dotp{v_2}{\xi_l}| \le \epsilon$ on the nice event $\cE(\epsilon)$.
Besides, on the nice event $\tilde \cE(\tilde \epsilon)$, it holds for all $l\neq l'$ that $|\dotp{\xi_l}{\xi_{l'}}| \le \tepsilon$.
Therefore, on the event $\cE(\epsilon)\cap \tilde \cE(\tilde \epsilon)$, combining Eq.~\eqref{eq:norm-perturbation-lower-bound} and~\eqref{eq:dotp-theta-xi} yields
\begin{align}
    |\dotp{w_{l}}{w_{l'}}| &\le \frac{|\dotp{\xi_l}{\xi_{l'}}| + \gamma |\dotp{\xi_l}{\theta}| + \gamma |\dotp{\xi_{l'}}{\theta}| + \gamma^2\cdot \norm{\theta}_2^2}{\norm{\gamma \theta+\xi_l}_2\cdot\norm{\gamma \theta+\xi_{l'}}_2}
    \le 4 (\gamma^2 + \tilde{\epsilon} + 4\gamma \epsilon).
\end{align}
This finishes the proof.
\end{proof}

\subsubsection{Proof of \texorpdfstring{\Cref{prop:nonsparse-first-moment}}{the first-moment proposition}}\label{proof:nonsparse-first-moment}
\begin{proof}[Proof of \Cref{prop:nonsparse-first-moment}]
Invoking \Cref{lem:first-moment-decomp}  with the fact that $\|\theta^\star\|_2=1$, we can decompose $\dotp{\EE_{\PP_{\theta^\star}}[\hatg]}{\theta^\star}$ as
\begin{align}
    \langle \EE_{\PP_{\theta^\star}}[\hatg] , \theta^\star\rangle
    &=
    \sum_{s=s^\star}^\infty \frac{\sqrt{s+1}}{L} \sum_{l=1}^L \EE_{\QQ}[
        \zeta_s(y)\cdot \hat\psi_{s+1}(y)]
    \cdot \langle w_l, \theta^\star\rangle^{s+1}\nonumber\\
    &\qquad + \sum_{s=s^\star}^\infty \frac{\sqrt s}{L} \sum_{l=1}^L \EE_{\QQ}[
        \zeta_s(y)\cdot \hat\psi_{s-1}(y)]
    \cdot \langle w_l, \theta^\star\rangle^{s-1}\\
    &= \EE_{\QQ}[
        \zeta_{s^\star}(y)\cdot \hat\psi_{s^\star-1}(y)] \cdot \frac{\sqrt{s^\star}}{L} \sum_{l=1}^L \langle w_l, \theta^\star\rangle^{s^\star-1}
    + R,
    \label{eq:signal-2}
\end{align}
where the remainder term $R$ is defined as
\begin{align}
    R := \sum_{s=s^\star}^\infty \frac{\sqrt{s+1}}{L} \sum_{l=1}^L \Big(\EE_{\QQ}[\zeta_s(y)\cdot \hat\psi_{s+1}(y)] \langle w_l, \theta^\star\rangle + \EE_{\QQ}[\zeta_{s+1}(y) \cdot \hat\psi_s(y)]\Big)\cdot \langle w_l, \theta^\star\rangle^{s}.
\end{align}
Below, we analyze the scale of each term in \eqref{eq:signal-2}, and show that the remainder term $R$ is dominated by the first term in \eqref{eq:signal-2} with high probability over the randomness of $\xi_1,\ldots,\xi_L$.

\paragraph{Analysis for the remainder term $R$ in \eqref{eq:signal-2}}
To bound $|R|$, we apply the triangle inequality with the fact that $|\langle w_l, \theta^\star\rangle| \le 1$ and get
\begin{align}\label{eq:R_bound_0}
    |R| &\leq \sum_{s=s^\star}^\infty \frac{\sqrt{s+1}}{L} \sum_{l=1}^L \EE_\QQ\Big[|\zeta_s(y)\cdot\hat\psi_{s+1}(y)| + |\zeta_{s+1}(y)\cdot\hat\psi_s(y)|\Big] \cdot |\langle w_l, \theta^\star\rangle|^{s}.
\end{align}
Recall from \eqref{eq:likelihood-ratio-decomp} that $\zeta_s(y)=\EE_\PP[h_s(x)\given y]$, so by Jensen's inequality, $\EE_\QQ[\zeta_s(y)^2]\leq \EE_\PP[h_s(x)^2]=1$.
Also, by \Cref{asp:quadratic_integrability}, we have $\EE_\QQ[ \hatpsi_s(y)^2] = O(1)$.
Now applying Cauchy-Schwarz inequality, we get that
\begin{align}
    \EE_{\QQ}[| \zeta_{s+1}(y) \cdot \hat \psi_s(y)|] &\le
    \EE_\QQ [\zeta_{s+1}(y)^2]^{1/2} \cdot \EE_\QQ[\hat \psi_s(y)^2]^{1/2} = O(1),  \label{eq:coef-bound}
\end{align}
and similarly for $\EE_{\QQ}[|\zeta_{s}(y)\cdot\hat \psi_{s+1}(y)|]$.
Therefore,
\begin{align}
    |R| \lesssim \sum_{s=s^\star}^\infty \frac{\sqrt{s+1}}{L} \sum_{l=1}^L |\langle w_l,\theta^\star\rangle|^s,
\end{align}
and it remains to bound the right hand side of the above inequality.
Recall the definition of the nice event $\cE(\epsilon)$ in \eqref{eq:nice-event-cE}, where $\{v_1 = \theta^\star, v_2 =  (\theta - \rho \theta^\star)/\sqrt{1-\rho^2}, v_3,\dots v_d\}$ is an orthonormal basis.
Since $\gamma=o(1)$ and $\epsilon=o(1)$, it follows from \Cref{lem:polarized-weight-nice-event} that for sufficiently large $d$, $|\langle w_l , \theta^\star\rangle| < 1/2$ on $\cE(\epsilon)$ for all $l\in[L]$.
Consequently, it holds on $\cE(\epsilon)$ that
\begin{align}
    |R| &\lesssim \sum_{s=s^\star}^\infty {\sqrt{s+1}} \cdot \left(\frac 1 2\right)^{s-s^\star} \cdot \frac 1 L \sum_{l=1}^L |\langle w_l, \theta^\star\rangle|^{s^\star}
    \lesssim \frac 1 L \sum_{l=1}^L |\langle w_l, \theta^\star\rangle|^{s^\star}.
    \label{eq:signal-4}
\end{align}
Now it remains to bound the right hand side in \eqref{eq:signal-4}, and the strategy is to establish concentration over $\xi_1,\ldots,\xi_L$.
To proceed, we define
\begin{align}\label{eq:def_tilde_w}
        \tilde w_l=  \begin{cases}
            w_l & \text{if } \sup_{1\le i\le d}|\langle \xi_l, v_i \rangle| < \epsilon;\\
            0 & \text{otherwise}.
        \end{cases}
\end{align}
Note that $\tilde w_l = w_l$ for any $l\in [L]$ on $\cE(\epsilon)$, and $\tilde w_1,\ldots,\tilde w_L$ are independent random vectors.
Then by \Cref{lem:polarized-weight-nice-event}, it holds on $\cE(\epsilon)$ that
\begin{align}\label{eq:tilde_w_first_moment}
    |\dotp{\tw_{l}}{\theta^\star}| \le  2(\gamma+\epsilon)\quad \text{for all }l\in[L].
\end{align}
Next, we consider the second moment of $|\dotp{\tw_{l}}{\theta^\star}|^{s^\star}$:
\begin{align}
    \EE_{\tilde w_l}[\langle \tilde w_l, \theta^\star\rangle^{2s^\star}]
    & = \EE_{w_l}\big[\langle w_l, \theta^\star\rangle^{2s^\star} \cdot \ind\big\{\sup_i |\dotp{w_l}{v_i}|\le \epsilon \big\}\big]
    \le \EE_{w_l}[\langle w_l, \theta^\star\rangle^{2s^\star}]
\end{align}
Here we consider a different orthonormal basis given by $v_1'=\theta, v_2'=(\theta^\star-\rho\theta)/\sqrt{1-\rho^2}, v_3'=v_3,\ldots,v_d'=v_d$.
Under this basis, $\theta$ is equivalent to $e_1$ under the canonical basis, and $\theta^\star$ is equivalent to $\rho e_1 + \sqrt{1-\rho^2} e_2$ under the canonical basis.
Then since the distribution of $\xi_l$ is orthogonally invariant, we can apply \Cref{lem:g-1st-w-expectation} to obtain
\begin{align}\label{eq:tilde_w_second_moment}
    \EE_{\tilde w_l}[\langle \tilde w_l, \theta^\star\rangle^{2s^\star}]
    \le \EE_{w_l}[\langle w_l, \theta^\star\rangle^{2s^\star}]
    \lesssim (|\rho|\gamma + d^{-1/2})^{2s^\star}.
\end{align}
Combining the first moment bound in \eqref{eq:tilde_w_first_moment} and the second moment bound in \eqref{eq:tilde_w_second_moment}, we can apply Bernstein's inequality (\Cref{lem:bernstein}) to $|\langle\tilde w_1,\theta^\star\rangle|^{s^\star},\ldots,|\langle\tilde w_L,\theta^\star\rangle|^{s^\star}$ to obtain that there exists an event $\cE_{1,1}$ with $\Pr(\cE_{1,1}) \ge 1 - d^{-c}/(MT)$ such that on the event $\cE_{1,1}\cap\cE(\epsilon)$,
\begin{align}
    \frac 1 L \sum_{l=1}^L |\langle w_l, \theta^\star\rangle|^{s^\star}
    &= \frac 1 L \sum_{l=1}^L |\langle \tilde w_l, \theta^\star\rangle|^{s^\star} \\
    &\lesssim \Big(1 + \sqrt{ L^{-1}\log (d^c MT )}\Big)\cdot  (|\rho|\gamma + d^{-1/2})^{s^\star} + \frac{(\epsilon\vee \gamma)^{s^\star} \cdot \log (d^c MT )}{L}\\
    &\lesssim \Big(1 + \sqrt{ L^{-1}\log d}\Big)\cdot  (|\rho|\gamma + d^{-1/2})^{s^\star} + \frac{(\epsilon\vee \gamma)^{s^\star} \cdot \log d}{L}.
    \label{eq:signal-5}
\end{align}
Here we use the fact that $M\cdot T$ is at most polynomial in $d$.
Moreover, by the condition \eqref{eq:alignment_expected_gradient_condition_L} on $L$,
\begin{align}
\begin{aligned}
    L^{-1}\cdot (\epsilon\vee \gamma)^{s^\star}\cdot \log d  &\lesssim  d^{-s^\star/2} \le  (|\rho|\gamma + d^{-1/2})^{s^\star}; \\
    L^{-1} \cdot \log d  &\lesssim  1. \label{eq:signal-5.5}
\end{aligned}
\end{align}
Now, combining \eqref{eq:signal-4}, \eqref{eq:signal-5}, and \eqref{eq:signal-5.5}, we conclude that on the event $\cE_{1,1}\cap \cE(\epsilon)$,
\begin{align}\label{eq:R_bound}
    |R| \lesssim  (|\rho|\gamma + d^{-1/2})^{s^\star}.
\end{align}
\paragraph{Analysis for the dominant term in \eqref{eq:signal-2}}
We still use $\tilde w_1,\ldots,\tilde w_L$ defined in \eqref{eq:def_tilde_w}.
Since $|\dotp{w_l}{\theta^\star}|\le 1 $, we can approximate the expectation $\EE_{\tw_l}[\dotp{\tw_l}{\theta^\star}^{s^\star-1}]$ as follows:
\begin{align}
    \EE_{\tilde w_l}[\langle \tilde w_l, \theta^\star\rangle^{s^\star - 1}]
    &= \EE_{w_l}\Big[\langle w_l, \theta^\star\rangle^{s^\star - 1} \cdot \ind\Big\{\sup_{i\in[d]} |\dotp{w_l}{v_i}|\le \epsilon \Big\}\Big] \\
    &= \EE_{w_l}[\langle w_l, \theta^\star\rangle^{s^\star - 1}] \pm \Pr\Big (\sup_{i\in[d]} |\dotp{w_l}{v_i}|> \epsilon \Big)\\
    &= \EE_{w_l}[\langle w_l, \theta^\star\rangle^{s^\star - 1}] \pm \big(1-\Pr({\cE(\epsilon)})\big),
\end{align}
where we use the fact that the event $\{\sup_i |\dotp{w_l}{v_i}|> \epsilon\}\subset {\cE(\epsilon)}^c$.
Similar to the argument for \eqref{eq:tilde_w_second_moment}, it follows from \Cref{lem:g-1st-w-expectation} that
\begin{align}\label{eq:dominant_term_order}
    \EE_{w_l}[\langle w_l, \theta^\star\rangle^{s^\star - 1}]&\simeq \begin{cases}
            \ds
            (|\rho|\gamma + d^{-1/2})^{s^\star-1} & \text{if $s^\star$ is odd};
            \\
            \ds \rho \gamma (|\rho|\gamma + d^{-1/2})^{s^\star-2} & \text{if $s^\star$ is even}.
    \end{cases}
\end{align}
This gives rise to the first part of the main result.

Similarly, the second moment of $\langle\tilde w_l,\theta^\star\rangle^{s^\star-1}$ satisfies
\begin{align}
    \EE_{\tilde w_l}[\langle \tilde w_l, \theta^\star\rangle^{2(s^\star - 1)}]
    & = \EE_{w_l}\Big[\langle w_l, \theta^\star\rangle^{2(s^\star - 1)} \cdot \ind\Big\{\sup_i |\dotp{w_l}{v_i}|\le \epsilon \Big\}\Big] \\
    &\le \EE_{w_l}[\langle w_l, \theta^\star\rangle^{2(s^\star - 1)}] \\
    &\simeq (|\rho|\gamma + d^{-1/2})^{2(s^\star-1)}.
\end{align}
Moreover, we have $|\dotp{\tw_l}{\theta^\star}|\le 2(\gamma \vee \epsilon)$ on $\cE(\epsilon)$.
Combining the above, applying Bernstein's inequality (\Cref{lem:bernstein}) to $\langle\tilde w_1,\theta^\star\rangle^{s^\star-1},\ldots,\langle\tilde w_L,\theta^\star\rangle^{s^\star-1}$ yields that there exists an event $\cE_{1,2}$ with $\Pr(\cE_{1,2}) \ge 1 - d^{-c}/(MT)$ such that on the event $\cE_{1,2}\cap \cE(\epsilon)$,
\begin{align}\label{eq:dominant_term_bound}
    \frac 1 L \sum_{l=1}^L \langle w_l, \theta^\star\rangle^{s^\star-1}
    = \frac 1 L \sum_{l=1}^L \langle \tilde w_l, \theta^\star\rangle^{s^\star-1}
    = \EE_{w_l}[\langle w_l, \theta^\star\rangle^{s^\star - 1}] + E ,
\end{align}
where the error term $E$ can be bounded as
\begin{align}\label{eq:error_term_bound}
    |E|&\lesssim (|\rho|\gamma + d^{-1/2})^{s^\star-1} \cdot \sqrt\frac{\log d}{L} + \frac{(\epsilon\vee \gamma)^{s^\star -1} \log d}{L} + 1 - \Pr\big ({\cE(\epsilon)}\big)
    \lesssim (|\rho|\gamma + d^{-1/2})^{s^\star}
\end{align}
where the second inequality follows from the assumption that $1-\Pr(\cE(\epsilon))=O(d^{-s^\star/2})$ and the condition \eqref{eq:alignment_expected_gradient_condition_L} on $L$ using a similar argument as in \eqref{eq:signal-5.5}.
Now, combining \eqref{eq:R_bound}, \eqref{eq:dominant_term_bound}, and \eqref{eq:error_term_bound} yields the second part of the main result.

\paragraph{Norm of $\EE_{\PP_{\theta^\star}}[\hatg]$}
Recall that $P_{\theta^\star}^\perp = I_d - \theta^\star \theta^{\star\top}$ is the projection matrix onto the orthogonal complement of $\theta^\star$.
Then applying \Cref{lem:first-moment-decomp}, we have
\begin{align}
    P_{\theta^\star}^\perp \EE_{\PP_{\theta^\star}}[\hatg]
    &= \sum_{s=s^\star}^\infty \frac{\sqrt{s+1}}{L} \sum_{l=1}^L \EE_{\QQ}[
    \zeta_s(y)\cdot \hat\psi_{s+1}(y)] \cdot \dotp{w_l}{\theta^\star}^{s}\cdot P_{\theta^\star}^\perp w_l.
\end{align}
Since $\|P_{\theta^\star}^\perp w_l\|_2 \le 1$, recognizing that here we can apply the same argument as that for \eqref{eq:R_bound_0}, we conclude that on the event $\cE_{1,1}\cap \cE(\epsilon)$,
\begin{align}
    \|P_{\theta^\star}^\perp \EE_{\PP_{\theta^\star}}[\hatg]\|_2
    \lesssim (|\rho|\gamma + d^{-1/2})^{s^\star}.
\end{align}
Then the third part of the main result follows:
\begin{align}
    \norm{\EE_{\PP_{\theta^\star}}[\hatg]}_2 - |\langle \EE_{\PP_{\theta^\star}}[\hatg], \theta^\star\rangle|
    &= \sqrt{\norm{P_{\theta^\star}^\perp \EE_{\PP_{\theta^\star}}[\hatg]}_2^2 + |\dotp{\EE_{\PP_{\theta^\star}}[\hatg]}{\theta^\star}|^2 }  - |\dotp{\EE_{\PP_{\theta^\star}}[\hatg]}{\theta^\star}|\\
    &\le \norm{P_{\theta^\star}^\perp \EE_{\PP_{\theta^\star}}[\hatg]}_2
    \lesssim  (|\rho|\gamma + d^{-1/2})^{s^\star} .
\end{align}

Finally, setting $\cE_1= \cE_{1,1}\cap \cE_{1,2}$ finishes the proof.
\end{proof}

\subsubsection{Proof of \texorpdfstring{\Cref{prop:nonsparse-sample-concentration}}{the sample-concentration proposition}}\label{proof:nonsparse-sample-concentration}
\begin{proof}[Proof of \Cref{prop:nonsparse-sample-concentration}]
Recall from \Cref{def:elem_setting} that we consider the orthonormal basis $\{v_1=\theta^\star, v_2=(\theta-\rho\theta^\star)/\sqrt{1-\rho^2},v_3,\ldots,v_d\}$.
Our goal is to show that $\hatg$ has small fluctuation along each of these directions, and we fix any $v\in \{v_1, v_2, \ldots, v_{d}\}$ in the sequel.

As each sample $(z_i, y_i)$ is independently drawn from $\PP_{\theta^\star}$, $\hat g$ is the average of $n$ independent copies of $\hat g_1$ defined as
\begin{align}
    \hatg_1 = \frac{1}{L}\sum_{l=1}^L \big(\psi(y_1, \langle w_l, z_1\rangle) \cdot z_1 - \wh\psi_1(y_1) \cdot w_l \big).
\end{align}
Thus we will analyze the moments of $\langle\hat g_1,v\rangle$, and then apply concentration inequalities to obtain the desired result.

\paragraph{Verifying the polynomial-like tail condition}
First, we need to upper bound the $L^r(\PP_{\theta^\star})$-norm of $ \dotp{\hatg_1}{v}$. To this end, we define $G_v: (\RR^d \times \RR)\times\RR^d \to \RR$ as
\begin{align}
    G_v (z,y,w) := |\psi(y, \dotp{w}{z})\cdot \dotp{z}{v}| + |\wh\psi_1(y)\cdot \dotp{w}{v}|.
\end{align}
Also, we define the empirical measure $\diff\mu(w) = L^{-1}\sum_{l}\delta(w_l)$, where $\delta_a(\cdot)$ is the point-mass assigned to $a\in\RR$.  It holds by the integral Minkowski's inequality that
\begin{align}
    \EE_{\PP_{\theta^\star}}[ |\dotp{\hatg_1}{v}|^r]^{1/r} &\le   \Big(\int \diff\PP_{\theta^\star}(y,z)\Big(\int \diff\mu (w)\cdot G_v(z,y,w)\Big)^r\Big)^{1/r} \\
    &\le  \int \diff\mu(w) \Big(\int \diff\PP_{\theta^\star}(y,z)\cdot G_v(z,y,w)^r\Big)^{1/r} \\
    & =  \frac{1}{L}\sum_{l} \EE_{ \PP_{\theta^\star}}\big[\big(|\psi(y,\dotp{w_l}{z})\cdot \dotp{z}{v}|+  |\hatpsi_1(y)\cdot \dotp{w_l}{v}|\big)^r \big]^{1/r} \\
    &\lesssim   \frac{1}{L} \sum_l  \EE_{\PP_{\theta^\star}}\big[ |\psi(y,\dotp{w_l}{z})\cdot \dotp{z}{v}|^r\big]^{1/r}  + r^{C_p}\cdot\frac{1}{L} \sum_l   |\dotp{w_l}{v}|, \label{eq:second-upper}
\end{align}
where the last inequality holds since $\EE[|U+V|^{r}]^{1/r}\le \EE[|U|^r]^{1/r} +\EE[|V|^r]^{1/r}$ and $\EE_{\PP_{\theta^\star}}[| \hatpsi_1(y)|^r]^{1/r}\lesssim r^{C_p}$ by the polynomial-like tail condition in \Cref{asp:oracle function}.

For the second term in \eqref{eq:second-upper}, we have on $\cE(\epsilon)$ that $|\dotp{w_l}{v}|\le 2\gamma +2\epsilon \le 1$. Applying the Cauchy-Schwarz inequality, we have for the summand of the first term in \eqref{eq:second-upper} that
\begin{align}
    \EE_{\PP_{\theta^\star}}[|\psi(y,\dotp{w_l}{z})\cdot  \dotp{z}{v}|^r]^{1/r} &\le  \EE_{\PP_{\theta^\star}}[|\psi(y,\dotp{w_l}{z})|^{2r}]^{1/2r}\cdot  \EE_{\PP_{\theta^\star}}[|\dotp{z}{v}|^{2r}]^{1/2r}.
\end{align}
Note that $\EE_{\PP_{\theta^\star}}[|\dotp{z}{v}|^{2r}]^{1/2r} \le (2r-1)!!^{1/(2r)}\lesssim  r^{1/2}$, so it remains to bound $\EE_{\PP_{\theta^\star}}[|\psi(y,\dotp{w_l}{z})|^{2r}]^{1/2r}$. We can decompose $w_l$ into components that are correlated with and independent of $y$, and it holds that
\begin{align}
    \dotp{w_l}{z} &= \dotp{w_l - \dotp{w_l}{\theta^\star} \theta^\star}{z} + \dotp{w_l}{\theta^\star} \dotp{\theta^\star}{z} \\& =   \sqrt{1-\dotp{w_l}{\theta^\star}^2} \cdot x' + \dotp{w_l}{\theta^\star} x
\end{align}
where $x = \dotp{\theta^\star}{z}$ is independent of $x' = (1- \dotp{w_l}{\theta^\star}^2)^{-1/2}\cdot \dotp{w_l- \dotp{w_l}{\theta^\star} \theta^\star}{z}$.
Both $x,x'\sim \cN(0,1)$.
Therefore, we consider the Gaussian noise operator $\noiseop{\rho} \psi(y, x) = \EE_{x'\sim \cN(0, 1)}[\psi(y, \rho x + \sqrt{1-\rho^2} x')]$.
Conditioning on $(x,y)$ and averaging over $x'$, we can bound the $2r$-moment of $\psi(y, \dotp{w_l}{z})$ under $\PP_{\theta^\star}$ as
\begin{align}
    \EE_{\PP_{\theta^\star}}[|\psi(y,\dotp{w_l}{z})|^{2r}] &= \EE_{\PP}[\noiseop{\langle \theta^\star, w_l\rangle} \big(|\psi(y, x)|^{2r} \big)] \\
    &=  \EE_{\QQ}\left[\noiseop{\langle \theta^\star, w_l\rangle} \big(|\psi(y, x)|^{2r}\big) \cdot \frac{\PP(x, y)}{\QQ(x, y)}\right] \\
    &= \EE_{\QQ}\left[|\psi(y, x)|^{2r} \cdot \noiseop{\langle \theta^\star, w_l\rangle} \left(\frac{\PP(x, y)}{\QQ(x, y)}\right)\right] \\
    &\le \left(\EE_{\QQ}[|\psi(y, x)|^{4r}] \cdot \EE_{\QQ}\left[\left(\noiseop{\langle \theta^\star, w_l\rangle} \Bigl(\frac{\PP(x, y)}{\QQ(x, y)}\Bigr)\right)^2\right]\right)^{1/2},  \label{eq:gaussian-noise-op}
\end{align}
where the second line follows from the property of the Gaussian noise operator in \eqref{eq:hermite-noise-swap}.
By assumption of the tail bound in \Cref{asp:oracle function}, we have that
$\EE_\QQ[|\psi(y, x)|^{4r}] \le C_p  (4r)^{4C_p  r}$. For the second term, we have by spectral decomposition of Gaussian noise operator in \eqref{eq:hermite-noise-spectral} and Parseval's identity that
\begin{align}
    \EE_{\QQ}\left[\left(\noiseop{\langle \theta^\star, w_l\rangle} \Bigl(\frac{\PP(x, y)}{\QQ(x, y)}\Bigr)\right)^2\right] &= 1 + \sum_{s\ge s^\star} \langle \theta^\star, w_l\rangle^{2s} \cdot \EE_{\QQ} \left[\zeta_s(y)^2\right]\\
    &\le 1+ \sum_{s\ge s^\star} |\dotp{\theta^\star}{w_l}|^{2s} \le 3,\label{eq:gaussian-noise-op-2}
\end{align}
where we use the property that on the nice event $\cE(\epsilon)$ we have $|\langle w_l, \theta^\star\rangle| \le 2(\gamma |\rho| + \epsilon) \le 2(\gamma + \epsilon) < 1/2$ and also $\EE_\QQ[\zeta_s(y)^2] \le 1$. In conclusion, we get for the first term in \eqref{eq:second-upper} that
\begin{align}
\frac{1}{L}\sum_l \EE_{\PP_{\theta^\star}}[|\psi(y,\dotp{w_l}{z})\cdot  \dotp{z}{v}|^r]^{1/r} \lesssim r^{C_p+1/2}.
\end{align}
Combining everything, we have
\begin{align}
    \EE_{\PP_{\theta^\star}}[|\langle \hatg_1, v\rangle|^r]^{1/r} &\lesssim  r^{C_p + 1/2}, \quad \forall v\in \{\theta^\star, v_2, \ldots, v_{d}\}.
\end{align}

\paragraph{Calculating the second moment}  To bound the variance, we have for each $v$ among the designated directions that
    \begin{align}
        \Var_{\PP_{\theta^\star}}[\langle \hatg_1, v\rangle] &=
        \EE_{\PP_{\theta^\star}}[\langle \hatg_1, v\rangle^2] - \EE_{\PP_{\theta^\star}}[\langle\hatg_1, v\rangle]^2 \\&\le \EE_{\PP_{\theta^\star}}[\langle \hatg_1, v\rangle^2].
    \end{align}
    From this we see that it suffices to bound the second moment of $\langle \hatg_1, v\rangle$, which is given by
    \begin{align}
        \EE_{\PP_{\theta^\star}}[\langle \hatg_1, v\rangle^2]
        &\lesssim \frac{1}{L^2} \sum_{l, l'=1}^L \EE_{\PP_{\theta^\star}}\left[
            \psi(y, \langle w_l, z\rangle)  \psi(y, \langle w_{l'}, z\rangle) \langle z, v\rangle^2
        \right]  + \frac{1}{L^2} \sum_{l, l'=1}^L \EE_{\PP_{\theta^\star}}\left[
            \hat\psi_1(y)^2   \langle w_{l}, v\rangle \langle w_{l'}, v\rangle
        \right]\\
        &= \frac{1}{L^2} \sum_{l\neq l'} \EE_{\QQ}\bigg[
            \psi(y, \langle w_l, z\rangle) \psi(y, \langle w_{l'}, z\rangle) \langle z, v\rangle^2 \cdot
            \bigg(
                1 + \sum_{s\ge s^\star} \zeta_s(y) h_s(\langle \theta^\star, z\rangle)
            \bigg)
        \bigg] \\
        &\qquad + \frac{1}{L^2} \sum_{l=1}^L \EE_{\PP_{\theta^\star}}\left[
            \psi(y, \langle w_l, z\rangle)^2\cdot  \langle z, v\rangle^2
        \right] + \frac{1}{L^2} \sum_{l\neq l'} \EE_\QQ[\hat\psi_1(y)^2] \langle w_l, v\rangle \langle w_{l'}, v\rangle \\
        &\qquad + \frac{1}{L^2} \sum_{l=1}^L \EE_\QQ[\hat\psi_1(y)^2] \langle w_l, v\rangle^2.
    \end{align}
    As $\psi(y, z) z$ is quadruple-integrable by \Cref{asp:oracle function}, the above integral is well-defined.
    We split the summation into two parts: $l = l'$ and $l\neq l'$.
    For $l = l'$,  we have for the gradient term that
    \begin{align}
\frac{1}{L^2} \sum_{l=1}^L  \EE_{\PP_{\theta^\star}} [ \psi(y,\dotp{w_l}{z})^2\cdot\dotp{z}{v}^2]  &\le \frac{1}{L^2} \sum_{l=1}^L \EE_{\PP_{ \theta^\star}} [ \psi(y,\dotp{w_l}{z})^4]^{1/2} \cdot \EE_{\PP_{ \theta^\star}} [ \dotp{z}{v}^4]^{1/2} \\
&\le \frac{2}{L^2} \sum_{l=1}^L \EE_{\PP_{ \theta^\star}} [ \psi(y,\dotp{w_l}{z})^4]^{1/2},
    \end{align}
    where the first line holds by Cauchy-Schwarz inequality and the second line holds since $ \dotp{z}{v}$ is a standard normal random variable. Invoking Eq.~\eqref{eq:gaussian-noise-op} and Eq.~\eqref{eq:gaussian-noise-op-2}, we see that each summand in the expression above is bounded by a constant. Hence
    \begin{align}
    \frac{1}{L^2} \sum_{l=1}^L  \EE_{\PP_{\theta^\star}} [ \psi(y,\dotp{w_l}{z})^2\cdot\dotp{z}{v}^2]  \lesssim  L^{-1} .
    \end{align}
    For the debiasing term, it holds that $\EE_\QQ[\hatpsi_1(y)^2] \le 1$ and $|\dotp{w_l}{v}|\le 1$, and similarly it holds that
    \begin{align}
        \frac{1}{L^2} \sum_{l=1}^L \EE_\QQ[\hat\psi_1(y)^2] \langle w_l, v\rangle^2 \lesssim L^{-1}.
    \end{align}

    For $l\neq l'$, \Cref{lem:polarized-weight-nice-event} implies that we have on the nice event $\cE(\epsilon)\cap\tilde\cE(\tilde\epsilon)$ that
    \begin{align}
        |\langle w_l, w_{l'}\rangle| &\le 4(\gamma^2 + 2\gamma\epsilon+ \tilde\epsilon) \\
        &\le 4(2\gamma^2 + \epsilon^2 + \tilde\epsilon)  \\
        &\le 8(\gamma^2 + \tilde\epsilon) \defeq \epsilon_2,
    \end{align}
    where we have used the assumption that $\epsilon^2 \le \tilde\epsilon$.
    Defining
    \begin{align}
        \epsilon_1 \defeq \max\{ |\langle w_l, \theta^\star\rangle|, |\langle  w_{l'}, \theta^\star\rangle|\},\qquad \epsilon_0 \defeq \max\{ |\langle w_l, v\rangle|, |\langle w_{l'}, v\rangle|\},
    \end{align}
    we have by \Cref{lem:g-2nd-moment} that, for any $v\in \{\theta^\star, v_2, \ldots, v_{d}\}$, it holds on the same event that
    \begin{align}
        & \EE_{\QQ}\bigg[
            \psi(y, \langle w_l, z\rangle) \psi(y, \langle w_{l'}, z\rangle) \langle z, v\rangle^2 \cdot
            \bigg(
                1 + \sum_{s\ge s^\star} \zeta_s(y) h_s(\langle \theta^\star, z\rangle)
            \bigg)
        \bigg] \\
        &\qquad \lesssim\,
        \epsilon_2^{s^\star - 1} \cdot \left( 1 +  \frac{\epsilon_1^2}{\epsilon_2} + \left(\frac{\epsilon_1^2}{\epsilon_2}\right)^{s^\star - 1} \cdot \epsilon + \ind(v\perp \theta^\star)\cdot \left(\frac{\epsilon_1^2}{\epsilon_2}\right)^{s^\star - 2} \cdot \frac{\epsilon_0^2}{\epsilon_2} \cdot (\epsilon_1^2 + \epsilon_1 \cdot \ind(s^\star \ge 4))\right).
        \label{eq:g-2nd-moment-l neq l'}
    \end{align}
    If the nice event $\cE (\epsilon)$ also holds, on which the following holds for all $l\in[L]$:
    \begin{align}
        |\langle \xi_{l}, \theta^\star\rangle| < \epsilon, \quad \max_{2\le i \le d} |\langle \xi_{l}, v_{i}\rangle| < \epsilon,
    \end{align}
    then by \Cref{lem:polarized-weight-nice-event}, for any $i\ge 3$ and $ l\le L$, it holds that
    \begin{align}
        |\langle w_l, \theta^\star\rangle| \lesssim \gamma |\rho| + \epsilon, \quad |\langle w_l, v_2\rangle| \lesssim \sqrt{1-\rho^2} \gamma + \epsilon, \quad |\langle w_l, v_i\rangle| \lesssim \epsilon,
    \end{align}
    Consequently, we can set $\epsilon_1 \simeq \gamma |\rho| + \epsilon = o(1)$ and
    \begin{align}
        \epsilon_0 \simeq \begin{cases}
            \gamma |\rho| + \epsilon, & \text{if $v = \theta^\star$},\\
            \gamma \sqrt{1-\rho^2} + \epsilon, & \text{if $v = v_2$},\\
            \epsilon, & \text{otherwise}.
        \end{cases}
    \end{align}
    Therefore, we can simplify the ratio $\epsilon_1^2/\epsilon_2$ and $\epsilon_0^2/\epsilon_2$ as
    \begin{align}
        \frac{\epsilon_1^2}{\epsilon_2} \simeq \frac{(\gamma |\rho| + \epsilon)^2}{4(\gamma^2 + \tilde\epsilon)} \simeq \frac{\gamma^2 |\rho|^2 + \epsilon^2}{\gamma^2 + \tilde\epsilon}, \quad
        \frac{\epsilon_0^2}{\epsilon_2} \simeq \begin{cases}
            \frac{(\gamma |\rho| + \epsilon)^2}{8(\gamma^2 + \tilde\epsilon)} \simeq \frac{\gamma^2 |\rho|^2 + \epsilon^2}{\gamma^2 + \tilde\epsilon}, & \text{if $v = \theta^\star$},\\
            \frac{(\gamma \sqrt{1-\rho^2} + \epsilon)^2}{8(\gamma^2 + \tilde\epsilon)} \simeq \frac{\gamma^2 (1-\rho^2) + \epsilon^2}{\gamma^2 + \tilde\epsilon}, & \text{if $v = v_2$},\\
            \frac{\epsilon^2}{4(\gamma^2 + \tilde\epsilon)} \simeq \frac{\epsilon^2}{\gamma^2 + \tilde\epsilon}, & \text{otherwise}.
        \end{cases}
    \end{align}
    Since $\epsilon^2\le \tepsilon$, we can conclude that $\epsilon_1^2/\epsilon_2 \lesssim 1$ and $\epsilon_0^2/\epsilon_2 \lesssim 1$. Hence, the right-hand side of \eqref{eq:g-2nd-moment-l neq l'} is bounded by $\epsilon_2^{s^\star - 1} \simeq (\gamma^2 + \tilde\epsilon)^{s^\star - 1}$ for all $v\in \{\theta^\star, v_2, \ldots, v_{d}\}$.
    Similarly, let us consider the term $L^{-2} \cdot \sum_{l\neq l'} \EE_\QQ[\hat\psi_1(y)^2] \langle w_l, v\rangle \langle w_{l'}, v\rangle$. On the nice event $\cE(\epsilon)$, we have
    \begin{align}
        \frac{1}{L^2} \cdot \sum_{l\neq l'} \EE_\QQ[\hat\psi_1(y)^2] \langle w_l, v\rangle \langle w_{l'}, v\rangle &\lesssim \epsilon_0^2  \cdot \ind\{s^\star \le 2\} \\
        &\lesssim \epsilon_2 \ind( s^\star \le 2 )\\
        &\lesssim (\gamma^2 + \tilde\epsilon)^{s^\star - 1}\cdot \ind\{s^\star \le 2\}.
    \end{align}
    The first inequality holds because $\hat\psi_1(y)=0$ whenever $s^\star \ge 2$ because of \ref{asp:high_pass}, and the second inequality holds due to the condition that $\epsilon^2 \le \tilde\epsilon$.

    Combining the results for $l = l'$ and $l\neq l'$, it holds for any $v\in \{\theta^\star, v_2, \ldots, v_{d}\}$ on the event $\cE(\epsilon)\cap \tilde\cE(\tepsilon)$ that
    \begin{align}
        \Var_{\PP_{\theta^\star}}[\langle \hatg_1, v\rangle] {\lesssim} (\gamma^2 + \tilde\epsilon)^{s^\star - 1} + \frac{1}{L}.
    \end{align}

\paragraph{Concentration} So far we have shown that $\psi(y, \dotp{w_l}{z})$ satisfies the condition of \Cref{lem:poly_tail_bound}.
Additionally, we now have an upper bound on the variance of $\langle \hatg_1, v\rangle$ for any $v\in \{\theta^\star, v_2, \ldots, v_{d}\}$ in hand.
Thus, by \Cref{lem:poly_tail_bound} there exists a $\{(z_i,y_i)\}_{i\in[n]}$-measurable event $\cE_{2,1}$ with $\Pr(\cE_{2,1})\ge 1 - d^{-c}/(MT)$, and it holds on $\cE_{2,1}$ that
\begin{align}
    &\left|\langle \hatg, v\rangle - \EE_{\PP_{\theta^\star}}[\langle \hatg, v\rangle] \right| \\
    &\qquad {\lesssim} \sqrt{\frac{\EE_{\PP_{\theta^\star}}[\langle \hatg_1, v\rangle^2] \cdot \log (d^{c} M T) }{n}} + \frac{\log(d^{c}MT) \cdot \log(d^{c}MTn)^{C_p+1/2}}{n} \\
    &\qquad
    {\lesssim} \sqrt{\frac{((\gamma^2 + \tilde\epsilon)^{s^\star - 1} + L^{-1}) \cdot \log (d) }{n}} + \frac{\log(d)^{C_p+3/2}}{n}, \quad \forall v\in \{\theta^\star, v_2, \ldots, v_{d}\}.
    \label{eq:g-concentration-1}
\end{align}
where we use the fact that $T$, $M$, and $n$ all depend polynomially on $d$. Moreover, since we assume that
\begin{align}
    n = \Omega\Big(\big((\gamma^2 + \tilde\epsilon)^{s^\star - 1} + L^{-1}\big)^{-1} \cdot \log (d)^{2C_p+2}\Big),
\end{align}
we have that the first term in \eqref{eq:g-concentration-1} dominates, which further implies that
\begin{align}
    \left|\langle \hatg, \theta^\star\rangle - \EE_{\PP_{\theta^\star}}[\langle \hatg, \theta^\star\rangle] \right| \lesssim \sqrt{\frac{((\gamma^2 + \tilde\epsilon)^{s^\star - 1} + L^{-1}) \cdot \log (d) }{n}}.
\end{align}
Meanwhile, for the $\ell_2$-norm of $\hatg$, we have by Jensen's inequality that for any $r\ge 1$,
\begin{align}
    \EE_{\PP_{\theta^\star}}[\norm{\hatg_1}_2^r]^{1/r}
    & = \bigg(\EE_{\PP_{\theta^\star}}\bigg[\sum_{v\in\{\theta^\star, v_2,\ldots, v_{d}\}} \langle \hatg_1, v\rangle^2\bigg]^{r/2}\bigg)^{1/r} \\
    &\le \sqrt d \cdot \bigg(\frac 1 d \cdot \sum_{v\in\{\theta^\star, v_2,\ldots, v_{d}\}} \EE_{\PP_{\theta^\star}}\left[ |\langle \hatg_1, v\rangle|^r\right]\bigg)^{1/r} \lesssim \sqrt d \cdot (r)^{C_p + 1/2}.
\end{align}
This polynomial tail bound enables us to apply \Cref{lem:poly_tail_bound} for the $\ell_2$-norm of $\hatg$, which implies that there exists some event $\cE_{2,2}$ with $\Pr(\cE_{2,2})\ge 1 - d^{-c}/(MT)$, and it holds on $\cE_{2,2}$ that
\begin{align}
    \big\| \hatg  - \EE_{\PP_{\theta^\star}}[\hatg] \big\|_2 \lesssim \sqrt{\frac{  \big((\gamma^2 + \tilde\epsilon)^{s^\star - 1} + L^{-1}\big) \cdot d\cdot  \log (d) }{n}}.
\end{align}
Setting $\cE_2= \cE_{2,1}\cap \cE_{2,2}$ gives the desired event. This concludes the proof of \Cref{prop:nonsparse-sample-concentration}.
\end{proof}

\newcommand{\aast}{{\star\star}}
\newcommand{\tst}{{\theta^{\star}}}
\newcommand{\sst}{{s^{\star}}}
\newcommand{\cg}{{\check{g}}}
\section{Proof of the Main Theorem for the Sparse Prior}\label{sec:sparse-proof}
\subsection{Proof Outline and Preliminaries}

In this section, we provide a detailed proof for \Cref{thm:sparse}. We begin with some good events that we will work with.

\paragraph{Signal concentration}
We begin with a good event on which the signal spreads almost evenly within its support. Define a series of events:
\begin{align}
    \cE_{0,r}&\coloneqq \{ \norm{\theta^\star}_r^r\le C_r \cdot k^{1-r/2} \}  ;\\
    \cE_{0,\infty}&\coloneqq \{ \norm{\theta^\star}_{\infty}\le C_\infty \cdot k^{-1/2}\log(k)^{1/2}\};\\
    \cE_{0,\sharp} &\coloneqq \Big\{\sum_{j \in [d]} \ind\big\{|\theta_j^\star| \ge \frac{1}{\sqrt{2k }}\big\} \ge \frac{k}{4}\Big\}.
\end{align}
The following lemma guarantees that all the events above hold with high probability.
    \begin{lemma}[Good signal]\label{lem:good-signal}
        Suppose that $k$ is sufficiently large such that $k/\log(k) \ge  32c(r\vee1)$, $k/\log^{r+2} k \ge \sqrt{2c+2}$, then it holds that
        \begin{align}
             \Pr(\cE_{0,r}) \wedge \Pr(\cE_{0,\infty})  &\ge 1- O\big(k^{-c_{0,1}}\big);\\
                \Pr(\cE_{0,\sharp })&\ge 1- O\big(\exp\{-c_{0,2} k\}\big),
        \end{align}
        for some constants $c_{0,1},c_{0,2}>0$.
    \end{lemma}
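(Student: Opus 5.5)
We need to prove the lemma on the good-signal events under the sparse prior. The natural representation is: conditional on the support $\phi^\star$, the signal $\theta^\star$ restricted to $\phi^\star$ is uniform on $\SS^{k-1}$. Hence we may write $\theta^\star_{\phi^\star} = u/\norm{u}_2$ with $u\sim\cN(0,I_k)$. Every event then reduces to a statement about $k$ i.i.d. standard Gaussians, so the support plays no role. The plan is to control the normalizer $\norm{u}_2^2$ first and then each numerator separately.

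\textbf{Step 1: the normalizer.} By Bernstein's inequality for $\chi^2_k$ (equivalently, \Cref{lem:sphere_coordinate_tail}-type bounds), $\norm{u}_2^2\ge k/2$ with probability at least $1-\exp(-ck)$. We work on this event throughout.

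\textbf{Step 2: the event $\cE_{0,\infty}$.} A Gaussian tail bound plus a union bound over $k$ coordinates gives $\max_j|u_j|\le \sqrt{2(c+1)\log k}$ with probability $1-O(k^{-c})$. Dividing by $\norm{u}_2\ge\sqrt{k/2}$ yields $\norm{\theta^\star}_\infty\lesssim k^{-1/2}\log(k)^{1/2}$.

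\textbf{Step 3: the event $\cE_{0,r}$.} We have $\norm{\theta^\star}_r^r=\sum_j|u_j|^r/\norm{u}_2^r$, and the denominator is at least $(k/2)^{r/2}$. It therefore suffices to show $\sum_j|u_j|^r\le C_r k$. The variables $|u_j|^r$ are heavy-tailed (sub-Weibull), so we truncate at the level $(2(c+1)\log k)^{r/2}$ from Step 2. On the Step 2 event the truncated variables coincide with the originals, have mean $O(1)$, and are bounded by $O(\log^{r/2}k)$. Hoeffding or Bernstein then gives $\sum_j|u_j|^r\le 2\EE|u_1|^r k$ with failure probability $\exp(-ck/\log^{r}k)$. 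This is polynomially small under the stated condition $k/\log^{r+2}k$ large, which is exactly where the lemma's size assumptions enter.

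\textbf{Step 4: the event $\cE_{0,\sharp}$.} The indicator $|\theta^\star_j|\ge(2k)^{-1/2}$ is implied by $|u_j|\ge\norm{u}_2/\sqrt{2k}$. We add to the Step 1 event the upper bound $\norm{u}_2^2\le 2k$ (again $\chi^2$ concentration), under which it suffices that $|u_j|\ge 1$. The count $\sum_j\ind\{|u_j|\ge1\}$ is $\binomial(k,p)$ with $p=\Pr(|u_1|\ge1)\approx0.317>1/4$. A Chernoff bound shows this count is at least $k/4$ with probability $1-\exp(-c_{0,2}k)$. Combining with the $\chi^2$ event preserves the exponential rate.

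\textbf{Main obstacle.} I expect Step 3 to be the main difficulty: the heavy-tailed sum needs the truncation and the careful bookkeeping of log factors to get polynomial failure probability $O(k^{-c_{0,1}})$. Everything else follows from standard concentration.
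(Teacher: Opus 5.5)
Your proposal is correct and follows essentially the same route as the paper. Both arguments use the Gaussian representation $u/\norm{u}_2$, $\chi^2$ concentration of the normalizer, and a Gaussian-tail union bound for $\cE_{0,\infty}$. Both then use a truncated Bernstein/Hoeffding bound on $\norm{u}_r^r$ for $\cE_{0,r}$, and a Hoeffding count of coordinates above a constant threshold for $\cE_{0,\sharp}$; you take $|u_j|\ge 1$ with $\norm{u}_2^2\le 2k$ where the paper takes $u_j^2\ge 3/4$ with $\norm{u}_2^2\le 3k/2$.

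Your Step 1 is, if anything, more careful than the paper's proof. For $\cE_{0,r}$ the paper needs a lower bound on $\norm{u}_2^2$ in the denominator, but it writes its one-sided Bernstein bound in the upper direction.
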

\begin{proof}[Proof of \Cref{lem:good-signal}]
 See \Cref{proof:good-signal}.
\end{proof}
For fixed $s^\star$, we collect all the indices $r$ such that the corresponding nice event $\cE_{0,r}$ will be involved in the coming analysis. Define $S(s^\star) = \big\{s^\star-1,s^\star -\ind\{s^\star \text{ odd}\},2s^\star,4s^\star\big\}$. And we will stick to the following high probability event
\begin{align}
     \cE_{0}\coloneqq\cE_{0,\infty} \cap \cE_{0,\sharp}\cap (\cap_{r\in S(s^\star)} \cE_{0,r}). \label{eq:signal-good-event}
\end{align}
With Lemma \ref{lem:good-signal}, we have that $\Pr(\cE_0)\ge 1-O(k^{-c_0})$ for some constant $c_0>0$.

\paragraph{Preparation for characterizing one-step gradient} Following the same manner as the proof for the non-sparse case, we first characterize the alignment of the gradient step (without adversarial error term $\err_{m,l,i}^\t$). We begin with the definition of a minimal setup, that collects all the essential elements to form the one-step gradient. The following definition is the sparse analogue of \Cref{def:elem_setting}. Apart from the method of generating the noise, in the sparse case, we will analyze the gradient in a coordinate-wise manner to adapt to the sparse structure.

\begin{definition}\label{def:elem-setting-sparse} Fix $k$-sparse vectors $\theta,\theta^\star\in\SS^{d-1}$ with $\phi = \mathsf{supp}(\theta)$ and $\phi^\star =  \mathsf{supp}(\theta^\star)$. Let $\rho = \dotp{ \theta}{\theta^\star }$. Suppose that a single batch of data  $\{(z_i,y_i)\}_{i\in [n]}$ is i.i.d. generated from $\PP_{\theta^\star}$. We fix the index $m$ as the current neuron. We first sample $\phi_{m,1}, \phi_{ m,2},\ldots \phi_{m,L}\iidfrom \unif(\cS_{k,m})$, i.e., uniform distribution over all $k$-sparse supports with $m$-th index always included. Given these random supports, we sample independent noises $\xi_{m,l}\sim \unif(\SS^{k-1}(\phi_{m,l}))$ for $l \in[L]$. Now, for each $l\in[L]$ and $\gamma = o(1)$, we define $w_{m,l} = (\gamma\theta + \xi_{m,l})/\norm{\gamma\theta + \xi_{m,l}}_2$. Then our target is
\begin{align}
    \barg_{m} &= \frac{1}{nL} \sum_{i=1}^n \sum_{l=1}^L \big(\psi(y_i, \dotp{w_{m,l}}{z_i})\cdot z_i - \wh\psi_1(y_i)\cdot w_{m,l} \big).  \label{eq:sparse-gradient-0}
\end{align}
The associated good event is defined as
\begin{align}
\cE_m(\epsilon) &= \{\sup_{l,j} |\dotp{\xi_{m,l}}{e_j}|\le \epsilon\}; \label{eq:sparse-nice-event-cE}\\
\tilde{\cE}_m &= \Big\{ \max\big\{\sup_{l\neq l' }|\phi_{m,l}\cap \phi_{m,l'}|, \sup_{l} |\phi_{m,l} \cap \phi^\star|, \sup_{l} |\phi_{m,l}\cap \phi| \big\}\le \log k \Big\} \label{eq:sparse-nice-event-tildeE}.
\end{align}
\end{definition}

\paragraph{Almost orthogonality} Recall that our perturbation noise $\xi_{m,l}$ is sampled from $\unif(\SS^{k-1}(\phi_{m,l}))$, which is approximately isotropic. One can presume that each $\xi_{m,l}$ is evenly distributed among different coordinates. Additionally, we expect that $(\xi_{m,l}, \phi_{m,l})$ and $(\xi_{m,l'}, \phi_{m,l'})$ should have a negligible overlap. These two qualitative properties, which can help simplify the analysis, are captured by Eq.~\eqref{eq:sparse-nice-event-cE} and Eq.~\eqref{eq:sparse-nice-event-tildeE} in \Cref{def:elem-setting-sparse}. The following lemma characterizes the property of the perturbed weights $w_{m,l}$ on the nice event $\cE_m(\epsilon)\cap \tilde{\cE}_m$.

\begin{lemma}[Polarized weight on a nice event, sparse case]\label{lem:polarized-weight-nice-event-sparse} Consider the setting in \Cref{def:elem-setting-sparse} with $\gamma <1/2$. Suppose that the nice event $\cE_m(\epsilon) \cap \tilde\cE_m$ holds and $\norm{\theta^\star}_{\infty} \le 1/\log k$, then we have that
    \begin{align}
        \sup_{l,j} |\dotp{w_{m,l}}{e_j}| &\le 2  (\gamma|\theta_j|+\epsilon);\\
        \sup_{l} |\dotp{w_{m,l}}{\theta^\star}| &\le 2(\gamma|\rho| + \epsilon).
    \end{align}
    Additionally, we have that
    \begin{align}
        \sup_{ l\neq l'} |\dotp{w_{m,l}}{w_{m,l'}}| \le  4(\gamma^2 + \epsilon^2 \log k ).
    \end{align}
\end{lemma}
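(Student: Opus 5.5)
The argument mirrors the proof of \Cref{lem:polarized-weight-nice-event}, with one change: instead of a global orthonormal basis, I work coordinatewise and use the support-overlap controls in $\tilde\cE_m$. The first step is to lower bound the normalization. Since $\theta$ and $\xi_{m,l}$ are unit vectors and $\gamma<1/2$, the triangle inequality gives $\norm{\gamma\theta+\xi_{m,l}}_2\ge 1-\gamma\ge 1/2$. Hence for any vector $u$ we have $|\dotp{w_{m,l}}{u}|\le 2(\gamma|\dotp{\theta}{u}|+|\dotp{\xi_{m,l}}{u}|)$. Taking $u=e_j$ and using $\cE_m(\epsilon)$, which gives $|\dotp{\xi_{m,l}}{e_j}|\le\epsilon$, yields the first bound $2(\gamma|\theta_j|+\epsilon)$ immediately.

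For $u=\theta^\star$, the only new ingredient is a bound on $|\dotp{\xi_{m,l}}{\theta^\star}|$. I expand it over the overlap of supports:
\begin{align}
|\dotp{\xi_{m,l}}{\theta^\star}|\le \sum_{j\in\phi_{m,l}\cap\phi^\star}|\xi_{m,l,j}|\,|\theta^\star_j|\le \epsilon\,\norm{\theta^\star}_\infty\,|\phi_{m,l}\cap\phi^\star|\le \epsilon\cdot\frac{1}{\log k}\cdot\log k=\epsilon.
\end{align}
Here I use $\cE_m(\epsilon)$ for the entries of $\xi_{m,l}$, $\tilde\cE_m$ to bound the overlap size by $\log k$, and the hypothesis $\norm{\theta^\star}_\infty\le 1/\log k$. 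This gives $|\dotp{w_{m,l}}{\theta^\star}|\le 2(\gamma|\rho|+\epsilon)$. This step is exactly where the hypothesis on $\norm{\theta^\star}_\infty$ is needed, and it is the only slightly delicate point; I would have to check that the constant works out to $1$ rather than something larger.

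For the cross inner products with $l\neq l'$, I expand
\begin{align}
\dotp{\gamma\theta+\xi_{m,l}}{\gamma\theta+\xi_{m,l'}}=\gamma^2+\gamma\dotp{\theta}{\xi_{m,l'}}+\gamma\dotp{\xi_{m,l}}{\theta}+\dotp{\xi_{m,l}}{\xi_{m,l'}},
\end{align}
and divide by the product of the norms, which is at least $1/4$. The last term is controlled by the overlap of the two supports: $|\dotp{\xi_{m,l}}{\xi_{m,l'}}|\le\epsilon^2|\phi_{m,l}\cap\phi_{m,l'}|\le\epsilon^2\log k$.

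The main obstacle is the two middle terms. Since $\theta$ is a general unit vector, the support overlap only gives $|\dotp{\theta}{\xi_{m,l}}|\le\epsilon\sum_{j\in\phi\cap\phi_{m,l}}|\theta_j|\le\epsilon\sqrt{\log k}$ via Cauchy--Schwarz with $|\phi\cap\phi_{m,l}|\le\log k$. I then plan to absorb these terms by AM--GM: $2\gamma\epsilon\sqrt{\log k}\le\gamma^2+\epsilon^2\log k$. This yields a numerator bounded by $2(\gamma^2+\epsilon^2\log k)$ and hence a ratio of at most $8(\gamma^2+\epsilon^2\log k)$. To get the stated constant $4$ I would tighten the norm lower bound: expanding $\norm{\gamma\theta+\xi}_2^2=1+\gamma^2+2\gamma\dotp{\theta}{\xi}\ge 1-2\gamma\epsilon\sqrt{\log k}$ gives a norm close to $1$ in the regime $\epsilon\sqrt{\log k}=o(1)$, which is the regime of interest. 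If that tightening proves awkward, I would accept an absolute constant in place of $4$, since downstream uses only need the bound up to constants.
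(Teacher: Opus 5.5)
Your proposal is correct and follows the paper's proof essentially step for step: the norm lower bound $1/2$, entrywise control from $\cE_m(\epsilon)$, overlap control from $\tilde\cE_m$, the hypothesis $\|\theta^\star\|_\infty\le 1/\log k$ to bound $\langle\xi_{m,l},\theta^\star\rangle$, and Cauchy--Schwarz plus AM--GM for the cross terms. Your constant bookkeeping is more careful than the paper's, whose displayed prefactor $2$ should be $4$ because it divides by a product of two norms each at least $1/2$. Your tightening recovers the stated constant without any asymptotic assumption: if $\gamma^2+\epsilon^2\log k\ge 1/4$ the bound is trivial because $|\langle w_{m,l},w_{m,l'}\rangle|\le 1$. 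Otherwise $\|\gamma\theta+\xi_{m,l}\|_2^2\ge 1+\gamma^2-2\gamma\epsilon\sqrt{\log k}\ge 1-\epsilon^2\log k>3/4$, and with the numerator bound $(\gamma+\epsilon\sqrt{\log k})^2\le 2(\gamma^2+\epsilon^2\log k)$ the ratio is at most $\tfrac{8}{3}(\gamma^2+\epsilon^2\log k)$.
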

\begin{proof}[Proof of \Cref{lem:polarized-weight-nice-event-sparse}]
    See \Cref{proof:polarized-weight-nice-event-sparse}.
\end{proof}

This lemma, serving as the counterpart of \Cref{lem:polarized-weight-nice-event}, controls the behavior of the perturbed weights $w_{m,l}$ in their coordinates and alignment with $\theta^\star$. Additionally, these weights are approximately orthogonal to each other, which allows for a good characterization of the second moment of the gradient.

One may notice that the definition of $\tilde\cE$ differs from the non-sparse case, where we explicitly bound the correlation between different $\xi_{m,l}$. This is the benefit of the sparse structure, as two randomly sampled $k$-sparse supports are naturally of low overlap.

Before delving into the component-wise analysis, we begin with a proposition that will be frequently used to calculate the average contribution of each term in the gradient. This proposition serves as the counterpart of \Cref{lem:g-1st-w-expectation} in the sparse case.
\begin{proposition}\label{prop:sparse-w-expectation-warmup} Suppose that the polarization level $\gamma = o(1)$ and the noise ($\xi_{m,l},\phi_{m,l}) \sim \unif\big(\SS^{k-1}(\phi_{m,l})\big)\otimes \unif(\cS_{k,m})$. Let $\rho =\dotp{\theta}{\theta^\star}$ where $\theta$ is the polarized direction in $w_{m,l}$. Assume that the event $\cE_{0,s-\ind\{s \text{ odd}\}}$ holds. Then we have that
\begin{align}
    \EE_{w_{m,l }}[\dotp{w_{m,l}}{\theta^\star}^{s}]&\simeq\begin{cases}
        \gamma\rho \cdot \big(\gamma|\rho| + k^{-1/2}|\theta_m^\star| +k^{-1 }\delta_{s-1})^{s-1} & \text{if $s$ odd};\\
         (\gamma|\rho|+ k^{-1/2}|\theta_m^\star| +k^{-1}\delta_s )^{s}& \text{if $s$ even},
        \end{cases}
\end{align}
where $\delta_s = (k^2/d)^{1/s}  = o(1)$ for any $s=O(1)$.
\end{proposition}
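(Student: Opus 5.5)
We reduce the moment to a one-dimensional Gaussian-like computation by decoupling the normalization and then expanding binomially. Write $w_{m,l} = (\gamma\theta + \xi)/\|\gamma\theta+\xi\|_2$ with $\xi\sim\unif(\SS^{k-1}(\phi_{m,l}))$. Since $\gamma=o(1)$ and $\langle\theta,\xi\rangle$ is small with overwhelming probability, the normalization satisfies $\|\gamma\theta+\xi\|_2 = 1+O(\gamma)$ on a high-probability event. Off that event we use the crude bound $|\langle w,\theta^\star\rangle|\le1$ with a super-polynomially small probability. Hence it suffices to compute $\EE[(\gamma\rho + \langle\xi,\theta^\star\rangle)^s]$ up to constant factors. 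Expanding binomially gives
\[
\sum_{j}\binom{s}{j}(\gamma\rho)^{s-j}\,\EE[\langle\xi,\theta^\star\rangle^j].
\]
The odd/even dichotomy in the statement then comes from a sign argument. By the symmetry $\xi\mapsto-\xi$, every odd $j$ vanishes, so all surviving terms have the sign of $(\gamma\rho)^{s-j}$ with $s-j$ of fixed parity. For even $s$ the surviving terms are all nonnegative, so $\EE\simeq\max_j(\gamma\rho)^{s-j}\EE[\langle\xi,\theta^\star\rangle^j]$. For odd $s$ they all share the sign of $\rho$, with at least one factor $\gamma\rho$, which gives the stated form.

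The main step is to estimate the even moments $\EE[\langle\xi,\theta^\star\rangle^{2q}]$ for the structured noise. Condition on the support $\phi_{m,l}$, which always contains $m$, and let $A=\phi_{m,l}\cap\phi^\star$. Given the support, $\xi$ is uniform on a $k$-sphere, so $\langle\xi,\theta^\star\rangle$ behaves like $k^{-1/2}\|\theta^\star_A\|_2\cdot\cN(0,1)$. Spherical moments are Gaussian moments up to constants for $q=O(1)$ (Folland's formula, as in \Cref{lem:g-1st-w-expectation}). This gives
\[
\EE[\langle\xi,\theta^\star\rangle^{2q}]\simeq k^{-q}\,\EE_{\phi}\big[\|\theta^\star_A\|_2^{2q}\big].
\]
We bound $\EE_\phi\|\theta^\star_A\|_2^{2q}$ from above and below by $|\theta^\star_m|^{2q}+\EE\|\theta^\star_{A\setminus\{m\}}\|_2^{2q}$. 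The remaining $k-1$ coordinates of $\phi_{m,l}$ are uniform in $[d]\setminus\{m\}$, so each $j\in\phi^\star$ is included with probability roughly $k/d$. The dominant contribution to the $2q$-th moment comes from the event that a single extra coordinate of $\phi^\star$ is hit, which yields about $(k/d)\sum_{j}|\theta_j^\star|^{2q}=(k/d)\|\theta^\star\|_{2q}^{2q}$. Configurations hitting $r\ge2$ coordinates are suppressed by further factors of $k^2/d=o(1)$, controlled by the overlap and hypergeometric bounds.

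On $\cE_{0,2q}$ we have $\|\theta^\star\|_{2q}^{2q}\simeq k^{1-q}$; the lower bound here uses $\cE_{0,\sharp}$ and the upper bound uses $\cE_{0,r}$. The extra contribution is therefore $\simeq (k^2/d)\,k^{-2q}=(k^{-1}\delta_{2q})^{2q}$, which matches the $\delta_s=(k^2/d)^{1/s}$ term in the statement, while the $m$-term contributes $(k^{-1/2}|\theta^\star_m|)^{2q}$. Putting these together gives the estimate
\[
\EE[\langle\xi,\theta^\star\rangle^{2q}]\simeq (k^{-1/2}|\theta_m^\star|)^{2q}+(k^{-1}\delta_{2q})^{2q}.
\]

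Substituting this estimate into the binomial sum, and using $a^{2q}+b^{2q}\simeq(a+b)^{2q}$, gives the stated $(\gamma|\rho|+k^{-1/2}|\theta_m^\star|+k^{-1}\delta)^{s}$ form in the even case. In the odd case the leading factor $\gamma\rho$ multiplies the even moment of order $s-1$, which gives the stated exponent and explains why $\cE_{0,s-\ind\{s\text{ odd}\}}$ is the assumed event. We expect the main obstacle to be the matching lower bound on the support-overlap moment. A mixture of terms with varying $q$ (the various $\delta_{2q}$) must be shown to collapse to the single $\delta$ indexed in the statement; this requires comparing $(\gamma\rho)^{s-2q}(k^{-1}\delta_{2q})^{2q}$ across $q$ and checking that the cross terms are dominated up to constants. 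We would handle this by showing monotonicity in $q$ of $k^2/d\cdot k^{-2q}$ relative to $(\gamma|\rho|)^{2q}$, which makes the extreme index dominant.
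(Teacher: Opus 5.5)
Your ingredients are the paper's, but you apply them in a different order. Both arguments drop the normalization, condition on $\phi_{m,l}$, reduce $\langle\xi,\theta^\star\rangle$ to $\|P_\phi\theta^\star\|_2$ times a spherical coordinate with Folland's moments, and control the support overlap by a single-hit/hypergeometric estimate. The paper collapses the binomial sum \emph{conditionally on} $\phi$ to $(\gamma\rho)^{\ind\{s\text{ odd}\}}(\gamma|\rho|+k^{-1/2}\|P_\phi\theta^\star\|_2)^{s'}$, with $s'=s-\ind\{s\text{ odd}\}$. Only then does it average, using $\EE_\phi[(a+B_\phi)^{s'}]\simeq a^{s'}+\EE_\phi[B_\phi^{s'}]$. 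So it needs just one overlap moment, $\EE_\phi\|P_\phi\theta^\star\|_2^{s'}\simeq|\theta_m^\star|^{s'}+k^{-s'/2}\,k^2/d$, and the mixing of different $\delta_{2q}$ that you single out as the main obstacle never arises. In your order it is harmless as well. With $a=\gamma|\rho|$ and $\delta=k^2/d\le 1$, weighted AM--GM gives $\delta\,a^{s'-2q}k^{-2q}\le\delta^{1-2q/s'}\max\{a^{s'},\delta k^{-s'}\}\le\max\{a^{s'},\delta k^{-s'}\}$. Hence the endpoint terms dominate the intermediate ones and also supply the lower bound. The price of your order is norm control at every order $2q\le s'$. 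This follows from $\cE_{0,s'}$ by log-convexity of $p\mapsto\|\theta^\star\|_p^p$, interpolating with $\|\theta^\star\|_2=1$. The lower bound $\|\theta^\star\|_{2q}^{2q}\ge k^{1-q}$ is just the power-mean inequality, so $\cE_{0,\sharp}$ is not needed. If you prove the upper bound via $(\sum_{j\in S}x_j)^q\le|S|^{q-1}\sum_{j\in S}x_j^q$ followed by conditioning on $j\in\phi$, you also avoid the paper's Cauchy--Schwarz step, which invokes $\cE_{0,2s'}$ beyond the stated hypothesis.

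The step that does not go through as written, in your proposal and equally in the paper's proof, is dropping the normalization for odd $s$. Since $\|\gamma\theta+\xi\|_2\in[1-\gamma,1+\gamma]$ deterministically, the reduction is immediate for even $s$: the integrand is nonnegative, and no high-probability event is needed. For odd $s$ the integrand is signed and the target carries the small factor $\gamma\rho$, so a $\xi$-dependent factor cannot simply be pulled out. To first order the correction is $-s\gamma\,\EE[\langle\theta,\xi\rangle(\gamma\rho+\langle\xi,\theta^\star\rangle)^s]$. Because $\EE[\langle\theta,\xi\rangle\langle\xi,\theta^\star\rangle^{j}\mid\phi]=\langle P_\phi\theta,P_\phi\theta^\star\rangle\|P_\phi\theta^\star\|_2^{j-1}\EE[\xi_1^{j+1}]$ for odd $j$, this correction is governed by $\langle P_\phi\theta,P_\phi\theta^\star\rangle$ rather than by $\rho$.

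Concretely, take $\theta=(e_m+e_j)/\sqrt2$, $\theta^\star=(e_m-e_j)/\sqrt2$ and $s=3$; then $\rho=0$, yet $\EE\langle w,\theta^\star\rangle^3<0$. On $\{j\notin\phi\}$ both $\langle\xi,\theta\rangle$ and $\langle\xi,\theta^\star\rangle$ equal $t=\xi_m/\sqrt2$, and $t^3(1+\gamma^2+2\gamma t)^{-3/2}$ has negative mean. On $\{j\in\phi\}$ the swap $\xi_m\leftrightarrow\xi_j$ makes the contribution vanish. So the odd case fails for general $\theta$.

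It does hold in the two regimes where the paper uses it. For $\theta=e_m$, flipping the signs of $\xi_i$, $i\neq m$, shows every correction term is exactly proportional to $\theta_m^\star=\rho$, with relative size $O(k^{-1})$. For $|\rho|=\Theta(1)$, the correction is a relative $O(k^{-1})$ perturbation. To close your argument, restrict the odd case to these regimes and carry out the $\xi\mapsto-\xi$ symmetrization of \Cref{lem:polar-moment} there.
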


\begin{proof}[Proof of \Cref{prop:sparse-w-expectation-warmup}]
    See \Cref{proof:sparse-w-expectation-warmup}.
\end{proof}

Recall that we define the good event over the signal as $\cE_0= \cE_{0,\infty}\cap \cE_{0,\sharp} \cap \bigcap_{r\in S(s^\star)} \cE_{0,r}$, where $S(\sst) = \{s^\star-1,s^\star-\ind\{s^\star \text{ odd}\},2s^\star,4s^\star\}$. This definition facilitates our deferred analysis where we need to control multiple moments of different orders and we collect all the necessary good events in $\cE_0$ in the first place.

\subsection{Properties of the Gradient Step}
In this section, we preview some properties regarding the gradient defined in Eq.~\eqref{eq:sparse-gradient-0}. The following proposition deals with the first moment.
\begin{proposition}[First-order moment of the gradient]\label{prop:sparse-gradient-1st-moment} Suppose that $\theta^\star$ is fixed such that the nice event $\cE_0$ holds. Under \Cref{def:elem-setting-sparse}, we choose $\gamma \le \epsilon=o(1)$ such that $\Pr\big({\cE_m(\epsilon)}^c\big)\le O(k^{-s^\star })$ and
\begin{align}
    L = \Omega\Big(\log(d)\cdot \big(k\vee (\epsilon^{s^\star -1}\cdot k^{s^\star +1})\big)\Big).
\end{align}
Then there exists a $\{\xi_{m,l}\}_{l\in [L]}$-measurable event $\cE_{m,1}$ with $\Pr(\cE_{m,1})\ge 1-O(k^{-c_1})$ for some constant $c_1>0$, such that on $\cE_{m,1}\cap \cE_m(\epsilon)\cap \tilde\cE_m$, it holds for any $j\in [d]$ that
\begin{align}
    \dotp{\EE_{\PP_\tst}[\barg_{m}]}{e_j} & =  \EE_{\QQ}[ \zeta_\sst(y)\cdot \hatpsi_{\sst-1}(y)]\cdot \sqrt{\sst} \cdot \EE_{\tw_{m,l}}  [\dotp{\tw_{m,l}}{\theta^\star}^{\sst-1}]\theta_j^\star + R_{m,j},
\end{align}
where the expectation
\begin{align}
    \EE_{\tw_{m,l}}  [\dotp{\tw_{m,l}}{\theta^\star}^{\sst-1}]\simeq
    \begin{cases}
        \gamma\rho\cdot \big(\gamma|\rho| + k^{-1/2}|\theta_m^\star| + k^{-1}\delta_{\sst-2}\big)^{\sst-2} & \text{if $\sst$ is even};\\
         \big(\gamma|\rho| + k^{-1/2}|\theta_m^\star| + k^{-1}\delta_{\sst-1}\big)^{\sst-1}& \text{if $\sst$ is odd},
    \end{cases}
\end{align}
and $R_{m,j}$ is the remainder that can be bounded by
\begin{align}
    |R_{m,j}|&\lesssim \Big(\big(k^{-1}\vee (\gamma|\rho| + k^{-1/2} |\theta_m^\star|)\big)\cdot \big(\gamma|\rho| + k^{-1/2}|\theta_m^\star| + k^{-1}\delta_+\big)^{\sst-1} +k^{-\sst}\Big) \cdot |\theta_j^\star| \\
    &\qquad + (\gamma|\rho| +k^{-1/2}|\theta_m^\star| +k^{-1}\delta_+)^{\sst}\cdot (\gamma|\theta_j|+ k^{-1/2}\cdot(k/d)^{ \ind\{j\neq m\}/2} ) + k^{-(\sst+1)}.
\end{align}
\end{proposition}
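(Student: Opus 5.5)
The plan follows the proof of \Cref{prop:nonsparse-first-moment}, but works one coordinate at a time. Since the perturbations $\xi_{m,l}$ are independent of the data, I would first invoke \Cref{lem:first-moment-decomp} with $w_l=w_{m,l}$ and take the inner product with $e_j$. This gives
\begin{align}
\dotp{\EE_{\PP_\tst}[\barg_m]}{e_j} = \sum_{s\ge\sst}\EE_\QQ[\zeta_s\hatpsi_{s-1}]\frac{\sqrt s}{L}\sum_l \dotp{w_{m,l}}{\tst}^{s-1}\theta^\star_j + \sum_{s\ge \sst}\EE_\QQ[\zeta_s\hatpsi_{s+1}]\frac{\sqrt{s+1}}{L}\sum_l\dotp{w_{m,l}}{\tst}^{s}\dotp{w_{m,l}}{e_j}.
\end{align}
The $s=\sst$ summand of the first series is the main term. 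What remains splits into two pieces:
\begin{itemize}
\item (a) the first series at $s\ge\sst+1$, which is proportional to $\theta_j^\star$;
\item (b) the second series, which is weighted by $\dotp{w_{m,l}}{e_j}$.
\end{itemize}
On $\cE_m(\epsilon)\cap\tilde\cE_m$, \Cref{lem:polarized-weight-nice-event-sparse} gives $|\dotp{w_{m,l}}{\tst}|\le 2(\gamma|\rho|+\epsilon)<1/2$. Together with Cauchy--Schwarz on the coefficients (as in \eqref{eq:coef-bound}), this lets me sum the geometric tails. The result is $|(a)|\lesssim L^{-1}\sum_l|\dotp{w_{m,l}}{\tst}|^{\sst}\,|\theta_j^\star|$ and $|(b)|\lesssim L^{-1}\sum_l|\dotp{w_{m,l}}{\tst}|^{\sst}|\dotp{w_{m,l}}{e_j}|$.

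Next I replace empirical averages over $l$ by expectations over $w_{m,l}$. As in \eqref{eq:def_tilde_w}, I define truncated copies $\tw_{m,l}$ that vanish off the coordinatewise good event. These are i.i.d., bounded by $2(\gamma\vee\epsilon)$ in the relevant inner products, and their second moments come from \Cref{prop:sparse-w-expectation-warmup} applied with exponents $2(\sst-1)$ and $2\sst$. This is why $\cE_{0,2\sst}$ and $\cE_{0,4\sst}$ are included in $\cE_0$. Bernstein's inequality (\Cref{lem:bernstein}), with a union bound over $j\in[d]$ that costs $\log d$, then gives three things:
\begin{itemize}
\item $L^{-1}\sum_l\dotp{w_{m,l}}{\tst}^{\sst-1} = \EE[\dotp{\tw_{m,l}}{\tst}^{\sst-1}] + E$;
\item the stated order of $\EE[\dotp{\tw_{m,l}}{\tst}^{\sst-1}]$, read off from \Cref{prop:sparse-w-expectation-warmup} with $s=\sst-1$;
\item an analogous concentration for the $|\cdot|^{\sst}$ averages.
\end{itemize}
The truncation bias is at most $\Pr(\cE_m(\epsilon)^c)=O(k^{-\sst})$, which accounts for the $k^{-\sst}|\theta_j^\star|$ term. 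The Bernstein fluctuation terms $\sqrt{\log d/L}\,(\cdot)^{\sst-1}$ and $(\epsilon)^{\sst-1}\log d/L$ must be absorbed into $(\cdot)^{\sst-1}\cdot k^{-1}$ and $k^{-(\sst+1)}$. This absorption is exactly where the condition $L\gtrsim\log d\,(k\vee\epsilon^{\sst-1}k^{\sst+1})$ is used. For the $\sst$-th absolute moment I would bound by the even moment of order $2\lceil \sst/2\rceil$, or by Cauchy--Schwarz, to get $(\gamma|\rho|+k^{-1/2}|\theta^\star_m|+k^{-1}\delta_+)^{\sst}$. Here $\delta_+$ denotes the largest $\delta_s$ that appears.

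The main obstacle is piece (b): bounding $\EE[|\dotp{w}{\tst}|^{\sst}|\dotp{w}{e_j}|]$ sharply enough to obtain the factor $\gamma|\theta_j|+k^{-1/2}(k/d)^{\ind\{j\ne m\}/2}$. I would write $\dotp{w}{e_j}\approx \gamma\theta_j+\xi_{m,l,j}$ using the normalization $\|\gamma\theta+\xi\|\ge 1/2$. The $\gamma\theta_j$ part factors out directly. For the $\xi_j$ part there are two cases:
\begin{itemize}
\item If $j=m$, then $j$ always lies in the support and $|\xi_j|\lesssim k^{-1/2}$ typically.
\item If $j\ne m$, then $j\in\phi_{m,l}$ with probability about $k/d$, so by Cauchy--Schwarz $\EE|\xi_j|^2\lesssim (k/d)k^{-1}$, giving the factor $k^{-1/2}(k/d)^{1/2}$. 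This is paired with the $2\sst$-moment of $\dotp{w}{\tst}$ from \Cref{prop:sparse-w-expectation-warmup}.
\end{itemize}
The dependence between $\xi_j$ and $\dotp{\xi}{\tst}$ when $j\in\phi^\star$ only costs constants, through $\cE_{0,\infty}$ and the overlap bound $\log k$ in $\tilde\cE_m$. Any residual contribution, together with the concentration slack, is absorbed into the additive $k^{-(\sst+1)}$. Finally, I set $\cE_{m,1}$ to be the intersection of the Bernstein events.
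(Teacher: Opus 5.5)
Your proposal follows the paper's proof essentially step for step. It uses the same split from \Cref{lem:first-moment-decomp} into the leading $s^\star$ term, the higher-order $\theta^\star$-parallel remainder and the $w_{m,l}$-parallel remainder, the same geometric summation via $|\langle w_{m,l},\theta^\star\rangle|<1/2$, truncation plus Bernstein with moments from \Cref{prop:sparse-w-expectation-warmup}, and Cauchy--Schwarz with $\EE\langle w_{m,l},e_j\rangle^2\lesssim\gamma^2\theta_j^2+k^{-1}(k/d)^{\ind\{j\neq m\}}$. One point deserves explicit mention: absorbing $\sqrt{\log d/L}$ into $k^{-1}$ needs $L\gtrsim k^2\log d$, which the paper's proof uses at this step. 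The first term $k\log d$ of the stated hypothesis does not give this. The second term does, because $\Pr(\cE_m(\epsilon)^c)=O(k^{-s^\star})$ forces $\epsilon\ge k^{-1/2}$, and hence $\epsilon^{s^\star-1}k^{s^\star+1}\ge k^{(s^\star+3)/2}\ge k^2$.
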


\begin{proof}[Proof of \Cref{prop:sparse-gradient-1st-moment}]
    See \Cref{proof:sparse-gradient-1st-moment}.
\end{proof}

The statement of this proposition clarifies the leading term explicitly, which enables us to track the leading term in the strong alignment more precisely. To complete this section, we provide a proposition that characterizes the fluctuation of the gradient, serving as the counterpart of \Cref{prop:nonsparse-sample-concentration}.

\begin{proposition}[Fluctuation of mini-batch gradient]\label{prop:sparse-gradient-fluctuation}  Under the simplified setting \Cref{def:elem-setting-sparse} where $\psi$ follows \Cref{asp:oracle function}. Additionally, suppose that the sample size
\begin{align}
    n = \Omega\Big(\big((\gamma^2 + \epsilon^2\log k)^{s^\star-1} + L^{-1}\big)^{-1}\cdot \log(d)^{2C_p +2}\Big),
\end{align}
where $C_p$ is the order of the polynomial tail in \ref{asp:polynomial-like}. Then there exists a $\{(z_i,y_i)\}_{i\in [n]}$-measurable event $\cE_{m,2}$ with $\Pr(\cE_{m,2})\ge 1-O(d^{-(c+1)}/T)$, such that on $\cE_{m,2}\cap \cE_m(\epsilon)\cap \tilde\cE_m$, it holds that
\begin{align}
    \big|\dotp{\barg_m}{ e_j}-\dotp{\EE_{\PP_\tst}[\barg_m]}{ e_j}\big| \le \sqrt{\frac{\big((\gamma^2 + \epsilon^2\log k)^{s^\star-1} + L^{-1}\big)\cdot \log(d)}{n}},
\end{align}
for any $j\in [d]$.
\end{proposition}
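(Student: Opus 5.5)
The plan is to mirror the proof of \Cref{prop:nonsparse-sample-concentration}, but working coordinate-wise with $v=e_j$ and using \Cref{lem:polarized-weight-nice-event-sparse} in place of \Cref{lem:polarized-weight-nice-event}. Write $\barg_m$ as the average of $n$ i.i.d.\ copies of
\[
\hatg_1=\frac1L\sum_{l}\big(\psi(y_1,\langle w_{m,l},z_1\rangle)z_1-\hatpsi_1(y_1)w_{m,l}\big),
\]
where the perturbations $\{w_{m,l}\}$ are held fixed, so everything is conditional on the $\xi$'s. For each fixed $j\in[d]$ we need two ingredients: a polynomial-like tail bound for $\langle\hatg_1,e_j\rangle$, and a variance bound of order $(\gamma^2+\epsilon^2\log k)^{s^\star-1}+L^{-1}$. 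Given these, \Cref{lem:poly_tail_bound} yields the stated deviation with failure probability $d^{-(c+2)}/T$. A union bound over the $d$ coordinates then produces $\cE_{m,2}$. The sample-size condition makes the second (tail) term of the Bernstein-type bound, $\log(d)^{C_p+3/2}/n$, dominated by the square-root term, exactly as in \eqref{eq:g-concentration-1}.

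\textbf{Tail bound.} This step carries over verbatim from the dense case. We use integral Minkowski over the empirical measure of the $w_{m,l}$, Cauchy--Schwarz to separate $\psi$ from $\langle z,e_j\rangle$, and the Gaussian noise operator identity \eqref{eq:gaussian-noise-op} with \eqref{eq:gaussian-noise-op-2}. The latter only needs $|\langle w_{m,l},\theta^\star\rangle|<1/2$, which follows from \Cref{lem:polarized-weight-nice-event-sparse} on $\cE_m(\epsilon)\cap\tilde\cE_m$ since $\gamma,\epsilon=o(1)$. Together these give $\EE[|\langle\hatg_1,e_j\rangle|^r]^{1/r}\lesssim r^{C_p+1/2}$.

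\textbf{Variance.} Split the second moment into diagonal and off-diagonal parts. The diagonal terms $l=l'$ contribute $O(L^{-1})$ by the same fourth-moment argument used in the dense case. For the off-diagonal terms $l\neq l'$, apply \Cref{lem:g-2nd-moment} with
\[
\epsilon_2\simeq\gamma^2+\epsilon^2\log k,\qquad \epsilon_1\simeq\gamma|\rho|+\epsilon,\qquad \epsilon_0\simeq\gamma|\theta_j|+\epsilon,
\]
all read off from \Cref{lem:polarized-weight-nice-event-sparse}. The key check is that the ratios satisfy $\epsilon_1^2/\epsilon_2\lesssim1$ and $\epsilon_0^2/\epsilon_2\lesssim1$; this holds because $\epsilon^2\le\epsilon_2$ and $\gamma^2\le\epsilon_2$. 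Then the bound from \Cref{lem:g-2nd-moment} collapses to $\epsilon_2^{s^\star-1}$. The debiasing cross terms $\EE_\QQ[\hatpsi_1^2]\langle w_{m,l},e_j\rangle\langle w_{m,l'},e_j\rangle$ vanish for $s^\star\ge2$ by \ref{asp:high_pass}, and for $s^\star\le2$ they are at most $\epsilon_0^2\lesssim\epsilon_2=\epsilon_2^{s^\star-1}$ (or $O(1)=\epsilon_2^0$ when $s^\star=1$).

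\textbf{Main obstacle.} The delicate point is the off-diagonal case where the coordinate direction $e_j$ is not orthogonal to $\theta^\star$ (i.e.\ $j\in\phi^\star$). Here the indicator $\ind(v\perp\theta^\star)$ in \Cref{lem:g-2nd-moment} does not apply directly. One must decompose $e_j=\theta^\star_j\theta^\star+(e_j-\theta^\star_j\theta^\star)$ and bound the two pieces separately, using $|\theta^\star_j|\le1$ and noting that the orthogonal part has $\epsilon_0$ still of order $\gamma|\theta_j|+\epsilon+|\theta_j^\star|\,\epsilon_1$. I would first try applying \Cref{lem:g-2nd-moment} to the normalized orthogonal part and recombining by Cauchy--Schwarz. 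That lemma was stated for arbitrary unit $v$ with the relevant projections, so the same $\epsilon_2^{s^\star-1}$ bound should follow.
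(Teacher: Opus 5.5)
Your proposal is correct and follows the paper's proof essentially step for step. It uses the same $O(L^{-1})$ diagonal bound, and the same application of \Cref{lem:g-2nd-moment} for $l\neq l'$ with overlaps read off from \Cref{lem:polarized-weight-nice-event-sparse} and the ratio check against $\gamma^2+\epsilon^2\log k$. It also uses the same $r^{C_p+1/2}$ moment bound via the Gaussian noise operator, and \Cref{lem:poly_tail_bound} with a union bound over the $d$ coordinates.

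The obstacle you flag for $j\in\phi^\star$ is not one. The first display of \Cref{lem:g-2nd-moment} holds for arbitrary unit $v$. Once $\epsilon_1^2,\epsilon_0^2\lesssim\epsilon_2$ (in your notation), every term in it, including those carrying $|\langle\theta^\star,v\rangle|$, is $\lesssim\epsilon_2^{s^\star-1}$, and this is exactly how the paper treats $v=e_j$. If you nonetheless split $e_j=\theta_j^\star\theta^\star+(e_j-\theta_j^\star\theta^\star)$, do it on $\langle \hat g_1,e_j\rangle$ via $(a+b)^2\le 2a^2+2b^2$. Do not do it inside the individual $l\neq l'$ terms: there $\EE[\psi(y,\langle w,z\rangle)\psi(y,\langle w',z\rangle)\,zz^\top]$ is indefinite, so Cauchy--Schwarz cannot recombine the cross piece.
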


\begin{proof}[Proof of \Cref{prop:sparse-gradient-fluctuation}]
    See \Cref{proof:sparse-gradient-fluctuation}.
\end{proof}

With these propositions, we have completed the preparation for the analysis of the gradient step and are ready to move on to the proof of the main theorem.

\subsection{Proof of the Main Theorem}

\begin{proof}[Proof of \Cref{thm:sparse}]
~\paragraph{Preparations} We first clarify the final good event that we will use throughout the proof. We first fix $\epsilon= k^{-1/2} \cdot \log k$, then it holds by \Cref{lem:sphere_coordinate_tail} that for each $m$ and $t$, we have
\begin{align}
    \Pr\Big({\cE_m^\t(\epsilon)}^c\Big) &\le  Ld\cdot O( \exp\{-ck\} +  k^{-\log k/4}).
\end{align}
Since $L$ and $d$ are at most polynomials in $k$, we see that for sufficiently large $k$, it holds that $ \Pr\Big({\cE_m^\t(\epsilon)}^c\Big) \le k^{- s^\star}$. Additionally, we see that $\gamma = k^{-1/2}$ is fixed and our parameter configuration
\begin{align}
    n = \Omega\Big( (k \log^3 k)^\sst \cdot \log d\Big) , \quad L  = \Omega(k^{(\sst+3)/2}\cdot \log(k)^{\sst-1 })
\end{align}
are clearly compatible with the conditions in \Cref{prop:sparse-gradient-1st-moment} and \Cref{prop:sparse-gradient-fluctuation}. At the $t$-th step, the mini-batch $\{(z_i^\t,y_i^\t)\}_{i\in [n]}$, $\theta = \theta_m^\t$ and the error-free gradient
\begin{align}
    \barg_m^\t = \frac{1}{nL}\sum_{l=1}^L\sum_{i=1}^n \big(\psi(y_i^\t, \dotp{w_{m,l}^\t}{z_i^\t})\cdot z_i^\t - \hatpsi_1 (y_i^\t)\cdot w_{m,l}^\t\big) \label{eq:sparse-gradient-bar-t}
\end{align}
together form an instance of \Cref{def:elem-setting-sparse}, for which we can find the event $\cE_{m,1}^\t$ and $\cE_{m,2}^\t$, both with probability at least $1-O(d^{-c-1}/T)$, such that on $\cE_{m,1}^\t\cap \cE_{m,2}^\t$ the results in \Cref{prop:sparse-gradient-1st-moment} and \Cref{prop:sparse-gradient-fluctuation} hold.

The gradient we use in the algorithm differs from Eq.~\eqref{eq:sparse-gradient-bar-t} by the error term $\err_{m,l,i}^\t\cdot z_i^\t$. To control this difference, we define
\begin{align}
    \cE_{m,3}^\t  &= \Big\{\sup_{i} \norm{z_i^\t}\le \sqrt{d} \Big\}.
\end{align}
Given our specification on $\sup_{m,l,i,t} \err_{m,l,i}^\t$, it holds on $\cE_{m,3}^\t$ that for any $v\in \SS^{d-1}$:
\begin{align}
   \big|\norm{g_m^\t}_2 -\norm{\barg_m^\t}_2 \big|\vee |\dotp {g_m^\t}{v} - \dotp{\barg_m^\t}{v}|  &\le \norm{g_m^\t - \barg_m^\t}_2 \\
   &\le \sup_{i} \norm{z_i^\t}\cdot \sup_{l} \norm{\err_{m,l,i}^\t}_2\\
   &\le d^{-9\sst}. \label{eq:sparse-gradient-difference-advnoise}
\end{align}
Our final event is fixed to be
\begin{align}
    \cE = \cE_0 \cap \bigcap_{m\in [d]} \bigcap_{t=1}^T \Big(\cE_m^\t(\epsilon)\cap \tilde\cE_m \cap \cE_{m,1}^\t \cap \cE_{m,2}^\t \cap \cE_{m,3}^\t\Big).
\end{align}
With union bound, we have that $\Pr(\cE)\ge 1-O(d^{-c})$ for some constant $c>0$.

To avoid confusion, we denote for each $m$ that $\cg_m^\t = \EE_{\PP_{\tst}}[\barg_m^\t]$.
Later we will encounter some data dependent index $\hatm$ and using $\cg_{\hatm}^\t$ avoids the ambiguity of the expectation. With our choice of $n$ and $L$, \Cref{prop:sparse-gradient-fluctuation} guarantees that for any $m,t,j$, it holds that
\begin{align}
    \big|\barg_{m,j}^\t - \cg_{m,j}^\t\big| &\lesssim k^{-(\sst -1/2)}\cdot \log^{-3/2}k.   \label{eq:sparse-gradient-fluctuation-bound}
\end{align}
In the sequel, we will drop the superscript $t$ whenever there is no ambiguity. We will frequently use $\tg_m^\t = P_{\mathsf{Top}_k (g_m^\t )}(g_m^\t)$.

\paragraph{Weak alignment}
The proof towards the weak alignment in the initial step comprises three parts. In the first place, we will show that the index of the gradient we choose $\hatm$ guarantees that $|\theta_{\hatm}^\star| \gtrsim k^{-1/2}$.
Thereby, the corresponding gradient exhibits good alignment towards the signal.
Based on this, we can show that the support we choose $\mathsf{Top}_{k}(g_{\hatm})$ is of considerable quality by successfully identifying $\phi^\aast = \{j: |\theta_j^\star|\ge  1/ \sqrt{2k}\}$.
Combining these elements, we can show that the gradient $\tg_{\hatm}$ is well-aligned with the signal $\theta^\star$.

We begin with analyzing the quality of $g_{\wh m }$ where $\wh m= \argmax_m \norm{\tg_{m}}_2 $.
With this objective in mind, we first work on deriving a signal-dependent upper bound for $\norm{\tg_{\hatm}}$.
Note that $\rho_m = \dotp{\theta_m}{\theta^\star}= \theta_m^\star$, and hence $|\rho_m|=|\theta_m^\star|$, where $\theta_m = e_m$ is the initial weight.
Applying \Cref{prop:sparse-gradient-1st-moment}, we have for any $\phi,|\phi| =k$ that
\begin{align}
    \sum_{j\in \phi} |\cg_{\wh m,j}|^2 &\lesssim  (k^{-1/2} |\theta_{\hatm}^\star|)^{2\ind \{s^\star \text{ even}\}} \cdot \big(k^{-1/2}|\theta_{\wh m}^\star| +k^{-1}\delta_+\big)^{2(s^\star-1-\ind\{s^\star \text{ even}\})} \cdot \sum_{j\in \phi}{\theta^\star_j}^2  \\
    &\qquad +  \big(k^{-2}\vee   (k^{-1/2}|\theta_{\hatm}^\star|)^2 \cdot ( k^{-1/2}|\theta_{\hatm}^\star| + k^{-1}\delta_+)^{2(\sst-1)} + k^{-2\sst} \big) \sum_{j\in \phi}  {\theta_j^\star}^2 \\
    &\qquad +   (k^{-1/2}|\theta_{\wh m}^\star|+k^{-1}\delta_+)^{2s^\star }
    \cdot  k^{-1} \sum_{j\in \phi}\Big(\theta_j^2 + (k/d)^{\ind\{j\neq m\}}\Big) +k^{-(2\sst+1)}
\end{align}
Note that we have $\norm{\theta^\star}_\infty\lesssim k^{-1/2}\cdot \log k $, it holds that
\begin{align}
    k^{-2} \vee(k^{-1} \cdot |\theta_\hatm^\star|^2) \cdot(k^{-1/2 }| \theta^\star_\hatm| + k^{-1}\delta_+)^{2(\sst-1)} & \lesssim k^{-2\sst} \cdot \log(k)^{2\sst}.
\end{align}
For any $\phi$ such that $|\phi|=k$, we have $ \sum_{j\in \phi} \theta_j^2 \le 1$ for any $\theta$ such that $\norm{\theta}_0 = k$. Therefore, we can further upper bound this quantity by
\begin{align}
    \sup_{|\phi|=k }\sum_{j\in \phi} |\cg_{\wh m,j}|^2  &\lesssim  (k^{-1/2} |\theta_{\hatm}^\star |+ k^{-1}\delta_+)^{2\sst-2} +k^{-2\sst} \cdot \log(k)^{2\sst} \\
    &\qquad + (k^{-1/2} |\theta_{\hatm}^\star |+ k^{-1}\delta_+)^{2\sst} \cdot  k^{-1} \\
    &\lesssim(k^{-1/2} |\theta_{\hatm}^\star |+ k^{-1}\delta_+)^{2\sst-2}  \\ &\qquad + k^{-2\sst}\cdot  \log^{s^\star/2} k +k^{-2\sst}\cdot \log(k)^{2\sst}.
    \label{eq:sparse-signal-g-topk-norm-upperbound-0}
\end{align}
Combining Eq.~\eqref{eq:sparse-signal-g-topk-norm-upperbound-0}, Eq.~\eqref{eq:sparse-gradient-fluctuation-bound} and Eq.~\eqref{eq:sparse-gradient-difference-advnoise}, we conclude that
\begin{align}
    \sup_{|\phi|=k}\sum_{j\in \phi }| g_{\widehat{m},j}|^2&\lesssim  k\cdot \sup_j |g_{\hatm,j} - \barg_{\hatm,j}|^2 +  k \cdot \sup_j |\barg_{\hatm,j} - \cg_{\hatm,j}|^2 + \sup_{|\phi|=k}\sum_{j\in \phi} |\cg_{\hatm,j}|^2\\
    &\lesssim   k^{-(\sst-1)}\cdot( |\theta^\star_{\hatm}|+ k^{-1/2}\delta_+)^{2 \sst-2}+k^{-2\sst }\cdot  \log(k)^{2\sst} .
    \label{eq:sparse-signal-g-topk-norm-upperbound-1}
\end{align}
By definition of $\tg_m$,  we can further conclude that
\begin{align}
\norm{\tg_\hatm}_2^2  &= \sup_{|\phi|=k }  \sum_{j\in \phi} |\dotp{g_{\wh m}}{e_j}|^2\\
&\lesssim k^{-(s^\star-1)} \cdot|\theta_{\wh m}^\star|^{2(s^\star-1)} + o\big(k^{-2(s^\star-1)}\big).
 \label{eq:sparse-signal-g-topk-norm-upperbound-2}
\end{align}
On the other hand, for $m\in \phi^\aast$, we have $(\gamma|\rho| +k^{-1/2}|\theta_m^\star| +k^{-1}\delta_+)^r \gtrsim k^{-r}$.  Then it holds by \Cref{prop:sparse-gradient-1st-moment} that
\begin{align}
    \sum_{j\in \phi^\star} |\cg_{m,j}|^2 &\gtrsim  k^{-2(s^\star-1)}  - \widetilde{O}(k^{-2s^\star}),
\end{align}
and by Eq.~\eqref{eq:sparse-gradient-fluctuation-bound} and~\eqref{eq:sparse-gradient-difference-advnoise}, we have that
\begin{align}
    \norm{\tg_m}_2^2 &\gtrsim k^{-2(s^\star-1)} - \widetilde{O}(k^{-2s^\star}). \label{eq:sparse-signal-g-topk-norm-lowerbound-1}
\end{align}
Now, combining Eq.~\eqref{eq:sparse-signal-g-topk-norm-lowerbound-1} with Eq.~\eqref{eq:sparse-signal-g-topk-norm-upperbound-2} by the definition of $\hatm$, we have that
\begin{align}
   k^{-(\sst-1)} |\theta_{\wh m}^\star|^{2(s^\star-1)} + o\big(k^{-2(s^\star-1)}\big)  \gtrsim k^{-2(s^\star-1)} - \widetilde{O}(k^{-2s^\star}) . \label{eq:sparse-signal-g-topk-norm-lowerbound-2}
\end{align}
We conclude the first step from the last inequality that there exists a global constant $c_{1}>0$ such that for sufficiently large $k$:
\begin{align}
 |\theta_{\wh m}^\star| \ge  \Big(c_1\cdot \big(1- o(1)\big) \cdot k^{-2(s^\star-1)}  \cdot k^{s^\star-1}\Big)^{1/(2(s^\star-1))} \ge c_1' k^{-1/2}.  \label{eq:sparse-signal-selected}
\end{align}

Now we move on to the support identification. We have shown that $|\theta^\star_\hatm| \gtrsim k^{-1/2}$. To establish $ \phi^\aast  \subset \hat\phi = \mathsf{Top}_k(g_\hatm)$, it is sufficient to demonstrate that
\begin{align}
    \sup_{j\notin \phi^\star} |g_{\hatm,j}|  \le \inf_{j\in \phi^\star} |g_{\hatm,j}|.\label{eq:relation}
\end{align}
In the following, we will bound each side separately. Consider $j\notin \phi^\star$, we have by \Cref{prop:sparse-gradient-1st-moment} that
\begin{align}
    |\cg_{\hatm,j}|  &\lesssim \big( k^{-1/2}|\theta_{\hatm}^\star| + k^{-1}\delta_+\big)^{\sst}\cdot\big( k^{-1/2} |\theta_j| + k^{-1/2}(k/d)^{1/2}\big)+ k^{-(\sst+1)}.
\end{align}
Combining this upper bound with Eq.~\eqref{eq:sparse-signal-selected} and Eq.~\eqref{eq:sparse-gradient-fluctuation-bound} gives us that
\begin{align}
    |g_{\hatm,j}| &\le  |g_{\hatm,j} - \barg_{\hatm,j}| + |\barg_{\hatm,j} - \cg_{\hatm,j}| + |\cg_{\hatm,j}| \\
    &\lesssim  (k^{-1/2} \cdot | \theta_{\hatm}^\star|+k^{-1}\delta_+)^{\sst} \cdot\big(k^{-1/2} |\theta_j|+ k^{-1/2}(k/d)^{1/2}\big) +k^{-(\sst+1)} \\ &\qquad +
    d^{-9\sst} + k^{-(\sst-1/2)}\cdot \log^{-3/2}k \\
    &\lesssim  k^{-(\sst-1/2)}\cdot \log^{-3/2}k + \widetilde{O}(k^{-(\sst+1)}).\label{eq:sparse-signal-g-coordinate-case-m-off-support}
\end{align}

 On the other hand, for $j\in \phi^\aast$, we have that
\begin{align}
    |\cg_{\hatm,j}| &\gtrsim |k^{-1/2}\theta_{\hatm}^\star + k^{-1}\delta_{+}|^{s^\star-1}\cdot |\theta_j^\star|\\
    &\gtrsim k^{-(\sst-1/2)}
\end{align}
Similarly, it holds that
\begin{align}
    |g_{\hatm,j}| &\gtrsim |\cg_{\hatm,j}|-|g_{\hatm,j} - \barg_{\hatm,j}| -|\barg_{\hatm,j} - \cg_{\hatm,j}|  \\
    &\gtrsim  k^{-(\sst-1/2)}- o(k^{-(\sst-1/2)}).  \label{eq:sparse-signal-g-coordinate-case-m-on-support}
\end{align}
Comparing Eq.~\eqref{eq:sparse-signal-g-coordinate-case-m-off-support} and Eq.~\eqref{eq:sparse-signal-g-coordinate-case-m-on-support}, we successfully validate Eq.~\eqref{eq:relation}, and consequently $\phi^\aast\subset \hat\phi$.

We complete our proof of weak alignment by analyzing the inner product between $\tg_{\hatm}$ and $\theta^\star$ and the  norm of $\tg_{\hatm}$ respectively.
Since $|\tst_\hatm|\gtrsim k^{-1/2}$, we have that
\begin{align}
    (\gamma|\rho_\hatm|+k^{-1/2}\cdot|\tst_\hatm|+k^{-1}\delta_+)^{s^\star-1} \wedge (\gamma|\rho_\hatm|+k^{-1/2}\cdot|\tst_\hatm|+k^{-1}\delta_+)^{s^\star-2} \cdot |\gamma \rho_\hatm| \gtrsim k^{-(s^\star-1)/2}\cdot |\theta^\star_\hatm|^{\sst-1}
\end{align}
For the inner product, we apply \Cref{prop:sparse-gradient-1st-moment} for $\hatm$ and each $j\in \phi^\aast$ and get that
\begin{align}
g_{\hatm,j}\cdot \theta_j^\star &\simeq   {{\theta^\star_j}^2}\cdot  k^{-(\sst-1)/2}\cdot |\theta^\star_{\hatm}|^{\sst-1 }\cdot \sign({\theta^\star_\hatm})  \\&\qquad-  \big(  |R_{\hatm,j}|+   |g_{\hatm,j}- \barg_{\hatm,j}| + |\barg_{\hatm,j}-\cg_{\hatm,j}| \big) \cdot|{\theta^\star_j}|.\label{eq:sparse-up-step1-0}
\end{align}
Since $ \phi^\aast \subset \hat\phi$, we can lower bound the summation of the leading term as
\begin{align}
\sum_{j\in \hat\phi}{\theta_j^\star}^2  \cdot k^{-(\sst-1)/2}\cdot |\theta^\star_\hatm|^{\sst-1} &\gtrsim  \sum_{j\in \phi^\aast}{\theta_j^\star}^2  \cdot k^{-(\sst-1)/2}\cdot |\theta^\star_\hatm|^{\sst-1} \\
&\gtrsim  k^{-(\sst-1)/2}\cdot |\theta^\star_\hatm|^{\sst-1}\cdot   k^{-1}\cdot |\phi^\aast| \\
&\ge  k^{-(\sst-1)/2}\cdot |\theta^\star_\hatm|^{\sst-1}, \label{eq:sparse-up-step1-1}
\end{align}
where the last line holds by the definition of $\cE_{0,\sharp}\subset \cE_0$.  For the term associated with $R_{\hatm,j}$, we have by the characterization in \Cref{prop:sparse-gradient-1st-moment} that
\begin{align}
    \sum_{j \in \hat\phi} |R_{\hatm,j}| \cdot |\theta_j^\star|&\lesssim (k^{-\sst} \cdot \log (k)^{\sst} +  k^{-\sst}) \cdot \sum_{j\in  \phi^\star}  |\theta_j^\star|^2  \\
    &\qquad +  k^{-\sst}\cdot \log(k)^{\sst} \cdot k^{-1/2}\cdot \Big(\sum_{j\in\hat\phi}|\theta_j| \cdot|\theta^\star_j|+ \sum_{j\in\hat\phi}|\theta^\star_j|\cdot(k/d)^{\ind\{j\neq m\}}\Big)  \\&\qquad + k^{-(\sst+1)} \cdot \sum_{j\in \hat\phi} |\theta_j^\star|.
\end{align}
Applying the Cauchy-Schwarz inequality, we have that
\begin{align}
    \sum_{j\in\hat\phi}|\theta_j^\star|\cdot\big( |\theta_j| +(k/d)^{\ind\{j\neq m\}}\big) &\le  (\sum_{j\in [d]} |\theta^\star_j|^2)^{1/2}\cdot \Big((\sum_{j\in [d]} |\theta_j|^2)^{1/2} +(1+k^3/d^2)^{1/2}\Big) \\ &=O(1). \label{eq:sparse-up-step1-2-ip}
\end{align}
And therefore
\begin{align}
    \sum_{j \in \hat\phi} |R_{\hatm,j}| \cdot |\theta_j^\star|&\lesssim  k^{-\sst} \cdot \log (k)^{\sst} +  k^{-\sst} + \widetilde{O}(k^{-(\sst+1/2)}). \label{eq:sparse-up-step1-2}
\end{align}
For the rest of the error terms, we have by Eq.~\eqref{eq:sparse-gradient-fluctuation-bound} and Eq.~\eqref{eq:sparse-gradient-difference-advnoise} that
\begin{align}
    \sum_{j\in \hat\phi}   \big(|g_{\hatm,j} -\barg_{\hatm,j}| +  |\barg_{\hatm,j}-\cg_{\hatm,j}|\big)\cdot |\theta_j^\star| &\lesssim   \sum_{j\in \phi^\star} |\theta_j^\star| \cdot k^{-(\sst-1/2)}\cdot \log(k)^{-3/2} \\
    &\le  k^{-(\sst-1)} \cdot \log(k)^{-3/2}. \label{eq:sparse-up-step1-3}
\end{align}
Combining Eq.~\eqref{eq:sparse-up-step1-0}, Eq.~\eqref{eq:sparse-up-step1-1}, Eq.~\eqref{eq:sparse-up-step1-2} and Eq.~\eqref{eq:sparse-up-step1-3}, we have that
\begin{align}
|\dotp{\tg_\hatm}{\theta^\star}|  \gtrsim  k^{-(\sst-1)/2}\cdot |\theta^\star_\hatm|^{\sst-1} - o(k^{-(\sst-1)}).
\end{align}
For the norm, we already have in Eq.~\eqref{eq:sparse-signal-g-topk-norm-upperbound-2} that
\begin{align}
    \norm{\tg_{\hatm}}_2 &\lesssim  k^{-(\sst-1)/2}\cdot |\theta_{\wh m}^\star|^{(\sst-1)} + o(k^{-(\sst-1)}).
\end{align}

Combining the last two inequalities, we conclude that
\begin{align}
    \frac{|\dotp{P_{\hat\phi}g_{\hatm}}{ \theta^\star}|}{\norm{P_{\hat\phi}g_{\hatm}}} &\gtrsim  \frac{|\theta_{\wh m}^\star|^{s^\star-1} - o(k^{-(\sst-1)/2})}{|\theta_{\wh m}^\star|^{s^\star-1} + o(k^{-(s^\star-1)/2})} = \Theta(1),
\end{align}
given that $|\theta^\star_{\wh m }| \ge c'_1 k^{-1/2}$ for some constant $c'_1>0$. This concludes the proof of weak alignment.

\paragraph{Strong Alignment}
Starting from the second step, we have by \Cref{alg:sparse-warm-up} that all the neurons share the same weight parameter.
Let $\theta$ be the weight parameter in any step after the first step and we suppose that $\rho = \dotp{\theta}{\theta^\star} = \Theta(1)$.
From \Cref{prop:sparse-gradient-1st-moment}, we see that now the choice of the gradient should not change the quality of the gradient significantly, as $\gamma|\rho|\gg k^{-1/2} |\theta_m^\star|$ for any $m\in [d]$.
Therefore, we start by analyzing the alignment increment using any $g_m$.
This alteration does not affect the alignment increment, but substantially simplifies the analysis.

We additionally define a support $\phi^{\dagger} = \{j\in [d]:|\theta_j^\star| \ge k^{-1}\}$.
In the following, we fix an arbitrary $m\in [d]$.
We begin with analyzing the magnitude of $g_{m,j}$ for $j\in \phi^\dagger\setminus \{m\}$.
Since $|\rho|=\Omega(1)$, it holds that
\begin{align}
    (\gamma|\rho|+k^{-1/2}\cdot |\theta_m^\star|+k^{-1}\delta_{s^\star -1})^{\sst-1}\simeq (\gamma|\rho|+k^{-1/2}\cdot |\theta_m^\star|+k^{-1}\delta_{s^\star -2})^{\sst-2}\cdot (\gamma|\rho|)\simeq k^{-(\sst-1)/2}.
\end{align}
With triangle inequality, \Cref{prop:sparse-gradient-1st-moment} indicates that, for $j\in \phi^\dagger$:
\begin{align}
|g_{m,j}| &\ge |\cg_{m,j}| - |\barg_{m,j} -\cg_{m,j}| - |\barg_{m,j}-  g_{m,j}|\\
&\ge  k^{-(\sst-1)/2}\cdot |\theta_j^\star|  -  k^{-\sst/2}\cdot |\theta_j^\star|\\ &\qquad- k^{-(\sst+1)/2}\cdot (|\theta_j|+ (k/d)^{\ind\{j\neq m\}/2 }) - \widetilde{O}( k^{-(\sst-1/2)}).
\end{align}
Similarly, we have for  $j\notin \phi^\star$ that
\begin{align}
|g_{m,j}| &\le |\cg_{m,j}| +|\barg_{m,j} -\cg_{m,j}| + |\barg_{m,j}-  g_{m,j}|\\
&\lesssim k^{-(\sst+1)/2} \cdot ( |\theta_j|+(k/d)^{\ind\{ j\neq m\}/2})  + \widetilde{O}(k^{-(\sst-1/2)}).
\end{align}
Comparing last two inequalities, we have that $|\theta_j^\star|>k^{-1}$ implies that $|g_{m,j}|\ge \max_{j\notin \phi^\star} |g_{m,j}|$ for sufficiently large $k$, and therefore
\begin{align}
    \min_{j\in \phi^\dagger} |g_{m,j}| > \max_{j\notin \phi^\star} |g_{m,j}|,
\end{align}
which means that $\phi^\dagger \subset \hat\phi_m =  \mathsf{Top}_k(g_m)$. Thereby, we have
\begin{align}
    \sum_{j\in \hat\phi_m} |\theta_j^\star|^2 \ge 1- \sum_{j\notin \phi^\dagger} |\theta_j^\star|^2\ge 1- k^{-1}.
\end{align}

We then move on to the alignment analysis. Returning to Eq.~\eqref{eq:sparse-signal-0}, we define
\begin{align}
    \beta_m(\rho, \{\xi_{m,l}\}_{l\in [L]}, \theta^\star,\theta) &= \frac{\EE_{\QQ}[\zeta_{\sst}(y)\cdot \hatpsi_{\sst-1}(y)]\cdot \sqrt{\sst}\cdot \EE_{\tw_{m,l}}[\dotp{w_{m,l}}{\theta^\star}^{\sst-1}]}{\sign(\rho)^{\ind \{s^\star \text{ even}\}}\cdot (\gamma|\rho|)^{s^\star-1}}; \\
    r_{m,j}(\rho, \{\xi_{m,l}\}_{l\in [L]}, \theta^\star,\theta) &=  \EE_{\PP_{\theta^\star}}[\dotp{\barg_m}{e_j}] - \theta_j^\star \cdot (\gamma |\rho|)^{s^\star-1} \cdot\sign(\rho)^{\ind \{s^\star \text{ even}\}}\cdot \beta_m.
    \label{eq:sparse-signal-g-coordinate-representation}
\end{align}
Then it follows from  \Cref{prop:sparse-gradient-1st-moment} that whenever $|\rho|= \Omega(1)$, we have that $\beta_m >0$ and
\begin{align}
\beta_m \vee \beta_m^{-1}&<B\\
 |r_{m,j}|& \le r_{m,j}^a + r_{m,j}^b, \label{eq:sparse-signal-g-coordinate-representation-0}
\end{align}
where
\begin{align}
    r_{m,j}^a &\le C_a k^{-(s^\star+1)/2}\cdot \big( |\theta_j| + (k/d)^{\ind\{j\neq m \}/2}\big);  \\  r_{m,j}^b &\le  C_b \cdot |\theta_j^\star|\cdot (\gamma|\rho |)^{ s^\star}
 \label{eq:sparse-signal-g-coordinate-representation-1}
\end{align}
with some global positive constant $B, C_a$ and $C_0$, whenever the designated parameters are compatible with the definition of our nice event.
With this representation, we can deduce by Eq.~\eqref{eq:sparse-gradient-fluctuation-bound} and Eq.~\eqref{eq:sparse-gradient-difference-advnoise} that
\begin{align}
 |\dotp{\tg_m}{\theta^\star}|&= \Big|\sum_{j \in \wh\phi_m} \dotp{\cg_m}{e_j}\cdot \theta_j^\star \Big| + \sum_{j\in \wh \phi_m}\big( | \dotp{g_m}{e_j} - \dotp{\barg_m}{e_j}| + |\dotp{\barg_m}{e_j} - \dotp{\cg_m}{e_j}|\big)\cdot |\theta_j^\star|\\
 \\
&\ge   \Big| \sum_{j\in \wh \phi_m  }\dotp{\cg_m}{e_j} \cdot \theta_j^\star \Big| -  \sum_{j\in \phi^\star} |\theta^\star_j|\cdot  O(k^{-(s^\star-1/2)}\cdot  \log^{-3/2}k )\\
&\ge  \beta_m   \cdot (\gamma |\rho|)^{s^\star-1}\cdot \sum_{j\in \wh \phi_m } |\theta_j^\star|^2  - \sum_{j\in \phi^\star }   |r_{m,j }|\cdot |\theta^\star_j |   -  O(k^{-(s^\star-1)}\cdot  \log^{-3/2}k )\\
&\ge  \big (\beta_m   \cdot (\gamma |\rho|)^{s^\star-1} - C_b\cdot(\gamma |\rho|)^{\sst}\big)\cdot \sum_{j\in \wh\phi_m} |\theta_j^\star|^2 - 2C_a\cdot k^{-(s^\star+1)/2} -O( k^{-(s^\star-1)}\cdot  \log^{-3/2}k ).\label{eq:sparse-signal-strong-alignment-innerproduct}
\end{align}
where the last inequality holds by Eq.~\eqref{eq:sparse-up-step1-2-ip}.
To complete the analysis, we need an upper bound for $\norm{\tg_m}_2$. Note that by the triangle inequality and Eq.~\eqref{eq:sparse-signal-g-coordinate-representation}, we have that
\begin{align}
\norm{ \tg_m  } &\le\norm{P_{\wh\phi_m}\cg_m}  + \norm{P_{\wh\phi_m}\barg_m - P_{\wh\phi_m}\cg_m } +\norm{P_{\wh\phi_m}\barg_m - P_{\wh\phi_m}g_m} \\
&\le \norm{ \beta_m (\gamma|\rho|)^{s^\star-1}\cdot  P_{\wh\phi_m}\theta^\star} + \norm{ P_{\wh\phi_m} r_m^a } + \norm{P_{\wh \phi_m}r_m^b}\\&\qquad + O(k^{-(s^\star-1)}\cdot \log^{-3/2}k),\label{eq:sparse-signal-strong-alignment-norm-1}
\end{align}
where $r_m^a = (r_{m,1},\dots,r_{m,d}^a )\in \RR^d$ and $r_m^b = (r_{m,1},\dots,r_{m,d}^b )\in \RR^d$. To proceed, note that by Eq.~\eqref{eq:sparse-signal-g-coordinate-representation-1} and Eq.~\eqref{eq:sparse-gradient-fluctuation-bound}, we have that
\begin{align}
    \norm{P_{\wh\phi} r_m^a } &\le  C_a k^{-(s^\star+1)/2}\cdot (\norm{\theta}_2 + \sqrt{1+k^2/d})\\ &\le 3C_a k^{-(s^\star+1)/2};\\
    \norm{P_{\wh\phi} r_m^b } &\le  C_b\cdot (\gamma|\rho|)^{s^\star}\cdot \Big({\sum_{j\in \wh \phi_m} |\theta^\star_j|^2}\Big)^{1/2}.
    \label{eq:sparse-signal-strong-alignment-norm-2}
\end{align}
Putting these upper bounds together, we have that
\begin{align}
    \norm{P_{\wh \phi}g_m}_2 &\le \big(\beta_m \cdot (\gamma |\rho|)^{s^\star-1} +C_b (\gamma |\rho|)^\sst\big)\cdot\Big({\sum_{j\in \wh \phi_m} |\theta^\star_j|^2}\Big)^{1/2}\\&\qquad + 3C_a k^{-(s^\star+1)/2} + O(k^{-(s^\star-1)}\cdot \log^{-3/2}k).
    \label{eq:sparse-signal-strong-alignment-norm-3}
\end{align}
Note that for any $a_1\wedge a_2 >b>0 $, it holds that
\begin{align}
    \frac{a_1-b}{a_2+b} = & \frac{(a_1-b)\cdot (a_2 - b)}{a_2^2-b^2}\ge (a_1/a_2 -b/a_2) \cdot(1-b/a_2).
\end{align}
Setting $\Delta = k^{-1}\vee \big(k^{-(s^\star-1)/2}\cdot \log^{-3/2}k\big)=o(1)$, we get by combining Eq.~\eqref{eq:sparse-signal-strong-alignment-innerproduct} and Eq.~\eqref{eq:sparse-signal-strong-alignment-norm-3} that
\begin{align}
    \frac{\dotp{\tg_m}{\theta^\star}}{\norm{\tg_m}} &\ge
\frac{(\beta_m -C_b \cdot\gamma|\rho|)\cdot \Big(\sum_{j\in \hat\phi_m} {\theta_j^\star}^2\Big)-3C_a \cdot k^{-1} - O(k^{-(\sst-1)/2}\cdot \log^{-3/2}k )}{ \big(\beta_m +  C_b\cdot \gamma|\rho|\big)\cdot \Big(\sum_{j\in \hat\phi_m} {\theta_j^\star}^2\Big)^{1/2} +3C_a\cdot  k^{-1}+O(k^{-(\sst-1)/2}\cdot \log^{-3/2} k)}
    \\
    &\ge  \big(1-  O (\Delta)\big)^{-1}\cdot \Big(\frac{1-C_b \beta_m^{-1}\gamma|\rho|}{1+C_b \beta_m^{-1}\gamma|\rho|} \cdot (\sum_{j\in\hat\phi_m}  {\theta_j^\star}^2 )^{1/2} - O(\Delta)\Big) \\
    &\ge 1-C\cdot k^{-1} - O(\Delta)
\end{align}
where the last line holds because $(1-Ck^{-1})^{r}\ge 1-rC\cdot k^{-1}$ for any $C,r>0$ and sufficiently large $k$. Note that  $k^{-1}= O(\Delta)$, we see that
\begin{align}
    \frac{\dotp{\tg_m}{\theta^\star}}{\norm{\tg_m}} &\ge 1- O(\Delta).
\end{align}
Since \Cref{alg:sparse-warm-up} updates all neurons by gradient sharing in Line~\ref{alg:sparse_update}, the same normalized vector is assigned to every $\theta_m^{(T)}$. Therefore the above alignment bound holds for all $m\in[M]$, concluding the proof of \Cref{thm:sparse}.

\end{proof}

\subsection{Proof of the Key Results}

\subsubsection{Proof of \texorpdfstring{\Cref{prop:sparse-gradient-1st-moment}}{the sparse first-moment proposition}}\label{proof:sparse-gradient-1st-moment}
\begin{proof}[Proof of \Cref{prop:sparse-gradient-1st-moment}]

First, by \Cref{lem:first-moment-decomp}, we have for each $j\in [d]$ that
\begin{align}
\dotp{\EE_{\PP_\tst}[\barg_m]}{e_j }  &=   \sum_{s\ge \sst}  \EE_{\QQ}[\zeta_s(y)\cdot \hatpsi_{s+1}(y)] \cdot \frac{\sqrt{s+1}}{L} \sum_{l=1}^L \dotp{w_{m,l}}{\tst}^{s} \cdot \dotp{w_{m,l}}{e_j}  \\
&\qquad +  \sum_{s\ge \sst}\EE_{\QQ}[\zeta_s(y)\cdot \hatpsi_{s-1}(y)]\cdot  \frac{\sqrt{s}}{L} \cdot \sum_{ l=1}^L \dotp{w_{m,l}}{\tst}^{s-1 } \cdot \dotp{\tst}{e_j}. \\
&= \EE_{\QQ}[\zeta_{\sst}(y)\cdot \hatpsi_{\sst-1}(y)] \cdot \frac{\sqrt{\sst}}{L} \sum_{l=1}^L \dotp{w_{m,l}}{\tst}^{\sst-1} \cdot \dotp{\tst}{e_j}\\ &\qquad+ R_1+R_2,
\label{eq:sparse-signal-0}
\end{align}
where the remainders $R_1,R_2$ are defined as
\begin{align}
R_1 &= \sum_{s\ge \sst}   \EE_{\QQ}[\zeta_{s+1}(y)\cdot \hatpsi_{s }(y)] \cdot \frac{\sqrt{s+1 }}{L} \sum_{l=1}^L \dotp{w_{m,l}}{\tst}^{s} \cdot \dotp{\tst }{e_j};\\
R_2 &=  \sum_{s\ge \sst} \EE_{\QQ}[\zeta_{s}(y)\cdot \hatpsi_{s+1}(y)]\cdot  \frac{\sqrt{s+1 }}{L} \cdot \sum_{ l=1}^L \dotp{w_{m,l}}{\tst}^{s} \cdot \dotp{w_{m,l }}{e_j}.
\end{align}
We also denote the leading signal term as
\begin{align}
    S = \EE_{\QQ}[\zeta_{\sst}(y)\cdot \hatpsi_{\sst-1}(y)] \cdot \frac{\sqrt{\sst}}{L} \sum_{l=1}^L \dotp{w_{m,l}}{\tst}^{\sst-1} \cdot \dotp{\tst}{e_j}.
\end{align}
By definition $R_1$ collects the higher order term that aligns with the signal and $R_2$ collects all the terms in the expected gradient that are parallel to $w_{m,l}$. In comparison to the non-sparse case, here we are analyzing the gradient coordinate-wisely. Therefore, $R_1$ and $R_2$ need to be controlled separately.

\paragraph{Analysis for the dominant term $S$ in Eq. ~\eqref{eq:sparse-signal-0}}
We first define
\begin{align}
    \tw_{m,l} = w_{m,l}\cdot \ind\{\sup_{j}|\dotp{\xi_{m,l}}{e_j}|\le \epsilon\}.
\end{align}
By definition of $\tilde w_{m,l}$, we have that $\tilde w_{m,l}, l\in[L]$ are independent of each other and $\tw_{m,l}=w_{m,l}$ on event $\cE_m(\epsilon)$. We can approximate the expectation of $\dotp{w_{m,l}}{\tst}$ as
\begin{align}
    \EE_{\tw_{m,l}} [\dotp{\tw_{m,l}}{\tst}^{s^\star-1}] &= \EE_{w_{m,l}}\big[\dotp{w_{m,l}}{\tst}^{s^\star-1} \cdot \ind\big\{\sup_{j }|\dotp{\xi_{m,l}}{e_j}|\le \epsilon\big\}\big] \\
    &\simeq \EE_{w_{m,l}}[\dotp{w_{m,l}}{\tst}^{s^\star-1}] \pm \Pr\big(\sup_{j}|\dotp{\xi_{m,l}}{e_j}|>\epsilon\big) \\
    &\simeq \EE_{w_{m,l}}[\dotp{w_{m,l}}{\tst}^{s^\star-1}] \pm  \Pr( {\cE_m(\epsilon)}^c).
\end{align}
Here the last line holds because  ${\cE_m(\epsilon)}^c = \cup_l \{\sup_j  |\dotp{\xi_{m,l}}{ e_j}| >\epsilon \}$.
For the first term, by \Cref{prop:sparse-w-expectation-warmup}, we have that on $\cE_{0,s^\star-1-\ind \{s^\star \text{ even}\}}$
\begin{align}
    \EE_{w_{m,l}}[\dotp{w_{m,l}}{\theta^\star}^{s^\star-1}] &\simeq \begin{cases}
            \gamma\rho \cdot \big( \gamma |\rho| +k^{-1/2}|\theta_m^\star| +k^{-1}\delta_{s^\star-2 }\big)^{s^\star-2} & \text{if $s^\star$ even};\\
              (\gamma |\rho| +k^{-1/2}|\theta_m^\star| + k^{-1}\delta_{s^\star-1})^{s^\star-1} & \text{if $s^\star$ odd}. \label{eq:sparse-signal-S-1}
    \end{cases}
\end{align}
For the second moment that is involved in the Bernstein's inequality, we have that on $\cE_{0,2s^\star-2}$
\begin{align}
    \EE[\dotp{\tilde w_{m,l}}{\theta^\star}^{2s^\star-2}] & =\EE[\dotp{ w_{m,l}}{\theta^\star}^{2s^\star-2}\ind\{\sup_{j}|\dotp {\xi_{m,l}}{e_j}| \le \epsilon\}] \\
    &\le \EE[\dotp{w_{m,l}}{\theta^\star}^{2s^\star-2}] \\
    &\simeq  (\gamma|\rho| + k^{-1/2}|\theta_m^\star| + k^{-1}\delta_{2\sst-2})^{2s^\star-2}
\end{align}
To proceed, we have by Bernstein's inequality (\Cref{lem:bernstein}) that there exists an event $\cE_{m,11}$ with $\Pr(\cE_{m,11})\ge 1-O(d^{-c_{b,11}})$. It holds on $\cE_{m,11}\cap \cE_m(\epsilon )$ that
\begin{align}
\frac{1}{L} \sum_{l}  \dotp{w_{m,l}}{\theta^\star }^{s^\star-1 } &= \frac{1}{L} \sum_{l}  \dotp{\tw_{m,l}}{\theta^\star }^{s^\star-1 }  \\
&\simeq \EE_{w_{m,l}} [\dotp{ w_{m,l}}{\theta^\star}^{s^\star-1} ] + E,
\end{align}
where the  error term $E$ can be bounded by
\begin{align}
    |E| &\le \big(\gamma|\rho|+k^{-1/2}|\theta_m^\star|+ k^{-1}\delta_{2\sst-2}\big)^{s^\star-1}\cdot \sqrt\frac{\log(d)}{L} + \frac{\epsilon^{s^\star-1}\log(d)}{L}+ \Pr\big({\cE}^c_{m}(\epsilon)\big).
\end{align}
Moreover, the assumption that
\begin{align}
    L &\gtrsim \log d  \cdot \Big( k^{2} \vee  \Big(\epsilon^{s^\star-1 } \cdot k^{s^\star}\Big)\Big) ; \qquad \Pr({\cE_m(\epsilon)}^c )\le    k^{-s^\star}; \label{eq:first-order-moment-cond-1}
\end{align}
allows us to simplify the upper bound for $E$, since
\begin{align}
 |E| &\lesssim    k^{-1}\cdot \big(\gamma|\rho|+k^{-1/2}|\theta_m^\star|+ k^{-1}\delta_{2\sst-2}\big)^{s^\star-1}   +  k^{-s^\star}. \label{eq:sparse-signal-S-E}
\end{align}

In conclusion, we have on $\cE_{m,11}\cap \cE_m(\epsilon)$ that
\begin{align}
S\simeq  (\EE_{w_{m,l}}[\dotp{w_{m,l}}{\theta^\star}^{s^\star-1}]+E)\cdot \dotp{\theta^\star}{e_j}.
\end{align}
We retain this form for further simplification.

\paragraph{Analysis for the first remainder $R_1$ in Eq.~\eqref{eq:sparse-signal-0}}

For any  $s,s'$, it holds by the  property of likelihood ratio decomposition that
\begin{align}
    \EE_\QQ[|\zeta_{s}(y)\cdot  \hatpsi_{s'}(y )|]&\le \EE_\QQ[ \zeta_s(y)^2]^{1/2}\cdot \EE_\QQ [\hatpsi_{s'}(y)^2]^{1/2} \\
    &\le  \sqrt{\sum_{s'\ge 0} \EE_\QQ[ \hatpsi_{s'}(y)^2]},\label{eq:sparse-R1-1}
\end{align}
and the last quantity is a constant that is independent of $s,s'$. To bound the summation for $s\ge \sst$, we use \Cref{lem:polarized-weight-nice-event-sparse}, which gives $|\dotp{w_{m,l}}{\tst}|\le  \gamma|\rho| +\epsilon<1/2$ on the nice event. Thus,
\begin{align}
    \sum_{s\ge \sst} \frac{\sqrt{s+1}}{L} \sum_{l=1}^L  |\dotp{w_{m,l}}{\tst}|^{s}
    &\lesssim \sum_{s\ge \sst} \sqrt{s+1}\cdot \Big(\frac{1}{2}\Big)^{s-\sst} \cdot \frac{1}{L}\sum_{l=1}^L |\dotp{w_{m,l}}{\tst}|^{\sst} \\
    &\lesssim  \frac{1}{L} \sum_{l=1}^L |\dotp{w_{m,l}}{\tst}|^{\sst}. \label{eq:sparse-R1-2}
\end{align}
Now it reduces to bounding the right-hand side of Eq.~\eqref{eq:sparse-R1-2}.
Note that on $\cE_m(\epsilon)$, $\tw_{m,l} = w_{m,l}$. We can first track the first and second moment of $\dotp{w_{m,l}}{\tst}$ as
\begin{align}
    \EE_{\tilde{w}_{m,l}}[|\dotp{\tilde{w}_{m,l}}{\theta^\star}|^{s^\star}] &\le  \EE_{w_{m,l}}[|\dotp{w_{m,l}}{\theta^\star}|^{s^\star}]\\&\le \EE_{w_{m,l}}[ \dotp{w_{m,l}}{\theta^\star}^{2s^\star}]^{1/2}.
\end{align}
To bound the last quantity, we see that given $\cE_{0, 2s^\star}$, \Cref{prop:sparse-w-expectation-warmup} reads
\begin{align}
    \EE_{w_{m,l}}[\dotp{w_{m,l}}{\theta^\star}^{2s^\star}] &\le (\gamma |\rho|+ k^{-1/2}|\theta_m^\star| + k^{-1}\delta_+)^{2s^\star}.
\end{align}
By Bernstein's inequality, there exists an event $\cE_{m,12}$ with $\Pr(\cE_{m,12})\ge 1-O(d^{-c_{b,12}})$ such that on $\cE_{m,12}\cap \cE_m(\epsilon)$, it holds that
\begin{align}
    \frac{1}{L} \sum_{l=1}^L |\dotp{w_{m,l}}{\theta^\star}|^{s^\star} &\lesssim \Big(1+\sqrt{\frac{\log d}{L}}\Big)\cdot (\gamma|\rho| + k^{-1/2}|\theta_m^\star| + k^{-1}\delta_+)^{s^\star} + \frac{\epsilon^{s^\star}\log(d)}{L},
\end{align}
Given that $L\gtrsim \log(d) \cdot \big(k\vee (\epsilon^{s^\star}\cdot k^{s^\star})\big)$, it further holds that
\begin{align}
    \frac{1}{L} \sum_{l=1}^L |\dotp{w_{m,l}}{\theta^\star}|^{s^\star} &\lesssim  (\gamma|\rho| + k^{-1/2}|\theta_m^\star| + k^{-1}\delta_+)^{s^\star}+ k^{-s^\star}.
\end{align}
In conclusion, it holds on $\cE_{m,12}\cap \cE_m(\epsilon)$ that
\begin{align}
    R_1 \lesssim \Big((\gamma|\rho| + k^{-1/2}|\theta_m^\star| + k^{-1}\delta_+)^{s^\star-1}  + k^{-s^\star}\Big)\cdot |\dotp{\theta^\star}{e_j}|.
\end{align}

\paragraph{Analysis for the second remainder $R_2$ in Eq.~\eqref{eq:sparse-signal-0}}

Similar to Eq.~\eqref{eq:sparse-R1-2}, we can first upper bound $R_2$ as
\begin{align}
    |R_2|&\lesssim \frac{1}{L} \sum_{l=1}^L |\dotp{w_{m,l}}{\theta^\star}|^{s^\star} \cdot  |\dotp{w_{m,l}}{e_j}| \\
    &= \frac{1}{L} \sum_{l=1}^L |\dotp{\tw_{m,l}}{\theta^\star}|^{s^\star} \cdot  |\dotp{\tw_{m,l}}{e_j}|,
\end{align}
where the last line holds by the definition of $\cE_m(\epsilon)$.
We decouple the product with the Cauchy-Schwarz inequality as follows:
\begin{align}
    \EE_{\tilde w_{m,l}}[|\dotp{\tilde w_{m,l}}{\theta^\star}|^{s^\star}\cdot |\dotp{\tilde{w}_{m,l}}{e_j }|] &\le \EE_{ w_{m,l}} [ |\dotp{ w_{m,l}}{{\theta^\star}}|^{s^\star} \cdot |\dotp{w_{m,l}}{e_j}|]
    \\& \le \EE_{w_{m,l}}[|\dotp{w_{m,l}}{\theta^\star}|^{2s^\star}]^{1/2} \cdot \EE_{w_{m,l}}[|\dotp{w_{m,l}}{e_j}|^2]^{1/2}
\end{align}
The first term in the upper bound can be tackled with \Cref{prop:sparse-w-expectation-warmup}. For the second term, we have that
\begin{align}
    \EE_{w_{m,l}}[\dotp{w_{m,l}}{e_j}^2] &\lesssim \EE[(\dotp{\xi_{m,l}}{e_j} + \gamma\cdot \dotp{\theta}{e_j})^2] \\
    &\lesssim \EE[\dotp{\xi_{m,l}}{e_j}^2] + \gamma^2\theta_j^2 \\
    &= \gamma^2\theta_j^2 + \EE[ \ind\{j\in \phi_{m,l}\}\cdot \xi_{m,l,j}^2] \\
    &\lesssim \gamma^2\theta_j^2 + k^{-1}\cdot (k/d)^{\ind\{j\neq m\}}.
\end{align}
Here, the first line holds because $\norm{\gamma \theta +\xi}_2 \ge 1/2$. The last line holds by applying \Cref{lem:moment_sphere} and that $\PP(j\in \phi_{m,l})\le  k/d$ for $j\neq m$.  Note that each term in the summation is bounded by $\epsilon \log (k)$ up to a constant on $\cE_m(\epsilon)$. By Bernstein's inequality (\Cref{lem:bernstein}), there exists an event $\cE_{m,13}$ with $\Pr(\cE_{m,13})\ge 1-O(d^{-c_{b,13}})$. It holds on $\cE_{m,13}\cap \cE_m(\epsilon)$ that
\begin{align}
    |R_2| {\lesssim}& \Bigg(1+ \sqrt{\frac{\log(d)}{L}}\Bigg)\cdot\Big(\gamma|\rho|+ k^{-1/2}|\theta_m^\star|+ k^{-1}\delta_+\Big)^{s^\star}  \cdot \big(\gamma |\theta_j| + k^{-1/2}(k/d)^{\ind\{j\neq m\}/2}\big) \\ &+ \frac{\epsilon^{s^\star+1} \log(d)}{L}.
\end{align}
Given that
\begin{align}
    L \gtrsim \log(d) \cdot \Big(k\vee (\epsilon^{s^\star+1}\cdot  k^{s^\star+1})\Big),\label{eq:first-order-moment-cond-5}
\end{align}
we conclude that it holds on $\cE_{m,13}\cap \cE_m(\epsilon)$ that
\begin{align}
    |R_2|{\lesssim}& \Big(\gamma|\rho|+ k^{-1/2}|\theta_m^\star|+ k^{-1}\delta_+\Big)^{s^\star}  \cdot \big(\gamma |\theta_j| + k^{-1/2}(k/d)^{\ind\{j\neq m\}/2}\big)+ k^{-(s^\star+1)}.
\end{align}

\paragraph{Summary of first-order moment}
We now merge previous results to summarize the results for the first-order moment.
Note that it is sufficient to set
\begin{align}
    L = \Omega\Big(\log(d)\cdot \big(k\vee \epsilon^{s^\star -1}( k \cdot \log k)^{s^\star +1}\big)\Big)
\end{align}
Define the final event as $\cE_{m,1}= \cE_{m,11}\cap \cE_{m,12}\cap \cE_{m,13}$, which is $\{w_{m,l}\}_{l\in [L]}$ measurable.
By previous analysis, it holds on this event that
\begin{align}
    S  &\simeq  \theta_j^\star\cdot (\gamma \rho)^{\ind \{s^\star \text{ even}\}} \cdot \big( \gamma |\rho| +k^{-1/2}|\theta_m^\star| +k^{-1}\delta_{s^\star-1-\ind \{s^\star \text{ even}\}}\big)^{s^\star-1-\ind \{s^\star \text{ even}\}} + \theta_j^\star \cdot E;\\
    R_1  &\lesssim \Big(\big(\gamma|\rho| + k^{-1/2}|\theta_m^\star| + k^{-1}\delta_+\big)^{s^\star} + k^{-s^\star} \Big)\cdot |\theta_j^\star|;\\
    R_2 &\lesssim \Big(\gamma|\rho| + k^{-1/2}|\theta_m^\star| + k^{-1}\delta_+\Big)^{s^\star  } \cdot (\gamma |\theta_j| + k^{-1/2}\cdot (k/d)^{\ind\{j\neq m \}/2}) +k^{-(\sst +1)}.
\end{align}
Following the error term $E$ in Eq.~\eqref{eq:sparse-signal-S-E}, we define $R= R_1 +R_2 +E$, which can be bounded by
\begin{align}
    |R|&\lesssim  \Big(k^{-1}\vee(\gamma |\rho|+k^{-1/2}|\theta_m^\star|)\cdot(\gamma|\rho|+k^{-1/2}|\theta_m^\star|+ k^{-1}\delta_+)^{s^\star-1}  + k^{-s^\star}\Big)\cdot |\theta_j^\star|  \\
    &\qquad + \Big(\gamma|\rho| + k^{-1/2}|\theta_m^\star| + k^{-1}\delta_+\Big)^{s^\star  } \cdot (\gamma |\theta_j| + k^{-1/2}\cdot (k/d)^{\ind\{j\neq m \}/2})  + k^{-(s^\star+1)}. \label{eq:sparse-signal-R}
\end{align}
And we summarize the first moment on $\cE_{m,1}\cap \cE_{m}(\epsilon)$ as
\begin{align}
    \EE_{\PP_{\theta^\star}}[\dotp{\barg_m}{e_j}] &\simeq  \theta_j^\star\cdot (\gamma\rho)^{\ind\{s^\star \text{ even}\}} \cdot \big( \gamma |\rho| +k^{-1/2}|\theta_m^\star| +k^{-1 }\delta_{s^\star-1-\ind \{s^\star \text{ even}\}}\big)^{s^\star-1-\ind\{s^\star \text{ even}\}}  + R,
\end{align}
where $R$ is upper bounded in Eq.~\eqref{eq:sparse-signal-R}.
\end{proof}

\subsubsection{Proof of \texorpdfstring{\Cref{prop:sparse-gradient-fluctuation}}{the sparse fluctuation proposition}}\label{proof:sparse-gradient-fluctuation}

\begin{proof}[Proof of \Cref{prop:sparse-gradient-fluctuation}]
Similar to the proof of \Cref{prop:nonsparse-sample-concentration}, the proof of this proposition comprises two parts. To begin with, we calculate the variance of each coordinate of $\barg_m$.
~\paragraph{Second moment calculation}
It suffices to consider the variance of the first sample. To this end, we define
\begin{align}
\barg_{m,1} = \frac{1}{L}\sum_{l=1}^L \big(\psi(y_1, \dotp{w_{m,l}}{z_1})\cdot z_1 - \hatpsi_1 (y_1)\cdot w_{m,l}\big).
\end{align}
For any $v\in \{e_1,e_2,\dots, e_d\}$, it holds by the definition of $\barg_{m, 1}$ that
\begin{align}
    \EE_{\PP_{\theta^\star}}[\langle \barg_{m,1}, v\rangle^2]
    &\lesssim \frac{1}{L^2} \sum_{l, l'=1}^L \EE_{\PP_{\theta^\star}}\left[
        \psi(y, \langle w_l, z\rangle)  \psi(y, \langle w_{l'}, z\rangle) \langle z, v\rangle^2
    \right]  + \frac{1}{L^2} \sum_{l, l'=1}^L \EE_{\PP_{\theta^\star}}\left[
        \hat\psi_1(y)^2   \langle w_{l}, v\rangle \langle w_{l'}, v\rangle
    \right]\\
    &= \frac{1}{L^2} \sum_{l\neq l'} \EE_{\QQ}\bigg[
        \psi(y, \langle w_l, z\rangle) \psi(y, \langle w_{l'}, z\rangle) \langle z, v\rangle^2 \cdot
        \bigg(
            1 + \sum_{s\ge s^\star} \zeta_s(y) h_s(\langle \theta^\star, z\rangle)
        \bigg)
    \bigg] \\
    &\qquad + \frac{1}{L^2} \sum_{l=1}^L \EE_{\QQ}\left[
        \psi(y, \langle w_l, z\rangle) \psi(y, \langle w_{l}, z\rangle) \langle z, v\rangle^2
    \right] + \frac{1}{L^2} \sum_{l\neq l'} \EE_\QQ[\hat\psi_1(y)^2] \langle w_l, v\rangle \langle w_{l'}, v\rangle \\
    &\qquad + \frac{1}{L^2} \sum_{l=1}^L \EE_\QQ[\hat\psi_1(y)^2] \langle w_l, v\rangle^2. \label{eq:sparse-2nd-moment-1}
\end{align}
In the same manner as the non-sparse case, we can derive an $O(1/L)$ upper bound for the second and the last summation, which run over all $l=l'$.
By \Cref{lem:polarized-weight-nice-event-sparse}, we already have that
\begin{align}
\sup_{l,j}|\dotp{w_{m,l}}{e_j }| &\lesssim \gamma +\epsilon\coloneqq \epsilon_{\textsf{chart}};\\  \sup_{l}|\dotp{w_{m,l}}{\theta^\star}| &\lesssim \gamma|\rho| + \epsilon \coloneqq \epsilon_{\textsf{signal}};\\
\sup_{l\neq l'} |\dotp{w_{m,l}}{w_{m,l'}}| &\lesssim \gamma^2 + \epsilon^2 \log(k) \coloneqq \epsilon_{\textsf{mutual}}.
\end{align}
Applying \Cref{lem:g-2nd-moment}, the desired expectation is well behaved if the ratio  $ {\epsilon_{\textsf{chart}}^2\vee \epsilon_{\textsf{signal}}^2 }/{\epsilon_{\textsf{mutual}}} $ is a constant-bounded term. To validate this fact, we note that
\begin{align}
    \frac{\epsilon_{\textsf{chart}}^2\vee \epsilon_{\textsf{signal}}^2 }{\epsilon_{\textsf{mutual}}} \lesssim \frac{\gamma^2 +\epsilon^2}{\gamma^2 + \epsilon^2 \log(k)}.
\end{align}
Since $\epsilon\ll 1\ll \log(k)^{1/2}$ , we conclude that ${\epsilon_{\textsf{chart}}^2\vee \epsilon_{\textsf{signal}}^2 }/{\epsilon_{\textsf{mutual}}}\lesssim 1$ for sufficiently large $k$. Therefore, we have by \Cref{lem:g-2nd-moment} that
\begin{align}
    \frac{1}{L^2}\sum_{l\neq l'} \EE_{\QQ}\Big[\psi(y_1, \dotp{w_{m,l}}{z_1})\cdot \psi(y_1, \dotp{w_{m,l'}}{z_1})\cdot \dotp{z_1}{v}^2\cdot\Big(1+ \sum_{s=s^\star}^\infty \zeta_s(y)h_s(\dotp{\theta^\star}{z})\Big)\Big]\lesssim \epsilon_{\textsf{mutual}}^{s^\star-1}.
\end{align}
On the other hand, we have for the third  term in Eq.~\eqref{eq:sparse-2nd-moment-1} that
\begin{align}
    \frac{1}{L^2} \sum_{l\neq l'} \EE_\QQ[\hat\psi_1(y)^2] \dotp{w_{m,l}}{v}\cdot \dotp{w_{m,l'}}{v} &\lesssim \sup_l |\dotp{w_{m,l'}}{v}|^2 \cdot \ind\{s^\star \le 2\}\\
    &\lesssim \epsilon_{\textsf{chart}}^2  \cdot \ind\{s^\star \le 2\} \\
    &\lesssim \epsilon_{\textsf{mutual}} \cdot \ind\{s^\star \le 2\},
\end{align}
where the first line holds by \Cref{lem:polarized-weight-nice-event-sparse}.

In summary, we have on the event $\cE_m(\epsilon)\cap \tilde\cE_m$ that
\begin{align}
   \sup_{v\in \{e_1,e_2 ,\dots,e_d\}} n\Var_{\PP_{\theta^\star}}[\dotp{g_{m}}{v}] &\lesssim  \epsilon_2^{s^\star-1} +\frac{1}{L}= \big(\gamma^2 + \epsilon^2 \log(k)\big)^{s^\star-1} +\frac{1}{L}.
\end{align}

\paragraph{Concentration}

We now turn to validate the condition of \Cref{lem:poly_tail_bound}. For any $v\in  \{e_1,e_2,\dots,e_d\}$, we set $G(z,y, w) = |\psi(y, \dotp{w}{z})\cdot \dotp{z}{v}| + |\hatpsi_1(y)\cdot \dotp{w}{v}|$ with the domain measure defined as $\diff\PP_{\theta^\star}(z_1, y_1)\times \diff\mu(w)$, where
$d\mu(w) =L^{-1} \sum_{ l} \delta_{w_{m,l}}$, the integral Minkowski's inequality implies that
\begin{align}
    \EE_{\PP_{\theta^\star}}[ |\dotp{g_{m,1}}{v}|^r]^{1/r} &= \Big(\int \diff \PP_{\theta^\star}(y,z) \Big(\int \diff \mu(w) |G(z,y,w)|\Big)^r\Big)^{1/r} \\
    &\le  \int \diff\mu(w) \Big(\int \diff\PP_{\theta^\star}(y,z) |G(z,y,w)|^r\Big)^{1/r} \\
    &= \frac{1}{L}\sum_{l=1}^L \EE_{\PP_{\theta^\star}}[ |\psi(y_i, \dotp{w_{m,l}}{z_i})\cdot \dotp{z_i}{v }|^r]^{1/r} + \frac{1}{L} \sum_{l} |\dotp{w_{m,l}}{v}|.\label{eq:sparse-sample-poly-tail}
\end{align}
To proceed, we leverage Cauchy-Schwarz inequality to decouple the average of the product in the first term, which reads
\begin{align}
    \frac{1}{L}\sum_{l=1}^L \EE_{\PP_{\theta^\star}}[ |\psi(y_i, \dotp{w_{m,l}}{z_i})\cdot \dotp{z_i}{v }|^r]^{1/r} &\le \frac{1}{L}\sum_{l=1}^L \EE_{\PP_{\theta^\star}}[ |\psi(y_i, \dotp{w_{m,l}}{z_i})|^{2r}]^{1/2r}\cdot \EE_{\PP_{\theta^\star}}[|\dotp{z_i}{v}|^{2r}]^{1/2r}.
\end{align}
Similar to the proof of \Cref{prop:nonsparse-sample-concentration}, we have that
\begin{align}
    \EE_{\PP_{\theta^\star}} [\psi(y, \dotp{w_{m,l}}{z})^{2r}] &\le \EE_{\QQ}[ U_{\dotp{\tst}{w_{m,l}}} \Big(\frac{\PP(x,y)}{\QQ(x,y)}\Big)^2 ]^{1/2} \cdot  \EE_\QQ[ \psi(y,x)^{4r}]^{1/2} \lesssim r^{C_p 4r},
\end{align}
where the first inequality exactly repeats Eq.~\eqref{eq:gaussian-noise-op} and the second inequality holds by \ref{asp:polynomial-like}. On the other hand, we have that $\EE_{\PP_\tst}[\dotp{z_i}{v} ^{2r}]^{1/2r} \le  r^{1/2}$. Since the second term in Eq.~\eqref{eq:sparse-sample-poly-tail} is bounded by $O(1)$, we conclude that
\begin{align}
    \EE_{\PP_{\theta^\star}}[ |\dotp{g_{m,1}}{v}|^r]^{1/r}  &\lesssim  r^{C_p +1/2}.
\end{align}
Thus, \Cref{lem:poly_tail_bound} implies that there exists a $\{(z_i,y_i)\}_{i\in [n]}$-measurable event $\cE_{m,2}$ with probability at least $1-O(d^{-c-1}/T)$, on which for any $v\in \{e_1,e_2,\dots,e_d\}$, it holds that
\begin{align}
\big|\dotp{g_m}{v} - \EE_{\PP_{\theta^\star}}[\dotp{g_{m}}{v }]\big| &{\lesssim} \sqrt{\frac{ \EE_{\PP_{\theta^\star}} [\dotp{g_{m,1}}{v}^2]\cdot\log (d^{c+1}T)}{n}} + \frac{\log (d^{c+1}T)\cdot \log(d^{c+1} T n)^{C_p +1/2}}{n}\\
&\lesssim \sqrt{\frac{ \big(\big(\gamma^2 +\epsilon^2 \log(k)\big)^{s^\star-1} +L^{-1}\big)\cdot\log (d)}{n}} + \frac{\log(d)^{C_p +3/2}}{n},
\end{align}
given that $T,n$ are at most of polynomial rate in $d$. Since we assume that
\begin{align}
    n = \Omega  \Big(\big({(\gamma^2 + \epsilon^2 \log(k))^{s^\star-1 } + L^{-1}} \big)^{-1} \cdot \log(d)^{2C_p +2 } \Big),
\end{align}
the above inequality can be further simplified as
\begin{align}
    \big|\dotp{g_m}{v} - \EE_{\PP_{\theta^\star}}[\dotp{g_{m}}{v }]\big| &{\lesssim} \sqrt{\frac{\big( \big(\gamma^2 +\epsilon^2  \log(k)\big)^{s^\star-1} + L^{-1}\big)\cdot\log (d)}{n}}.\label{eq:sparse-sample-concentration-coordinate}
\end{align}
Additionally,  $\cE_{m,2}$ is the desired event. This concludes the proof of \Cref{prop:sparse-gradient-fluctuation}.
\end{proof}

\subsection{Proofs for Technical Results in the Sparse Case}

\begin{proof}[Proof of \Cref{lem:good-signal}]\label{proof:good-signal}
        With a slight abuse of notation, we assume that $\theta^\star \sim \unif(\SS^{k-1})$. We first consider the event $\cE_{0,\infty } = \{\norm{\theta^\star}_{\infty}\le C\cdot k^{-1/2}\log(k)^{1/2}\}$. From the proof of \Cref{lem:sphere_coordinate_tail}, we see that
        \begin{align}
            \PP(\norm{\theta^\star}_\infty \ge t) \le 2k \cdot\PP(\theta^\star_1 \ge t) \le 2k \exp(-k /16 )+ 2k \exp (-t^2k/4).
        \end{align}
        Take $t = C\cdot k^{-1/2}\log(k)^{1/2}$, we have that the failure probability is upper bounded by $2k \exp(-k/16) + 2k^{1-C^2/4}$.

        For the $r$-norm, we leverage the property that $\theta^\star\overset{d.}{=}{Z/ \norm{Z}_2}$ where $Z\sim \cN(0,I_k)$. Now,
        $\norm{Z}_2^2 = \sum_{i\le k} Z_i^2$, where $Z_i^2 - \EE[Z_i^2]\ge -1$. Applying one-sided Bernstein's inequality with failure probability $k^{-c_0}$, we have that
        \begin{align}
            \norm{Z}_2^2 \le  k + \sqrt{2c_0 k \log(k)}+ c_0 \log(k)/3.
        \end{align}

        On the other hand, note that we have for $c_1>2$ that
        \begin{align}
             \PP(\max_{i\le k} |Z_i| > \sqrt{2c_1\log k}) \le k \cdot \PP(|Z_1| > \sqrt{2c_1\log k}) \le k  \exp \{-c\log k \} =k^{1-c_1},
        \end{align}
        To apply Bernstein's inequality, we note that
        \begin{align}
            \EE[ |Z_i|^r \cdot \ind\{|Z_1|\le \sqrt{2c_1 \log k }\}] &\le \EE[|Z_i|^{2r}]^{1/2};\\
            \EE[ |Z_i|^{2r} \cdot \ind\{|Z_1|\le \sqrt{2c_1 \log k }\}] &\le \EE[|Z_i|^{2r}],
        \end{align}
        where $\EE[Z_i^{2r}] =  (2r-1)!!$. Therefore, it holds by truncated Bernstein's inequality that
        \begin{align}
            \PP\Big(\norm{Z}_r^r > (k + \sqrt{2c_2k \log(k) })\cdot \EE[|Z_1|^{2r}]^{1/2}+\big(\sqrt{2C \log(k)}\big)^r\cdot   c_2 \log(k)/3\Big) \le k^{1-c_1} + k^{-c_2 }
        \end{align}
        Combining the two bounds, we conclude that with probability $1-O(k^{1 -c_0 \vee c_1 \vee c_2})$, it holds that
        \begin{align}
            \norm{\theta^\star}_r^r \overset{d.}{= }  \frac{\norm{Z}_r^r}{\norm{Z}_2^r} \lesssim \frac{ k+ \sqrt{k\log k} + (\log k)^{1+r/2}}{(k +\sqrt{k\log k } +\log k)^{r/2 } }  \lesssim k^{1-r/2},
        \end{align}
        with probability at least $1-O(k^{-c})$ for some constant $c>0$.

        We now move on to consider the event $\cE_{0,\sharp} = \big\{ \sum_{i\le k } \ind\{|\theta_i^\star| \ge 1/\sqrt{2k}\}\ge k/4\big\}$.
        First, it follows from Hoeffding's inequality that
        \begin{align}
            \PP\Big( \sum_{i\le k} \ind\{Z_i^2\ge   3/4 \} \ge k/4\Big)  &\ge 1-  2\exp(-2(p-1/4)^2 k),
\end{align}
        where $p = \PP(Z_1^2 \ge  3/4)>1/4$. Denote the above event as $\cA_1$.
        On the other hand, we have by the Bernstein's inequality that
        \begin{align}
        \PP\Big(\Big|k^{-1}\sum_{i\le k} Z_i^2- 1\Big| \le 1/2 \Big)\ge 1 -  2\exp\{ -k/32\}.
        \end{align}
        Denote the above event as $\cA_2$. Then on the event $\cA_1\cap \cA_2$, we have that
        \begin{align}
            \sum_{i\le k}\ind\Big\{\frac{|Z_i|}{\norm{Z}}>  \frac{1}{\sqrt{2k}}\Big\} &= \sum_{i\le k } \ind\Big\{ Z_i^2 >  \frac{1}{2k} \sum_{i\le k}Z_i^2\Big\} \\
            &\overset{\cA_2}{\ge } \sum_{i\le k} \ind\{Z_i^2 > \frac{3}{4} \}  \\
            &\overset{\cA_1}{\ge } k/4.
         \end{align}
        In conclusion we have that $\PP(|\{i: |\theta_i^\star| >1/\sqrt{2k}\}| > k/4 ) \ge \PP(\cA_1 \cap \cA_2)\ge 1- \exp\{-c_{3}k\}$ for some constant $c_3>0$.

\end{proof}

\begin{proof}[Proof of \Cref{lem:polarized-weight-nice-event-sparse}] \label{proof:polarized-weight-nice-event-sparse}
Clearly, it holds that
\begin{align}
     \norm{ \gamma\theta + \xi_{m,l}}_2 \ge \norm{\xi_{m,l}}_2 - \gamma \cdot \norm{\xi_{m,l}}_2 \ge 1/2.
\end{align}
By substituting this lower bound for the denominator, we have for any $j,l$ that
    \begin{align}
         |\dotp{w_{m,l}}{e_j}|  &\le  2( \gamma  |\theta_j| +  |\dotp{\xi_{m,l}}{e_j}|)  \\
         &\le 2(\gamma|\theta_j| +  \epsilon) .
    \end{align}
The last line holds by the definition of $\cE_{m}(\epsilon)$.
On the other hand, we have for any $l$ that
\begin{align}
 |\dotp{w_{m,l}}{\theta^\star}|&\le 2\Big(   \gamma |\rho| + \Big|\sum_{j\in [d]} \xi_{m,l,j}\cdot \theta_j^\star\cdot \ind \{ j\in \phi^\star \cap \phi_{m,l}\}\Big|\Big)\\
&\le 2\gamma|\rho | + 2\big( \sum_j \xi_{m,l,j}^2\cdot \ind\{j\in \phi^\star\cap \phi_{m,l}\}\big)^{1/2}\cdot\big( \sum_j {\theta^\star_j}^2\cdot \ind\{j\in \phi^\star\cap \phi_{m,l}\}\big)^{1/2}  \\
&\le  2\gamma|\rho| +  2 \sup_{j} |\xi_{m,l,j}| \cdot \norm{\theta^\star}_\infty \cdot |\phi^\star\cap \phi_{m,l}|
\end{align}
To proceed, note that on the event $\tilde\cE_{m} \cap \cE_{m}(\epsilon)$, it holds that $| \phi^\star \cap \phi_{m,l}|\le \log k$ and that $\sup_j |\xi_{m,l,j}|\le \epsilon$. Since we assume that $\norm{\theta^\star}_{\infty}\le 1/ \log k$, it holds that
\begin{align}
    |\dotp{w_{m,l}}{\theta^\star}| \le 2(\gamma|\rho| + \epsilon).
\end{align}

Now we turn to consider the correlation between $w_{m,l}$ and $w_{m,l'}$.
\begin{align}
    |\dotp{w_{m,l}}{w_{m,l'}}| &\le 2 \Big( \gamma^2 + \sum_{j} |\xi_{m,l,j}|\cdot|{\xi_{m,l',j}}|\cdot \ind\{j\in \phi_{m,l}\cap \phi_{m,l'}\} \\
    &\qquad + \gamma \sum_{j}|\theta_j| \cdot |\xi_{m,l,j}| \cdot \ind\{j \in \phi_{m,l} \cap \mathsf{supp}(\theta) \} \\
    &\qquad+ \gamma \sum_{j}|\theta_j| \cdot |\xi_{m,l',j}| \cdot \ind\{j \in \phi_{m,l'} \cap \mathsf{supp}(\theta) \}\Big).
\end{align}
For the second term, we have with the definition of $\cE_m(\epsilon)$ that
\begin{align}
    \sum_j |\xi_{m,l,j}|\cdot|{\xi_{m,l',j}}|\cdot \ind\{j\in \phi_{m,l}\cap \phi_{m,l'}\} &\le \max_{j,l}|\xi_{m,l,j}|^2\cdot |\phi_{m,l}\cap  \phi_{m,l'}|\\& \le \epsilon^2 \log k.
\end{align}
For the last two terms, applying the Cauchy-Schwarz inequality gives, for $\ell\in\{l,l'\}$, that
\begin{align}
\gamma\sum_{j} |\theta_j|\cdot |\xi_{m,\ell,j}| \cdot\ind\{j\in \phi_{m,\ell}\cap  \mathsf{supp}(\theta)\} &\le  \gamma \norm{\theta}_2 \cdot \epsilon  \sqrt{\log k} \le \gamma^2 + \epsilon^2\log k.
\end{align}
Putting them together, we have that
\begin{align}
    |\dotp{w_{m,l}}{w_{m,l'}}| \le 4(\gamma^2 + \epsilon^2\log k).
\end{align}
This concludes the proof of \Cref{lem:polarized-weight-nice-event-sparse}.
\end{proof}

\begin{proof}[Proof of \Cref{prop:sparse-w-expectation-warmup}]\label{proof:sparse-w-expectation-warmup}
For conciseness, we momentarily drop the subscript $m,l$ in the following analysis. Conditioning on fixed $\phi$, we have that
\begin{align}
\EE_w[\dotp{w}{\theta^\star}^{s}] &= \EE_w \big[ \norm{ \gamma\theta + \xi }_2^{-s}\cdot \big(\gamma \dotp{\theta}{\theta^\star}  + \dotp{\xi}{\theta^\star}\big)^s \big] \\
&= \EE_\phi \big[\EE_w \big[ \norm{ \gamma\theta + \xi }_2^{-s}\cdot \big(\gamma \rho + \dotp{\xi}{P_\phi \theta^\star}\big)^s\biggiven \phi \big]\big]. \label{eq:sparse-w-expectation-0}
\end{align}
Given the polarization level $\gamma =o(1)$, we see that $\norm{\gamma \theta + \xi}_2^{s+1} \simeq 1 \pm o(1)$, and it suffices to evaluate $\EE_w\big[\big(\gamma\rho +\dotp{\xi}{P_\phi \theta^\star}\big)^s\biggiven \phi\big]$.
Without loss of generality, we assume that $1\in \phi$ and we can translate $P_\phi \theta^\star$ into the first coordinate by the isotropy of $\xi$ over $\SS^{k-1}(\phi)$. To this end, we can characterize the first term as follows:
\begin{align}
\EE \big[\big(\gamma\rho +\dotp{\xi}{P_\phi \theta^\star}\big)^s\biggiven \phi\big] &= \EE \big[\big(\gamma \rho  + \dotp{\xi}{\norm{P_\phi \theta^\star}_2 \cdot e_1 }\big)^s\biggiven \phi\big] \\
&= \sum_{r=0}^{s} \binom{2\lfloor s/2\rfloor}{r} (\gamma\rho)^{s-r} \cdot \norm{P_\phi \theta^\star}_2^r \cdot\EE\big[\xi_1^r \biggiven \phi\big] \cdot\ind\{r\text{ even}\}\\
&\overset{\textrm{(i)}}{\simeq}{}   \sum_{r=0}^{\lfloor s/2\rfloor} \binom{2\lfloor s/2\rfloor}{2r} (\gamma\rho)^{2\lfloor s/2\rfloor-2r} \cdot \norm{P_\phi \theta^\star}_2^{2r} \cdot k^{- r}\cdot (\gamma \rho)^{\ind\{s \text{ odd}\}}\\
&= (\gamma \rho)^{\ind\{s \text{ odd}\}} \cdot  \big((\gamma \rho + k^{-1/2}\norm{P_\phi \theta^\star}_2)^{2\lfloor s/2\rfloor}+(\gamma \rho - k^{-1/2}\norm{P_\phi \theta^\star}_2)^{2\lfloor s/2\rfloor}\big) /2 \\
&\simeq (\gamma \rho)^{\ind\{s\text{ odd}\}} \cdot (\gamma |\rho| + k^{-1/2}\norm{P_\phi \theta^\star}_2)^{2\lfloor s/2\rfloor}.\label{eq:sparse-w-expectation-1}
\end{align}
Here, (i) holds by applying \Cref{lem:moment_sphere} and $\simeq$ denotes equality up to an $s$-dependent multiplicative constant.

Putting together Eq.~\eqref{eq:sparse-w-expectation-0} and~\eqref{eq:sparse-w-expectation-1},
 we conclude that
\begin{align}
\EE_w[\dotp{w}{\theta^\star}^{s}\giv \phi] &\simeq (\gamma\rho)^{\ind\{s \text{ odd}\}}(\gamma|\rho| + k^{-1/2}\norm{P_{\phi}  \theta^\star}_2)^{s-\ind\{s \text{ odd}\}}. \label{eq:sparse-w-expectation-4}
\end{align}
In the sequel, we consider averaging over $\phi$. From Eq.~\eqref{eq:sparse-w-expectation-4}, we see that it suffices to consider $\EE_\phi[(\gamma|\rho| +k^{-1/2}\norm{P_{\phi}  \theta^\star}_2)^r]$ for some $r\ge 2$. We alter the notation to facilitate some deferred calculation. Consider $\bbm\subset[d]$ with constant size $|\bbm|=O(1)$ that does not scale with $k$ or $d$. Now define $\phi_{\bbm} \sim\unif\{\cS_{k,\bbm}\}$, where $\cS_{k,\bbm} = \{S\subset[d]:|S| = k, \bbm \subset S\}$. It is easily seen that this definition covers previous definition of $\cS_{k,m}$ by setting $\bbm = \{m\}$. We characterize the magnitude of $\EE_{\phi_{\bbm}} [\norm{P_{\phi_\bbm}\theta^\star}_2^r]$ from both sides as follows. For the lower bound, we have that
\begin{align}
\EE_{\phi_\bbm}[\norm{P_{\phi_\bbm}\theta^\star}_2^r] &= \EE_{\phi_\bbm}\Big[\big(\|\theta_\bbm^\star\|_2^2 +\sum_{j\notin \bbm} |\theta_j^\star|^2 \ind \{j\in \phi_\bbm\}\big)^{r/2}\Big]  \\
&\ge \EE_{\phi_{\bbm}}\Big[\|\theta_\bbm^\star\|_r^r  +\sum_{j\notin \bbm} |\theta_j^\star|^r \ind \{j\in \phi_\bbm\}\Big]\\
&\overset{\rm(i)}{ \simeq} (1-k/d)\cdot \norm{\theta^\star_{\bbm}}_r^r + \frac{k}{d}\cdot \norm{\theta^\star}_r^r \\
&\overset{\rm(ii)}{\gtrsim} \norm{\theta^\star_{\bbm}}_r^r +\frac{k}{d}\cdot  k ^{1 - r/2}\cdot \norm{\theta^\star}_2^{r/2} \\
&=  \norm{\theta^\star_{\bbm}}_r^r +\frac{k^2}{d}\cdot  k ^{- r/2}.
\end{align}
Here (i) holds by the  fact that $\EE[\ind\{j\in \phi_{\bbm}\}] \simeq  k/d$ for $j\notin \bbm$, and (ii) is a consequence of Jensen's inequality.
For the upper bound, we have that
\begin{align}
\EE_{\phi_\bbm}[\norm{P_{\phi_\bbm}\theta^\star}_2^r] & = \EE_{\phi_\bbm} \Big[ \Big( \|\theta^\star_\bbm\|_2^2 + \sum_{j\notin \bbm} |\theta_j^\star|^2\cdot \ind \{j \in \phi_\bbm\}\Big)^{r/2 }\Big] \\
&\lesssim  \EE_{\phi_\bbm} \Big[ \|\theta^\star_\bbm\|_r^r + \Big( \underbrace{ \sum_{j\notin \bbm} |\theta_j ^\star|^2 \cdot \ind\{j\in \phi_\bbm\}}_{ |(\phi_\bbm\cap \phi^\star) \setminus \bbm| \text{ nonzero summands}}\Big)^{r/2} \Big]  \\
&\overset{\text{Jensen}}\lesssim \|\theta^\star_\bbm\|_r^r + \EE_{\phi_\bbm} \Big[|(\phi_\bbm\cap \phi^\star) \setminus \bbm|^{r/2-1}\cdot \Big( \sum_{j\notin \bbm} |\theta_j ^\star|^r \cdot \ind\{j\in \phi_\bbm\}\Big) \Big]. \label{eq:sparse-w-expectation-5}
\end{align}
Next, we apply Cauchy-Schwarz inequality as follows:
\begin{align}
\eqref{eq:sparse-w-expectation-5} &= \|\theta^\star_\bbm\|_r^r + \EE_{\phi_{\bbm}} \Big[  \sum_{j\notin \bbm} |\theta_j ^\star|^r \cdot \ind\{j\in \phi_\bbm\}^2 \cdot |(\phi_\bbm\cap \phi^\star) \setminus \bbm|^{r/2-1} \Big] \\
&\le  \|\theta^\star_\bbm\|_r^r + \EE_{\phi_{\bbm}}\Big[ \sum_{j\notin \bbm} |\theta_j^\star|^{2r} \cdot \ind\{j\in \phi_\bbm\}\Big]^{1/2} \cdot \EE_{\phi_{\bbm}}\Big[ \sum_{j\notin \bbm} \ind\{j\in \phi_\bbm\}\cdot |(\phi_\bbm\cap \phi^\star) \setminus \bbm|^{r-2} \Big]^{1/2} \\
& = \|\theta^\star_\bbm\|_r^r + \Big( \frac{k}{d}\cdot \sum_{j\notin \bbm} |\theta_j^\star|^{2r}\Big)^{1/2}\cdot \EE_{\phi_{\bbm}}\big[  |(\phi_\bbm\cap \phi^\star) \setminus \bbm|^{r-1}\big]^{1/2} \\
& \overset{ (i)}{\lesssim} \|\theta^\star_\bbm\|_r^r + \Big(\frac{k}{d}\cdot k^{1-r}\Big)^{1/2} \cdot \Big(\frac{k^2}{d}\Big)^{(r-1)/2 } \\
&= \|\theta^\star_\bbm\|_r^r + \frac{k^2}{d}\cdot k^{-r/2},
\end{align}
where (i) holds by $\cE_{0,2r}$ and \Cref{lem:hypergeometric-moment}.
In conclusion, we have that $\EE_{\phi_\bbm}[\norm{P_\phi\theta^\star}_2^r] \simeq \|\theta_\bbm^\star\|_r^r + k^{-r/2}\cdot \delta$ given that $k = o (\sqrt{d})$.  Combining this result with Eq.~\eqref{eq:sparse-w-expectation-4}, we obtain that
\begin{align}
\EE_w[\dotp{w}{\theta^\star}^{s}] &\simeq (\gamma\rho)^{\ind\{s \text{ odd}\}} (\gamma|\rho| + k^{-1/2} |\theta_m^\star| + k^{-1}\delta_{s-\ind\{s \text{ odd}\}})^{s-\ind\{s \text{ odd}\}}
\end{align}
where $\delta = k^2/d = o(1)$ and $\delta_r = \delta^{1/r}$.
\end{proof}

\section{Statistical Query Lower Bound for Sparse Signal Recovery}

In this section, we provide a $k^{s^\star}$ sample complexity lower bound for the single index model with $k$-sparse signal when querying a $\VSTAT$ oracle. The statistical query (SQ) framework was developed in \citet{feldman2017statistical} and for completeness, we present essential definitions and results here.
\begin{definition}[$\VSTAT$ Oracle]
    Let $D^\star$ be the input distribution over domain $\cX$. For a sample size parameter $n>0$, $\VSTAT(D^\star, n)$ oracle is the oracle that for any query function $h: \cX\rightarrow [0, 1]$, returns a value $v\in[p-\tau, p+\tau]$, where $p = \EE_{x\sim D^\star}[h(x)]$ and $\tau = \max\{ n^{-1}, \sqrt{p(1 - p) / n} \}$.
\end{definition}
To define a key concept \emph{statistical query dimension}, we first introduce the following notation.
\begin{definition}[Relative Pairwise Correlation]
    Given two distributions $D_1, D_2 \in \Delta(\cX)$ and a reference distribution $D \in \Delta(\cX)$,
    \begin{align}
        \chi_{D}(D_1, D_2) = \EE_{x\sim D} \left[ \frac{D_1(x)}{D(x)} \cdot \frac{D_2(x)}{D(x)} \right] -1.
    \end{align}
\end{definition}
\begin{definition}[Statistical Dimension]
    For $\bar{\gamma}>0$, $\eta\in(0, 1)$, domain $\cX$, a set of distributions $\cD$ over $\cX$,
    the \textbf{statistical dimension} $\SDA(\cD,\bar{\gamma}, \eta)$ of $\cD$ with average correlation $\bar{\gamma}$ and solution set bound $\eta$ is defined as the largest value $m'$ such that there exists a \emph{reference distribution} $D \in \Delta(\cX)$ and a finite set of distributions $\cD_{D}\subseteq \cD$ which can depend on the reference $D$ with the following property: for any solution $D^\star\in \cD$,
    \begin{itemize}
        \item[(\romannumeral1)] $|\cD_{D} \setminus \{D^\star\}| \geq (1-\eta) |\cD_D|$;
        \item[(\romannumeral2)] for any subset $\cD_D' \subseteq \cD_D \setminus \{D^\star\}$ such that $|\cD_D'| \ge |\cD_D \setminus \{ D^\star\}| / m'$,
        \begin{align}
            \frac{1}{|\cD_D'|^2} \sum_{D_i, D_j \in\cD_D'} \chi_{D}(D_i, D_j) \le \bar{\gamma}.
        \end{align}
    \end{itemize}
  \end{definition}
  The above definition of the statistical dimension is a special case of the original Definition 3.1 in \citet{feldman2017statistical} where we consider a search problem of \emph{exact recovery} of the ground truth $D^\star$.
\begin{definition}[$(\gamma, \beta)$-correlated Distributions]
    We say that a set of $m$ distributions $\cD = \{D_1,\ldots,D_m\}$ over $\cX$ is $(\gamma,\beta)$-correlated relative to a reference distribution $D \in \Delta(\cX)$ if:
    $$ \chi_D(D_i, D_j) \leq
    \begin{cases}
    \beta  &\mbox{ for } i=j\in[m]\\
    \gamma &\mbox{ for } i\not = j \in[m].
    \end{cases}
    $$
\end{definition}
The following lemma borrowed from Lemma 3.10 of \citet{feldman2017statistical} provides a lower bound on the statistical dimension in terms of the $(\gamma, \beta)$-correlation property of the set of candidate distributions.
\begin{lemma} \label{lem:correlated-sq dim}
    Given a set of candidate distributions $\cD$ that are $(\gamma, \beta)$-correlated with respect to a reference distribution $D$, then for any $\gamma'>0$ and $\eta > |\cD|^{-1}$,
    \begin{align}
        \SDA(\cD, \gamma + \gamma', \eta) \ge \frac{(|\cD|-1) \gamma'} {\beta -\gamma}.
    \end{align}
\end{lemma}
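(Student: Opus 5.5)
The plan is to verify the definition of $\SDA$ directly, using the whole candidate set as the finite family. I take the reference distribution to be the $D$ from the $(\gamma,\beta)$-correlation hypothesis, set $\cD_D \defeq \cD$, and set
\begin{align}
m' \defeq \frac{(|\cD|-1)\gamma'}{\beta-\gamma}.
\end{align}
Since $\SDA(\cD,\gamma+\gamma',\eta)$ is the largest value for which some reference and family satisfy conditions (i) and (ii), it suffices to check both conditions for this $m'$ and every solution $D^\star\in\cD$. I assume $\beta>\gamma$, so that $m'$ is positive and finite; this is the nondegenerate case, since $\chi_D(D_i,D_i)$ is normally the larger quantity.

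Condition (i) uses only the assumption $\eta>|\cD|^{-1}$. Removing $D^\star$ deletes at most one element, so
\begin{align}
|\cD_D\setminus\{D^\star\}| \ge |\cD|-1 \ge (1-\eta)|\cD|,
\end{align}
where the second inequality is equivalent to $\eta|\cD|\ge 1$.

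Condition (ii) is the main computation. Fix a subset $\cD_D'\subseteq \cD_D\setminus\{D^\star\}$ of size $t$ and split the double sum into diagonal and off-diagonal pairs. The $t$ diagonal terms are each at most $\beta$, and the $t(t-1)$ off-diagonal terms are each at most $\gamma$, so
\begin{align}
\frac{1}{t^2}\sum_{D_i,D_j\in\cD_D'}\chi_D(D_i,D_j) \le \frac{t\beta+t(t-1)\gamma}{t^2} = \gamma + \frac{\beta-\gamma}{t}.
\end{align}
The right-hand side is at most $\gamma+\gamma'$ exactly when $t\ge(\beta-\gamma)/\gamma'$.

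It remains to show that every admissible subset is this large. The admissibility condition in (ii) is $|\cD_D'|\ge|\cD_D\setminus\{D^\star\}|/m'$. Because $D^\star\in\cD=\cD_D$, we have $|\cD_D\setminus\{D^\star\}|=|\cD|-1$. Hence
\begin{align}
t \ge \frac{|\cD|-1}{m'} = \frac{\beta-\gamma}{\gamma'},
\end{align}
which is precisely the threshold required. So (ii) holds with average correlation $\gamma+\gamma'$, and $\SDA(\cD,\gamma+\gamma',\eta)\ge m'$.

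No step here is a real obstacle. The only care needed is to account for the diagonal pairs, which carry the larger bound $\beta$. The choice of $m'$ is made exactly so that the admissibility threshold matches the size at which the diagonal excess $(\beta-\gamma)/t$ falls below $\gamma'$.
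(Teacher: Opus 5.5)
Your proof is correct. With $\mathcal{D}_D=\mathcal{D}$, condition (i) follows from $\eta|\mathcal{D}|\ge 1$, and any admissible subset has size $t\ge(\beta-\gamma)/\gamma'$, so its average correlation is at most $\gamma+(\beta-\gamma)/t\le\gamma+\gamma'$. The paper gives no proof of its own and imports the result as Lemma 3.10 of \citet{feldman2017statistical}; your argument is the same direct verification of the definition used there, specialized to exact recovery, where removing $D^\star$ always leaves exactly $|\mathcal{D}|-1$ candidates.
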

The main result in the SQ framework is the following statement that relates the number of queries required to the statistical dimension, which is borrowed from Theorem 3.2 of \citet{feldman2017statistical}.
\begin{lemma}\label{lem:sq-lower-bound}
    Let $\cX$ be a domain and $\cD$ be a set of candidate distributions over $\cX$. For any $\bar\gamma > 0$ and $\eta \in (0, 1)$, any randomized SQ algorithm that solves the problem of finding the input distribution $D^\star \in \cD$ with probability at least $\alpha > \eta$ requires at least $(\alpha - \eta)/(1-\eta) \cdot \SDA(\cD, \bar\gamma, \eta)$ calls to the $\VSTAT(D^\star, (3\bar\gamma)^{-1})$ oracle.
\end{lemma}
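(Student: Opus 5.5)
The plan is to reproduce the adversary argument of \citet{feldman2017statistical}. Let $m'=\SDA(\cD,\bar\gamma,\eta)$, and let $D$ and $\cD_D$ be the reference distribution and finite family witnessing it. Set $n=(3\bar\gamma)^{-1}$. The adversarial oracle answers every query $h:\cX\to[0,1]$ with $p\defeq\EE_D[h]$. For each $D_i\in\cD_D$ write $p_i=\EE_{D_i}[h]$ and $\tau_i=\max\{n^{-1},\sqrt{p_i(1-p_i)/n}\}$. The answer $p$ is a \emph{valid} $\VSTAT(D_i,n)$ response exactly when $|p_i-p|\le\tau_i$. Call $D_i$ \emph{excluded} by $h$ when $|p_i-p|>\tau_i$.

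\textbf{Key claim.} No single query excludes as many as $|\cD_D\setminus\{D^\star\}|/m'$ distributions of $\cD_D\setminus\{D^\star\}$. I would split the excluded set into $A^+=\{p_i>p+\tau_i\}$ and $A^-=\{p_i<p-\tau_i\}$ and suppose, for contradiction, that one of them (say $A^+$) is large.
\begin{itemize}
\item \emph{Upper bound.} Since $|A^+|\ge|\cD_D\setminus\{D^\star\}|/m'$, condition (ii) of the statistical dimension applies to $A^+$. Using Cauchy--Schwarz under $D$, the total deviation satisfies
\begin{align}
\sum_{i\in A^+}(p_i-p)=\EE_D\Bigl[(h-p)\sum_{i\in A^+}\Bigl(\frac{D_i}{D}-1\Bigr)\Bigr]\le\sqrt{p(1-p)}\cdot\Bigl(\sum_{i,j\in A^+}\chi_D(D_i,D_j)\Bigr)^{1/2}\le\sqrt{p(1-p)}\,|A^+|\sqrt{\bar\gamma}.
\end{align}
So the average deviation over $A^+$ is at most $\sqrt{p(1-p)\bar\gamma}$.
\item \emph{Lower bound.} Each excluded $i$ has $p_i-p>\tau_i$. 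I need to convert this into $p_i-p>\sqrt{p(1-p)\bar\gamma}=\sqrt{p(1-p)/(3n)}$. If $p(1-p)\le 3/n$, the bound $\tau_i\ge 1/n$ suffices. Otherwise, compare $p_i(1-p_i)$ with $p(1-p)$. If $p_i$ is far from $p$, the deviation is already large. If not, $p_i(1-p_i)\ge p(1-p)/3$, so $\tau_i\ge\sqrt{p(1-p)/(3n)}$.
\end{itemize}
Together these contradict each other, and the same argument handles $A^-$.

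\textbf{Counting step.} By Yao's principle I may fix a deterministic algorithm making $q$ queries and take $D^\star$ uniform on $\cD_D$. Against the adversarial oracle its transcript is the same for every $D^\star$ that no query excludes. By the key claim, all but fewer than $q/m'$ of the mass of $\cD_D\setminus\{D^\star\}$ is left unexcluded (up to the $\eta$ slack). For every such $D^\star$ the algorithm outputs one fixed distribution, so it succeeds on at most a $1/|\cD_D|\le\eta$ fraction of them. Condition (i) handles the removal of $D^\star$ itself. This gives $\alpha\le\eta+(1-\eta)q/m'$, that is, $q\ge(\alpha-\eta)/(1-\eta)\cdot m'$.

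\textbf{Main obstacle.} The main obstacle is the constant bookkeeping in the lower-bound step of the key claim: $\tau_i$ is defined at $p_i$ while the Cauchy--Schwarz bound is in terms of $p$. The factor $3$ in $\VSTAT(D^\star,(3\bar\gamma)^{-1})$ is exactly what makes this comparison close. I would first try the case split on $p(1-p)$ versus $3/n$ described above.
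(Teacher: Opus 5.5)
Your route is the adversary argument of \citet{feldman2017statistical}, which the paper cites without proof, and your individual estimates are sound. You have $\EE_D[(h-p)(D_i/D-1)]=p_i-p$ and $\EE_D[(h-p)^2]\le p(1-p)$. Your case split also works: if $p(1-p)>3/n$ and $|p_i-p|\le\sqrt{p(1-p)/(3n)}$, then $|p_i-p|<\min\{p,1-p\}/3$, hence $p_i(1-p_i)\ge p(1-p)/3$ and $\tau_i\ge|p_i-p|$.

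The gap is in the Key claim. Write $N=|\cD_D\setminus\{f\}|$ and $A=A^+\cup A^-$ for the set excluded by a query. Negating the claim gives only $|A|\ge N/m'$, hence $\max\{|A^+|,|A^-|\}\ge N/(2m')$. That is below the size at which condition (ii) may be invoked, so ``suppose one of them (say $A^+$) is large'' does not follow. Padding $A^+$ with unexcluded distributions does not help, since they enlarge the set without adding deviation. What your argument actually proves is $|A^+|<N/m'$ and $|A^-|<N/m'$, i.e.\ $|A|<2N/m'$. Your counting then yields only $q\ge\frac{\alpha-\eta}{2(1-\eta)}\,m'$, a factor $2$ short of the statement.

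The split is forced by the signed form of condition (ii) as the paper writes it. The identity $\sum_{i,j}\chi_D(D_i,D_j)=\EE_D[(\sum_i(D_i/D-1))^2]$ controls only same-sign combinations. In Feldman et al.'s original definition the average correlation is $|\cD'|^{-2}\sum_{i,j}|\chi_D(D_i,D_j)|$. With that version, set $c_i=\mathrm{sign}(p_i-p)$ and bound the whole excluded set at once:
$\sum_{i\in A}|p_i-p|=\EE_D[(h-p)\sum_{i\in A}c_i(D_i/D-1)]\le\sqrt{p(1-p)}\,(\sum_{i,j\in A}|\chi_D(D_i,D_j)|)^{1/2}\le\sqrt{p(1-p)\bar\gamma}\,|A|$.
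Together with your per-element lower bound this gives $|A|<N/m'$ and the stated constant. The paper's application supplies exactly this, since it bounds $|\chi_\QQ(\PP_\theta,\PP_{\theta'})|$.

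A smaller point: the element removed from $\cD_D$ in the Key claim should be the output $f$ of the fixed deterministic algorithm run on the simulated answers $\EE_D[h]$, not the true input $D^\star$. The algorithm can succeed on $D^\star$ only if $D^\star=f$ or some query excludes $D^\star$. Condition (i), applied with $f\in\cD_D$, is what gives $1/|\cD_D|\le\eta$.
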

Our strategy for proving the lower bound is to first construct a set of candidate distributions $\cD$ that are $(\omega(k^{-1}), \beta)$-correlated with respect to reference distribution $\QQ$ with $\beta = D_{\chi^2} (\PP_{\theta^\star} \,\|\, \QQ)$ and $|\cD|$ exponentially large.
Then by \Cref{lem:correlated-sq dim} and \Cref{lem:sq-lower-bound}, we can derive the desired hardness result.
It remains to construct the set of candidate distributions $\cD$ that are $(\omega(k^{-1}), \beta)$-correlated with respect to $\QQ$.
To this end, we introduce the following result on the packing number of $k$-sparse vectors.
\begin{lemma}[Packing Number for $k$-Sparse Vectors]
    \label{lem:packing number}
    Define $\rho(u, v) = |\langle u, v\rangle|$. Let packing number $\cM_\rho(d, k, t)$ be the maximal cardinality of the set of $k$-sparse vectors in $\SSS^{d-1}$ such that $\rho(u, v) < t$ for any $u\neq v$ in the set. We have for any $t \in (1/k, 1)$ that
    \begin{align}
        \cM_\rho(d, k, t) \ge \frac 1 2 \cdot \exp\left( \frac{\min\left\{(d-k) t^2 \:,\: {3k t} \right\}}{8}\right). \label{eq:packing-number}
    \end{align}
\end{lemma}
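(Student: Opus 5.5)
We will use the probabilistic method: draw $N$ independent random $k$-sparse unit vectors and show that, with positive probability, every pair has $|\langle u,v\rangle|<t$. Concretely, let each $u^{(a)}$, $a\in[N]$, be a uniformly random support $S_a\in\cS_k$ together with i.i.d. uniform signs $\pm k^{-1/2}$ on $S_a$. Then $\langle u^{(a)},u^{(b)}\rangle = k^{-1}\sum_{j\in S_a\cap S_b}\epsilon_j$, where the $\epsilon_j$ are Rademacher and $|S_a\cap S_b|\sim\hypergeom(d,k,k)$. If we show $\Pr(|\langle u^{(a)},u^{(b)}\rangle|\ge t)\le p$ for one fixed pair, a union bound over $\binom N2\le N^2/2$ pairs gives a valid packing whenever $N^2p/2<1$. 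Taking $N=\lfloor p^{-1/2}\rfloor$ then yields $\cM_\rho\ge \tfrac12 p^{-1/2}$. So it suffices to prove $p\le 4\exp(-\min\{(d-k)t^2,3kt\}/4)$, up to adjusting constants; the factor $1/2$ and the exponent $1/8$ in \eqref{eq:packing-number} absorb the square root and the constants.

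To bound $p$, condition on $K=|S_a\cap S_b|$. By Hoeffding, $\Pr(|\sum_{j\le K}\epsilon_j|\ge kt\mid K)\le 2\exp(-k^2t^2/(2K))$, which is trivially $0$ when $K<kt$. We then split on $K$. If $K\le K_0$ for a threshold $K_0$, the conditional probability is at most $2\exp(-k^2t^2/(2K_0))$. The event $K> K_0$ is controlled by a hypergeometric tail. The hypergeometric is dominated in convex order by $\binomial(k,k/(d-k))$ (sampling without replacement; cf. the stochastic-ordering references), with mean $\mu\le k^2/(d-k)$. A Chernoff/Bernstein bound then gives $\Pr(K\ge K_0)\le\exp(-K_0/3)$ once $K_0\ge 2e\mu$, or more generally $\exp(-\tfrac{(K_0-\mu)^2}{2(\mu+ K_0/3)})$.

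Balancing the two pieces explains the two regimes in the minimum. When $\mu$ is the relevant scale, choose $K_0\asymp kt\cdot\max\{1,\,k t/(\ldots)\}$. More concretely, take $K_0 = \max\{2\mu, kt\}$-type thresholds. In the regime $(d-k)t^2\lesssim kt$ (small $t$, overlaps of order $\mu=k^2/(d-k)$ dominate), set $K_0\simeq\mu$ to get the exponent $k^2t^2/\mu\simeq(d-k)t^2$. In the regime of larger $t$, set $K_0\simeq kt$, which makes Hoeffding vacuous below $kt$, so only the overlap tail $\exp(-ckt)$ matters. This gives the $3kt$ term; the Bernstein constant $1/3$ is the source of the "3". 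The condition $t>1/k$ ensures $kt>1$, so the overlap event is nontrivial.

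The main obstacle is the hypergeometric tail with the right constants in the intermediate regime, where neither $\mu$ nor $kt$ clearly dominates. Here I will use the Bernstein form $\exp(-\tfrac{x^2}{2(\mu+x/3)})$ with $x\simeq kt$ and show it is at most $\exp(-\min\{x^2/(4\mu),3x/4\})$. Combined with the Hoeffding term, this produces exactly $\min\{(d-k)t^2,3kt\}$ up to the factor $1/8$ after the square root. Bookkeeping of constants, rather than any conceptual step, is where I expect the most care to be needed.
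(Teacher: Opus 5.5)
Your random construction (uniform support, independent signs $\pm k^{-1/2}$), the reduction of $\langle u,v\rangle$ to $R_K/k$ with $K\sim\hypergeom(d,k,k)$, and the union bound all match the paper. The gap is in the one-pair tail bound. Conditioning on $K$ and splitting at a threshold $K_0$,
\[
p\le \Pr(K\ge K_0)+2\exp\bigl(-k^2t^2/(2K_0)\bigr),
\]
loses a constant factor in the exponent. It therefore cannot give $p\le 4\exp(-\min\{(d-k)t^2,3kt\}/4)$, and the constants $1/2$, $1/8$, $3$ in the lemma do not follow.

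Concretely, let $\mu=k^2/(d-k)$ and take the crossover point $kt=3\mu$ with $k=d^{3/4}$. Then $t\approx 3d^{-1/4}\in(1/k,1)$ and $\mu\approx\sqrt d\to\infty$. Here $\min\{(d-k)t^2,3kt\}=9\mu$, so you need exponent $9\mu/4$. Your two choices both fall short:
\begin{itemize}
\item With $K_0=\mu+kt=4\mu$, the Bernstein tail is $\exp(-(kt)^2/(2(\mu+kt/3)))=e^{-9\mu/4}$, but the Hoeffding piece is only $2e^{-9\mu/8}$.
\item With $K_0=kt$, the Hoeffding piece vanishes, but $\Pr(K\ge 3\mu)\le\exp(-(2\mu)^2/(2(\mu+2\mu/3)))=e^{-6\mu/5}$.
\end{itemize}
No other threshold helps. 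In this regime $K$ is essentially Poisson$(\mu)$, so $\Pr(K\ge c\mu)=\exp(-(c\log c-c+1)\mu(1+o(1)))$, and $\max_c\min\{9/(2c),\,c\log c-c+1\}\approx 1.44<9/4$.

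The expression $\exp(-x^2/(2(\mu+x/3)))$ at $x=kt$ that you quote is the Bernstein bound for the \emph{signed} overlap, a mean-zero sum with variance $\mu$ and increments in $\{0,\pm1\}$. Applied to the count $K$, whose mean is $\mu$, the relevant deviation is $kt-\mu$, and you still pay separately for the signs. That is why the two pieces do not combine to the exponent you claim.

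The missing idea is to treat the overlap and the signs jointly. Since $\mathbb{E}[e^{\lambda R_K}]=\mathbb{E}[(\cosh\lambda)^K]$ and $x\mapsto(\cosh\lambda)^x$ is increasing, the usual stochastic order $K\preceq\binomial(k,q)$ with $q=k/(d-k)$ (Klenke--Mattner) gives
\[
\mathbb{E}[e^{\lambda R_K}]\le\bigl(1+q(\cosh\lambda-1)\bigr)^k.
\]
This is not a convex-order statement as you wrote, since the means $k^2/d$ and $k^2/(d-k)$ differ. The right-hand side is the moment generating function of a sum of $k$ i.i.d. variables in $\{0,\pm1\}$ with $\Pr(\pm1)=q/2$. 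The standard Bernstein argument then gives, in one step,
\[
\Pr(R_K\ge kt)\le\exp\bigl(-k^2t^2/(2\mu+2kt/3)\bigr)\le\exp\bigl(-\min\{(d-k)t^2,3kt\}/4\bigr).
\]

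The paper does essentially this. It passes to the i.i.d. three-point sum $W$ through a coupling $X\le Y$ and a lemma that appending Rademacher steps at most halves the upper tail; the resulting factor $2$ is absorbed into the constant $4$. It then applies Bernstein to $W$.

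Your thresholding argument is still valid if you only want $\cM_\rho\ge c\exp(c'\min\{(d-k)t^2,kt\})$ for unspecified absolute constants, which is all the SQ lower bound actually uses. It does not establish the inequality as stated.
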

With all these ingredients in place, we are ready to prove the main theorem.
\begin{proof}[Proof of \Cref{thm:sq-lower-bound-sparse}]
Let us pick parameter $\kappa_d \in ((\log d)^2, k/4)$ that scales with $d$ and set
\begin{align}
    t \ge
    \max\left\{ \sqrt\frac{\kappa_d}{d-k}, \frac{\kappa_d}{3k}\right\} \in (1/k, 1/2).
    \label{eq:gamma-choice}
\end{align}
Note that $t \in (1/k, 1/2)$ is able to hold by our choice of $\kappa_d$ and condition that $\omega((\log d)^2)\le k \le d/2$.
In this vein, we can pick $\cD$ to be the \emph{maximal} set of distributions $\PP_\theta$ for some $k$-sparse vectors $\theta \in \SSS^{d-1}$ satisfying $\rho(\theta, \theta') < t$ for any $\theta \neq \theta'$ in the set.
It follows from plugging \eqref{eq:gamma-choice} into \Cref{lem:packing number} that $|\cD| \ge \exp(\kappa_d/8)/2$, which is super polynomially large in $d$ for our choice of $\kappa_d$.

Next, we configure the remaining parameters in \Cref{lem:correlated-sq dim} and \Cref{lem:sq-lower-bound}.
We choose the reference distribution to be $\QQ$, in which the covariate $z$ is independent of the output $y$.
For $\beta$, we note that
\begin{align}
    \chi_{\QQ}(\PP_{\theta}, \PP_{\theta}) = D_{\chi^2} (\PP_{\theta} \,\|\, \QQ) = O(1),
\end{align}
which is a constant independent of $\theta$ due to the rotational invariance of the likelihood ratio with respect to $\theta$.
Thus, we denote this quantity by $B$ and set $\beta = D_{\chi^2} (\PP_{\theta^\star} \,\|\, \QQ) = B$.
For $\gamma$, we note that for any two $\PP_\theta, \PP_{\theta'}$ in $\cD$ for $\theta \neq \theta'$,
\begin{align}
    |\chi_\QQ(\PP_\theta, \PP_{\theta'}) |
    &= \left|\EE_{x\sim \QQ} \left[ \frac{\PP_\theta(x)}{\QQ(x)} \cdot \frac{\PP_{\theta'}(x)}{\QQ(x)} \right] - 1\right| \\
    &= \left|\EE_{x\sim \QQ} \Biggl[ \biggl( 1 + \sum_{s\ge s^\star} \zeta_s(y) h_s(\langle \theta, z\rangle) \biggr) \cdot \biggl( 1 + \sum_{s'\ge s^\star} \zeta_{s'}(y) h_{s'}(\langle \theta', z\rangle) \biggr) \Biggr] - 1\right| \\
    &= \sum_{s\ge s^\star} \EE_{\QQ}[\zeta_s(y)^2] \cdot |\langle \theta, \theta'\rangle|^{s} \le \sum_{s\ge s^\star} \EE_{\QQ}[\zeta_s(y)^2] \cdot t^{s} \le \EE_{\QQ}[\zeta_{s^\star}(y)^2] \cdot t^{s^\star} + \frac{t^{s^\star + 1}}{1 - t}.
\end{align}
Here, the third equality follows from the fact that only when $s=s'$, the cross term $\EE_{\QQ}[h_s(\langle \theta, z\rangle) h_s(\langle \theta', z\rangle)]$ is non-zero.
In particular, by the property of the Gaussian noise operator introduced in \eqref{eq:hermite-noise-operator}, we have that
\(
    \EE_{\QQ}[h_s(\langle \theta, z\rangle) h_s(\langle \theta', z\rangle)] = \langle \theta, \theta'\rangle^s < t^s.
\)
For the last inequality above, we simply use the fact that $\EE_{\QQ}[\zeta_s(y)^2] \le 1$ for any $s$ \citep{damian2024computational} and $t < 1$.
Now, we conclude that
\begin{align}
    |\chi_\QQ(\PP_\theta, \PP_{\theta'})| \le \left(\EE_{\QQ}[\zeta_{s^\star}(y)^2] + \frac{t}{1-t}\right) \cdot t^{s^\star} \le \left(\EE_{\QQ}[\zeta_{s^\star}(y)^2] + 1\right) \cdot t^{s^\star}.
\end{align}
We thus set $\gamma' = \gamma = \left(\EE_{\QQ}[\zeta_{s^\star}(y)^2] + 1\right) \cdot t^{s^\star} = \Theta(t^{s^\star})$.
Finally, we set $\eta = 1/3$ and $\alpha = 2/3$.
Then all the conditions in both \Cref{lem:correlated-sq dim} and \Cref{lem:sq-lower-bound} are satisfied and we have
\begin{align}
    \SDA(\cD, 2\gamma, 1/3) \ge \frac{(|\cD| - 1) \gamma}{\beta - \gamma} \ge \frac{|\cD|\gamma}{2\beta} \ge \frac{\gamma \exp(\kappa_d/8)}{4\beta}.
\end{align}
Lastly, recall that we have $|\langle \theta, \theta'\rangle| \le t$ for any $\theta \neq \theta'$ in $\cD$, which means that in order to achieve alignment at least $2t$ with the true signal $\theta^\star$, we need to \emph{exactly} identify the distribution $\PP_{\theta^\star}$.
Consequently, by \Cref{lem:sq-lower-bound}, we have that any randomized SQ algorithm that solves the problem of achieving alignment $2t$ with probability at least $2/3$ requires at least
${\gamma \exp(\kappa_d/8)}/(8 B) $ calls to the $\VSTAT(\PP_{\theta^\star}, (6\gamma)^{-1}) $ oracle.

\paragraph{Simplification of the lower bound}
To simplify the lower bound, let us take $\kappa_d = (\log d)^c/2$ for some constant $c>2$. Thus, the alignment $2t$ is upper bounded by
\begin{align}
    2 t \le \begin{cases}
        \tilde \omega(k^{-1}) & \mbox{ if } k < \sqrt d \\
        \tilde \omega(d^{-1/2}) & \mbox{ if } k \ge \sqrt d
    \end{cases},
\end{align}
where $\tilde \omega(\cdot)$ hides some poly-logarithmic factors.
The number of queries is still super polynomially large in $d$.
Following from \eqref{eq:gamma-choice}, we can safely set
\begin{align}
    t = \begin{cases}
        (\log d)^c/k & \mbox{ if } (\log d)^2 < k < \sqrt{d (\log d)^c } \\
        \sqrt{(\log d)^c/d} & \mbox{ if } \sqrt{d(\log d)^c } \le k \le d/2
    \end{cases},
\end{align}
Hence, the sample size parameter
\begin{align}
    (6\gamma)^{-1} = \frac{t^{-s^\star}}{6} \simeq \begin{cases}
        \frac{k^{s^\star}}{(\log d)^{cs^\star}} & \mbox{ if } (\log d)^2 < k < \sqrt{d (\log d)^c } \\
        \frac{d^{s^\star/2}}{6 (\log d)^{c s^\star/2}} & \mbox{ if } \sqrt{d(\log d)^c } \le k \le d/2
    \end{cases}.
\end{align}

Hence, we have established the desired lower bound on the sample complexity.
\end{proof}

\begin{proof}[Proof of \Cref{lem:packing number}]
    We use the probability method to prove the existence of a set of $k$-sparse vectors in $\SSS^{d-1}$ with the desired property.
    We i.i.d. sample $m$ vectors $\omega^{(1)}, \ldots, \omega^{(m)}$ from the following distribution:
    \begin{align}
        \omega: \quad \phi \sim \unif(\cS_{k}), \quad \omega_j = \begin{cases}
            \frac{1}{\sqrt k}, & \mbox{ w.p. }\frac 1 2 \mbox{ if } j\in \phi \\
            - \frac{1}{\sqrt k}, & \mbox{ w.p. }\frac 1 2 \mbox{ if } j\in \phi \\
            0, & j \notin \phi.
        \end{cases}, \quad j\in [d].
    \end{align}
    where we recall that $\cS_k$ is the set of all size-$k$ subsets in $[d]$.
    Since each $\omega^{(i)}$ is i.i.d. sampled,
    we can equivalently view $\langle \omega^{(i)}, \omega^{(j)}\rangle$ for $i\neq j$ as a random variable sampled from the following distribution:
    \begin{align}
        \langle \omega^{(i)}, \omega^{(j)}\rangle \overset{d}{=} \frac{R_X}{k}, \where R_X = r_1+ \ldots, r_X, \quad X\sim \hypergeom(d, k, k),
        \label{eq:hypergeom equiv}
    \end{align}
    where $r_1, r_2, \ldots$ are i.i.d. Rademacher random variables.
    Let us consider random variable $W$ distributed as
    \begin{align}
        W \overset{d}{=} \frac{R_Y}{k}, \where
        R_Y = r_1 + \ldots + r_Y,
        \quad Y\sim \binomial\Bigl(k, \frac{k}{d-k}\Bigr).
        \label{eq:binomial approx}
    \end{align}
    We will invoke the following fact on the tail probability regarding the above two random variables.
    \begin{proposition}
        \label{prop:tail-prob dominate}
        For $R_X$ and $R_Y$ defined in \eqref{eq:hypergeom equiv} and \eqref{eq:binomial approx}, respectively, we have that $\PP(R_X\ge t) \le 2\PP(R_Y \ge t)$ for any $t>1$.
    \end{proposition}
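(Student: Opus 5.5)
The plan is a coupling of the two random variables, followed by a monotonicity argument in the number of Rademacher summands. Write $F(x) \defeq \PP(r_1+\cdots+r_x \ge t)$ for fixed $t>1$. Then $\PP(R_X\ge t)=\EE[F(X)]$ and $\PP(R_Y\ge t)=\EE[F(Y)]$, since the Rademacher signs are independent of the counts $X$ and $Y$. The claim therefore reduces to comparing $\EE[F(X)]$ with $\EE[F(Y)]$ for $X\sim\hypergeom(d,k,k)$ and $Y\sim\binomial(k,k/(d-k))$.

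\textbf{Step 1: stochastic domination of the counts.} I would show $X\preceq_{\mathrm{st}} Y$. Write $X=\sum_{i=1}^k \ind\{\text{$i$-th element of $\phi'$ lies in $\phi$}\}$, drawing the elements of $\phi'$ sequentially without replacement. Conditional on the past, the $i$-th indicator has success probability
\[
\frac{k-(\text{hits so far})}{d-(i-1)} \le \frac{k}{d-k+1} \le \frac{k}{d-k}.
\]
A sequential coupling then dominates each indicator by an independent Bernoulli$(k/(d-k))$, which gives $X\le Y$ almost surely under the coupling. Alternatively, one can cite the stochastic-ordering results of \citet{klenke2010stochastic}.

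\textbf{Step 2: near-monotonicity of $F$.} $F$ is not monotone, because of parity: $r_1+\cdots+r_x$ has the parity of $x$. However, $F(x+2)\ge F(x)$ holds. Indeed, $S_{x+2}=S_x+(r_{x+1}+r_{x+2})$ with the increment symmetric in $\{-2,0,2\}$, and $t>1$. Conditioning on $S_x$:
\begin{itemize}
\item on $\{S_x\ge t+2\}$ the event $\{S_{x+2}\ge t\}$ holds surely;
\item on $\{t\le S_x<t+2\}$ it holds with probability at least $3/4$;
\item the mass lost from the boundary band is regained from the band $\{t-2\le S_x<t\}$, which has at least comparable probability by unimodality of the symmetric random walk.
\end{itemize}
I would also use $F(x+1)\ge \tfrac12 F(x)$, since $S_{x+1}\ge S_x+1$ with probability $1/2$. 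Combining the two gives $F(x)\le 2F(y)$ for all $y\ge x$, which is exactly where the factor $2$ comes from. With the coupling $X\le Y$, this yields
\[
\EE[F(X)]\le 2\EE[F(Y)].
\]

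\textbf{Main obstacle.} The main obstacle is the inequality $F(y)\ge F(x)/2$ for $y\ge x$ uniformly, specifically the parity/boundary issue in Step 2. The cleanest route I would try first is to avoid the $+2$ step altogether. For $y\ge x$, write $S_y=S_x+S'$ with $S'$ independent and symmetric, so $\PP(S'\ge 0)\ge 1/2$. Then
\[
F(y)\ge \PP(S_x\ge t,\ S'\ge 0)\ge \tfrac12 F(x),
\]
which settles the claim directly for every $t$ and makes the monotonicity argument above unnecessary.
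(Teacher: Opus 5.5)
Your proposal is correct, and its skeleton matches the paper's proof. Both couple $\hat X\le\hat Y$ almost surely, use the fact that adding Rademacher summands lowers the upper tail by at most a factor $2$, and then average over the coupling.

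The differences are local but worth noting. For $X\preceq Y$, the paper cites Theorem~1.1 of \citet{klenke2010stochastic} and then invokes the coupling characterization of stochastic order from \citet{szekli2012stochastic}. Your sequential coupling proves the domination directly and constructs the coupling explicitly: each draw-without-replacement indicator has conditional success probability at most $k/(d-k+1)\le k/(d-k)$, so it is dominated by an independent Bernoulli through a common uniform.

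For the comparison in the number of summands, your first version of Step~2 is exactly the paper's \Cref{prop:convolution-rad}. That is, a $+2$ step loses nothing by unimodality, and a $+1$ step loses at most a factor $\tfrac12$. In the $+2$ step you need the band point $t^\star-2$ (where $t^\star$ is the point of the right parity in $[t,t+2)$) to carry \emph{at least as much} mass as $t^\star$, not merely comparable mass. This is exactly where $t>1$ enters, since it forces $t^\star-2\ge 0$.

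Your final route replaces all of that with one line. You write $S_y=S_x+S'$ with $S'$ independent and symmetric, so $F(y)\ge \PP(S_x\ge t)\,\PP(S'\ge 0)\ge F(x)/2$. This gives the same constant $2$ and holds for every real $t$, so the hypothesis $t>1$ becomes unnecessary.

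What the paper's longer argument buys is the sharper fact that even-step increments cost nothing. That refinement is never used, because the conclusion carries the factor $2$ anyway. Your version is shorter and self-contained.
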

    The proof of the proposition is deferred to the end of the proof.
    Thus, it suffices to study the tail probability of $W$.
    Note that $W \overset{d}{=} \sum_{j=1}^k w_j$ where $w_j$ are i.i.d. sampled from
    \begin{align}
        w_j = \begin{cases}
            \frac{1}{k}, & \mbox{ w.p. }\frac{k}{2(d-k)} \\
            - \frac{1}{k}, & \mbox{ w.p. }\frac{k}{2(d-k)} \\
            0, & \mbox{ w.p. }1-\frac{k}{d-k}
        \end{cases}, \quad j\in [k].
    \end{align}
    where $\EE[w_j] = 0$ and $\EE[w_j^2] = (k(d-k))^{-1}$.
    Hence, we can apply the Bernstein inequality to obtain that for any $t > 1/k$,
    \begin{align}
        \PP(\langle \omega^{(i)}, \omega^{(j)}\rangle \ge t) &\le 2 \PP(W \ge t) \le 2 \exp\left(
            - \frac{k (t/k)^2 / 2}{ (k(d-k))^{-1} + t/(3 k^2)}
        \right) \\
        &= 2\exp\left(
            -\frac{k^2 t^2}{2k^2/(d-k) + 2kt/3}
        \right) \le 2 \exp\left( - \min\left\{ \frac{(d-k)t^2}{4}\:,\: \frac{3kt}{4} \right\}\right).
    \end{align}
    Suppose we randomly sample $m$ i.i.d. $\omega^{(i)}$ from the same distribution.
    Then the probability that all such pair $|\langle \omega^{(i)}, \omega^{(j)}\rangle| < t$ for $t > 1/k$ is lower bounded by
    \begin{align}
        \PP\bigl(|\langle \omega^{(i)}, \omega^{(j)}\rangle| < t, \forall i\neq j\bigr) &\ge 1 - m^2 \cdot  2\PP(\langle \omega^{(i)}, \omega^{(j)}\rangle \ge t) \\
        &\ge 1 - 4 m^2 \cdot \exp\left( - \min\left\{ \frac{(d-k)t^2}{4}\:,\: \frac{3kt}{4} \right\}\right).
    \end{align}
    Ensuring that the probability is nonzero will give us a valid construction of the set $\cD$.
    Therefore, there must exist a $\cD$ satisfying $|\langle \omega^{(i)}, \omega^{(j)}\rangle| < t$ for any $i\neq j$ and with size
    \begin{align}
        |\cD| \ge \frac 1 2 \cdot \exp\left( \frac{\min\left\{(d-k)t^2 \:,\: {3kt} \right\}}{8}\right).
    \end{align}
    Hence, we complete the proof.
\end{proof}
Next, we aim to present the proof of \Cref{prop:tail-prob dominate}. To proceed, let us introduce the definition of stochastic dominance.
\begin{definition}[Stochastic Dominance]
    For any real-valued random variable $X$ and $Y$, we say that $X$ is \emph{stochastically dominated} by $Y$, denoted by $X \preceq Y$, if $\PP(X \ge t) \leq \PP (Y \ge t)$ for every $t$.
\end{definition}
The following result is from Theorem A in Chapter 2 of \citet{szekli2012stochastic}.
\begin{proposition}
    \label{prop:coupling st}
For random variables $X$ and $Y$, $X \preceq Y$ if and only if there exists a coupling $(\hat X, \hat Y)$ with $\hat X \overset{d}{=} X$ and $\hat Y \overset{d}{=} Y$ such that $\hat X \le \hat Y$ almost surely.
\end{proposition}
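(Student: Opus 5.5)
We prove \Cref{prop:coupling st} via the quantile coupling. Throughout, let $F_X(t)=\PP(X\le t)$ and $F_Y(t)=\PP(Y\le t)$, and define the generalized inverses $F_X^{-1}(u)=\inf\{t: F_X(t)\ge u\}$ and $F_Y^{-1}(u)=\inf\{t: F_Y(t)\ge u\}$ for $u\in(0,1)$.

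\emph{The easy direction.} Suppose a coupling $(\hat X,\hat Y)$ exists with $\hat X\le\hat Y$ almost surely. For every $t$ we have the inclusion $\{\hat X\ge t\}\subseteq\{\hat Y\ge t\}$ up to a null set. Therefore $\PP(X\ge t)=\PP(\hat X\ge t)\le\PP(\hat Y\ge t)=\PP(Y\ge t)$, which is exactly $X\preceq Y$.

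\emph{The converse.} Take $U\sim\unif([0,1])$ and set $\hat X=F_X^{-1}(U)$ and $\hat Y=F_Y^{-1}(U)$.

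First, these have the right marginals. This follows from the Galois relation $F^{-1}(u)\le t \iff u\le F(t)$, which holds because $F$ is nondecreasing and right-continuous. It gives $\PP(F_X^{-1}(U)\le t)=\PP(U\le F_X(t))=F_X(t)$, so $\hat X\overset{d}{=}X$, and likewise $\hat Y\overset{d}{=}Y$.

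Second, we need $\hat X\le\hat Y$ pointwise, which reduces to $F_X\ge F_Y$ everywhere. This is the step requiring care, because the dominance hypothesis is stated with non-strict tails: $\PP(X\ge t)\le\PP(Y\ge t)$ for all $t$. We convert it to strict tails by continuity from above. Write $\{X>t\}=\bigcup_n\{X\ge t+1/n\}$. Then
\[
\PP(X>t)=\lim_{n\to\infty}\PP(X\ge t+1/n)\le\lim_{n\to\infty}\PP(Y\ge t+1/n)=\PP(Y>t).
\]
Taking complements, $F_X(t)\ge F_Y(t)$ for every $t$.

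Pointwise domination now follows from the definition of the inverses. For each $u$, the set $\{t:F_Y(t)\ge u\}$ is contained in $\{t:F_X(t)\ge u\}$, so its infimum is at least as large. Hence $F_X^{-1}(u)\le F_Y^{-1}(u)$ for every $u\in(0,1)$, and so $\hat X\le\hat Y$ surely.

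Apart from the strict/non-strict tail conversion above, the argument consists of standard facts about generalized inverses.
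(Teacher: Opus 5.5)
Your proof is correct. The paper gives no argument of its own for this statement. It imports the result from \citet{szekli2012stochastic} (Theorem A, Chapter 2), and it uses only the ``only if'' direction, in the proof of \Cref{prop:tail-prob dominate}, to couple a $\hypergeom(d,k,k)$ variable below a $\binomial(k,k/(d-k))$ variable.

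Your quantile-coupling argument is the elementary, self-contained route for real-valued variables. It exploits the total order on $\RR$ through generalized inverses. It also produces an explicit monotone coupling $(F_X^{-1}(U),F_Y^{-1}(U))$ rather than a bare existence statement. The citation buys brevity, and the general Strassen-type coupling theory extends beyond $\RR$, but none of that is needed for the integer-valued variables the paper actually uses.

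You were right to flag the one point specific to the paper's convention. Stochastic dominance is defined there through non-strict tails $\PP(X\ge t)\le\PP(Y\ge t)$, so obtaining $F_X\ge F_Y$ genuinely requires your limiting step.

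Two cosmetic remarks. First, the events $\{X\ge t+1/n\}$ increase to $\{X>t\}$, so the property you invoke is continuity of $\PP$ along increasing events (continuity from below), even though the threshold approaches $t$ from above. Second, $F_X^{-1}$ and $F_Y^{-1}$ are defined only on $(0,1)$. Either take $U$ uniform on the open interval, or conclude $\hat X\le\hat Y$ almost surely rather than surely.
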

\begin{proposition}[Theorem 1.1, \citet{klenke2010stochastic}]
    \label{prop:hypergeom stle binomial}
For any integers $d$ and $k$, let $X\sim \hypergeom(d, k, k)$ and $Y\sim \binomial(k, k/(d-k))$.
Then $X\preceq Y$.
\end{proposition}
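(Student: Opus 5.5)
We plan to prove $X\preceq Y$ through the coupling characterization in \Cref{prop:coupling st}: we build both variables on one probability space so that $\hat X\le\hat Y$ almost surely. Throughout we assume $d\ge 2k$, so that $k/(d-k)\le 1$ and $\binomial(k,k/(d-k))$ is well defined. This matches the regime $k\le d/2$ where the result is used.

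\textbf{Step 1 (sequential representation).} We realize $X\sim\hypergeom(d,k,k)$ as the number of marked items among $k$ draws without replacement from an urn of $d$ items, $k$ of which are marked. Let $S_{i-1}$ be the number of marked items seen in the first $i-1$ draws. Then the $i$-th draw is marked with conditional probability
\begin{align}
p_i \defeq \frac{k-S_{i-1}}{d-i+1}\le \frac{k}{d-k+1}\le \frac{k}{d-k}\defeq q,
\end{align}
since $i\le k$ and $S_{i-1}\ge 0$.

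\textbf{Step 2 (coupling).} Take i.i.d.\ $U_1,\dots,U_k\sim\unif([0,1])$. Define the draw indicators recursively by $B_i=\ind\{U_i\le p_i\}$, where $p_i$ is computed from $B_1,\dots,B_{i-1}$. Also set $C_i=\ind\{U_i\le q\}$. The sequence $(B_i)$ has exactly the law of the urn process, so $\hat X=\sum_i B_i\overset{d}{=}X$. The $C_i$ are i.i.d.\ Bernoulli$(q)$, so $\hat Y=\sum_i C_i\overset{d}{=}Y$.

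\textbf{Step 3 (conclusion).} Because $p_i\le q$ pointwise, we have $B_i\le C_i$ for every $i$, and hence $\hat X\le\hat Y$ almost surely. \Cref{prop:coupling st} then yields $X\preceq Y$.

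The only point requiring care is Step 1. One must check that the conditional success probability is uniformly bounded by $q$ over all histories, and that the recursive construction in Step 2 reproduces the hypergeometric law exactly. Both follow from the chain rule for the sequential draws.
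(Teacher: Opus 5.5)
Your coupling proof is correct. It takes a different route from the paper, which gives no argument for this statement and simply imports it from \citet{klenke2010stochastic}.

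The essential check goes through. For every history, the $i$-th draw is marked with conditional probability $(k-S_{i-1})/(d-i+1)\le k/(d-k+1)\le k/(d-k)$. Driving both the urn indicators $B_i$ and the Bernoulli indicators $C_i$ by the same uniform $U_i$ gives $\{U_i\le p_i\}\subseteq\{U_i\le q\}$, hence $B_i\le C_i$ pathwise. Your restriction $d\ge 2k$ costs nothing: it is exactly what makes $\binomial(k,k/(d-k))$ well defined, and it holds wherever the proposition is used ($k\le d/2$ in \Cref{thm:sq-lower-bound-sparse}).

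Your approach buys three things:
\begin{itemize}
\item It is elementary and self-contained.
\item It actually yields the slightly sharper domination by $\binomial(k,k/(d-k+1))$.
\item It produces the coupling explicitly. The paper's proof of \Cref{prop:tail-prob dominate} uses this proposition only to extract, via the (dominance $\Rightarrow$ coupling) direction of \Cref{prop:coupling st}, a pair $(\hat X,\hat Y)$ with $\hat X\le\hat Y$ almost surely. Your construction could therefore replace both the citation and that appeal to \citet{szekli2012stochastic}; you need only the trivial (coupling $\Rightarrow$ dominance) direction.
\end{itemize}
The citation instead outsources the argument to a general result on stochastic ordering of classical discrete distributions. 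That result gives more than is needed here, but nothing beyond this inequality is used in the paper.
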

Another way to think of the problem is that $\hypergeom(d, k, k)$ corresponds to the number of times a black ball is drawn when sampling for $k$ times from an urn with $d-k$ white balls and $k$ black balls without replacement, while $\binomial(k, k/(d-k))$ corresponds to sampling in the same urn but with replacement.
We claim the following fact on the tail probability of sums of Rademacher random variables.
\begin{proposition}[Sum of Rademacher Random Variables]
    \label{prop:convolution-rad}
    Let $r_1, r_2, \ldots$ be i.i.d. Rademacher random variables. Let $R_l = r_1 + \ldots + r_l$ for $l = 1, 2, \ldots$. Let $p_l(\cdot )$ be the probability mass function of $R_l$. Then the following holds for any $l = 1, 2, \ldots$:
    \begin{enumerate}
        \item $p_l$ is symmetric and supported on the set of odd integers if $l$ is odd, and supported on the set of even integers if $l$ is even.
        \item For $i \in \supp(p_l)$ and $i \ge 0$, $p_{l}(i)$ is a non-increasing function of $i$.
        \item $\PP(R_l \ge t) \le \PP(R_{l+2} \ge t)$ for any $t > 1$.
        \item $\PP(R_l \ge t) \le 2\PP(R_{l+1} \ge t)$ for any $t > 1$.
        \item $\PP(R_{l} \ge t) \le 2 \PP(R_{l + l'} \ge t)$ for any $l \ge 1$ and $l' \ge 1$.
    \end{enumerate}
\end{proposition}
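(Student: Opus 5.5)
The plan is to use the exact binomial form of $p_l$ for the first two items, then derive the tail comparisons by conditioning on the added Rademacher increments, and finally chain them for the last item.

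\textbf{Items 1 and 2.} Write $R_l = 2B - l$ with $B\sim\binomial(l,1/2)$, so that
\begin{align}
p_l(i)=2^{-l}\binom{l}{(l+i)/2}
\end{align}
for $i\equiv l \pmod 2$ with $|i|\le l$, and $p_l(i)=0$ otherwise.
\begin{itemize}
\item The support and parity claims are immediate from this formula.
\item Symmetry follows from $\binom{l}{j}=\binom{l}{l-j}$, or equivalently from $r_i\overset{d}{=}-r_i$.
\item Monotonicity in $i\ge 0$ is unimodality of the binomial coefficients: $\binom{l}{j}$ is non-increasing for $j\ge l/2$, which corresponds exactly to $i\ge 0$.
\end{itemize}

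\textbf{Item 3.} Condition on the two extra signs. The increment $r_{l+1}+r_{l+2}$ equals $-2$, $0$, $2$ with probabilities $1/4$, $1/2$, $1/4$, and it is independent of $R_l$. Hence
\begin{align}
\PP(R_{l+2}\ge t)-\PP(R_l\ge t)=\tfrac14\bigl[\PP(t-2\le R_l<t)-\PP(t\le R_l<t+2)\bigr].
\end{align}
Because $R_l$ lives on a lattice of spacing $2$, each half-open window of length $2$ contains exactly one lattice point of the right parity.
\begin{itemize}
\item Let $a$ be the point in $[t-2,t)$; the point in $[t,t+2)$ is then $a+2$.
\item The difference therefore equals $\tfrac14\,[p_l(a)-p_l(a+2)]$.
\item Since $t>1$, we have $a>-1$, so $a\ge 0$ as an integer.
\item Item 2 then gives $p_l(a)\ge p_l(a+2)$.
\end{itemize}
If $a$ or $a+2$ lies outside the support, the corresponding mass is $0$, and the inequality only becomes easier. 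The step I expect to need the most care is this boundary and parity bookkeeping; the hypothesis $t>1$ is used precisely to guarantee $a\ge 0$.

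\textbf{Item 4.} By independence of $r_{l+1}$ from $R_l$, and since $\PP(R_l\ge t-1)\ge\PP(R_l\ge t)$,
\begin{align}
\PP(R_{l+1}\ge t)=\tfrac12\PP(R_l\ge t-1)+\tfrac12\PP(R_l\ge t+1)\ge \tfrac12\PP(R_l\ge t).
\end{align}

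\textbf{Item 5.} Split on the parity of $l'$.
\begin{itemize}
\item If $l'$ is even, apply item 3 repeatedly $l'/2$ times. This gives $\PP(R_l\ge t)\le \PP(R_{l+l'}\ge t)$, which is stronger than needed.
\item If $l'$ is odd, apply item 4 once to pass from $l$ to $l+1$, losing a factor $2$. Then apply item 3 repeatedly $(l'-1)/2$ times with no further loss.
\end{itemize}
In both cases $\PP(R_l\ge t)\le 2\,\PP(R_{l+l'}\ge t)$.
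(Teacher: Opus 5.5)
Your proof is correct and follows essentially the same route as the paper. The paper also conditions on the added increments in items 3 and 4, works with the lattice point $t^\star=\min\{\tau\in\supp(p_l):\tau\ge t\}$ (your $a+2$), uses $t>1$ exactly to ensure $t^\star-2\ge 0$ so that item 2 applies, and chains items 3 and 4 for item 5. One small wording fix: when $a\notin\supp(p_l)$, setting that mass to zero does not make the inequality ``easier'' (it would then need $p_l(a+2)=0$). The case is harmless anyway, because $a\ge 0$ forces $a>l$ and hence $p_l(a)=p_l(a+2)=0$.
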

\begin{proof}[Proof of \Cref{prop:convolution-rad}]
    The first claim is immediate from the symmetry of the Rademacher random variables and the fact that the sum of an odd number of Rademacher random variables is odd, while the sum of an even number of Rademacher random variables is even.
    For the second claim, we note that
    \begin{align}
        p_l(i) = 2^{-l} \cdot \binom{l}{(i + l)/2}, \quad i \in \supp(p_l),
    \end{align}
    which is a non-increasing function for $i \ge 0$.
    For the third claim, we let $t^\star = 2\ceil{t/2}$ if $l$ is even and $t^\star = 2\ceil{(t-1)/2} + 1$ if $l$ is odd.
    In other words, $t^\star = \min\{ \tau \in \supp(p_l): \tau \ge t\}$.
    Then we have that
    \begin{align}
        \PP(R_{l+2}\ge t)
        &= \PP(R_{l} \ge t^\star + 2) + \PP(R_{l} = t^\star) \cdot \PP(r_{l+1}+r_{l+2} \ge 0) \\
        &\qqquad + \PP(R_{l} = t^\star - 2) \cdot \PP(r_{l+1}+r_{l+2} = 2)\\
        & = \PP(R_{l} \ge t^\star) + \left(\PP(R_{l} = t^\star - 2) - \PP(R_{l} = t^\star)\right) \cdot \PP(r_{l+1}+r_{l+2} = 2) \\
        &\ge \PP(R_{l} \ge t^\star) = \PP(R_l \ge t).
    \end{align}
    where in the first equality we use the fact that $r_{l+1} + r_{l+2}$ is supported on $\{-2, 0, 2\}$ and in the second equality we use the symmetric property of the distribution of $r_{l+1} + r_{l+2}$.
    The last inequality follows from the monotonicity of the probability mass function of $R_l$ for $t^\star - 2 \ge 0$ when $t > 1$.
    For the fourth claim, we similarly have that
    \begin{align}
        \PP(R_{l+1}\ge t)
        &\ge \PP(R_{l} \ge t^\star) - \PP(R_{l} = t^\star ) \cdot \PP(r_{l+1} = -1) \ge \frac 1 2 \PP(R_l \ge t^\star) = \frac 1 2 \PP(R_l \ge t).
    \end{align}
    The last claim follows from a combination of the third and fourth claims where
    \begin{align}
        \PP(R_{l+l'} \ge t) \ge \frac{1}{2} \PP(R_{l + 2\floor{l'/2}} \ge t) \ge \frac{1}{2} \PP(R_{l + 2\floor{l'/2} - 2} \ge t) \ge \ldots \ge \frac{1}{2} \PP(R_{l} \ge t).
    \end{align}
    Hence, the proof is complete.
\end{proof}
Next, we proceed to the proof of \Cref{prop:tail-prob dominate}.
\begin{proof}[Proof of \Cref{prop:tail-prob dominate}]
    By \Cref{prop:hypergeom stle binomial} and \Cref{prop:coupling st}, there exists a coupling $\hat X, \hat Y$ with $\law(\hat X) = \law(X)$ and $\law(\hat Y) = \law(Y)$ such that $\hat X \le \hat Y$ almost surely where $X\sim \hypergeom(d, k, k)$ and $Y\sim \binomial(k, k/(d-k))$.

    Consider i.i.d. Rademacher random variables $r_1, r_2, \ldots, r_{k}$.
    Let $R_l = r_1 + \ldots + r_l$ for $l = 1, 2, \ldots$.
    Since $R_{\hat X} = r_1+\ldots+r_{\hat X} \given \hat X \overset{d}{=} 2\binomial(\hat X, 1/2) - \hat X$ and $R_{\hat Y} = r_1+\ldots+r_{\hat Y} \given \hat Y \overset{d}{=} 2\binomial(\hat Y, 1/2) - \hat Y$ for the coupling $(\hat X, \hat Y)$ with $\hat X \le \hat Y$, we consider the conditional random variable
    \begin{align}
        r_{\hat X +i} \given (R_{\hat X + i-1}, \hat X, \hat Y) = r_{\hat X + i} = \begin{cases}
            1, & \mbox{ w.p. } 1/2\\
            -1, & \mbox{ w.p. } 1/2
        \end{cases}, \quad i = 1, 2, \ldots, \hat Y - \hat X
    \end{align}
    The equality holds by the i.i.d. property of these Rademacher random variables.
    From the distributional perspective, the distribution of $R_{\hat Y}$ is obtained by conducting convolution with the Rademacher distribution for $\hat Y - \hat X$ times on the distribution of $R_{\hat X}$. Invoking \Cref{prop:convolution-rad}, we directly conclude that $\PP(R_{\hat Y} \ge t \given \hat X, \hat Y) \ge \PP(R_{\hat X} \ge t \given \hat X, \hat Y)/2 $ for any $t > 1$ and $\hat Y \ge \hat X$.
    As $\hat Y \ge \hat X$ holds almost surely, by the law of total probability, we arrive at the conclusion that $\PP(R_{\hat Y} \ge t) \ge \PP(R_{\hat X} \ge t)/2$ for any $t > 1$.
\end{proof}

\section{Supporting Lemmas on Moment Calculations}\label{app:lem_moment_calculation}
\begin{lemma}[First moment]\label{lem:g-1st-moment}
Under \Cref{asp:oracle function}, for any $s\geq 0$, it holds for any $y\in\RR$ and $w,\theta \in \SSS^{d-1}$ that
\begin{align}
    \EE_{z\sim \cN(0,I_d)}\big[
        \psi(y, \langle w, z\rangle ) z \cdot h_s(\langle \theta, z\rangle)
    \big]  = 
        \sqrt{s+1} \cdot \hat \psi_{s+1}(y)
    \cdot \langle w,\theta\rangle^s w + 
    \sqrt{s}\cdot \hat \psi_{s-1}(y) \cdot \langle w,\theta\rangle^{s-1} \theta,
\end{align}
in the $L^2$ sense over the marginal distribution of $y$ under $\QQ$.
Here and below, the second term is interpreted as zero when $s=0$.
\end{lemma}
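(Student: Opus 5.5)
We prove \Cref{lem:g-1st-moment} with the tensor Stein identity \eqref{eq:hermite-tensor-Stein's lemma} in the variable $z$, treating $y$ as a fixed parameter. Throughout, $y$ is fixed and all expectations are over $z\sim\cN(0,I_d)$ only.

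First, we reduce to a polynomial (or smooth) $x$-dependence. By \ref{asp:quadratic_integrability}, $\psi(y,x)=\sum_r\hat\psi_r(y)h_r(x)$ in $L^2(\QQ)$. Both sides of the claimed identity are linear and $L^2$-continuous in $\psi$. On the left, this holds by Cauchy--Schwarz, since $\psi(y,\langle w,z\rangle)z$ is square integrable by the quadruple-integrability assumption. On the right, the coefficients $\hat\psi_{s\pm1}(y)$ are $L^2(\QQ_y)$-continuous. Hence it suffices to prove the identity for a single basis term $\psi(y,x)=c(y)h_r(x)$ and then sum over $r$, with convergence holding in $L^2(\QQ_y)$. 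This truncation and limiting step is the only technical point, and it is why the statement is asserted only in the $L^2$ sense.

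For $\psi=c(y)h_r(x)$ we compute
\[E_r:=\EE_z[h_r(\langle w,z\rangle)\,z\,h_s(\langle\theta,z\rangle)].\]
We use the recurrence \eqref{eq:hermite-recurrence} in the form $z\,h_s(\langle\theta,z\rangle)=\theta\big(\sqrt{s+1}h_{s+1}+\sqrt s h_{s-1}\big)(\langle\theta,z\rangle)+P_\theta^\perp z\,h_s(\langle\theta,z\rangle)$. A cleaner route is Stein's lemma for $\EE[F(z)z]=\EE[\nabla F(z)]$ with $F(z)=h_r(\langle w,z\rangle)h_s(\langle\theta,z\rangle)$. This gives
\[E_r=\sqrt r\,w\,\EE[h_{r-1}(\langle w,z\rangle)h_s(\langle\theta,z\rangle)]+\sqrt s\,\theta\,\EE[h_r(\langle w,z\rangle)h_{s-1}(\langle\theta,z\rangle)].\]
By \eqref{eq:hermite-noise-operator}, $\EE[h_a(\langle w,z\rangle)h_b(\langle\theta,z\rangle)]=\ind\{a=b\}\langle w,\theta\rangle^a$. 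So the first term survives only when $r=s+1$, giving $\sqrt{s+1}\langle w,\theta\rangle^s w$. The second survives only when $r=s-1$, giving $\sqrt s\langle w,\theta\rangle^{s-1}\theta$; this term vanishes for $s=0$.

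Multiplying by $\hat\psi_r(y)$ and summing over $r$ leaves exactly the two terms $r=s+1$ and $r=s-1$, which yields the stated formula. Stein's lemma applies since $F$ is a polynomial, so all needed derivatives are square integrable.
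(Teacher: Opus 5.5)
Your argument is correct, but it takes a different route from the paper. The paper keeps $\psi$ intact and argues geometrically. It splits $z$ into three parts: its component along $\theta$, its component along $w-\rho\theta$ (with $\rho=\langle w,\theta\rangle$), and a remainder that vanishes in expectation by Gaussian symmetry. This reduces the vector expectation to two scalars, $\EE[\psi(y,\langle w,z\rangle)\langle\theta,z\rangle h_s(\langle\theta,z\rangle)]$ and $\EE[\psi(y,\langle w,z\rangle)\langle w,z\rangle h_s(\langle\theta,z\rangle)]$. Each is evaluated by the three-term recurrence and the noise-operator identity \eqref{eq:hermite-noise-operator}. The answer is then reassembled with the coefficients $(\theta-\rho w)/(1-\rho^2)$ and $(w-\rho\theta)/(1-\rho^2)$, and the case $|\rho|=1$ is deferred to continuity.

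You instead expand $\psi$ in $x$ first and apply first-order Gaussian integration by parts to $h_r(\langle w,z\rangle)h_s(\langle\theta,z\rangle)$. Orthogonality then picks out $r=s\pm1$ at once. Your route needs no division by $1-\rho^2$, no separate treatment of $|\rho|=1$, and no symmetry argument for the orthogonal remainder. It also makes transparent why only $\hat\psi_{s\pm1}$ appear. It uses only $\psi(y,\cdot)\in L^2(\cN(0,1))$, whereas the paper's second scalar term applies the noise operator to $\psi(y,x)x$ and so relies on the quadruple-integrability assumption. In exchange, the paper's route lets the noise-operator identity absorb the Hermite expansion, so no truncation argument has to be written out.

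One point in your limiting step should be justified differently. The continuity you need is with respect to the $L^2(\cN(0,1))$ norm, since that is where the partial sums $\sum_{r\le N}\hat\psi_r(y)h_r$ converge. It follows from Cauchy--Schwarz against the polynomial factor $z\,h_s(\langle\theta,z\rangle)$, which lies in $L^2(\cN(0,I_d))$; square-integrability of $\psi(y,\langle w,z\rangle)z$ is not what gives it. With that fixed, for $\QQ$-almost every $y$ the left side evaluated at the truncation equals the right side exactly once $N\ge s+1$. Letting $N\to\infty$ then yields the identity for almost every $y$, which is stronger than the $L^2$ statement. No continuity of the right side is needed, since it involves only two coefficients.
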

\begin{proof}[Proof of \Cref{lem:g-1st-moment}]
For convenience, we denote $\rho:=\langle w,\theta\rangle$.
We first consider $|\rho|<1$; the cases $|\rho|=1$ follow by continuity.
Decomposing $z$ into the direction of $\theta$ and the orthogonal complement of $\theta$, i.e., $z=\langle \theta,z\rangle\theta + P_{\theta}^\perp z$ where $P_{\theta}^{\perp} = I - {\theta}{\theta}^\top$, we have
\begin{align}
    \EE_{z\sim\cN_d}[\psi(y,\langle w, z\rangle) z \cdot h_s(\langle\theta, z\rangle)]
    & = \EE_{z\sim\cN_d} \big[
        \psi(y,\langle w,z\rangle) \cdot \langle\theta,z\rangle \cdot h_s(\langle\theta,z\rangle)
    \big] \cdot {\theta} \\
    &\qquad + \EE_{z\sim \cN_d} \big[
        \psi(y,\langle w,z\rangle) \cdot h_s(\langle\theta,z\rangle) \cdot P_{\theta}^{\perp} z
    \big] \label{eq:g-1st-moment-1}.
\end{align}
Further note that $P_{\theta}^{\perp} z = \langle P_{\theta}^\perp z,P_{\theta}^\perp w\rangle P_{\theta}^\perp w/\|P_{\theta}^\perp w\|_2^2 + P_w^\perp P_{\theta}^\perp z$ where $P_{\theta}^\perp w = w-\rho\theta$ and $\|P_{\theta}^\perp w\|_2^2 = 1-\rho^2$.
Then by symmetry of the Gaussian distribution, and plugging in $P_{\theta}^\perp z = z - \langle\theta,z\rangle \theta$, the second term on the right-hand side of \eqref{eq:g-1st-moment-1} can be written as 
\begin{align}
    \EE_{z\sim \cN_d} \big[
        \psi(y,\langle w,z\rangle) \cdot h_s(\langle\theta,z\rangle) \cdot P_{\theta}^{\perp} z
    \big] &= \EE_{z\sim\cN_d} \big[
        \psi(y,\langle w,z\rangle) \cdot h_s(\langle\theta,z\rangle) \cdot \langle P_{\theta}^\perp z, w-\rho\theta\rangle
    \big] \cdot \frac{w-\rho\theta}{1-\rho^2} \\
    &= \EE_{z\sim\cN_d} \big[
        \psi(y,\langle w,z\rangle) \cdot \langle w,z\rangle \cdot h_s(\langle\theta,z\rangle)
    \big] \cdot \frac{w-\rho\theta}{1-\rho^2} \\
    &\qquad + \EE_{z\sim\cN_d} \big[
        \psi(y,\langle w,z\rangle) \cdot \langle \theta,z\rangle \cdot h_s(\langle\theta,z\rangle) 
    \big] \cdot \frac{-\rho w+\rho^2\theta}{1-\rho^2}
\end{align}
Plugging the above identity back into \eqref{eq:g-1st-moment-1}, we obtain
\begin{align}
    &\EE_{z\sim\cN_d}[\psi(y,\langle w, z\rangle) z \cdot h_s(\langle\theta, z\rangle)] \label{eq:g-1st-moment-2}\\
    &\quad = \underbrace{\EE_{z\sim\cN_d} \big[
        \psi(y,\langle w,z\rangle) \cdot \langle\theta,z\rangle \cdot h_s(\langle\theta,z\rangle)
    \big]}_{\dr (\RNum{1})} \cdot \frac{{\theta} - \rho w}{1- \rho^2} 
    + \underbrace{\EE_{z\sim \cN_d} \big[
        \psi(y,\langle w,z\rangle) \cdot \langle w,z\rangle \cdot h_s(\langle\theta,z\rangle)\big]}_{\dr (\RNum{2})} \cdot \frac{w - \rho {\theta}}{1 - \rho^2}.
\end{align}
Next, we analyze terms (\RNum{1}) and (\RNum{2}) in \eqref{eq:g-1st-moment-2}.

For convenience, we define the following Gaussian noise operator over the second argument of $\psi$: 
\begin{align}
    \noiseop{\rho} \psi(y, x) = \EE_{x'\sim \cN}\big[
        \psi\big(y, \rho x + \sqrt{1-\rho^2} x'\big)
    \big].
\end{align}
For term (\RNum{1}), writing $x=\langle\theta,z\rangle$ and $x'=\langle P_{\theta}^\perp w, z\rangle / \|P_{\theta}^\perp w\|_2$, it follows from $z\sim\cN_d$ that $x,x'\iidfrom\cN$.
Also, $\langle w,z\rangle = \rho x + \sqrt{1-\rho^2} x'$.
Then we have
\begin{align}
    \text{(\RNum{1})} 
    &= \EE_{x\sim\cN} \left[
        \noiseop{\rho} \psi(y,x) \cdot x \cdot h_s(x) \right]
    = \EE_{x\sim\cN} \left[
        \sqrt{s+1} \cdot \noiseop{\rho} \psi(y,x) \cdot h_{s+1}(x) + \sqrt{s} \cdot \noiseop{\rho} \psi(y,x) \cdot h_{s-1}(x) 
    \right] \\
    &\LQeq \sqrt{s+1} \cdot \hat\psi_{s+1}(y) \cdot \rho^{s+1} + \sqrt{s} \cdot \hat\psi_{s-1}(y) \cdot \rho^{s-1}.
    \label{eq:g-1st-moment-term1}
\end{align}
where the second equality follows from the recurrence relation of the Hermite polynomials in \eqref{eq:hermite-recurrence}, 
and the third equality follows from the property of the Gaussian noise operator in \eqref{eq:hermite-noise-operator}. 
Similarly for term (\RNum{2}), we apply \eqref{eq:hermite-noise-operator} to obtain
\begin{align}
    \text{(\RNum{2})}
    &= \EE_{x\sim \cN} \left[
        \noiseop{\rho}\left( \psi(y, x) x\right) \cdot h_s(x)
    \right]
    \overset{L^2(\QQ)}{=} \rho^s \cdot \EE_{x\sim \cN}\left[\psi(y, x) \cdot x \cdot h_s(x) \right] \\
    &\LQeq \rho^s \cdot \left(\sqrt{s+1} \cdot \hat\psi_{s+1}(y) + \sqrt{s} \cdot \hat\psi_{s-1}(y) \right), 
    \label{eq:g-1st-moment-term2}
\end{align}
where the last equality again follows from the recurrence relation of the Hermite polynomials in \eqref{eq:hermite-recurrence}.
Finally, plugging \eqref{eq:g-1st-moment-term1} and \eqref{eq:g-1st-moment-term2} into \eqref{eq:g-1st-moment-1}, 
\begin{align}
    \EE_{z\sim\cN_d}[\psi(y,\langle w, z\rangle) z \cdot h_s(\langle\theta, z\rangle)]
    & \overset{L^2(\QQ)}{=}  \left(\sqrt{s+1} \cdot \hat\psi_{s+1}(y) \cdot \rho^{s+1} + \sqrt{s} \cdot \hat\psi_{s-1}(y) \cdot \rho^{s-1}\right) \cdot \frac{{\theta} - \rho w}{1- \rho^2} \\
    &\qquad + \rho^s \cdot \left(\sqrt{s+1} \cdot \hat\psi_{s+1}(y) + \sqrt{s} \cdot \hat\psi_{s-1}(y) \right)  \cdot \frac{w - \rho {\theta}}{1 - \rho^2} \\
    & \overset{~~~~~~}{=}
        \sqrt{s+1} \cdot \hat \psi_{s+1}(y)
    \cdot \rho^{s} w + 
    \sqrt{s}\cdot \hat \psi_{s-1}(y) \cdot \rho^{s-1} {\theta}, 
\end{align}
which completes the proof.
\end{proof}

An implication of the previous lemma is that 
\begin{align}
    \EE_\QQ[h_{s^\star}(\langle \theta^\star, z\rangle) \cdot \sigma'(\langle z, \theta\rangle) \cdot \langle z, \theta^\star\rangle] = s^\star \cdot \hat\sigma^{(s^\star)} \cdot \langle \theta^\star, \theta\rangle^{s^\star -1} + \sqrt{(s^\star+1)(s^\star+2)} \cdot \hat\sigma^{(s^\star+2)} \cdot \langle \theta^\star, \theta\rangle^{s^\star+1},
\end{align}
where we take $\hat\sigma^{(s)}$ as the $s$-th normalized Hermite coefficient of $\sigma$. 
Here, we take $\psi(y, x)$ as $\sigma'(x)$ and thus $\hat \psi_s(y) = \sqrt{s+1} \cdot \hat\sigma^{(s+1)}$.

\begin{lemma}[Decomposition of first-order moment]\label{lem:first-moment-decomp}
Under \Cref{asp:oracle function}, define
\begin{align}
    g= \frac{1}{nL} \sum_{i=1}^n \sum_{l=1}^L \big(\psi(y_i, \langle w_l, z_i\rangle) \cdot z_i - \widehat\psi_1(y_i) \cdot w_l\big),
\end{align}
where $(z_1,y_1),\ldots,(z_n,y_n)\iidfrom \PP_{\theta^\star}$ and $w_1,\ldots,w_L\in\SSS^{d-1}$ are fixed vectors.
Then it holds that 
\begin{align}
     \EE_{\PP_{\theta^\star}}[g] &=
     \sum_{s\ge s^\star} \EE_{\QQ}[\zeta_s(y)\cdot \hat\psi_{s-1}(y)] \cdot \frac{\sqrt{s}}{L} \sum_{l=1}^L  \langle  w_l, \theta^\star\rangle^{s-1} \cdot \theta^\star \\
     &\qquad +\sum_{s\ge s^\star} \EE_{\QQ}[\zeta_s(y)\cdot \hat\psi_{s+1}(y)] \cdot \frac{\sqrt{s+1}}{L} \sum_{l=1}^L  \langle  w_l, \theta^\star\rangle^{s}\cdot w_l  .
\end{align}
\end{lemma}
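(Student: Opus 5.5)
By linearity of expectation and since each sample is i.i.d.\ from $\PP_{\theta^\star}$, it suffices to compute, for a single fixed $w\in\SSS^{d-1}$, the quantity $\EE_{\PP_{\theta^\star}}[\psi(y,\langle w,z\rangle)z - \hat\psi_1(y)w]$. We then average over $l\in[L]$; the average over $i\in[n]$ is trivial. The plan is to move everything to the null distribution $\QQ$ via the likelihood ratio decomposition \eqref{eq:likelihood-ratio-decomp}:
\begin{align}
\EE_{\PP_{\theta^\star}}[\psi(y,\langle w,z\rangle)z] = \EE_{\QQ}\Bigl[\psi(y,\langle w,z\rangle)z\Bigl(1+\sum_{s\ge s^\star}\zeta_s(y)h_s(\langle\theta^\star,z\rangle)\Bigr)\Bigr].
\end{align}
The $s=1,\dots,s^\star-1$ terms vanish by the definition of $s^\star$, since $\EE_\QQ[\zeta_s^2]=0$ there. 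We also use $\zeta_0\equiv 1$.

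Exchanging the sum and the expectation is legitimate because the pairing is an $L^2(\QQ)$ inner product. Each coordinate of $\psi(y,\langle w,z\rangle)z$ lies in $L^2(\QQ)$ by \Cref{asp:quadratic_integrability}, and the likelihood ratio expansion converges in $L^2(\QQ)$. Since $y$ and $z$ are independent under $\QQ$, we condition on $y$ and apply \Cref{lem:g-1st-moment} with $\theta=\theta^\star$ to each term. This gives
\begin{align}
\EE_{z}[\psi(y,\langle w,z\rangle)z\,h_s(\langle\theta^\star,z\rangle)] = \sqrt{s+1}\,\hat\psi_{s+1}(y)\langle w,\theta^\star\rangle^s w + \sqrt{s}\,\hat\psi_{s-1}(y)\langle w,\theta^\star\rangle^{s-1}\theta^\star.
\end{align}
Multiplying by $\zeta_s(y)$ and taking $\EE_\QQ$ over $y$ produces exactly the two sums in the claim for $s\ge s^\star$.

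Next we handle the $s=0$ term, where $\zeta_0=1$. \Cref{lem:g-1st-moment} with $s=0$ gives $\EE_\QQ[\hat\psi_1(y)]\,w$. This is cancelled by the debiasing term, because $y$ has the same marginal $\QQ(y)$ under $\PP_{\theta^\star}$, so $\EE_{\PP_{\theta^\star}}[\hat\psi_1(y)w]=\EE_\QQ[\hat\psi_1(y)]\,w$.

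The main obstacle is justifying the interchange of the infinite Hermite sum with the expectation. I would treat it by Parseval in $L^2(\QQ)$. Fix a coordinate $v$ and write $F_v(y,z)=\psi(y,\langle w,z\rangle)\langle z,v\rangle\in L^2(\QQ)$. Its projection onto the span of $\{\zeta\text{-weighted } h_s(\langle\theta^\star,z\rangle)\}$ gives a convergent series, since $\sum_s\EE_\QQ[\zeta_s^2]$ is finite under square integrability of $\PP/\QQ$. The coefficient sums $\sum_s\EE_\QQ[\zeta_s\hat\psi_{s\pm1}]\langle w,\theta^\star\rangle^{s}$ converge absolutely because $|\langle w,\theta^\star\rangle|\le 1$ and Cauchy--Schwarz applies with $\sum_s\EE_\QQ[\hat\psi_s^2]<\infty$. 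Finally, averaging over $l$ finishes the proof.
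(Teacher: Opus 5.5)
Your proposal is correct and is essentially the paper's proof: change measure to $\QQ$, expand the likelihood ratio, apply \Cref{lem:g-1st-moment} with $\theta=\theta^\star$ termwise, and note that the $s=0$ term gives $\EE_\QQ[\hat\psi_1(y)]\,w_l$, which cancels the debiasing term; your $L^2(\QQ)$-continuity argument makes explicit the sum--expectation interchange that the paper performs without comment. One minor overstatement is that the absolute-convergence remark only works for $|\langle w,\theta^\star\rangle|<1$, since at $|\langle w,\theta^\star\rangle|=1$ the $\sqrt{s}$ weights are not controlled by Cauchy--Schwarz and $\sum_s\EE_\QQ[\hat\psi_s^2]<\infty$ alone; this is harmless, because continuity of the $L^2(\QQ)$ inner product already gives convergence of the series.
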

\begin{proof}[Proof of \Cref{lem:first-moment-decomp}]  \label{proof:first-moment-decomp} 
        Applying a change of measure from $\PP_{\theta^\star}$ to $\QQ$ and invoking \eqref{eq:likelihood-ratio-decomp}, we get
\begin{align}
    \EE_{\PP_{\theta^\star}}[g] &= \frac 1 L \sum_{l=1}^L \EE_{\PP_{\theta^\star}}\left[\psi(y, \langle  w_l, z\rangle)\cdot  z - \hat\psi_1(y)\cdot   w_l
    \right] \nonumber\\
    &=  \frac 1 L \sum_{l=1}^L \EE_{\QQ}\bigg[
        \psi(y, \langle  w_l, z\rangle ) z \cdot
        \bigg(1 +  \sum_{s\ge s^\star} \zeta_s(y) h_s(\langle \theta^\star, z\rangle)\bigg) - \hat\psi_1(y) w_l \bigg]\\
    &= \frac{1}{L}\sum_{l=1}^L \sum_{s\geq s^\star} \EE_\QQ[\zeta_s(y)\cdot\psi(y,\langle w_l,z\rangle) \cdot z \cdot h_s(\langle\theta^\star,z\rangle)]
    + \frac{1}{L}\sum_{l=1}^L \EE_\QQ\left[\psi(y,\langle w_l,z\rangle)z - \hat\psi_1(y) w_l\right].
\end{align}
Applying \Cref{lem:g-1st-moment} to each term in the above expression completes the proof.
\end{proof}

\begin{lemma}[Second moment on nice event]\label{lem:g-2nd-moment}
Suppose $\psi:\RR\times\RR\rightarrow \RR$ satisfies \Cref{asp:oracle function} with generative exponent $s^\star\geq1$.
Suppose $\EE_{\QQ}[\zeta_s(y)^2] \le C$ for some universal $C=O(1)$ and for all $s\ge s^\star$.
For $\epsilon,\epsilon_1\in(0,1/2)$ satisfying $4e\epsilon^2<1/2$, consider any $w, w', \theta^\star, v\in\SSS^{d-1}$ such that
\begin{gather}
    \max\{|\langle \theta^\star, w\rangle|, |\langle \theta^\star, w'\rangle|\} \le \epsilon, \quad |\langle w, w'\rangle| \le \epsilon_1
\end{gather}
Also denote $\epsilon_0:=\max\{|\langle v, w\rangle|, |\langle v, w'\rangle|\}\in[0,1]$.
Then it holds that
\begin{align}
    \EE_{\PP_{\theta^\star}} \left[
            \psi(y, \langle w, z\rangle) \psi(y, \langle w', z\rangle) \langle v, z\rangle^2
    \right] &\lesssim \ind(s^\star=1) + \epsilon_1^{s^\star -2} \epsilon_0^2 + \epsilon_1^{s^\star - 1} + \epsilon_0^{2} \cdot \epsilon^{2s^\star - 2} + \ind(s^\star \ge 4) \cdot 
    \epsilon_0^{2} \cdot \epsilon^{2 s^\star -3} \\
    &\qquad + \epsilon^{2s^\star - 1}
    + |\langle\theta^\star,v\rangle| \big( \epsilon_1^{s^\star - 2} \cdot \epsilon_0 \cdot \epsilon 
    + \epsilon_0 \cdot \epsilon^{2s^\star - 2} \big),
\end{align}
where $\lesssim$ hides constants that only depend on $s^\star$, $\EE_\QQ[\psi(y, x)^4]$ and $C$.
Moreover, if we further suppose that either $v=\theta^\star$ or $v\perp\theta^\star$, then for $s^\star\geq2$,
\begin{align}
&\EE_{\PP_{\theta^\star}} \left[
    \psi(y, \langle w, z\rangle) \psi(y, \langle w', z\rangle) \langle v, z\rangle^2
\right] \\
& \qquad \lesssim \epsilon_1^{s^\star - 1} \cdot \bigg( 1 +  \frac{\epsilon_0^2}{\epsilon_1} + \epsilon\left(\frac{\epsilon^2}{\epsilon_1}\right)^{s^\star - 1} + \ind(v\perp \theta^\star)\cdot \left(\frac{\epsilon^2}{\epsilon_1}\right)^{s^\star - 2} \cdot \frac{\epsilon_0^2}{\epsilon_1} \cdot (\epsilon^2 + \epsilon \cdot \ind(s^\star \ge 4))\bigg).
\end{align}
\end{lemma}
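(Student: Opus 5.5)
We change measure from $\PP_{\theta^\star}$ to $\QQ$ using \eqref{eq:likelihood-ratio-decomp}. This writes the target as
\begin{align}
\EE_{\QQ}\Big[\psi(y,\langle w,z\rangle)\psi(y,\langle w',z\rangle)\langle v,z\rangle^2\Big(1+\sum_{s\ge s^\star}\zeta_s(y)h_s(\langle\theta^\star,z\rangle)\Big)\Big].
\end{align}
Quadruple integrability (\ref{asp:quadratic_integrability}) justifies the exchange. Under $\QQ$, $y$ is independent of $z$, so we can condition on $y$ and expand both $\psi(y,\cdot)$ factors in Hermite series with coefficients $\hat\psi_a(y),\hat\psi_b(y)$. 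The problem then reduces to Gaussian integrals of the form $\EE_z[h_a(\langle w,z\rangle)h_b(\langle w',z\rangle)\langle v,z\rangle^2 h_s(\langle\theta^\star,z\rangle)]$. We write $\langle v,z\rangle^2=h_0+\sqrt2 h_2(\langle v,z\rangle)$ and evaluate the resulting four-fold product of Hermite polynomials via the Wick/diagram formula. The integral is a sum over perfect matchings between the four directions $w,w',v,\theta^\star$, in which $h_a$ contributes $a$ legs, $h_b$ contributes $b$, $v$ contributes $0$ or $2$, and $\theta^\star$ contributes $s$. Each edge carries the corresponding inner product: $\epsilon_1$ for $w$--$w'$, $\le\epsilon$ for $w$--$\theta^\star$ or $w'$--$\theta^\star$, $\le\epsilon_0$ for $v$--$w$ or $v$--$w'$, and $|\langle\theta^\star,v\rangle|$ for $v$--$\theta^\star$. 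There are no self-loops.

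The key step is to catalogue the minimal-weight matchings under the constraints $s\ge s^\star$ (or $s=0$ for the null term) and $\hat\psi_a\equiv0$ for $1\le a\le s^\star-2$ (\ref{asp:high_pass}).
\begin{itemize}
\item For $s=0$ with no $v$ legs, $a=b$, which gives $\epsilon_1^{a}$. The case $a=b=0$ must be absorbed: the $\hat\psi_0$ term under $\langle v,z\rangle^2$ is a constant (contributing the $\ind(s^\star=1)$ type term), or else it pairs with $v$ edges, giving $\epsilon_0^2\epsilon_1^{s^\star-2}$. The smallest nonzero $a\ge1$ is $a=s^\star-1$, giving $\epsilon_1^{s^\star-1}$.
\item For $s\ge s^\star$, the $s$ legs of $\theta^\star$ must go to $w,w'$, or to $v$ (at most 2). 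This produces $\epsilon^{2s^\star-2}\epsilon_0^2$, $\epsilon^{2s^\star-1}$, and the $|\langle\theta^\star,v\rangle|$-terms $\epsilon_0\epsilon^{2s^\star-2}$ and $\epsilon_1^{s^\star-2}\epsilon_0\epsilon$.
\item Parity and the high-pass gap produce the odd-exponent term $\epsilon_0^2\epsilon^{2s^\star-3}$, which only appears when $s^\star\ge4$.
\end{itemize}
Each monomial comes with a combinatorial factor times $\EE_\QQ[|\zeta_s\hat\psi_a\hat\psi_b|]$. We bound the latter by Cauchy--Schwarz using $\EE_\QQ[\zeta_s^2]\le C$ and $\EE_\QQ[\psi^4]<\infty$, the latter controlling products $\hat\psi_a\hat\psi_b$ in $L^2$.

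The main obstacle is summing the infinite series over $(a,b,s)$ and over matchings without losing the leading order. The combinatorial factors (products of $\sqrt{a!b!s!}$ ratios) grow, so we need geometric decay. The plan is to group terms by total degree. All edge weights are small, with $\epsilon,\epsilon_1<1/2$, and the hypothesis $4e\epsilon^2<1/2$ ensures that the $\theta^\star$-legs, which split into $w$ and $w'$ edges with multinomial counts, still decay: the number of matchings is bounded by roughly $(e\cdot 4\epsilon^2)^{s}$ after Stirling. Rather than tracking $\hat\psi$ coefficients individually, we apply the Gaussian noise operator bound as in \eqref{eq:gaussian-noise-op}. That is, we apply Cauchy--Schwarz to the residual part beyond the leading monomials, obtaining $\sqrt{\EE_\QQ[\psi^4]}$ times an $\ell^2$ norm of geometric sums. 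This keeps the constants dependent only on $s^\star$, $\EE_\QQ[\psi^4]$ and $C$.

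For the refined bound, we specialize to $v=\theta^\star$ or $v\perp\theta^\star$. In the perpendicular case $\langle\theta^\star,v\rangle=0$, so the mixed terms vanish. In the parallel case $v=\theta^\star$, the $v$ legs merge with the $\theta^\star$ legs, so $\epsilon_0\le\epsilon$. We then factor out $\epsilon_1^{s^\star-1}$ from each monomial of the first bound (dropping $\ind(s^\star=1)$ since $s^\star\ge2$) and rewrite the remaining monomials as powers of $\epsilon^2/\epsilon_1$ and $\epsilon_0^2/\epsilon_1$. This is a direct algebraic regrouping.
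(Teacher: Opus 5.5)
Your Wick/diagram expansion is the same computation the paper carries out in two steps. The paper first uses \Cref{prop:2nd-moment-tensor} to expand $h_s(\langle\theta^\star,z\rangle)\langle v,z\rangle^2$ into Hermite tensors, then applies \Cref{prop:2nd-moment-tensor-product}, \Cref{lem:psi-2nd-moment} and \Cref{prop:psi-2nd-moment-series}. The gap is in the step you call key, the catalogue of leading matchings: it misses a configuration, and this cannot be repaired because the bound as written is false.

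Take $a=b=s^\star-1$ and $s=2s^\star-2$, which is admissible because $2s^\star-2\ge s^\star$ when $s^\star\ge2$. Use the constant part of $\langle v,z\rangle^2$, and let $\theta^\star$ send $s^\star-1$ legs to each of $w$ and $w'$, with no $w$--$w'$ edge. This matching contributes a nonzero multiple of $\EE_{\QQ}[\zeta_{2s^\star-2}\hat\psi_{s^\star-1}^2]\,\langle w,\theta^\star\rangle^{s^\star-1}\langle w',\theta^\star\rangle^{s^\star-1}$. That term has size $\epsilon^{2s^\star-2}$ and carries no factor of $\epsilon_0$ or $\epsilon_1$, so nothing on the right-hand side dominates it once $\epsilon_1\ll\epsilon^2$.

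Here is an explicit counterexample. Let $s^\star=2$, $y=\langle\theta^\star,z\rangle^2$ and $\psi(y,u)=yu$; then $\hat\psi_0=0$, $\hat\psi_1(y)=y$ and $\EE_{\QQ}[\zeta_2\hat\psi_1]=\sqrt2$, so the assumptions hold. Choose $\langle w,\theta^\star\rangle=\langle w',\theta^\star\rangle=\epsilon$, $\langle w,w'\rangle=0$, and $v\perp\theta^\star,w,w'$. A direct computation gives $\EE_{\PP_{\theta^\star}}[\psi(y,\langle w,z\rangle)\psi(y,\langle w',z\rangle)\langle v,z\rangle^2]=15\epsilon^2-3\epsilon^2=12\epsilon^2$. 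Both displayed bounds, however, reduce to $\epsilon_1+\epsilon^3$, and $\epsilon_1\in(0,1/2)$ may be taken arbitrarily small.

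What is true requires three changes:
\begin{itemize}
\item the first display needs an extra $\epsilon^{2s^\star-2}$;
\item the refined display needs $(\epsilon^2/\epsilon_1)^{s^\star-1}$ in place of $\epsilon(\epsilon^2/\epsilon_1)^{s^\star-1}$;
\item by the same mechanism, some of the $\epsilon_0$-terms are off by one power of $\epsilon$.
\end{itemize}
The paper's proof has the identical slip. In \Cref{prop:psi-2nd-moment-series}, the maximum of $\epsilon_1^{s^\star-1-\lfloor s/2\rfloor}\epsilon_0^{c_0}\epsilon^{s-c_0}$ over $c_0+1\le s\le 2s^\star-1$ is bounded by its two odd endpoints, but the even index $s=2s^\star-2$ gives $\epsilon_0^{c_0}\epsilon^{2s^\star-2-c_0}$. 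The error is harmless downstream, because in both applications $\epsilon^2\lesssim\epsilon_1$.

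A robust way to organize your catalogue is to count legs at $w$ and $w'$. There are $a+b\ge 2s^\star-2$ of them, and each costs at most $\bar\epsilon:=\epsilon\vee\epsilon_1^{1/2}$ unless it goes to $v$. After summing tails as the paper does, this yields $\bar\epsilon^{2s^\star-2}+\epsilon_0^2\bar\epsilon^{2s^\star-4}+|\langle\theta^\star,v\rangle|\epsilon_0\bar\epsilon^{2s^\star-3}$. That bound suffices for both applications, where $\bar\epsilon^2\simeq\epsilon_1$.

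Second, the $a=b=0$ terms cannot be absorbed into $\ind(s^\star=1)$, because that indicator vanishes for $s^\star\ge2$. The null part contains $\EE_{\QQ}[\hat\psi_0(y)^2]\,\EE[\langle v,z\rangle^2]=\EE_{\QQ}[\hat\psi_0^2]$, which is of order one. The planted part contains terms such as $\EE_{\QQ}[\zeta_{s^\star}\hat\psi_0\hat\psi_{s^\star}]\langle w',\theta^\star\rangle^{s^\star}$. Also, with $a=b=0$ the two $v$-legs have nothing to attach to; the $\epsilon_0^2\epsilon_1^{s^\star-2}$ term actually comes from $a=b=s^\star-1$.

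For $s^\star\ge2$ the estimate therefore requires $\hat\psi_0\equiv0$. \ref{asp:high_pass} as written does not give this, since it only covers $1\le s\le s^\star-2$. The paper obtains it implicitly through the hypothesis of \Cref{lem:psi-2nd-moment}, namely $\hat\psi_i=0$ for all $i<s^\star-1$. You should state it as an explicit assumption. It does hold for the construction of \Cref{exp:psi_3}, where $\sigma'=h_{s^\star-1}$.
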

\begin{proof}
For convenience, we denote the target quantity as $F:= \EE_{\PP_{\theta^\star}}[\psi(y,\langle w,z\rangle) \psi(y,\langle w',z\rangle) \langle v,z\rangle^2]$.
For $s^\star =1$, by \ref{asp:polynomial-like}, we have 
\begin{align}
    |F|
    \leq \EE_{\PP_{\theta^\star}} [\psi(y, \langle w, z\rangle)^4]^{1/4} \cdot \EE_{\PP_{\theta^\star}} [\psi(y, \langle w', z\rangle)^4]^{1/4} \cdot \EE_{\PP_{\theta^\star}}[\langle v, z\rangle^4]^{1/2} = O(1).
\end{align}
For the general case $s^\star\geq 2$, we apply the likelihood ratio decomposition in \eqref{eq:likelihood-ratio-decomp} to get
\begin{align}
    F 
    &= \EE_{\QQ} \left[
        \psi(y, \langle w, z\rangle) \psi(y, \langle w', z\rangle) \langle v, z\rangle^2 
        \right]
    + \sum_{s=s^\star}^\infty \EE_{\QQ} \left[ \zeta_s(y)\psi(y, \langle w, z\rangle) \psi(y, \langle w', z\rangle) \langle v, z\rangle^2 h_s(\langle \theta^\star, z\rangle) \right].
\end{align}
Moreover, we decompose each of the terms using the following identity
\begin{align}
    h_s(\langle\theta^\star, z \rangle) \langle v, z\rangle^2
    &= \sqrt{(s+2)(s+1)} \cdot \bh_{s+2}(z)[(\theta^\star)^{\otimes s} \otimes v^{\otimes 2}] + \bh_s(z) [(\theta^\star)^{\otimes s}]  \label{eq:g-2nd-moment-1}\\
    &\qquad + 2s \cdot \langle\theta^\star,v\rangle \cdot \bh_s(z)[(\theta^\star)^{\otimes s-1} \otimes v] 
    + \sqrt{s(s-1)} \cdot \langle \theta^\star, v\rangle^2 \cdot \bh_{s-2}(z)[(\theta^\star)^{\otimes s-2}],
\end{align}
which is derived from \eqref{eq:hermite-tensor-directional} and \Cref{prop:2nd-moment-tensor}, and we get 
\begin{align}
    F &= \underbrace{\EE_{\QQ} \left[\psi(y, \langle w, z\rangle) \psi(y, \langle w', z\rangle)(\sqrt{2} \cdot \bh_{2}(z)[v^{\otimes 2}] + 1)\right]}_{F_1}\\
    &\qquad+ \underbrace{\sum_{s=s^\star}^\infty \sqrt{(s+2)(s+1)} \cdot \EE_{\QQ} \left[
        \zeta_s(y) \psi(y, \langle w, z\rangle) \psi(y, \langle w', z\rangle) \bh_{s+2}(z)[(\theta^\star)^{\otimes s} \otimes v^{\otimes 2}]
    \right]}_{F_2}\\
    &\qquad + \underbrace{\sum_{s=s^\star}^\infty \EE_{\QQ} \left[
        \zeta_s(y) \psi(y, \langle w, z\rangle) \psi(y, \langle w', z\rangle) \bh_s(z)[(\theta^\star)^{\otimes s}]
    \right]}_{F_3}\\
    &\qquad + \underbrace{\sum_{s=s^\star}^\infty 2s \cdot \langle \theta^\star, v\rangle \cdot \EE_{\QQ} \left[
        \zeta_s(y) \psi(y, \langle w, z\rangle) \psi(y, \langle w', z\rangle) \bh_s(z)[(\theta^\star)^{\otimes (s-1)} \otimes v]
    \right]}_{F_4}\\
    &\qquad + \underbrace{\sum_{s=s^\star}^\infty \sqrt{s(s-1)} \cdot \langle \theta^\star, v\rangle^2 \cdot \EE_{\QQ} \left[
        \zeta_s(y) \psi(y, \langle w, z\rangle) \psi(y, \langle w', z\rangle) \bh_{s-2}(z)[(\theta^\star)^{\otimes (s-2)}]
    \right]}_{F_5}. \label{eq:F_decomp}
\end{align}
We bound each term $F_1, F_2, F_3, F_4, F_5$ separately.

First, for the term $F_1$ in \eqref{eq:F_decomp}, 
we apply \Cref{lem:psi-2nd-moment} with test tensor $T_2 = v^{\otimes 2}$ and $c_0 = 2$ to get
\begin{align}
    \left|\EE_\QQ\left[\psi(y, \langle w, z\rangle) \psi(y, \langle w', z\rangle) \bh_{2}(z)[v^{\otimes 2}]\right]\right|
    &\lesssim \epsilon_1^{s^\star-2} \epsilon_0^2
\end{align}
where $\lesssim$ hides constants that only depend on $s^\star$ and $\EE_\QQ[\psi(y, x)^4]$.
Similarly, applying \Cref{lem:psi-2nd-moment} for $\bh_0(z)[1] = 1$, we have $|\EE_\QQ[\psi(y,\langle w,z\rangle)\psi(y,\langle w',z\rangle)]|\lesssim \epsilon_1^{s^\star - 1}$.
Therefore, 
\begin{align}\label{eq:bound_F1}
    |F_1|
    = \left|\EE_{\QQ} \left[
        \psi(y, \langle w, z\rangle) \psi(y, \langle w', z\rangle) \langle v, z\rangle^2 \right] \right|
    &\lesssim \epsilon_1^{s^\star - 2} \epsilon_0^2 + \epsilon_1^{s^\star - 1}.
\end{align}
Then for the term $F_2$ in \eqref{eq:F_decomp} corresponding to $\sqrt{(s+2)(s+1)} \cdot \bh_{s+2}(z)[(\theta^\star)^{\otimes s} \otimes v^{\otimes 2}]$,
we apply \Cref{prop:psi-2nd-moment-series} with test tensor $T_{s} = v^{\otimes 2} \otimes (\theta^\star)^{\otimes (s-2)}, c_0 = 2$ and $s_0= s^\star + 2$ to get that for $s^\star\geq 2$,
\begin{align}
    |F_2| &= \bigg|\sum_{s=s^\star}^\infty \sqrt{(s+2)(s+1)} \cdot \EE_{\QQ} \left[
        \zeta_s(y) \psi(y, \langle w, z\rangle) \psi(y, \langle w', z\rangle) \bh_{s+2}(z)[(\theta^\star)^{\otimes s} \otimes v^{\otimes 2}]\right]\bigg| \\
    &\lesssim  \epsilon_0^{2} \cdot \epsilon^{2s^\star - 2} + \ind(s^\star \ge 4) \cdot 
        \big( \epsilon_1^{s^\star -2} \cdot \epsilon_0^{2} \cdot \epsilon + \epsilon_0^{2} \cdot \epsilon^{2 s^\star -3} \big).
\end{align}
Next for the term $F_3$ in \eqref{eq:F_decomp} corresponding to $\bh_s(z) [(\theta^\star)^{\otimes s}]$, we apply \Cref{prop:psi-2nd-moment-series} with test tensor $T_s = (\theta^\star)^{\otimes s}, c_0 = 0$ and $s_0 = s^\star$ to get that for $s^\star\geq 2$,
\begin{align}
    |F_3| &= \bigg|\sum_{s=s^\star}^\infty \EE_{\QQ} \left[
        \zeta_s(y) \psi(y, \langle w, z\rangle) \psi(y, \langle w', z\rangle) \bh_s(z)[(\theta^\star)^{\otimes s}]\right]\bigg|\\
    &\lesssim \epsilon^{2s^\star} + \epsilon_1^{s^\star - 1} \cdot \epsilon + \epsilon^{2s^\star - 1} 
    \lesssim \epsilon_1^{s^\star - 1} \cdot \epsilon + \epsilon^{2s^\star - 1}. 
\end{align}
Next for the term $F_4$ in \eqref{eq:F_decomp} corresponding to $2s \cdot \bh_s(z)[(\theta^\star)^{\otimes s-1} \otimes v]$, we apply \Cref{prop:psi-2nd-moment-series} with test tensor $T_s = v \otimes (\theta^\star)^{\otimes (s-1)}, c_0 = 1$ and $s_0 = s^\star$ to get
\begin{align}
    |F_4| &= \bigg| \sum_{s=s^\star}^\infty 2s \langle\theta^\star,v\rangle \cdot \EE_{\QQ} \left[
        \zeta_s(y) \psi(y, \langle w, z\rangle) \psi(y, \langle w', z\rangle) \bh_s(z)[(\theta^\star)^{\otimes (s-1)} \otimes v]\right]\bigg|\\
    &\lesssim |\langle\theta^\star,v\rangle| \big(\epsilon_0\cdot \epsilon^{2s^\star-1} + \epsilon_1^{s^\star - 2} \cdot \epsilon_0 \cdot \epsilon + \epsilon_0 \cdot \epsilon^{2s^\star - 2} \big)
    \lesssim |\langle\theta^\star,v\rangle| \big(\epsilon_1^{s^\star - 2} \cdot \epsilon_0 \cdot \epsilon + \epsilon_0 \cdot \epsilon^{2s^\star - 2}\big).
\end{align}
For the last term $F_5$ in \eqref{eq:F_decomp} corresponding to $\sqrt{s(s-1)} \cdot \bh_{s-2}(z)[(\theta^\star)^{\otimes s-2}]$, we reindex by $r=s-2$ and apply \Cref{prop:psi-2nd-moment-series} with test tensor $T_r = (\theta^\star)^{\otimes r}, c_0 = 0$ and $s_0 = s^\star - 2$ to get
\begin{align}
    |F_5| &= \bigg|  \sum_{s=s^\star}^\infty \sqrt{s(s-1)}\cdot \langle \theta^\star, v\rangle^2 \cdot \EE_{\QQ} \left[
        \zeta_s(y) \psi(y, \langle w, z\rangle) \psi(y, \langle w', z\rangle) \bh_{s-2}(z)[(\theta^\star)^{\otimes s-2}]\right]\bigg| \\
    &\lesssim \langle\theta^\star,v\rangle^2 \big(\ind(s^\star = 2) \cdot \epsilon_1  + \epsilon^{2s^\star} + \epsilon_1^{s^\star - 1} \cdot \epsilon + \epsilon^{2s^\star -1} \big)
    \lesssim \langle\theta^\star,v\rangle^2 \big(\ind(s^\star = 2) \cdot \epsilon_1 + \epsilon_1^{s^\star - 1} \cdot \epsilon + \epsilon^{2s^\star -1}\big). 
\end{align}

Combining the above bounds and rearranging the terms, we obtain
\begin{align}
    F
    &\lesssim \big(\epsilon_1^{s^\star -2} \epsilon_0^2 + \epsilon_1^{s^\star - 1} \big) 
    + \big(\epsilon_0^{2} \cdot \epsilon^{2s^\star - 2} + \ind(s^\star \ge 4) \cdot ( \epsilon_1^{s^\star -2} \cdot \epsilon_0^{2} \cdot \epsilon + \epsilon_0^{2} \cdot \epsilon^{2 s^\star -3}) \big) 
    + \big(\epsilon_1^{s^\star - 1} \cdot \epsilon + \epsilon^{2s^\star - 1} \big)\\
    &\quad + |\langle\theta^\star,v\rangle| \big( \epsilon_1^{s^\star - 2} \cdot \epsilon_0 \cdot \epsilon + \epsilon_0 \cdot \epsilon^{2s^\star - 2} 
    + \ind(s^\star = 2) \cdot \epsilon_1 + \epsilon_1^{s^\star - 1} \cdot \epsilon + \epsilon^{2s^\star -1} \big)\\
    &\lesssim \epsilon_1^{s^\star -2} \epsilon_0^2 + \epsilon_1^{s^\star - 1} + \epsilon_0^{2} \cdot \epsilon^{2s^\star - 2} + \ind(s^\star \ge 4) \cdot 
     \epsilon_0^{2} \cdot \epsilon^{2 s^\star -3} + \epsilon^{2s^\star - 1}
    + |\langle\theta^\star,v\rangle| \big( \epsilon_1^{s^\star - 2} \cdot \epsilon_0 \cdot \epsilon 
    + \epsilon_0 \cdot \epsilon^{2s^\star - 2} \big).
\end{align}
If $v = \theta^\star$, then we additionally have $\epsilon_0 = \epsilon$, which simplifies the above bound to
\begin{align}
    F \given_{s^\star \ge 2, v=\theta^\star}\lesssim \epsilon_1^{s^\star -2} \epsilon^2 + \epsilon_1^{s^\star - 1} + \epsilon^{2s^\star - 1}
    = \epsilon_1^{s^\star - 1} \cdot \bigg( 1 +  \frac{\epsilon_0^2}{\epsilon_1} + \left(\frac{\epsilon^2}{\epsilon_1}\right)^{s^\star - 1} \cdot \epsilon\bigg).
\end{align}
For $v\perp \theta^\star$, we have 
\begin{align}
    F \given_{s^\star \ge 2, v\perp \theta^\star} 
    &\lesssim \epsilon_1^{s^\star -2} \epsilon_0^2 + \epsilon_1^{s^\star - 1} + \epsilon_0^{2} \cdot \epsilon^{2s^\star - 2} + \ind(s^\star \ge 4) \cdot 
     \epsilon_0^{2} \cdot \epsilon^{2 s^\star -3} + \epsilon^{2s^\star - 1} \\
    & \lesssim \epsilon_1^{s^\star - 1} \bigg(1 + \frac{\epsilon_0^2}{\epsilon_1} 
    + \left(\frac{\epsilon^2}{\epsilon_1}\right)^{s^\star - 1} \cdot \epsilon 
    + \left(\frac{\epsilon^2}{\epsilon_1}\right)^{s^\star - 2} \cdot \frac{\epsilon_0^2}{\epsilon_1} \cdot \big(\epsilon^2 + \epsilon \cdot \ind(s^\star \ge 4)\big)\bigg). 
\end{align}
Hence, we complete the proof. 
\end{proof}
\begin{lemma}\label{lem:g-1st-w-expectation}
For polarization level $\gamma \in (0,1/2)$ and $e_1=(1,0,\ldots,0)^\top$, consider the polarized random vector 
\[
    w = \frac{\gamma e_1 + \xi}{\norm{\gamma e_1 + \xi}_2}, \where \xi \sim \unif(\SSS^{d-1}), 
\] 
For $\rho\in[0,1]$, let $\theta^\star = (\rho, \sqrt{1-\rho^2}, 0, \ldots, 0)\in\SSS^{d-1}$ be a fixed direction. 
Then, for $s\geq 1$,
\begin{align}
    \EE[\langle \theta^\star, w\rangle^{s}] \simeq \begin{cases}
        \ds 
        \left(|\rho|(\gamma + d^{-1/2}) + \sqrt{1-\rho^2}d^{-1/2}\right)^{s} & \text{if $s$ is even}\\
        \ds \rho \gamma \left(|\rho|(\gamma + d^{-1/2}) + \sqrt{1-\rho^2}d^{-1/2}\right)^{s-1} & \text{if $s$ is odd}
        \end{cases}
\end{align}
where $\simeq$ hides constants that only depend on $s$.
\end{lemma}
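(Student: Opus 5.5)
We compute $\EE[(\gamma\rho+\langle\xi,\theta^\star\rangle)^s\,\|\gamma e_1+\xi\|_2^{-s}]$. The plan is to remove the normalization, expand binomially, and evaluate the sphere moments.

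\emph{Step 1 (normalization).} Write $\xi=(\xi_1,\xi_2,\dots)$. Then $\langle\theta^\star,\gamma e_1+\xi\rangle=\gamma\rho+\rho\xi_1+\sqrt{1-\rho^2}\,\xi_2$, and $\|\gamma e_1+\xi\|_2^2=1+\gamma^2+2\gamma\xi_1\in[(1-\gamma)^2,(1+\gamma)^2]$. Because $\gamma<1/2$, the factor $\|\gamma e_1+\xi\|_2^{-s}$ is bounded above and below by constants depending only on $s$. For even $s$ the integrand is nonnegative, so this immediately reduces the problem to the unnormalized moment $\EE[(\gamma\rho+\rho\xi_1+\sqrt{1-\rho^2}\xi_2)^s]$.

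For odd $s$ the integrand changes sign, so we instead symmetrize. Pair $\xi$ with its reflection $\xi\mapsto(-\xi_1,-\xi_2,\xi_3,\dots)$, which preserves the uniform law. This gives
\begin{align}
\EE\big[\langle\theta^\star,w\rangle^s\big]=\tfrac12\,\EE\big[A_+^s N_+^{-s}-A_-^s N_-^{-s}\big],
\end{align}
where $A_\pm=\gamma\rho\pm(\rho\xi_1+\sqrt{1-\rho^2}\xi_2)$ and $N_\pm^2=1+\gamma^2\pm2\gamma\xi_1$.

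\emph{Step 2 (moments).} We use the sphere moments $\EE[\xi_1^{2a}\xi_2^{2b}]\simeq d^{-(a+b)}$, and odd moments vanish. Set $u=\rho\xi_1+\sqrt{1-\rho^2}\xi_2$; it is distributed as a single coordinate, so $\EE u^{2j}\simeq d^{-j}$. Expanding binomially,
\begin{align}
\EE(\gamma\rho+u)^s=\sum_j\binom{s}{2j}(\gamma\rho)^{s-2j}\,\EE u^{2j}.
\end{align}
Every term is nonnegative for even $s$, and for odd $s$ every term carries the common sign of $\rho$. Hence the sum is of order $(\gamma|\rho|+d^{-1/2})^{s}$ for even $s$, and $\gamma\rho\,(\gamma|\rho|+d^{-1/2})^{s-1}$ for odd $s$. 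Since $|\rho|(\gamma+d^{-1/2})+\sqrt{1-\rho^2}\,d^{-1/2}\simeq\gamma|\rho|+d^{-1/2}$, this matches the claimed form.

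\emph{Step 3 (odd case with normalization; main obstacle).} We must show the normalization factor does not destroy the leading order $\gamma\rho(\cdots)^{s-1}$. Expand $N_\pm^{-s}=(1+\gamma^2)^{-s/2}\big(1\mp c\,\gamma\xi_1+O(\gamma^2\xi_1^2)\big)$. The difference $A_+^sN_+^{-s}-A_-^sN_-^{-s}$ then has two parts:
\begin{itemize}
\item a main part $\propto(1+\gamma^2)^{-s/2}(A_+^s-A_-^s)$, which is exactly the unnormalized quantity from Step 2;
\item corrections of the form $\gamma\xi_1(A_+^s+A_-^s)$ plus higher order terms.
\end{itemize}
The correction's expectation is $\gamma\,\EE[\xi_1 u^{\mathrm{odd}}(\gamma\rho)^{\mathrm{even}}]$, and these terms again involve $\rho$ through $\EE[\xi_1 u]=\rho/d$. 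Each such term is bounded by $\gamma\cdot d^{-1/2}\cdot|\rho| d^{-1/2}(\cdots)^{s-1}$ divided by $d^{-1/2}$ factors, which is $o(\gamma|\rho|(\gamma|\rho|+d^{-1/2})^{s-1})$ up to constants, but this still has to be verified carefully. The remaining higher-order terms in $\gamma\xi_1$ are controlled because $|\xi_1|\lesssim d^{-1/2}$ with overwhelming probability; on the complementary tail event we use the crude bound $|A|\le 2$. I expect this sign and cancellation bookkeeping for odd $s$ to be the delicate part. If needed, the fallback is to condition on $\xi_1$ and exploit the symmetry in $\xi_2$, which gives an exact sign for $\rho\ge0$.
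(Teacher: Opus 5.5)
Your even case is correct. Rotating so that $u=\langle\xi,\theta^\star\rangle$ is a single spherical coordinate, then sandwiching $\|\gamma e_1+\xi\|_2^{-s}$ between $(1\pm\gamma)^{-s}$, is if anything more direct than the paper's expansion in $(w_1,w_2)$. The odd case, however, has a genuine gap, exactly in the step you flagged.

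First, a slip. Since $\xi\mapsto(-\xi_1,-\xi_2,\xi_3,\dots)$ maps $A_+^sN_+^{-s}$ to $A_-^sN_-^{-s}$, the symmetrized identity needs a plus sign; as written its right-hand side is $0$. With the plus sign, the main part is $A_+^s+A_-^s$ and the first-order correction is $\propto\gamma\xi_1(A_+^s-A_-^s)$, which is what you actually compute. That term is fine: $\EE[\xi_1u^{2j+1}]=\rho\,\EE[u^{2j+2}]$ makes it $O(d^{-1})$ relative to the main part.

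The real problem is the higher-order control. The target $\rho\gamma(\gamma\rho+d^{-1/2})^{s-1}$ vanishes linearly in $\rho$, and the lemma claims it uniformly over $\rho\in[0,1]$ and $\gamma\in(0,1/2)$.
\begin{itemize}
\item Bounding the second-order Taylor remainder of $N_\pm^{-s}$ in absolute value gives an error of order $\gamma^2\EE[\xi_1^2|A_\pm|^s]\simeq\gamma^2d^{-1}(\gamma\rho+d^{-1/2})^s$. This has no factor $\rho$ and is not dominated by the main term once $\rho\lesssim\gamma d^{-3/2}$.
\item The tail-event contribution bounded via $|A|\le 2$ carries neither $\rho$ nor $\gamma$. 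It is also unnecessary, since $2\gamma|\xi_1|/(1+\gamma^2)\le 4/5$ always.
\end{itemize}
So neither side of the two-sided bound follows. What you get holds only on a window such as $\rho\gtrsim d^{-1/2}$; that is what the main theorem uses, but not what the lemma asserts. The factors $\rho$ and $\gamma$ must be extracted exactly before any absolute value is taken.

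Your fallback extracts $\rho$ correctly. The symmetry $\xi_2\mapsto-\xi_2$ leaves $N$ invariant, and using it amounts to expanding $\langle\theta^\star,w\rangle^s=\sum_j\binom{s}{j}\rho^j(1-\rho^2)^{(s-j)/2}w_1^jw_2^{s-j}$ and discarding odd powers of $w_2$. For odd $s$ only odd $j$ survive, each carrying $\rho^j$; this is exactly the paper's first step.

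It does not, however, yield an exact sign. Conditionally on $\xi_1$, the sign is that of $\rho(\gamma+\xi_1)$, and $\gamma+\xi_1<0$ with constant probability when $\gamma\lesssim d^{-1/2}$. The missing ingredient is that for odd $j$ and even $s-j$, $\EE[w_1^jw_2^{s-j}]$ is positive and $\simeq\gamma(\gamma+d^{-1/2})^{j-1}d^{-(s-j)/2}$. The paper proves this in \Cref{lem:polar-moment}:
\begin{itemize}
\item Expand $(\gamma+\xi_1)^j$.
\item The even powers of $\xi_1$ come with an odd power of $\gamma$ and are nonnegative, so the normalization can be sandwiched as in your even case.
\item The odd powers are paired under $\xi\mapsto-\xi$. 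The exact identity $N_-^2-N_+^2=-4\gamma\xi_1$ turns each into $-\gamma$ times a nonnegative quantity dominated by an even moment, hence nonpositive and smaller by a factor $O(d^{-1})$.
\end{itemize}
No Taylor expansion or tail event is involved, so $\gamma$ is never lost. With $\rho\ge 0$, every surviving term is then nonnegative, and \Cref{lem:binomial_ratio} assembles the stated form.
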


\begin{proof}[Proof of \Cref{lem:g-1st-w-expectation}]
Expanding $\langle\theta^\star,w\rangle^s$, we have
\begin{align}
    \EE[\langle \theta^\star, w\rangle^{s}] &= \sum_{j=0}^s \binom{s}{j} \rho^j (1-\rho^2)^{(s-j)/2} \cdot \EE[w_1^j w_2^{s-j}]\notag\\
    &= \sum_{j=0}^s \binom{s}{j} \rho^j (1-\rho^2)^{(s-j)/2} \cdot \ind(\text{$s-j$ even}) \cdot \EE[w_1^j w_2^{s-j}]\label{eq:expand_theta_w}
\end{align}
where the second equality follows from \Cref{lem:polar-moment}.
Next, we consider the case of even $s$ and odd $s$ separately.

For even $s\geq 1$, $s-j$ is even if and only if $j$ is even, and thus we can rewrite \eqref{eq:expand_theta_w} as
\begin{align}
    \EE[\langle \theta^\star, w\rangle^{s}] 
    &= \sum_{j=0}^{s/2} \binom{s}{2j} \rho^{2j} (1-\rho^2)^{(s-2j)/2} \cdot \EE[w_1^{2j} w_2^{s-2j}].\notag
\end{align}
Now applying \Cref{lem:polar-moment} again, we have 
\begin{align}
    \EE[w_1^{2j} w_2^{s-2j}] = C(2j e_1+(s-2j)e_2,\gamma) \cdot \bigg(\gamma + \frac{1}{\sqrt{d}}\bigg)^{2j}  \cdot \bigg(\frac{1}{\sqrt{d}}\bigg)^{s-2j}
    \simeq \bigg(\gamma + \frac{1}{\sqrt{d}}\bigg)^{2j}  \cdot \bigg(\frac{1}{\sqrt{d}}\bigg)^{s-2j}.\notag
\end{align}
where $\simeq$ hides constants that only depend on $s$.
Therefore, we further have 
\begin{align}
    \EE[\langle\theta^\star,w\rangle^s] &\simeq \sum_{j=0}^{s/2} \binom{s}{2j} \rho^{2j} (1-\rho^2)^{(s-2j)/2} \cdot \bigg(\gamma + \frac{1}{\sqrt{d}}\bigg)^{2j}  \cdot \bigg(\frac{1}{\sqrt{d}}\bigg)^{s-2j}\notag\\
    &= \sum_{j=0}^{s/2} \binom{s}{2j} \bigg(|\rho|\gamma + \frac{|\rho|}{\sqrt{d}}\bigg)^{2j} \cdot \bigg(\frac{\sqrt{1-\rho^2}}{\sqrt{d}}\bigg)^{s-2j}\notag\\
    &\simeq \Big(|\rho|(\gamma+d^{-1/2}) + \sqrt{1-\rho^2}d^{-1/2}\Big)^s\notag
\end{align}
where the last equality is due to \Cref{lem:binomial_ratio}.

The case for odd $s\geq 1$ is similar.
Here $s-j$ is even if and only if $j$ is odd, and then it follows from \Cref{lem:polar-moment} and \Cref{lem:binomial_ratio} that
\begin{center}
\makebox[\textwidth][c]{$\displaystyle\begin{aligned}
    \EE[\langle\theta^\star,w\rangle^s] &= \sum_{j=0}^{(s-1)/2} \binom{s}{2j+1} \rho^{2j+1} (1-\rho^2)^{(s-2j-1)/2} \cdot \EE[w_1^{2j+1} w_2^{s-2j-1}]\\
    &\simeq \rho\sum_{j=0}^{(s-1)/2} \binom{s}{2j+1} \rho^{2j} (1-\rho^2)^{(s-2j-1)/2} \cdot \gamma \bigg(\gamma+\frac{1}{\sqrt{d}}\bigg)^{2j} \cdot \bigg(\frac{1}{\sqrt{d}}\bigg)^{s-2j-1}\\
    &\simeq \rho\gamma \Big(|\rho|(\gamma+d^{-1/2}) + \sqrt{1-\rho^2}d^{-1/2}\Big)^{s-1}
\end{aligned}$}
\end{center}
where $\simeq$ hides constants that only depend on $s$.
This completes the proof.
\end{proof}

\section{Technical Results}\label{app:technical lemma}

\subsection{Technical Results for Hermite Tensor}
\begin{proposition}\label{prop:2nd-moment-tensor}
Let $s\in\NN_0$. 
Then for any $z\in\RR^d$ and $i,j\in[d]$,
\begin{align}\label{eq:2nd-moment-tensor}
    z_i z_j \bh_s(z) 
    &=\Sym\Bigl(\sqrt{(s+2)(s+1)} \cdot \bh_{s+2}(z)[e_i\otimes e_j] + \delta_{ij} \bh_s(z) + s\cdot \bh_s(z)[e_j] \otimes e_i \\
    &\hspace{3cm} + s \cdot \bh_s(z)[e_i] \otimes e_j + \sqrt{s(s-1)} \cdot \bh_{s-2}(z)\otimes e_i \otimes e_j \Bigr), 
\end{align}
where $e_i$ and $e_j$ are the $i$-th and $j$-th canonical basis vectors in $\RR^d$ respectively.
Here by convention, $\bh_{-1}(z)$ and $\bh_{-2}(z)$ are always zero tensors.
\end{proposition}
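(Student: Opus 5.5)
We prove \eqref{eq:2nd-moment-tensor} by applying the scalar-to-tensor recurrence twice. The first ingredient is the tensor analogue of $x h_s(x)=\sqrt{s+1}h_{s+1}(x)+\sqrt{s}h_{s-1}(x)$, namely
\begin{align}
    z_i\,\bh_s(z) = \Sym\bigl(\sqrt{s+1}\,\bh_{s+1}(z)[e_i] + \sqrt{s}\,\bh_{s-1}(z)\otimes e_i\bigr).
\end{align}
Here $\bh_{s+1}(z)[e_i]$ denotes contraction of one (any, by symmetry) index with $e_i$, giving an order-$s$ tensor. We derive this from the definition \eqref{eq:hermite-tensor}. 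Write $\bh_s(z)=\frac{(-1)^s}{\sqrt{s!}}e^{\|z\|^2/2}\nabla^s e^{-\|z\|^2/2}$ and differentiate once more. The identity $\partial_i\bigl(e^{\|z\|^2/2}T(z)\bigr)=e^{\|z\|^2/2}(z_iT+\partial_iT)$, applied with $T=\nabla^s e^{-\|z\|^2/2}$, gives $\sqrt{(s+1)!}\,\bh_{s+1}(z)[e_i]\cdot(-1)^{s+1}=e^{\|z\|^2/2}\partial_i\nabla^s e^{-\|z\|^2/2}$. The product rule then yields
\begin{align}
    \sqrt{s+1}\,\bh_{s+1}(z)[e_i] = z_i\,\bh_s(z) - \partial_i \bh_s(z).
\end{align}
Next we identify $\partial_i\bh_s(z)=\sqrt{s}\,\Sym(\bh_{s-1}(z)\otimes e_i)$, the analogue of $h_s'=\sqrt{s}h_{s-1}$. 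To check this, contract both sides against $\theta^{\otimes s}$ for arbitrary unit $\theta$ and use \eqref{eq:hermite-tensor-directional}. The left side becomes $\partial_i h_s(\langle\theta,z\rangle)=\theta_i\sqrt{s}\,h_{s-1}(\langle\theta,z\rangle)$, and the right side gives the same expression. Since symmetric tensors are determined by their values on $\theta^{\otimes s}$ (polarization), the identity holds.

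With the one-step rule in hand, we compute $z_iz_j\bh_s=z_i(z_j\bh_s)$. Applying it to $z_j\bh_s$ gives $\Sym(\sqrt{s+1}\,\bh_{s+1}[e_j]+\sqrt{s}\,\bh_{s-1}\otimes e_j)$. We then multiply by $z_i$, and here lies the main obstacle: $z_i$ must be moved inside the contraction and the tensor product, and $\Sym$ must commute with multiplication by a scalar. This is benign, because $z_i\,\bh_{s+1}[e_j]=(z_i\bh_{s+1})[e_j]$ holds exactly, and multiplying by the scalar $z_i$ commutes with $\Sym$. Applying the rule to $z_i\bh_{s+1}$ and to $z_i\bh_{s-1}$ produces four contributions:
\begin{align}
    &\sqrt{(s+1)(s+2)}\,\bh_{s+2}[e_i][e_j], \qquad \sqrt{(s+1)^2}\,\Sym(\bh_s\otimes e_i)[e_j],\\
    &\sqrt{s^2}\,\bh_s[e_i]\otimes e_j, \qquad \sqrt{s(s-1)}\,\bh_{s-2}\otimes e_i\otimes e_j.
\end{align}
The remaining bookkeeping is to expand the contracted symmetrization $\Sym(\bh_s\otimes e_i)[e_j]$. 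With probability $1/(s+1)$ over which slot holds $e_i$, it equals $\frac{1}{s+1}\bigl(\delta_{ij}\bh_s+s\,\Sym(\bh_s[e_j]\otimes e_i)\bigr)$. Multiplying by $s+1$ therefore gives $\delta_{ij}\bh_s+s\,\bh_s[e_j]\otimes e_i$ inside $\Sym$. Collecting all terms under one outer $\Sym$ (legitimate since $\Sym\circ\Sym=\Sym$ on each piece) recovers exactly \eqref{eq:2nd-moment-tensor}. The boundary cases $s=0,1$ follow from the convention $\bh_{-1}=\bh_{-2}=0$, because the corresponding coefficients $\sqrt{s}$ and $\sqrt{s(s-1)}$ vanish.

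As a final sanity check, we contract \eqref{eq:2nd-moment-tensor} against $\theta^{\otimes s}$. This must reproduce the scalar identity $x^2h_s=\sqrt{(s+1)(s+2)}h_{s+2}+(2s+1)h_s+\sqrt{s(s-1)}h_{s-2}$ in the case $e_i=e_j=\theta$.
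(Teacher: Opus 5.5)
Your proof is correct, and it takes a genuinely different route from the paper.

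The paper argues by duality:
\begin{itemize}
\item it pairs $z_iz_j\bh_s(z)$ with an arbitrary monomial $F$ of degree at most $s+2$, and uses the tensor Stein identity to write $\EE[Fz_iz_j\bh_s]=\EE[\nabla^s(Fz_iz_j)]/\sqrt{s!}$;
\item it expands by the Leibniz rule into three pieces with coefficients $1$, $s$ and $s(s-1)/2$;
\item it applies Stein's lemma again to absorb the leftover factors $z_i,z_j$, which is where $\delta_{ij}$ appears, via $\EE[z_iz_jG]=\delta_{ij}\EE[G]+\EE[\partial_i\partial_jG]$;
\item it converts the derivatives of $F$ back into Hermite tensors, and concludes because both sides are polynomials of degree at most $s+2$.
\end{itemize}

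You instead work pointwise. You extract the tensor three-term recurrence $z_i\bh_s=\sqrt{s+1}\,\bh_{s+1}[e_i]+\sqrt{s}\,\Sym(\bh_{s-1}\otimes e_i)$ from the definition \eqref{eq:hermite-tensor}, using a polarization argument for $\partial_i\bh_s$, and then iterate it. The cost is the explicit contraction identity $(s+1)\Sym(\bh_s\otimes e_i)[e_j]=\delta_{ij}\bh_s+s\,\Sym(\bh_s[e_j]\otimes e_i)$. The paper never needs this because the Leibniz rule and Stein's lemma do that combinatorics automatically; you carry it out correctly. What you gain is an elementary, integration-free proof with no uniqueness-of-polynomials step. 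You also get a reusable one-step recurrence that immediately yields analogous expansions of $z_{i_1}\cdots z_{i_k}\bh_s(z)$ for any $k$.

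One detail should be stated explicitly. The identity \eqref{eq:hermite-tensor-directional} is given only for unit $\theta$. So before invoking polarization, note that both sides of your identity for $\partial_i\bh_s$, contracted with $\theta^{\otimes s}$, are homogeneous of degree $s$ in $\theta$. Hence they agree on all of $\RR^d$, not just on the sphere.
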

\begin{proof}[Proof of \Cref{prop:2nd-moment-tensor}]
Note that each element of $h_s(\langle z,\theta^\star\rangle) z z^\top$ is a polynomial of the entries of $z$ of degree at most $s+2$.
Therefore, it suffices to verify \eqref{eq:2nd-moment-tensor} by considering the product of $h_s(\langle z,\theta^\star\rangle) z_iz_j$ with any monomial of degree at most $s+2$, under the expectation of $z\sim\cN_d$.
Consider any such test function $F:\RR^d\to\RR$ being a monomial of degree at most $s+2$, and we have
\begin{align}
    \EE_{z\sim\cN_d}[F(z) h_s(\langle z, \theta^\star\rangle) z_i z_j] = \EE_{z\sim\cN_d}[F(z) z_i z_j \cdot \bh_s(z)[(\theta^\star)^{\otimes s}]]
\end{align}
where we apply \eqref{eq:hermite-tensor-directional} to rewrite the Hermite polynomial in terms of the Hermite tensor.
Since $F$ is a monomial, it is infinitely differentiable, and moreover, all derivatives of $F$ are square-integrable with respect to the standard normal distribution.
Then by Stein's lemma for Hermite tensor \eqref{eq:hermite-tensor-Stein's lemma},
\begin{align}
    &\sqrt{s!} \cdot \EE_{z\sim\cN_d} [F(z) z_i z_j \bh_s(z)] = \EE_{z\sim\cN_d} [\nabla^s (F(z) z_i z_j)] \\
    &\quad =  \Sym\left(\EE_{z\sim\cN_d} [\nabla^s F(z) z_i z_j] + 
    s \cdot \EE_{z\sim\cN_d} [\nabla^{s-1}F(z) \otimes \nabla (z_i z_j) ] 
    + \frac{s(s-1)}{2} \cdot \EE_{z\sim\cN_d} [\nabla^{s-2}F(z)\otimes \nabla^2 (z_i z_j)]\right)\label{eq:2nd-moment-tensor-0}.
\end{align}
Here we recall the symmetrization operator $\Sym$ defined in \eqref{eq:def_symmetrization}.
Further apply Stein's lemma to $\EE_{z\sim\cN_d} [\nabla^s F(z) z_i z_j]$, and we get
\begin{align}
    \EE_{z\sim\cN_d} [\nabla^s F(z) z_i z_j] 
    &= \Sym\left(\EE_{z\sim\cN_d} [\nabla^s F(z) \delta_{ij}] + \EE_{z\sim\cN_d} [\nabla^{s+1} F(z)[e_j] z_i]\right) \\
    &= \Sym\left(\EE_{z\sim\cN_d} [\nabla^s F(z) \delta_{ij}] + \EE_{z\sim\cN_d} [\nabla^{s+2} F(z)[e_i\otimes e_j]]\right).\label{eq:2nd-moment-tensor-01}
\end{align}
Similarly, we apply Stein's lemma to $\EE_{z\sim\cN_d} [\nabla^{s-1}F(z)\otimes \nabla (z_i z_j)]$ to obtain
\begin{align}
    \Sym\left(\EE_{z\sim\cN_d} [\nabla^{s-1}F(z) \nabla (z_i z_j) ]\right)
    &= \Sym\left(\EE_{z\sim\cN_d} [z_j \nabla^{s-1}F(z) \otimes e_i + z_i \nabla^{s-1}F(z) \otimes e_j]\right) \\
    &= \Sym\left(\EE_{z\sim\cN_d} [\nabla^{s}F(z)[e_j] \otimes e_i] + \EE_{z\sim\cN_d} [\nabla^{s}F(z)[e_i] \otimes e_j]\right). \label{eq:2nd-moment-tensor-02}
\end{align}
Finally, for the third term in \eqref{eq:2nd-moment-tensor-0}, it follows from $\nabla^2(z_iz_j)=e_i\otimes e_j + e_j\otimes e_i$ that 
\begin{align}
    \EE_{z\sim\cN_d} [\nabla^{s-2}F(z)\otimes \nabla^2 (z_i z_j)] = \EE_{z\sim\cN_d} [\nabla^{s-2} F(z) \otimes (e_i\otimes e_j + e_j\otimes e_i)].\label{eq:2nd-moment-tensor-03}
\end{align}
Now, plugging \eqref{eq:2nd-moment-tensor-01}, \eqref{eq:2nd-moment-tensor-02}, and \eqref{eq:2nd-moment-tensor-03} back into \eqref{eq:2nd-moment-tensor-0}, we obtain the following decomposition:
\begin{align}
    \sqrt{s!} \cdot \EE_{z\sim\cN_d} [F(z) z_i z_j \bh_s(z)] &= \Sym\left(\EE_{z\sim\cN_d} [\nabla^s F(z) \delta_{ij}] + \EE_{z\sim\cN_d} [\nabla^{s+2} F(z)[e_i\otimes e_j]]\right)\\
    &\qquad  + \Sym\left(s\cdot\EE_{z\sim\cN_d} [\nabla^{s}F(z)[e_j] \otimes e_i]+ s\cdot\EE_{z\sim\cN_d} [\nabla^{s}F(z)[e_i] \otimes e_j]\right)\\
    &\qquad + \Sym\left(s(s-1)\cdot\EE_{z\sim\cN_d} [\nabla^{s-2}F(z) \otimes e_i \otimes e_j] \right). 
    \label{eq:2nd-moment-tensor-1}
\end{align}
Now, applying Stein's lemma \eqref{eq:hermite-tensor-Stein's lemma} to relate the derivatives of $F$ back to the Hermite tensor, i.e., $\EE_{z\sim\cN_d} [\nabla^s F(z)] = \sqrt{s!} \cdot \EE_{z\sim\cN_d} [F(z) \bh_s(z)]$, it follows from \eqref{eq:2nd-moment-tensor-1} that
\begin{align}
    \EE_{z\sim\cN_d} [F(z) z_i z_j \bh_s(z)]  
    &=\EE_{z\sim\cN_d} \Bigl[ F(z) \cdot \Sym\Bigl(\delta_{ij} \bh_s(z) + \sqrt{(s+2)(s+1)} \cdot \bh_{s+2}(z)[e_i\otimes e_j] \\
    &\qquad + s\cdot \bh_s(z)[e_j] \otimes e_i + s \cdot \bh_s(z)[e_i] \otimes e_j + \sqrt{s(s-1)} \cdot \bh_{s-2}(z)\otimes e_i \otimes e_j \Bigr) \Bigr]. 
\end{align}
Since the monomial $F$ is arbitrary, we conclude that 
\begin{align}
    z_i z_j \bh_s(z) 
    &=\Sym\Bigl(\sqrt{(s+2)(s+1)} \cdot \bh_{s+2}(z)[e_i\otimes e_j] + \delta_{ij} \bh_s(z) + s\cdot \bh_s(z)[e_j] \otimes e_i \\
    &\hspace{3cm} + s \cdot \bh_s(z)[e_i] \otimes e_j + \sqrt{s(s-1)} \cdot \bh_{s-2}(z)\otimes e_i \otimes e_j \Bigr). 
\end{align}
This completes the proof.
\end{proof}

\begin{proposition}\label{prop:2nd-moment-tensor-product}
Let $w, w'\in\SSS^{d-1}$ and $s\in\NN_0$. 
For any nonnegative integers $i,j$, if $|i - j| \le s \le i + j$ and $s \equiv i - j \mod 2$, then
\begin{align}
    &\EE_{z\sim\cN_d}\left[h_i(\langle w, z\rangle) h_j(\langle w', z\rangle) \cdot \bh_s(z)\right] \\
    &\qquad= \binom{s}{(i - j + s)/2} \cdot \sqrt\frac{i! j!}{s!} \cdot \frac{\langle w, w'\rangle^{(i + j - s)/2}}{((i + j - s)/2)! } \cdot \Sym\bigl( w^{\otimes (i - j + s)/2} \otimes {w'}^{\otimes (j - i +s)/2} \bigr).
\end{align}
Otherwise $\EE_{z\sim\cN_d}\left[h_i(\langle w, z\rangle) h_j(\langle w', z\rangle) \cdot \bh_s(z)\right]$ is the zero tensor.
\end{proposition}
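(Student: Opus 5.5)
The plan is to reduce everything to the tensor Stein identity \eqref{eq:hermite-tensor-Stein's lemma} applied to the scalar test function $f(z)=h_i(\langle w,z\rangle)\,h_j(\langle w',z\rangle)$. This $f$ is a polynomial, so all its derivatives are in $L^2(\cN(0,I_d))$, and the identity gives
\[
\EE_{z\sim\cN_d}\bigl[f(z)\,\bh_s(z)\bigr]=\frac{1}{\sqrt{s!}}\,\EE_{z\sim\cN_d}\bigl[\nabla^s f(z)\bigr].
\]
It therefore suffices to compute $\nabla^s f$ explicitly and take its Gaussian expectation.

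For the derivative, I would use the multivariate Leibniz rule on a product of two ridge functions. Since $\nabla^a\bigl[h_i(\langle w,z\rangle)\bigr]=h_i^{(a)}(\langle w,z\rangle)\,w^{\otimes a}$, and $\nabla^s f$ is a symmetric tensor, we get
\[
\nabla^s f(z)=\sum_{a+b=s}\binom{s}{a}\,h_i^{(a)}(\langle w,z\rangle)\,h_j^{(b)}(\langle w',z\rangle)\,\Sym\bigl(w^{\otimes a}\otimes w'^{\otimes b}\bigr).
\]
The binomial together with $\Sym$ accounts for all orderings of the $a$ copies of $w$ and $b$ copies of $w'$. Iterating the recurrence \eqref{eq:hermite-recurrence} gives $h_i^{(a)}=\sqrt{i!/(i-a)!}\,h_{i-a}$ for $a\le i$, and $h_i^{(a)}=0$ for $a>i$.

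Next I would take expectations term by term. By \eqref{eq:hermite-noise-operator} applied with $f=h_{i-a}$ and $\theta=w'$,
\[
\EE\bigl[h_{i-a}(\langle w,z\rangle)\,h_{j-b}(\langle w',z\rangle)\bigr]=\delta_{i-a,\,j-b}\,\langle w,w'\rangle^{i-a}.
\]
So a single term survives, the one with $i-a=j-b=:r$ and $a+b=s$. This forces $r=(i+j-s)/2$, $a=(i-j+s)/2$ and $b=(j-i+s)/2$. These must be nonnegative integers with $a\le i$ and $b\le j$, which is exactly the condition $|i-j|\le s\le i+j$ together with $s\equiv i-j \pmod 2$. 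If the condition fails, no term survives and the expectation is the zero tensor.

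In the surviving case, the coefficient is
\[
\frac{1}{\sqrt{s!}}\binom{s}{a}\sqrt{\frac{i!}{r!}}\sqrt{\frac{j!}{r!}}\,\langle w,w'\rangle^{r}
=\binom{s}{a}\sqrt{\frac{i!\,j!}{s!}}\,\frac{\langle w,w'\rangle^{r}}{r!},
\]
multiplied by $\Sym\bigl(w^{\otimes a}\otimes w'^{\otimes b}\bigr)$, which is the claimed formula.

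The only step needing care is the symmetrized Leibniz formula for $\nabla^s$ of a product. I would verify it by induction on $s$: differentiating the order-$s$ expression once and using the Pascal identity $\binom{s}{a-1}+\binom{s}{a}=\binom{s+1}{a}$. Alternatively, since $\bh_s$ is symmetric, it is enough to check the identity after contracting both sides with $u^{\otimes s}$ for arbitrary $u$. That reduces it to the one-dimensional Leibniz rule for the $s$-th directional derivative, $\frac{d^s}{dt^s}f(z+tu)$.
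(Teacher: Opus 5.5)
Your proof is correct and follows essentially the same route as the paper. Both arguments use the tensor Stein identity, the symmetrized Leibniz rule for $\nabla^s$ of the product, the iterated recurrence $h_i^{(a)}=\sqrt{i!/(i-a)!}\,h_{i-a}$, and the orthogonality $\EE[h_m(\langle w,z\rangle)h_n(\langle w',z\rangle)]=\delta_{mn}\langle w,w'\rangle^m$ to isolate the single surviving term $a=(i-j+s)/2$. Your explicit check of the symmetrized Leibniz formula (by induction with Pascal's rule, or via directional derivatives) fills in a step the paper uses without comment.
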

\begin{proof}[Proof of \Cref{prop:2nd-moment-tensor-product}]
By Stein's lemma for Hermite tensors \eqref{eq:hermite-tensor-Stein's lemma}, we have
\begin{align}
    \EE_{z\sim\cN_d}\left[h_i(\langle w, z\rangle) h_j(\langle w', z\rangle) \cdot \bh_s(z)\right] 
    &= \frac{1}{\sqrt{s!}} \cdot \EE_{z\sim\cN_d}\left[ \nabla_z^s \big(h_i(\langle w, z\rangle) h_j(\langle w', z\rangle)\big) \right] \\
    &= \frac{1}{\sqrt{s!}} \cdot \sum_{\tau =0}^s \binom{s}{\tau} \EE_{z\sim\cN_d}\left[ \Sym\left(\nabla_z^\tau h_i(\langle w, z\rangle) \otimes \nabla_z^{s-\tau} h_j(\langle w', z\rangle) \right) \right],
\end{align}
where we recall the $\Sym$ operator defined in \eqref{eq:def_symmetrization}.
Further applying the recurrence relation of Hermite polynomials in \eqref{eq:hermite-recurrence} yields
\begin{align}
    &\EE_{z\sim\cN_d}\left[h_i(\langle w, z\rangle) h_j(\langle w', z\rangle) \cdot \bh_s(z)\right]\\
    &\qquad= \frac{1}{\sqrt{s!}} \cdot \sum_{\tau =0}^s \binom{s}{\tau} \sqrt{\frac{i! j!}{(i-\tau)! (j - s + \tau)!}} \cdot \EE_{z\sim\cN_d}\left[ h_{i -\tau}(\langle w, z\rangle) h_{j-s+\tau}(\langle w', z\rangle)\right] \cdot \Sym\bigl( w^{\otimes \tau} \otimes {w'}^{\otimes s-\tau} \bigr) \\
    &\qquad = \sum_{\tau=0}^s \ind(j = i + s - 2\tau, i\ge \tau) \cdot \frac{\binom{s}{\tau}}{(i-\tau)! } \sqrt\frac{i! j!}{s!} \langle w, w'\rangle^{i-\tau}\Sym\bigl( w^{\otimes \tau} \otimes {w'}^{\otimes s-\tau} \bigr)
\end{align}
where the second equality follows from \eqref{eq:hermite-noise-operator}.
Therefore, the above summation can be non-zero only if $|i-j|\leq s\leq i+j$ and $s \equiv i - j \mod 2$, in which case the only non-zero term is when $\tau = (i - j + s)/2$. 
This gives the desired result.
\end{proof}

\begin{lemma}\label{lem:psi-2nd-moment}
Let $\psi:\RR\rightarrow\RR$ such that $\psi^2 \in L^2(\cN(0,1))$.
For some integer $s^\star\ge 1$, suppose that $\psi$ is high-pass in the sense that the coefficient for its Hermite expansion $\hat\psi_i = 0$ for any $i< s^\star-1$.
For any $\epsilon_1\in(0,1/2)$, fix any $w, w'\in\SSS^{d-1}$ such that $|\langle w,w'\rangle|\leq \epsilon_1$, and consider a series of test tensors $\{T_s = v_1 \otimes v_2 \otimes \ldots \otimes v_s \in (\RR^d)^{\otimes s}\}_{s=0}^\infty$ such that $\sup_{i > c_0} \{|\langle w, v_i\rangle|\lor |\langle w', v_i\rangle|\} \leq \epsilon$
for some $\epsilon \in (0, 1/2)$ and integer $c_0\geq 0$.  
Define $\epsilon_0\defeq \max_{1\le i\le c_0} \{ |\langle w, v_i\rangle| \lor |\langle w', v_i\rangle| \}$.
Then for any $s\in\NN_0$, it holds that
\begin{align}
    \left|\EE_{z\sim \cN_d}\bigl[\psi(\langle w, z\rangle ) \psi(\langle w', z\rangle)\cdot \bh_s(z)[T_s] \bigr]\right|  
    \le 4 \norm{\psi}_2^2 \left(4e s^\star\right)^{s/2} 
    \sqrt{s+s^\star}  
    \cdot \epsilon_1^{(s^\star - 1 - \floor{s/2}) \lor 0} \cdot  \epsilon_0^{s \land c_0} \cdot \epsilon^{(s - c_0) \lor 0}, 
\end{align}
where $\norm{\psi}_2^2 := \EE_{x\sim\cN}[\psi^2(x)]$.
\end{lemma}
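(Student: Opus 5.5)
We expand both copies of $\psi$ in the Hermite basis, $\psi(\langle w,z\rangle)=\sum_{i\ge s^\star-1}\hat\psi_i h_i(\langle w,z\rangle)$ and likewise for $w'$. This turns the target expectation into a double series
\begin{align}
    \sum_{i,j\ge s^\star-1}\hat\psi_i\hat\psi_j\,\EE_{z\sim\cN_d}\bigl[h_i(\langle w,z\rangle)h_j(\langle w',z\rangle)\bh_s(z)\bigr][T_s].
\end{align}
Each summand is given in closed form by \Cref{prop:2nd-moment-tensor-product}. It vanishes unless $|i-j|\le s\le i+j$ and $s\equiv i-j \pmod 2$. When it does not vanish, it equals
\begin{align}
    \binom{s}{a}\sqrt{\tfrac{i!j!}{s!}}\,\frac{\langle w,w'\rangle^{p}}{p!}\,\Sym\bigl(w^{\otimes a}\otimes w'^{\otimes (s-a)}\bigr)[T_s],
\end{align}
where $a=(i-j+s)/2$ and $p=(i+j-s)/2$. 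Write $j=i+s-2a$, so that $p=i-a$. The summation then becomes a double sum over $a\in\{0,\dots,s\}$ and $p\ge 0$, with $i=a+p$ and $j=s-a+p$. To justify exchanging sums with expectation, we use $L^2$ convergence, since $\psi^2\in L^2$ and $\bh_s[T_s]\in L^2$.

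Next we bound each factor. For the symmetrized tensor contraction, $\Sym(w^{\otimes a}\otimes w'^{\otimes(s-a)})[v_1\otimes\cdots\otimes v_s]$ is an average over permutations of products of inner products $\langle w\text{ or }w',v_k\rangle$. Each such product is at most $\epsilon_0^{s\wedge c_0}\epsilon^{(s-c_0)\vee 0}$, and this bound passes to the average. For the Gram factor we have $|\langle w,w'\rangle|^p\le\epsilon_1^p$.

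The high-pass condition forces $i,j\ge s^\star-1$, so $p\ge s^\star-1-\min(a,s-a)\ge s^\star-1-\lfloor s/2\rfloor$. This lower bound on $p$ is where the factor $\epsilon_1^{(s^\star-1-\lfloor s/2\rfloor)\vee0}$ comes from. Since $\epsilon_1<1/2$, the sum over $p$ above that threshold is geometric, and it is dominated by its first term up to a factor of $2$ coming from $\sum_p 2^{-p}$.

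The main obstacle is controlling the combinatorial weights uniformly:
\begin{align}
    \binom{s}{a}\sqrt{\frac{(a+p)!\,(s-a+p)!}{s!}}\;\frac{1}{p!},
\end{align}
multiplied by $|\hat\psi_{a+p}\hat\psi_{s-a+p}|$. We first apply Cauchy--Schwarz in $p$ to the coefficients, $\sum_p|\hat\psi_{a+p}\hat\psi_{s-a+p}|\,c_p\le\|\psi\|_2^2\sup_p c_p$, so it suffices to bound the weight itself. For this we use $\sqrt{(a+p)!(s-a+p)!}\le \sqrt{s!}\,\sqrt{\binom{a+p}{p}\binom{s-a+p}{p}}\;p!$. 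Together with the bound $\binom{m+p}{p}\le (e(m+p)/p)^p$ and with $\epsilon_1^{p}$ absorbing growth in $p$, this should give at most $(4es^\star)^{s/2}\sqrt{s+s^\star}$ once the worst case is taken near $p\approx s^\star$. The $\sqrt{s+s^\star}$ factor arises from the smallest admissible $p$ being of order $s^\star$ when $s$ is small. Summing $\binom{s}{a}\le 2^s$ over $a$ is folded into the $4^{s/2}$ factor.

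I expect this weight estimate to require the most care, in particular ensuring that the $\epsilon_1^p$ decay beats the binomial growth using only $\epsilon_1<1/2$. If it falls short, we first try splitting at $p\le s+s^\star$ versus $p>s+s^\star$ and using the crude $4^{-p}$ tail for the latter. Collecting the constants yields the stated bound with prefactor $4\|\psi\|_2^2$.
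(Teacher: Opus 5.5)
\textbf{Gap: the weight estimate, and hence the constant $(4es^\star)^{s/2}$, is asserted but not proved, and the inequality you name fails.}

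Your reduction matches the paper's. The Hermite expansion, \Cref{prop:2nd-moment-tensor-product}, the reindexing $a=\tau$, $p=i-\tau$, the bound $\epsilon_0^{s\wedge c_0}\epsilon^{(s-c_0)\vee 0}$ on the symmetrized contraction, and the constraint $p\ge p_a:=(s^\star-1-\min(a,s-a))\vee 0\ge (s^\star-1-\lfloor s/2\rfloor)\vee 0$ are all correct. Cauchy--Schwarz in $p$, which reduces everything to $\|\psi\|_2^2\sup_p c_p$, is a valid and slightly cleaner substitute for the paper's termwise bound $|\hat\psi_i\hat\psi_j|\le\|\psi\|_2^2$.

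The problem is the weight bound itself, which carries the whole explicit constant. The inequality $\binom{m+p}{p}\le (e(m+p)/p)^p$ behaves like $e^{m+p}$ as $p\to\infty$. Your bound on $\sqrt{\binom{a+p}{p}\binom{s-a+p}{p}}$ therefore grows like $e^{p+s/2}$. For $\epsilon_1\in(1/e,1/2)$ the factor $\epsilon_1^p$ cannot absorb this, so the supremum you need is infinite under that bound. Your fallback does not help: only $\epsilon_1<1/2$ is assumed, so no $4^{-p}$ tail is available. Separately, dropping $\binom{s}{a}^{-1/2}$ via $a!(s-a)!\le s!$ loses up to $2^{s/2}$ per term. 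That alone would leave a constant of order $(8es^\star)^{s/2}$, not $(4es^\star)^{s/2}$.

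\textbf{The fix.} The weight is a polynomial of degree $s/2$ in $p$, not exponential. Use $\frac{(a+p)!}{p!}=(p+1)\cdots(p+a)\le (s+p)^a$ and $\frac{(s-a+p)!}{p!}\le (s+p)^{s-a}$. Then the $p$-th weight is at most $\frac{\binom{s}{a}}{\sqrt{s!}}(s+p)^{s/2}$.

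Write $|\langle w,w'\rangle|^p\le \epsilon_1^{p_0}\epsilon_1^{p-p_a}$ with $p_0=(s^\star-1-\lfloor s/2\rfloor)\vee 0\le p_a$. The sequence $(s+p)^{s/2}\epsilon_1^{p-p_a}$ is nonincreasing for $p\ge p_a$, because $(1+\frac{1}{s+p})^{s/2}\le e^{1/2}<2<\epsilon_1^{-1}$. Its supremum is therefore $(s+p_a)^{s/2}\le (s+s^\star-1)^{s/2}$.

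Combine this with your Cauchy--Schwarz step, $\sum_a\binom{s}{a}=2^s$, and $s!\ge (s/e)^s$. For $s\ge1$ you get $\|\psi\|_2^2\bigl(4e(s+s^\star-1)/s\bigr)^{s/2}\le\|\psi\|_2^2(4es^\star)^{s/2}$ times the $\epsilon$-factors. The case $s=0$ is immediate. This is stronger than the statement.

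The paper uses the same polynomial control, \eqref{eq:power-sum-bound}, but sums the series via \Cref{prop:power sum bound} rather than taking a supremum. Its factors $4$ and $\sqrt{s+s^\star}$ come from that power-sum estimate and from grouping into $\lceil s/2\rceil$ linear factors. They do not come from a small-$p$ effect, as you suggest.

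\textbf{Minor: the interchange of sums.} $L^2$ convergence of the double series of products is not automatic. Justify the interchange one factor at a time. First expand $\psi(\langle w,z\rangle)$ against the $L^2$ function $\psi(\langle w',z\rangle)\bh_s(z)[T_s]$. Then expand the second factor, where only $|i-j|\le s$ contributes. Finally regroup by $a$, using the absolute convergence your bound provides.
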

\begin{proof}[Proof of \Cref{lem:psi-2nd-moment}]
Since $\psi^2\in L^2(\cN(0,1))$, we know that $\psi(\langle w,z\rangle)\psi(\langle w',z\rangle)\in L^2(\cN_d)$ as a function of $z\sim\cN_d$ for any fixed $w,w'\in\SSS^{d-1}$.
Therefore, we can apply the Hermite expansion of $\psi$ to obtain
\begin{align}
    \EE_{z\sim\cN_d} \left[\psi(\langle w, z\rangle) \psi(\langle w', z\rangle) \cdot \bh_s(z)\right]
    = \sum_{i=0}^\infty \sum_{j=0}^\infty \EE_{z\sim\cN_d} \left[
         \hat\psi_i \hat\psi_j h_i(\langle w, z\rangle) h_j(\langle w', z\rangle) \cdot \bh_s(z)\right].
\end{align}
Further applying \Cref{prop:2nd-moment-tensor-product} and restricting the summation over $j=0,1,\ldots$ to $j=i+s-2\tau$ for $\tau=0,1,\ldots,s$, we have
\begin{align}
    &\EE_{z\sim\cN_d} \left[\psi(\langle w, z\rangle) \psi(\langle w', z\rangle) \cdot \bh_s(z)\right] \label{eq:psi-2nd-moment-0}\\
    &\qquad = \sum_{i=0}^\infty \sum_{\tau=0}^s \ind(i\ge \tau)\cdot \binom{s}{\tau} \sqrt\frac{1}{s! } \frac{\sqrt{i! (i + s - 2\tau)!}}{(i-\tau)!}\hat\psi_{i} \hat\psi_{i+s - 2\tau} \langle w, w'\rangle^{i-\tau} \cdot \Sym\bigl(w^{\otimes \tau} \otimes {w'}^{\otimes (s-\tau)}\bigr), 
\end{align}
Note that the double sums are interchangeable only if the series converges for each $\tau$, which can be verified using our assumptions:
For the test tensor $T_s = v_1 \otimes v_2 \otimes \ldots \otimes v_s\in(\RR^d)^{\otimes s}$, it holds that $|w^{\otimes\tau}\otimes w'^{\otimes(s-\tau)}[T_s]|=|\prod_{i=1}^\tau \langle w, v_i\rangle \prod_{j=1}^{s-\tau}\langle w', v_{\tau+j}\rangle|\leq (\epsilon\vee\epsilon_0)^s$, and similarly we have $|\Sym(w^{\otimes\tau}\otimes w'^{\otimes(s-\tau)})[T_s]|\leq (\epsilon\vee\epsilon_0)^s$.
Then, since $|\langle w,w'\rangle|\leq \epsilon_1$ where $\epsilon_1\in(0,1/2)$,
\begin{align}
    &\bigg|\sum_{i=\tau}^\infty \frac{\sqrt{i! (i + s - 2\tau)!}}{(i-\tau)!}\hat\psi_{i} \hat\psi_{i+s - 2\tau} \langle w, w'\rangle^{i-\tau} \cdot \Sym\bigl(w^{\otimes \tau} \otimes {w'}^{\otimes (s-\tau)}\bigr)[T_s]\bigg| \\
    &\qquad \le \sum_{i=\tau}^\infty (i+s)^{s/2}\cdot \left|\hat\psi_{i} \hat\psi_{i+s - 2\tau} \right|\cdot \epsilon_1^{i-\tau} \cdot (\epsilon\vee\epsilon_0)^s
     < \infty,
\end{align}
where we note that the exponential decay of $\epsilon_1^{i-\tau}$ will dominate the polynomial growth of $(i+s)^{s/2}$, and that $\hat\psi_i \hat\psi_{i+s-2\tau} $ is uniformly bounded because $\sum_{i=0}^\infty \hat\psi_i^2 < \infty$.
Hence, we can exchange the order of summation in \eqref{eq:psi-2nd-moment-0} to obtain
\begin{align}
    & \EE_{z\sim\cN_d} \left[
        \psi(\langle w, z\rangle) \psi(\langle w', z\rangle) \cdot \bh_s(z)[T_s] 
    \right] \label{eq:psi-2nd-moment-1}\\
    &\quad = \sum_{\tau=0}^s \binom{s}{\tau} \frac{1}{\sqrt{s!}} \cdot \Sym\bigl(w^{\otimes \tau} \otimes {w'}^{\otimes (s-\tau)}\bigr)[T_s] \cdot \langle w,w'\rangle^{i_\tau-\tau} 
    \underbrace{\sum_{i=i_\tau}^\infty \frac{\sqrt{i! (i + s - 2\tau)!}}{(i-\tau)!} \hat\psi_{i} \hat\psi_{i+s - 2\tau} \langle w, w'\rangle^{i-i_\tau}}_{A_\tau}. 
\end{align}
where by the high-pass assumption that $\hat\psi_i=0$ for all $i<s^\star-1$, the summation over index $i$ starts from $i_\tau := \max\{s^\star-1,\ s^\star-1 + 2\tau -s,\ \tau \}$.
It remains to analyze each $A_\tau$ for $\tau=0,1,\ldots,s$.
To this end, isolating the dominant term with $i=i_\tau$ and bounding the remaining terms using $|\hat\psi_i\hat\psi_{i+s-2\tau}|\leq \|\psi\|_2^2=\EE_{x\sim\cN(0,1)}[\psi^2(x)]$, we have
\begin{align}
    A_\tau
    &= \frac{\sqrt{i_\tau! (i_\tau + s - 2\tau)!}}{(i_\tau - \tau)!}\hat\psi_{i_\tau} \hat\psi_{i_\tau + s - 2\tau} 
    \pm \norm{\psi}_2^2 
    \sum_{i=i_\tau+1}^\infty |\langle w,w'\rangle|^{i-i_\tau} \frac{\sqrt{i!(i+s-2\tau)!}}{(i-\tau)!}\label{eq:A_tau_1}.
\end{align}
we can further simplify the second term as follows.
First note that by the definition of $i_\tau$,
\begin{align}
    \max\{i_\tau, i_\tau + s - 2\tau\} &= (s^\star-1) \lor (s^\star-1 + 2\tau -s) \lor \tau \lor (s^\star - 1 + s - 2\tau) \lor (s^\star -1) \lor (s - \tau) \\
    & = (s^\star-1 + 2\tau -s) \lor (s^\star - 1 + s - 2\tau) \lor \tau \lor (s - \tau) \\
    & = (s^\star-1 + |2\tau -s|) \lor (|\tau - s/2| + s/2)\\
    &\le s^\star +s - 1. 
\end{align}
This implies that for any $i>i_\tau$, we have $\max\{i,i+s-2\tau\}=i-i_\tau+\max\{i_\tau,i_\tau+s-2\tau\}\leq s^\star+s-1+i-i_\tau$.
Therefore, for any $i \ge i_\tau + 1$, writing $j=i-i_\tau-1$, we have
\begin{align}
    \frac{\sqrt{i! (i + s - 2\tau)!}}{(i-\tau)!} 
    &= \sqrt{i (i-1) \cdots (i-\tau + 1) } \cdot \sqrt{(i + s - 2\tau) (i + s - 2\tau - 1) \cdots (i-\tau + 1)} \\
    &\le (j + s^\star+s) (j + s^\star + s-1) \cdots (j + s^\star+\ceil{(s+1)/2}). \label{eq:power-sum-bound}
\end{align}
Applying this bound to the second term in \eqref{eq:A_tau_1}, we obtain
\begin{align}
    A_\tau
    &= \frac{\sqrt{i_\tau! (i_\tau + s - 2\tau)!}}{(i_\tau - \tau)!}\hat\psi_{i_\tau} \hat\psi_{i_\tau + s - 2\tau} 
    \pm \norm{\psi}_2^2 \sum_{j=0}^\infty |\langle w,w'\rangle|^{j+1} \prod_{k=\lceil (s+1)/2\rceil}^{s} (j+s^\star+k).\label{eq:A_tau_2}
\end{align}
Moreover, since $|\langle w, w'\rangle|\leq\epsilon_1\in(0,1/2)$ and clearly $s^\star + s \ge 2\floor{(s+1)/2} - 1$, we can apply \Cref{prop:power sum bound} to obtain
\begin{align}
    \sum_{j=0}^\infty |\langle w,w'\rangle|^j \prod_{k=\lceil (s+1)/2\rceil}^{s} (j+s^\star+k)
    &\le \frac{2}{1 - |\langle w, w'\rangle|} \prod_{k=\lceil (s+1)/2\rceil}^{s} (s^\star + k)
    \le 4\prod_{k=\lceil (s+1)/2\rceil}^{s} (s^\star + k).\label{eq:A_tau_3}
\end{align}
Then combining \eqref{eq:A_tau_2} and \eqref{eq:A_tau_3}, we get the following decomposition for $A_\tau$ in \eqref{eq:psi-2nd-moment-1}:
\begin{align}
    A_\tau = \frac{\sqrt{i_\tau! (i_\tau + s - 2\tau)!}}{(i_\tau - \tau)!}\hat\psi_{i_\tau} \hat\psi_{i_\tau + s - 2\tau}
    \pm 4\norm{\psi}_2^2 |\langle w, w'\rangle|
    \prod_{k=\lceil (s+1)/2\rceil}^{s} (s^\star + k).\label{eq:A_tau_4}
\end{align}
Further apply \eqref{eq:power-sum-bound} (which is also true for $i=i_\tau$) to the first term above, and we have
\begin{align}
    |A_\tau| \leq 4\|\psi\|_2^2 \prod_{k=\lceil (s+1)/2\rceil}^{s} (s^\star + k).\label{eq:A_tau_bound}
\end{align}
Using this bound, we can proceed to control $\EE_{z\sim\cN_d}[\psi(\langle w,z\rangle)\psi(\langle w',z\rangle)\cdot \bh_s(z)[T_s]]$ according to \eqref{eq:psi-2nd-moment-1}, and we consider two cases of $s=0$ and $s\geq 1$ separately.
\paragraph{Case $s\ge 1$}
Expanding $\Sym(w^{\otimes \tau}\otimes w'^{\otimes(s-\tau)})[T_s]$ in \eqref{eq:psi-2nd-moment-1} by the definition of $T_s$, we have
\begin{align}
    &\left|\EE_{z\sim\cN_d} \left[
        \psi(\langle w, z\rangle) \psi(\langle w', z\rangle) \cdot \bh_s(z)[T_s] 
    \right]\right| \\
    & \quad \le \sum_{\tau=0}^s \binom{s}{\tau} \frac{1}{\sqrt{s!}} |A_\tau| \cdot |\langle w, w'\rangle|^{i_\tau -\tau}\cdot  \left|\Sym\bigl(w^{\otimes \tau} \otimes {w'}^{\otimes (s-\tau)}\bigr)[T_s]\right| \\
    & \quad \le \sum_{\tau=0}^s \binom{s}{\tau} \frac{1}{\sqrt{s!}} |A_\tau| \cdot |\langle w, w'\rangle|^{i_\tau -\tau} \cdot
    \bigg(\frac{1}{s!} \sum_{\pi\in\varPi_s} \prod_{i=1}^{\tau} |\langle w, v_{\pi(i)}\rangle| \prod_{j=\tau+1}^s |\langle w', v_{\pi(j)}\rangle|\bigg), 
\end{align}
where $\varPi_s$ denotes the set of all permutations of $[s]$.
By the assumption that $\sup_{i > c_0} \{|\langle w, v_i\rangle|\lor |\langle w', v_i\rangle|\} \leq \epsilon$ and the definition $\epsilon_0=\max_{1\leq i\leq c_0}\{|\langle w, v_i\rangle|\lor |\langle w', v_i\rangle|\}$, it follows that
\begin{align}
    \left|\EE_{z\sim\cN_d} \left[
        \psi(\langle w, z\rangle) \psi(\langle w', z\rangle) \cdot \bh_s(z)[T_s] 
    \right]\right| 
    &\le \sum_{\tau=0}^s \binom{s}{\tau} \frac{1}{\sqrt{s!}} \max_{0 \le \tau\le s} |A_\tau| \cdot |\langle w, w'\rangle|^{i_\tau -\tau}\cdot  \epsilon_0^{s \land c_0} \cdot \epsilon^{(s - c_0) \lor 0} \\
    &\le \frac{2^s}{\sqrt{s!}} \max_{0 \le \tau\le s} |A_\tau| \cdot |\langle w, w'\rangle|^{(s^\star - 1 - \floor{s/2}) \lor 0} \cdot  \epsilon_0^{s \land c_0} \cdot \epsilon^{(s - c_0) \lor 0}. \label{eq:psi-2nd-moment-2}
\end{align}
where the last inequality follows from the following fact as $i_\tau=\max\{s^\star-1,\ s^\star-1 + 2\tau -s,\ \tau\}$:
\begin{align}
    i_\tau -\tau &= (s^\star - 1 -\tau) \lor (s^\star-1+\tau - s) \lor 0 = (s^\star - 1 - s/2 + |\tau - s/2|) \lor 0 \\
    &\ge (s^\star - 1 - s/2 + \ind(s~\text{odd})/2) \lor 0 = (s^\star - 1 - \floor{s/2}) \lor 0.
\end{align} 
Next, invoking the bound \eqref{eq:A_tau_bound} for $|A_\tau|$, it follows that
\begin{align}
    &\left|\EE_{z\sim\cN_d} \left[
        \psi(\langle w, z\rangle) \psi(\langle w', z\rangle) \cdot \bh_s(z)[T_s] 
    \right]\right|\\
    &\qquad\le  \frac{2^s}{\sqrt{s!}} 4\norm{\psi}_2^2  \prod_{k=\lceil (s+1)/2\rceil}^s (s^\star+k)\cdot \epsilon_1^{(s^\star - 1 - \floor{s/2}) \lor 0} \cdot  \epsilon_0^{s \land c_0} \cdot \epsilon^{(s - c_0) \lor 0}\\
    &\qquad\le \bigg(\frac{4e (s+s^\star -1)}{s}\bigg)^{s/2} \sqrt{s+s^\star} \cdot 4 \norm{\psi}_2^2 \cdot \epsilon_1^{(s^\star - 1 - \floor{s/2})
     \lor 0} \cdot  \epsilon_0^{s \land c_0} \cdot \epsilon^{(s - c_0) \lor 0} \\
    &\qquad\le \left(4e s^\star\right)^{s/2}\sqrt{s+s^\star} \cdot 4 \norm{\psi}_2^2 \cdot \epsilon_1^{(s^\star - 1 - \floor{s/2})
    \lor 0} \cdot  \epsilon_0^{s \land c_0} \cdot \epsilon^{(s - c_0) \lor 0}.\label{eq:psi-2nd-moment-3}
\end{align}
Here, the second inequality follows from the fact that $s!\geq (s/e)^s$ for all $s\geq 1$, and the last inequality holds because $s \ge 1$.

\paragraph{Case $s = 0$}
For the case $s = 0$, \eqref{eq:psi-2nd-moment-1} simplifies to
\begin{align}
    & \left|\EE_{z\sim\cN_d} \left[
        \psi(\langle w, z\rangle) \psi(\langle w', z\rangle)
    \right]\right| 
    = \bigg|\sum_{i=s^\star-1}^\infty \hat\psi_i^2 \langle w, w'\rangle^{i}\bigg| \le 4 \norm{\psi}_2^2 \epsilon_1^{s^\star - 1}, 
\end{align}
which is also encompassed by the bound in \eqref{eq:psi-2nd-moment-3} with $s=0$.
Hence, the proof is completed.
\end{proof}

\begin{proposition}\label{prop:psi-2nd-moment-series}
Let $\psi:\RR\times\RR\to\RR$ satisfy \ref{asp:quadratic_integrability} and \ref{asp:high_pass} with $s^\star$ being the generative exponent. 
Consider any fixed $w,w'\in\SSS^{d-1}$ and a series of test tensors $\{T_s\}_{s=0}^\infty$ that satisfy the conditions in \Cref{lem:psi-2nd-moment} with constants $c_0,\epsilon,\epsilon_0$ and $\epsilon_1$ specified therein, and further assume that $c_0\in\{0,1,2\}$ and $4e\epsilon^2<1/2$.
Let $\zeta_s:\RR\to\RR$ for $s\geq 0$ be a series of functions such that there exists an absolute constant $c>0$, $\EE_\QQ[\zeta_s(y)^2]\leq c$ for all $s\geq 0$.
Then for any $s_0\geq0$, the following holds:
If $s^\star = 1$, then there exists a constant $C>0$ depending on $s_0,c_0,c,\EE_\QQ[\psi(y,x)^4]$ such that 
\begin{align}
    \sum_{s\ge s_0} (s+2) \cdot \left|\EE_\QQ\left[\zeta_s(y) \psi(y, \langle w, z\rangle)\psi(y, \langle w', z\rangle) \bh_{s}(z)[T_s] \right]\right| \leq C(\epsilon_0^{s_0} + \epsilon_0^{c_0} \epsilon^{(s_0 - c_0)\lor 0}).
\end{align}
In addition, if $s^\star \ge 2$, then it holds that
\begin{align}
    &\frac{1}{C}\sum_{s\ge s_0} (s+2) \cdot \left|\EE_\QQ\left[\zeta_s(y) \psi(y, \langle w, z\rangle)\psi(y, \langle w', z\rangle) \bh_{s}(z)[T_s] \right]\right| \\
    &\qquad \leq \ind(s_0\le c_0) \cdot \left(\epsilon_1^{s^\star -1 - \floor{s_0/2}} \cdot \epsilon_0^{s_0} + \epsilon_1^{s^\star -1 - \floor{c_0/2}} \cdot \epsilon_0^{c_0} \right)  + \epsilon_0^{c_0} \cdot \epsilon^{(2s^\star)\lor s_0 - c_0}\\
    &\qquad\qquad + \ind(s_0 \le 2(s^\star -1)) \cdot 
    \left( \epsilon_1^{s^\star -1 - \floor{(c_0+1)/2}} \cdot \epsilon_0^{c_0} \cdot \epsilon + \epsilon_0^{c_0} \cdot \epsilon^{2(s^\star -1) +1 - c_0} \right).
\end{align}
\end{proposition}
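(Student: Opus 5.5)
Conditionally on $y$, I will expand each summand with the Hermite tensor machinery and then use \Cref{lem:psi-2nd-moment}. Fix $s\ge s_0$. Conditionally on $y$, the map $x\mapsto\psi(y,x)$ is high-pass by \ref{asp:high_pass}, meaning $\hat\psi_i(y)=0$ for $1\le i\le s^\star-2$. The $i=0$ coefficient needs separate treatment. One option is to note that the expansion used in \eqref{eq:psi-2nd-moment-1} only involves index pairs $(i,i+s-2\tau)$, and a zero index contributes only at lowest order. The cleaner option is to split $\psi=\hat\psi_0(y)+\psi^{\ge s^\star-1}$ and bound the constant parts directly through \Cref{prop:2nd-moment-tensor-product} with $i=0$. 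I then apply \Cref{lem:psi-2nd-moment} pointwise in $y$ with $\norm{\psi(y,\cdot)}_2^2=\EE_x[\psi(y,x)^2]$. Taking expectation over $y$ and using Cauchy--Schwarz together with $\EE_\QQ[\zeta_s^2]\le c$ and $\EE_\QQ[(\EE_x\psi^2)^2]\le\EE_\QQ[\psi^4]$, each summand is bounded by
\[
C\,(4es^\star)^{s/2}\sqrt{s+s^\star}\,\epsilon_1^{(s^\star-1-\lfloor s/2\rfloor)\lor0}\,\epsilon_0^{s\land c_0}\,\epsilon^{(s-c_0)\lor0}.
\]

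The remaining work is to sum this over $s\ge s_0$, weighted by $(s+2)$. For $s^\star=1$ the $\epsilon_1$ exponent is always zero. The terms with $s\le c_0$ (at most three, since $c_0\le2$) give at most $\epsilon_0^{s_0}$. The tail $s>c_0$ is geometric in $\sqrt{4es^\star}\,\epsilon$, which converges because $4e\epsilon^2<1/2$ makes the ratio at most $1/\sqrt2$. Its polynomial prefactors $(s+2)\sqrt{s+s^\star}$ are absorbed into $C$, leaving $\epsilon_0^{c_0}\epsilon^{(s_0-c_0)\lor0}$.

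For $s^\star\ge2$ I split the range of $s$ into three regimes, each producing one group of terms in the claimed bound.
\begin{enumerate}
\item[(a)] For $s_0\le s\le c_0$, at most three terms remain, and their $\epsilon_1$ exponent is $s^\star-1-\lfloor s/2\rfloor$. Keeping the endpoint $s=s_0$ gives $\epsilon_1^{s^\star-1-\lfloor s_0/2\rfloor}\epsilon_0^{s_0}$. Keeping $s=c_0$ gives $\epsilon_1^{s^\star-1-\lfloor c_0/2\rfloor}\epsilon_0^{c_0}$, and any intermediate $s$ is dominated by one of these two monomials because $\epsilon_0,\epsilon_1\le1$.
\item[(b)] For $c_0<s\le 2(s^\star-1)$ the $\epsilon_1$ factor remains, so the summand is $\epsilon_0^{c_0}\,\epsilon_1^{s^\star-1-\lfloor s/2\rfloor}\,\epsilon^{s-c_0}$. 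This is a product of powers of $\epsilon$ and $\epsilon_1$ along a line in the exponents, so the sum is dominated by its endpoints. The endpoint $s=c_0+1$ gives $\epsilon_1^{s^\star-1-\lfloor(c_0+1)/2\rfloor}\epsilon_0^{c_0}\epsilon$, and the endpoint $s=2(s^\star-1)$ or $2s^\star-1$ gives $\epsilon_0^{c_0}\epsilon^{2(s^\star-1)+1-c_0}$.
\item[(c)] For $s\ge 2s^\star-1$ the $\epsilon_1$ exponent vanishes. The sum is then a geometric tail in $\epsilon$, starting at $\max(2s^\star,s_0)$ up to the $2s^\star-1$ boundary term already counted in (b), and it gives $\epsilon_0^{c_0}\epsilon^{(2s^\star)\lor s_0-c_0}$.
\end{enumerate}

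I expect the main obstacle to be regime (b): the endpoint-domination argument is not monotone when $\epsilon^2$ and $\epsilon_1$ are incomparable. There the plan is to use the convexity inequality $a^{\theta}b^{1-\theta}\le a+b$ on each term between the two endpoints. A second place to be careful is the $i=0$ coefficient of $\psi$ in the high-pass structure, where I would check whether it can create terms that are not covered.
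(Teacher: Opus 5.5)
Your route is the paper's: apply \Cref{lem:psi-2nd-moment} for fixed $y$, pass to $\EE_\QQ$ by Cauchy--Schwarz, and sum the per-$s$ bound $B_s=\epsilon_1^{(s^\star-1-\lfloor s/2\rfloor)\lor 0}\epsilon_0^{s\land c_0}\epsilon^{(s-c_0)\lor 0}$ over three ranges of $s$. The case $s^\star=1$ and your regime (a) are fine. Regime (b) fails, and no repair can rescue it, because the target inequality is false.

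For $c_0<s\le 2s^\star-1$ the exponent pairs of $B_s$ do not lie on one line. Because of the floor they lie on two lines, one for each parity of $s$. Along each line $B_s$ is geometric with ratio $\epsilon^2/\epsilon_1$ per step of two, so each parity class is maximized at its own endpoints, whatever the relative size of $\epsilon^2$ and $\epsilon_1$. So the obstacle you anticipate is not the real one, and no convexity inequality helps, since the offending terms are themselves endpoints.

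The even class ends at $s=2(s^\star-1)$. There $\lfloor s/2\rfloor=s^\star-1$ already kills the $\epsilon_1$ factor, so (for $c_0<2(s^\star-1)$) $B_{2(s^\star-1)}=\epsilon_0^{c_0}\epsilon^{2(s^\star-1)-c_0}$. This is a factor $\epsilon^{-1}$ larger than the $\epsilon_0^{c_0}\epsilon^{2(s^\star-1)+1-c_0}$ you assign to that endpoint. The $\epsilon_1$ exponent vanishes from $s=2s^\star-2$ on, not from $2s^\star-1$. There are two further slips of the same kind:
\begin{itemize}
\item for even $c_0$ with $c_0+2\le 2(s^\star-1)$, one has $B_{c_0+2}=(\epsilon/\epsilon_1)B_{c_0+1}$, so your lower endpoint does not dominate;
\item when $s_0=2s^\star-1$, the summand $B_{2s^\star-1}=\epsilon_0^{c_0}\epsilon^{2s^\star-1-c_0}$ is matched by no term of the target.
\end{itemize}

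The first omission is not an artifact of \Cref{lem:psi-2nd-moment}. Take $\psi(y,x)=\phi(y)h_{s^\star-1}(x)$ with $\phi$ bounded and chosen so that \ref{asp:high_pass} holds and $\EE_\QQ[\zeta_{2s^\star-2}\phi^2]\neq0$. Take $c_0=s_0=0$, $T_s=(\theta^\star)^{\otimes s}$, $\langle w,w'\rangle=0$ and $\langle w,\theta^\star\rangle=\langle w',\theta^\star\rangle=\epsilon$. By \Cref{prop:2nd-moment-tensor-product}, the $s=2(s^\star-1)$ summand equals $2s^\star\binom{2s^\star-2}{s^\star-1}^{1/2}|\EE_\QQ[\zeta_{2s^\star-2}\phi^2]|\,\epsilon^{2s^\star-2}$ for every admissible $\epsilon_1$. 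The claimed right-hand side is $C(2\epsilon_1^{s^\star-1}+\epsilon_1^{s^\star-1}\epsilon+\epsilon^{2s^\star-1}+\epsilon^{2s^\star})$, which is smaller once $\epsilon_1\le\epsilon^{2s^\star}$ and $\epsilon$ is small.

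The paper's last step (``the same argument as in the previous case'') makes exactly your slip. So the $s^\star\ge2$ bound needs at least the extra term $\ind(s_0\le 2(s^\star-1))\,\epsilon_0^{c_0}\epsilon^{2(s^\star-1)-c_0}$, plus the other endpoint terms just listed. Its later uses through \Cref{lem:g-2nd-moment} take place where $\epsilon^2\lesssim\epsilon_1$, and there these terms are dominated by powers of $\epsilon_1$ already present.

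Separately, you were right that \ref{asp:high_pass} leaves $\hat\psi_0$ free, while \Cref{lem:psi-2nd-moment} needs $\hat\psi_i=0$ for every $i<s^\star-1$, including $i=0$. The paper applies the lemma without comment. However, neither of your two repairs works for $s^\star\ge2$. By \Cref{prop:2nd-moment-tensor-product}, the index pairs $(0,s)$ and $(s,0)$ contribute $\hat\psi_0(y)\hat\psi_s(y)\,(w^{\otimes s}+w'^{\otimes s})[T_s]$, and $s=0$ contributes $\hat\psi_0(y)^2$, with no power of $\epsilon_1$ at all. For example, with $c_0=s_0=0$ and $\zeta_0\equiv1$, the $s=0$ summand stays of order one while the right-hand side tends to zero. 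So $\hat\psi_0\equiv0$ must be added as a hypothesis. It holds for the oracle $\psi=\ell'(y)h_{s^\star-1}(x)$ of \Cref{exp:psi_3}.
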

\begin{proof}[Proof of \Cref{prop:psi-2nd-moment-series}]
Let $F$ denote the target quantity. 
Invoking \Cref{lem:psi-2nd-moment} for every $s\geq s_0$,
\begin{align}
    F &\le \sum_{s\ge s_0} (s+2) \cdot \EE_{y\sim \QQ}\left[|\zeta_s(y)|\cdot \left|\EE_{z\sim\cN_d}\left[\psi(y, \langle w, z\rangle)\psi(y, \langle w', z\rangle) \bh_{s}(z)[T_s] \right]\right|\right] \\
    & \le \sum_{s\ge s_0} (s+2) \cdot \EE_{y\sim \QQ}\left[
        4 |\zeta_s(y)| \cdot \EE_{x\sim\cN}[\psi(y, x)^2]\right] \left(4e s^\star\right)^{s/2} \sqrt{s+s^\star} \cdot \epsilon_1^{(s^\star - 1 - \floor{s/2})\lor 0} \cdot \epsilon_0^{s \land c_0} \cdot \epsilon^{(s - c_0) \lor 0}
    \\
    & \le \sum_{s\ge s_0} (s+2) \cdot 4 \sqrt{\EE_{\QQ}[
       \zeta_s(y)^2] \cdot \EE_\QQ[\psi(y, x)^4]} \left(4e s^\star\right)^{s/2} \sqrt{s+s^\star} \cdot \epsilon_1^{(s^\star - 1 - \floor{s/2})
        \lor 0} \cdot \epsilon_0^{s \land c_0} \cdot \epsilon^{(s - c_0) \lor 0}, 
\end{align}
where the last inequality follows from Cauchy-Schwarz inequality.
Note that by our assumptions, $\EE_{\QQ}[\zeta_s(y)^2] \cdot \EE_\QQ[\psi(y, x)^4] \leq c$ uniformly over $s\geq0$ for some absolute constant $c>0$.
Therefore,
\begin{align}
    F \leq C\sum_{s\ge s_0} (s+2)\sqrt{s+s^\star} \left(4e s^\star\right)^{s/2} \epsilon_1^{(s^\star - 1 - \floor{s/2})
        \lor 0} \epsilon_0^{s \land c_0} \epsilon^{(s - c_0) \lor 0},\label{eq:F_bound_0}
\end{align}
for a constant $C>0$. 
Below the meaning of the constant $C$ may change from instance to instance, but it will always depend only on $s_0, c_0, c, \EE_\QQ[\psi(y,x)^4]$.

When $s^\star=1$, \eqref{eq:F_bound_0} simplifies to
\begin{align}
    F
    &\leq \sum_{s\ge s_0} (s+2)\sqrt{s+1} \left(4e\right)^{s/2} \epsilon_0^{s \land c_0} \epsilon^{(s - c_0) \lor 0} \\
    &\leq C \cdot \ind(c_0>s_0) \cdot \sup_{s_0 \le s \le c_0 -1} \epsilon_0^{s} + C \sum_{s\ge c_0 \lor s_0} (s+2)\sqrt{s+1} \left(4e\right)^{s/2} \epsilon_0^{c_0} \epsilon^{s - c_0} \\
    &\leq C\cdot\ind(c_0 > s_0) \cdot \epsilon_0^{s_0} + C \epsilon_0^{c_0} \epsilon^{(s_0 - c_0) \lor 0} \sum_{s\ge c_0 \lor s_0} (s+2)\sqrt{s+1} \left(4e\epsilon^2\right)^{(s - c_0 - (s_0-c_0)\vee 0)/2} \\
    &\leq C(\epsilon_0^{s_0} + \epsilon_0^{c_0} \epsilon^{(s_0 - c_0) \lor 0})
\end{align}
where the last inequality holds by the assumption that $4e\epsilon^2<1/2$.

For the case $s^\star \ge 2$, we note that $c_0\le 2 \le 2(s^\star - 1)$, and thus
\begin{align}
    F 
    &\leq C \sum_{s\ge s_0} (s+2)\sqrt{s+s^\star} \left(4e s^\star\right)^{s/2} \epsilon_1^{(s^\star - 1 - \floor{s/2})
    \lor 0} \epsilon_0^{s \land c_0} \epsilon^{(s - c_0) \lor 0} \\
    &\leq C \cdot \ind(s_0\le c_0) \cdot \max_{s_0\le s\le c_0} \epsilon_1^{s^\star -1 - \floor{s/2}} \cdot \epsilon_0^s
    + C \cdot \ind(s_0 \le 2(s^\star -1)) \cdot \max_{c_0+1\le s\le 2(s^\star -1)+1} \epsilon_1^{s^\star -1 - \floor{s/2}} \cdot \epsilon_0^{c_0} \cdot \epsilon^{s - c_0} \\
    &\qqquad + C \sum_{s\ge (2(s^\star -1) + 2)\lor s_0 } (s+2)\sqrt{s+s^\star} \left(4e s^\star\right)^{s/2} \epsilon_0^{c_0} \epsilon^{s - c_0} \\
    &\leq C \cdot \ind(s_0\le c_0) \cdot \left(\epsilon_1^{s^\star -1 - \floor{s_0/2}} \cdot \epsilon_0^{s_0} + \epsilon_1^{s^\star -1 - \floor{c_0/2}} \cdot \epsilon_0^{c_0} \right)  
    + C \epsilon_0^{c_0} \cdot \epsilon^{(2(s^\star -1) + 2)\lor s_0 - c_0}\\
    &\qqquad + C\cdot\ind(s_0 \le 2(s^\star -1)) \cdot 
    \left( \epsilon_1^{s^\star -1 - \floor{(c_0+1)/2}} \cdot \epsilon_0^{c_0} \cdot \epsilon + \epsilon_0^{c_0} \cdot \epsilon^{2(s^\star -1) +1 - c_0} \right)
\end{align}
where the last inequality follows from the same argument as in the previous case.
This completes the proof.
\end{proof}

\subsection{Technical Results for Uniform Distribution on the Sphere}
\begin{lemma}[Moment of polynomial on a sphere, adapted from \citet{folland2001integrate}]\label{lem:moment_sphere}
Let $\xi=(\xi_1, \xi_2, \ldots, \xi_d)^\top\sim\unif(\SSS^{d-1})$ and $s_1, s_2, \ldots, s_d\in\NN_0$. 
Denote $s = \sum_{i=1}^d s_i$.
Then we have
\begin{align}
    \EE_{\xi}\bigg[
        \prod_{i=1}^d \xi_i^{s_i}
    \bigg] = \begin{cases}
        0 & \text{if some $s_i$ is odd}, \\
        {\ds \frac{\Gamma(d/2)}{\Gamma((s+d)/2)} \prod_{i=1}^d \frac{\Gamma((s_i +1)/2)}{\Gamma(1/2)}} & \text{if all $s_i$ are even},
    \end{cases}
\end{align}
where $\Gamma$ is the gamma function.
\end{lemma}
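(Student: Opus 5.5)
The plan is to use the Gaussian representation of the uniform measure on the sphere, so that the spherical moment reduces to a Gaussian moment. Let $g\sim\cN(0,I_d)$ and write $g = R\,\xi$ with $R=\norm{g}_2$ and $\xi = g/\norm{g}_2$. By rotational invariance of $\cN(0,I_d)$, $\xi\sim\unif(\SSS^{d-1})$. Moreover, $R$ is independent of $\xi$: the joint density factorizes in polar coordinates, and $R^2\sim\chi^2_d$. It follows that $\prod_i g_i^{s_i} = R^{s}\prod_i \xi_i^{s_i}$. Taking expectations and using independence gives
\[
\EE\Bigl[\prod_{i} g_i^{s_i}\Bigr] = \EE[R^{s}]\cdot \EE_{\xi}\Bigl[\prod_i \xi_i^{s_i}\Bigr].
\]
We therefore only need the two Gaussian-side quantities.

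\emph{Left-hand side.} The coordinates $g_i$ are independent, so the expectation factorizes as $\prod_i \EE[g_i^{s_i}]$. If some $s_i$ is odd, the symmetry $g_i\mapsto -g_i$ makes this factor vanish. This gives the zero case, since $\EE[R^s]>0$. If all $s_i$ are even, the one-dimensional moment is
\[
\EE[g_i^{s_i}] = \frac{2^{s_i/2}\,\Gamma((s_i+1)/2)}{\Gamma(1/2)},
\]
which follows from the substitution $u=x^2/2$ in $\int x^{s_i}e^{-x^2/2}\,dx/\sqrt{2\pi}$.

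\emph{Radial factor.} Since $R^2\sim\chi^2_d$, the same substitution gives
\[
\EE[R^{s}] = \frac{2^{s/2}\,\Gamma((s+d)/2)}{\Gamma(d/2)}.
\]
We then divide the left-hand side by this factor. The powers of $2$ cancel because $\sum_i s_i/2 = s/2$, and what remains is exactly the stated formula $\frac{\Gamma(d/2)}{\Gamma((s+d)/2)}\prod_i \frac{\Gamma((s_i+1)/2)}{\Gamma(1/2)}$.

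The only step that needs care is the independence of $R$ and $\xi$ together with $\xi$ being uniform. I would justify both by writing the Gaussian density $(2\pi)^{-d/2}e^{-r^2/2}$ in polar coordinates $dx = r^{d-1}dr\,d\sigma(\xi)$: it splits as a function of $r$ times the surface measure $\sigma$. All the remaining steps are routine Gamma-function computations, and every moment involved is finite.
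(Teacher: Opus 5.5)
Your proof is correct and essentially matches the paper: the paper gives no proof of its own and cites \citet{folland2001integrate}, whose argument is the same Gaussian trick of computing $\int x^{\alpha}e^{-\|x\|_2^2}\,dx$ once as a product of one-dimensional Gamma integrals and once in polar coordinates. You phrase it probabilistically, via the independence of $\|g\|_2$ and $g/\|g\|_2$, so the division by the surface area of $\SSS^{d-1}$ that Folland's unnormalized version needs is handled automatically. Your Gamma-function computations and the vanishing argument for odd $s_i$ also check out.
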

To evaluate the formula given by the above lemma, we provide the following bound.
\begin{lemma}\label{fact:factorial_bound}
Let $s_1, s_2, \ldots, s_d\in\NN_0$ be even integers and denote $s = \sum_{i=1}^d s_i$. 
Then 
\begin{align}
    \left(\frac{1}{d+s}\right)^{s/2}\le \frac{\Gamma(d/2)}{\Gamma((s+d)/2)} \prod_{i=1}^d \frac{\Gamma((s_i +1)/2)}{\Gamma(1/2)} \le \left(\frac{s}{d}\right)^{s/2}, 
\end{align}
where we adopt the convention $(0)^0 = 1$ for the case of $s=0$. 
\end{lemma}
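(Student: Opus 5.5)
The plan is to reduce the Gamma-ratio expression to elementary products. Use the duplication-type identity $\Gamma((s_i+1)/2)/\Gamma(1/2) = (s_i-1)!!/2^{s_i/2}$ for even $s_i$, which is $\prod_{j=0}^{s_i/2-1}(j+1/2)$. For the denominator, since $s$ is even, write $\Gamma((s+d)/2)/\Gamma(d/2) = \prod_{j=0}^{s/2-1}(d/2+j)$. The quantity to bound then becomes
\[
\frac{\prod_{i=1}^d \prod_{j=0}^{s_i/2-1}(j+1/2)}{\prod_{j=0}^{s/2-1}(d/2+j)},
\]
a ratio of two products with exactly $s/2$ factors each, because $\sum_i s_i/2=s/2$. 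This matching count is what makes the powers $s/2$ in both bounds natural.

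For the upper bound, bound each numerator factor $j+1/2$ with $j\le s_i/2-1$ by $s_i/2\le s/2$, and each denominator factor from below by $d/2$. The ratio is then at most $(s/2)^{s/2}/(d/2)^{s/2}=(s/d)^{s/2}$. For the lower bound, bound each numerator factor from below by $1/2$, and each denominator factor from above by $d/2+s/2-1\le (d+s)/2$. The ratio is then at least $(1/2)^{s/2}/((d+s)/2)^{s/2}=(d+s)^{-s/2}$.

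The case $s=0$ gives an empty product equal to $1$ on both sides, which is consistent with the convention $0^0=1$.

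The only point needing care is the bookkeeping that both products have exactly $s/2$ factors, so the crude termwise bounds combine cleanly. I expect no real obstacle beyond verifying the Gamma identities for half-integers, which follow by iterating $\Gamma(x+1)=x\Gamma(x)$.
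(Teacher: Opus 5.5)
Your proof is correct and matches the paper's. Both arguments use $\Gamma(a+1)=a\Gamma(a)$ to reduce the Gamma ratio to a quotient of two products with exactly $s/2$ factors each, then bound factor by factor. The only cosmetic difference is that the paper clears the powers of $2$ and writes the quotient as $\prod_i (s_i-1)!!\big/\bigl(d(d+2)\cdots(d+s-2)\bigr)$, whereas you keep the half-integer factors $j+1/2$ and $d/2+j$.
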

\begin{proof}[Proof of \Cref{fact:factorial_bound}]
The gamma function satisfies that $\Gamma(a+1)=a\Gamma(a)$ for any $a>0$.
Applying this fact, we have
\begin{align}
    \frac{\Gamma(d/2)}{\Gamma((s+d)/2) }  \prod_{i=1}^d \frac{\Gamma((s_i +1)/2)}{\Gamma(1/2)}
    = \frac{\prod_{i=1}^d (s_i -1)!! }{(s + d -2)(s + d -4) \cdots d}.
\end{align}
The lower bound immediately follows from the fact that $\prod_{i=1}^d(s_i-1)!!\geq 1$ and that all $s/2$ terms in the denominator are at most $s+d$.
For the upper bound, it suffices to lower bound each of the $s/2$ terms in the denominator by $d$ and upper bound each of the $s/2$ terms in the numerator by $s$.
\end{proof}

Note that the second moment $\EE[\xi_i^2]\asymp d^{-1}$, thus \Cref{fact:factorial_bound} can be viewed as some kind of \emph{reverse Holder's inequality} where we use lower moment to control higher moment.
Notably, the reverse inequality gives a \emph{dimension-free} bound for the moments of the polynomial on the sphere.
The following proposition formalizes this intuition.

\begin{proposition}\label{prop:sphere-moment-bound}
Let $\xi = (\xi_1, \xi_2, \ldots, \xi_d)^\top \sim \unif(\SSS^{d-1})$. 
For any $\varepsilon>0$, let $f: \RR^{d} \to \RR$ be a function such that
$ \EE_\xi[(f(\xi) - 1)^2] \le \varepsilon^2$.
Let $s_1,s_2,\ldots,s_d\in\NN_0$ be even integers and denote $s=\sum_{i=1}^d s_i$.
Then it holds that
\begin{align}
    \bigg(1 - \bigg(\frac{2s(d+s)}{d}\bigg)^{s/2}\varepsilon\bigg) \cdot \bigg(\frac{1}{d+s}\bigg)^{s/2} 
    \le \EE_\xi \bigg[
        f(\xi) \prod_{i=1}^d \xi_i^{s_i} 
    \bigg] \le (1 + 2^{s/2} \varepsilon ) \cdot \left(\frac{s}{d}\right)^{s/2}. 
\end{align}
\end{proposition}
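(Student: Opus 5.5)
The plan is to split $f(\xi)=1+(f(\xi)-1)$ and treat the two resulting pieces separately. Write $m:=\EE_\xi[\prod_{i}\xi_i^{s_i}]$. Since all $s_i$ are even, \Cref{lem:moment_sphere} together with \Cref{fact:factorial_bound} gives
\begin{align}
(d+s)^{-s/2}\le m\le (s/d)^{s/2}.
\end{align}
The remaining term is controlled by Cauchy--Schwarz:
\begin{align}
\Big|\EE_\xi\Big[(f(\xi)-1)\prod_i\xi_i^{s_i}\Big]\Big|\le \varepsilon\cdot \EE_\xi\Big[\prod_i\xi_i^{2s_i}\Big]^{1/2}.
\end{align}
The exponents $2s_i$ are again even and sum to $2s$, so \Cref{fact:factorial_bound} bounds the second moment by $(2s/d)^{s}$. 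Taking the square root gives $(2s/d)^{s/2}=2^{s/2}(s/d)^{s/2}$.

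For the upper bound, we add the two pieces:
\begin{align}
\EE_\xi\Big[f(\xi)\prod_i \xi_i^{s_i}\Big]\le (s/d)^{s/2}+2^{s/2}\varepsilon(s/d)^{s/2},
\end{align}
which is exactly the claimed right-hand side.

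For the lower bound, we subtract instead:
\begin{align}
\EE_\xi\Big[f(\xi)\prod_i \xi_i^{s_i}\Big]\ge (d+s)^{-s/2}-\varepsilon(2s/d)^{s/2}.
\end{align}
We then factor out $(d+s)^{-s/2}$, which turns the error term into
\begin{align}
\varepsilon\,(2s/d)^{s/2}(d+s)^{s/2}=\Big(\frac{2s(d+s)}{d}\Big)^{s/2}\varepsilon,
\end{align}
matching the stated form.

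The case $s=0$ is trivial under the convention $0^0=1$: it reduces to $|\EE_\xi[f(\xi)]-1|\le\varepsilon$, which is Jensen. There is no real obstacle in this argument. The only point requiring care is checking that the doubled exponents remain even, so that \Cref{fact:factorial_bound} applies to the second moment with total degree $2s$.
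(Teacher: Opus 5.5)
Your proposal is correct and is the same argument as the paper's. Both split off $f-1$, bound that piece by Cauchy--Schwarz with \Cref{lem:moment_sphere} and the upper estimate of \Cref{fact:factorial_bound} at the doubled exponents (giving $\varepsilon(2s/d)^{s/2}$), and combine this with the two-sided bound on $\EE_\xi[\prod_i\xi_i^{s_i}]$ from the same lemma.
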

\begin{proof}[Proof of \Cref{prop:sphere-moment-bound}]
    By Cauchy-Schwarz inequality, we have
    \begin{align}
        \bigg| \EE\bigg[ (f(\xi) - 1) \prod_{i=1}^d \xi_i^{s_i} \bigg] \bigg| 
        & \le \sqrt{\EE[(f(\xi) - 1)^2]} \cdot \sqrt{\EE\bigg[\prod_{i=1}^d \xi_i^{2s_i}\bigg]} 
        \le \varepsilon \cdot \left(\frac{2 s}{d}\right)^{s/2},
    \end{align}
    where we use \Cref{lem:moment_sphere} and the upper bound in \Cref{fact:factorial_bound} for the second inequality.
    Then the proof is completed by incorporating the lower and upper bounds for $\EE[\prod_{i=1}^d \xi_i^{s_i}]$ from \Cref{fact:factorial_bound}.
\end{proof}

\begin{proposition}
    \label{prop:G(s)}
Let $\xi=(\xi_1,\ldots,\xi_d)^\top\sim\unif(\SSS^{d-1})$.
Let $\bs=(s_1, s_2, \ldots, s_d)^\top$ where $s_1,\ldots,s_d$ are non-negative integers, and denote $s=\sum_{i=1}^d s_i$.
Then for any fixed $\gamma>0$, it holds that
\begin{align}
    \EE \bigg[ \left(\xi_1 + \gamma\right)^{s_1} \prod_{i=2}^d \xi_i^{s_i} \bigg] 
    & = \begin{cases}
        0, & \text{if $s_i$ is odd for some $2\leq i\leq d$}, \\
        C(\bm{s}, \gamma) \cdot \gamma^{\ind(s_1~\text{odd})}
        \big(\gamma + \frac{1}{\sqrt d}\big)^{2\floor{s_1/2}} 
        \big(\frac{1}{\sqrt d}\big)^{s-s_1}, & \text{otherwise},
    \end{cases}
\end{align}
where $C(\bs,\gamma)$ satisfies that 
\begin{align}
    \frac{1}{5}\bigg(\frac{\sqrt{d}}{\sqrt{d+s}}\bigg)^{s-\ind(s_1\text{ odd})} \le C(\bm{s}, \gamma) \le s^{s/2}.
\end{align}
\end{proposition}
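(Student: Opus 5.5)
The plan is to expand $(\xi_1+\gamma)^{s_1}$ by the binomial theorem and compute each resulting monomial moment with \Cref{lem:moment_sphere}:
\begin{align}
    \EE\bigg[(\xi_1+\gamma)^{s_1}\prod_{i=2}^d \xi_i^{s_i}\bigg] = \sum_{j=0}^{s_1}\binom{s_1}{j}\gamma^{s_1-j}\,\EE\bigg[\xi_1^{j}\prod_{i=2}^d\xi_i^{s_i}\bigg].
\end{align}

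\textbf{Vanishing case.} If some $s_i$ with $i\ge2$ is odd, every monomial in the sum has an odd exponent in that coordinate. By \Cref{lem:moment_sphere} each term is then zero, and so is the whole expectation.

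\textbf{Reduction to even $j$.} Otherwise, only even $j$ survive. Write $j=2r$ and $m=s-s_1$ (which is even), and set $\epsilon=\ind(s_1\text{ odd})$. Then $s_1-2r=2(\lfloor s_1/2\rfloor-r)+\epsilon$, so every surviving term carries the common factor $\gamma^{\epsilon}$. \Cref{fact:factorial_bound} bounds each moment on both sides:
\begin{align}
    (d+2r+m)^{-(2r+m)/2}\le \EE\bigg[\xi_1^{2r}\prod_{i\ge2}\xi_i^{s_i}\bigg]\le \Big(\frac{2r+m}{d}\Big)^{(2r+m)/2}.
\end{align}
Factor out $d^{-m/2}$, i.e.\ the $(1/\sqrt d)^{s-s_1}$ in the claim. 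What remains is the sum
\begin{align}
    \gamma^{\epsilon}\sum_{r=0}^{\lfloor s_1/2\rfloor}\binom{s_1}{2r}\gamma^{2(\lfloor s_1/2\rfloor-r)}\,c_r\,d^{-r},
\end{align}
where $c_r$ lies between $\big(\tfrac{d}{d+s}\big)^{(2r+m)/2}$ and $s^{(2r+m)/2}$, using $2r+m\le s$.

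\textbf{Upper bound.} All terms are nonnegative, and $\binom{s_1}{2r}\le\binom{2\lfloor s_1/2\rfloor}{2r}\cdot 2^{\epsilon}$-type comparisons hold. Using $c_r\le s^{s/2}$, the sum is dominated by the even-index binomial expansion of $(\gamma+d^{-1/2})^{2\lfloor s_1/2\rfloor}$, which is at most the full binomial expansion. This gives the bound $C\le s^{s/2}$, after absorbing the ratio $\binom{s_1}{2r}/\binom{2\lfloor s_1/2\rfloor}{2r}\le s_1$ into the $s^{s/2}$ slack; checking this is the only delicate part of the upper bound.

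\textbf{Lower bound.} Here I would use the identity
\begin{align}
    \sum_r\binom{2k}{2r}a^{2k-2r}b^{2r}=\tfrac12\big[(a+b)^{2k}+(a-b)^{2k}\big]\ge\tfrac12(a+b)^{2k}, \qquad a,b\ge0,
\end{align}
with $k=\lfloor s_1/2\rfloor$, $a=\gamma$, $b=d^{-1/2}$. For odd $s_1$, the comparison $\binom{s_1}{2r}\ge\binom{s_1-1}{2r}$ handles the shift in index. Together with $c_r\ge(d/(d+s))^{(s-\epsilon)/2}$, since $2r+m\le s-\epsilon$, this yields
\begin{align}
    C\ge\tfrac12\Big(\frac{\sqrt d}{\sqrt{d+s}}\Big)^{s-\epsilon}\ge\tfrac15\Big(\frac{\sqrt d}{\sqrt{d+s}}\Big)^{s-\epsilon}.
\end{align}

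\textbf{Main obstacle.} The hard step is bookkeeping exponents. The lower-bound factor must come out with exponent exactly $s-\ind(s_1\text{ odd})$, and the upper-bound constants (binomial ratios, the $(2r+m)/d$ powers) must be absorbed into $s^{s/2}$ without a stray factor of $2^s$. I would handle this by comparing term by term rather than applying crude summation bounds.
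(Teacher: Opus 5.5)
\textbf{Where it works.} Your vanishing case, the reduction to even $j$, and the lower bound are correct. The lower bound via $\sum_r\binom{2k}{2r}a^{2k-2r}b^{2r}=\tfrac12[(a+b)^{2k}+(a-b)^{2k}]$ is cleaner than the paper's route through \Cref{lem:binomial_ratio}, and it gives the constant $\tfrac12$ rather than $\tfrac15$. For even $s_1$ your upper bound also goes through as written.

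\textbf{The gap: the upper bound for odd $s_1$.} The ``$2^{\epsilon}$-type'' comparison is false. In fact $\binom{s_1}{2r}/\binom{s_1-1}{2r}=s_1/(s_1-2r)$, which equals $s_1$ at $r=\lfloor s_1/2\rfloor$. This factor cannot be absorbed with your estimate $c_r\le s^{(2r+m)/2}$. At the top index $2r+m=s-1$, so your bookkeeping leaves only a factor $s^{1/2}$ of slack, while $s_1$ can be as large as $s$. The sharper $c_r\le(2r+m)^{(2r+m)/2}$ from \Cref{fact:factorial_bound} does not help either. Take $s=s_1=3$ and let $\gamma\to0$. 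The target is $C\gamma(\gamma+d^{-1/2})^2\approx C\gamma/d$, and your estimates give $C\le 3\cdot 3=9$ (or $3\cdot 2=6$), against the required $3^{3/2}\approx 5.2$. In this limit only the top term survives on both sides, so no rearrangement of the sum recovers the loss. The moments themselves must be bounded more sharply.

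\textbf{The repair.} Keep the exact numerator from \Cref{lem:moment_sphere} rather than passing through \Cref{fact:factorial_bound}:
\[
d^{(2r+m)/2}\,\EE\Big[\xi_1^{2r}\prod_{i\ge 2}\xi_i^{s_i}\Big]\le(2r-1)!!\prod_{i\ge2}(s_i-1)!!\le(2r+m-1)!!,
\]
using $(a-1)!!\,(b-1)!!\le(a+b-1)!!$ for even $a,b$. Set $q=s_1-2r$, which is odd with $q\ge1$. The term-by-term requirement then becomes $\frac{s_1}{q}\,(s-q-1)!!\le s^{s/2}$. This holds for two reasons:
\begin{itemize}
\item $s!!=s(s-2)\cdots(s-q+1)\cdot(s-q-1)!!\ge s\,(s-q-1)!!\ge \frac{s_1}{q}\,(s-q-1)!!$;
\item for odd $s$, AM--GM on the pairs $(2i-1)(s+2-2i)$ gives $s!!\le((s+1)/2)^{(s+1)/2}\le s^{s/2}$.
\end{itemize}

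\textbf{Remark on the paper's proof.} The paper's own upper-bound step for odd $s_1$ has the same weakness. It replaces $\sum_j\binom{s_1}{2j}\gamma^{s_1-2j}d^{-j}$ by the full expansion of $\gamma(\gamma+d^{-1/2})^{s_1-1}$, which is valid only when $\gamma\ge d^{-1/2}$. In that case, by Pascal's rule, the extra piece $\binom{s_1-1}{2j-1}\gamma^{s_1-2j}d^{-j}$ is dominated by an odd-index term. So following the paper would not close the gap; the double-factorial estimate above is what proves the bound for every $\gamma>0$.
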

\begin{proof}[Proof of \Cref{prop:G(s)}]
For convenience, we denote the target quantity by $G(\bs)$.
Expanding $(\xi_1+\gamma)^{s_1}$, we have
\begin{align}
    G(\bm{s}) := \EE \bigg[ \left(\xi_1 + \gamma\right)^{s_1} \prod_{i=2}^d \xi_i^{s_i} \bigg]
    = \EE \bigg[
        \sum_{j=0}^{s_1} \binom{s_1}{j} \xi_1^j \gamma^{s_1 - j} \prod_{i=2}^d \xi_i^{s_i}
    \bigg]. 
\end{align}
Note that if there exists any odd $s_i$ for $i\ge 2$, then $G(\bm{s})= 0$ due to \Cref{lem:moment_sphere}.
It remains to consider the case where all $s_i$ for $i\geq 2$ are even, and by the same symmetry only the terms with even powers of $\xi_1$ contribute.
Applying \Cref{lem:moment_sphere}, we get
\begin{align}
    G(\bm{s}) &= \sum_{j=0}^{\floor{s_1/2}} \binom{s_1}{2j} \gamma^{s_1 - 2j} \cdot \EE\bigg[\xi_1^{2j} \prod_{i=2}^d \xi_i^{s_i} \bigg] \\
    &= \sum_{j=0}^{\floor{s_1/2}} \binom{s_1}{2j} \gamma^{s_1 - 2j} \cdot \frac{ \Gamma(d/2)}{\Gamma((s - s_1 + 2j +d)/2)} \cdot \frac{\Gamma(j + 1/2)}{\Gamma(1/2)} \cdot \prod_{i=2}^d \frac{\Gamma((s_i +1)/2)}{\Gamma(1/2)} \\
    &= \sum_{j=0}^{\floor{s_1/2}} \binom{s_1}{2j} \gamma^{s_1 - 2j} \cdot \underbrace{\frac{(2j -1)!! \cdot \prod_{i=2}^d (s_i -1)!! }{(s-s_1 + 2j + d -2) \cdot (s-s_1 + 2j + d -4) \cdots d}}_{\dr (\RNum{1})}, \label{eq:Gs_0} 
\end{align}
where the third equality follows from the property of the gamma function that $\Gamma(a+1)=a\Gamma(a)$ for any $a>0$.
Below we derive the lower and upper bounds of $G(\bm{s})$ separately.
\paragraph{Lower Bound}
For the lower bound of $G(\bs)$, we lower bound the numerator in (\RNum{1}) by 1 and upper bound each of the $(s-s_1+2j)/2$ terms in the denominator by $d+s$.
This yields
\begin{align}
    G(\bs) &\geq \sum_{j=0}^{\floor{s_1/2}} \binom{s_1}{2j} \gamma^{s_1 - 2j} \cdot \bigg(\frac{1}{\sqrt{d+s}}\bigg)^{2j + s-s_1}
    \geq \bigg(\frac{\sqrt{d}}{\sqrt{d+s}}\bigg)^{s-\ind(s_1\text{ odd})} \sum_{j=0}^{\floor{s_1/2}} \binom{s_1}{2j} \gamma^{s_1 - 2j} \cdot \bigg(\frac{1}{\sqrt{d}}\bigg)^{2j + s-s_1}\\
    &\geq \bigg(\frac{\sqrt{d}}{\sqrt{d+s}}\bigg)^{s-\ind(s_1\text{ odd})} \sum_{j=0}^{\floor{s_1/2}} \binom{2\floor{s_1/2}}{2j} \gamma^{2\floor{s_1/2} - 2j} \bigg(\frac{1}{\sqrt{d}}\bigg)^{2j} \cdot \gamma^{\ind(s_1~\text{odd})} \cdot \bigg(\frac{1}{\sqrt{d}}\bigg)^{s-s_1}\\
    &= \bigg(\frac{\sqrt{d}}{\sqrt{d+s}}\bigg)^{s-\ind(s_1\text{ odd})} \sum_{j=0,\ j\text{ even}}^{2\floor{s_1/2}} \binom{2\floor{s_1/2}}{j} \gamma^{2\floor{s_1/2} - j} \bigg(\frac{1}{\sqrt{d}}\bigg)^{j} \cdot \gamma^{\ind(s_1~\text{odd})} \cdot \bigg(\frac{1}{\sqrt{d}}\bigg)^{s-s_1}\label{eq:Gs_lb_0}
\end{align}
where the third equality is true because $s_1\geq 2\floor{s_1/2}$.
Next we further relate the sum over even $j$ to the sum over all $j=0,1,\ldots,2\floor{s_1/2}$.
To this end, we will apply \Cref{lem:binomial_ratio} by defining 
\begin{align}
    A_j= \binom{2\floor{s_1/2}}{j} \cdot \gamma^{2 \floor{s_1/2} - j}  \left(\frac{1}{\sqrt d} \right)^{j}.
\end{align}
Then invoking \Cref{lem:binomial_ratio} with $a=\gamma$ and $b=1/\sqrt{d}$, we get 
\begin{align}
    A_j \le 2(A_{j-1} + A_{j+1}), \quad j = 1, 3, \ldots, 2\floor{s_1/2} - 1.
\end{align}
Therefore, the summation of $A_j$ over all odd $j=1,3,\ldots,2\floor{s_1/2}-1$ can be upper bounded by 4 times the summation of $A_j$ over all even $j=0,2,\ldots,2\floor{s_1/2}$.
Applying this to \eqref{eq:Gs_lb_0} yields
\begin{align}
    G(\bm{s}) 
    &\ge \frac{1}{5} \bigg(\frac{\sqrt{d}}{\sqrt{d+s}}\bigg)^{s-\ind(s_1\text{ odd})} \sum_{j=0}^{2\floor{s_1/2}} \binom{2\floor{s_1/2}}{j} \gamma^{2 \floor{s_1/2} - j} \left(\frac{1}{\sqrt d} \right)^{j} \cdot \gamma^{\ind(s_1~\text{odd})} \cdot \bigg(\frac{1}{\sqrt{d}}\bigg)^{s-s_1}\\
    &= \frac{1}{5} \bigg(\frac{\sqrt{d}}{\sqrt{d+s}}\bigg)^{s-\ind(s_1\text{ odd})} \cdot \left( \gamma + \frac{1}{\sqrt d}\right)^{2\floor{s_1/2}} \cdot \gamma^{\ind(s_1~\text{odd})} \cdot \left(\frac{1}{\sqrt d}\right)^{s-s_1}.\label{eq:Gs_lb}
\end{align}
This gives the lower bound of $G(\bm{s})$.
\paragraph{Upper Bound}
For the upper bound of $G(\bs)$, we upper bound the numerator in (\RNum{1}) by $s^{s/2}$ and lower bound each of the $s/2$ terms in the denominator by $d$.
This yields
\begin{align}
    G(\bm{s}) 
    &\le \sum_{j=0}^{\floor{s_1/2}} \binom{s_1}{2j} \gamma^{s_1 - 2j} \cdot \bigg(\frac{1}{\sqrt{d}}\bigg)^{2j+s-s_1} \cdot s^{s/2} \\
    &\le \sum_{j=0}^{2\floor{s_1/2}} \binom{2\floor{s_1/2}}{j} \gamma^{2 \floor{s_1/2} - j} \left(\frac{1}{\sqrt d} \right)^{j} \cdot \bigg(\frac{1}{\sqrt{d}}\bigg)^{s-s_1} \cdot \gamma^{\ind(s_1~\text{odd})} \cdot s^{s/2} \\
    &=  s^{s/2}  \cdot \left( \gamma + \frac{1}{\sqrt d}\right)^{2\floor{s_1/2}} \cdot \gamma^{\ind(s_1~\text{odd})} \cdot \left(\frac{1}{\sqrt d}\right)^{s-s_1}, \label{eq:Gs_ub}
\end{align}
where the second inequality holds since $\gamma>0$.
Finally, combining the lower bound in \eqref{eq:Gs_lb} and the upper bound in \eqref{eq:Gs_ub}, we conclude that 
\begin{align}
    G(\bm{s}) = C(\bm{s}, \gamma) \cdot \left( \gamma + \frac{1}{\sqrt d}\right)^{2\floor{s_1/2}} \cdot \gamma^{\ind(s_1~\text{odd})} \cdot \left(\frac{1}{\sqrt d}\right)^{s-s_1}
\end{align}
for $(\sqrt{d}/\sqrt{d+s})^{s-\ind(s_1\text{ odd})}/5 \le C(\bm{s}, \gamma) \le s^{s/2}$, when $s_2,\ldots,s_d$ are even, completing the proof.
\end{proof}

\subsection{Technical Results on Polarized Random Vectors}
\begin{lemma}[Moments of weakly polarized random vector]
    \label{lem:polar-moment}
Let $\xi = (\xi_1, \xi_2, \ldots, \xi_d)^\top \sim \unif(\SSS^{d-1})$ and $e_1=(1,0,\ldots,0)$ be the first canonical basis vector in $\RR^d$.
For a fixed integer $s\geq 1$, consider $\bs=(s_1,s_2,\ldots,s_d)^\top$ where $s_1,s_2,\ldots,s_d$ are non-negative integers such that $s=\sum_{i=1}^d s_i$.
For any fixed polarization level $\gamma\in(0,1)$, define the polarized vector $w$ as
\begin{align}
    w = \frac{\xi + \gamma e_1}{\norm{\xi + \gamma e_1}_2}, 
\end{align}
where $e_1=(1,0,\ldots,0)^\top$ is the first canonical basis vector in $\RR^d$.
Then it holds that
\begin{align}
    \EE\bigg[
        \prod_{i=1}^d w_i^{s_i}
    \bigg] = 
    \begin{cases}
        0, & \text{if some $s_i$ is odd for $i=2, 3, \ldots, d$}, \\
        C(\bm{s}, \gamma) \cdot \left( \gamma + \frac{1}{\sqrt d}\right)^{s_1} \cdot \left(\frac{1}{\sqrt d}\right)^{s-s_1}, & \text{if $s_1$ is even and $s_2, \ldots, s_d$ are even}, \\
        C(\bm{s}, \gamma) \cdot \gamma^{\ind(s_1\text{ odd})} 
        \big( \gamma + \frac{1}{\sqrt d}\big)^{s_1-1} \big(\frac{1}{\sqrt d}\big)^{s-s_1}, & \text{otherwise},
    \end{cases}
\end{align}
where $C(\bm{s}, \gamma)$ satisfies that 
\begin{align}
    \frac{s^{(s-1)/2}}{5(1+\gamma)^s} \lesssim C(\bm{s},\gamma) \leq \frac{s^{s/2}}{(1-\gamma)^s}.
\end{align}
\end{lemma}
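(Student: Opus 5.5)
The plan is to reduce \Cref{lem:polar-moment} to \Cref{prop:G(s)} by separating the random normalization $\norm{\xi+\gamma e_1}_2^{-s}$ from the numerator. Write
\begin{align}
    \prod_{i=1}^d w_i^{s_i} = \norm{\xi+\gamma e_1}_2^{-s}\,(\xi_1+\gamma)^{s_1}\prod_{i=2}^d \xi_i^{s_i}.
\end{align}
Since $\norm{\xi+\gamma e_1}_2^2 = 1+\gamma^2+2\gamma\xi_1$ depends only on $\xi_1$, the law of $(\xi_2,\ldots,\xi_d)$ given $\xi_1$ is invariant under sign flips of each coordinate. Hence the expectation vanishes whenever some $s_i$ with $i\ge2$ is odd, exactly as in \Cref{prop:G(s)}. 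This settles the first case.

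For the nonvanishing cases, I will sandwich the normalization deterministically. Because $\gamma\in(0,1)$ and $\norm{\xi}_2=1$, we have $1-\gamma\le \norm{\xi+\gamma e_1}_2\le 1+\gamma$. The numerator need not be nonnegative, however. When $s_1$ is even, every factor is an even power, so the integrand is nonnegative and we may bound
\begin{align}
    (1+\gamma)^{-s}G(\bs)\le \EE\Bigl[\prod_i w_i^{s_i}\Bigr]\le (1-\gamma)^{-s}G(\bs).
\end{align}
Then \Cref{prop:G(s)} gives $G(\bs)=C'(\bs,\gamma)(\gamma+d^{-1/2})^{s_1}d^{-(s-s_1)/2}$ with $C'\in[\tfrac15(d/(d+s))^{s/2},\,s^{s/2}]$. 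Since $s$ is fixed, $(d/(d+s))^{s/2}\gtrsim 1$ for large $d$, which yields the stated constant bounds up to the $\lesssim$.

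The odd-$s_1$ case is the main obstacle, because the integrand $(\xi_1+\gamma)^{s_1}\cdot(\text{even})$ changes sign. Here the pointwise sandwich fails, and a naive expansion of $\norm{\cdot}^{-s}$ in $\xi_1$ produces a competing term of order $\gamma\,\EE[\xi_1\cdot\xi_1^{s_1-1}\cdots]$, of the same order as the main term. I would first pair $\xi$ with its reflection $\xi_1\mapsto-\xi_1$ and write the expectation as $\tfrac12\EE[(\cdots)|_{\xi_1}+(\cdots)|_{-\xi_1}]$. Next I would expand $(1+\gamma^2+2\gamma\xi_1)^{-s/2}=(1+\gamma^2)^{-s/2}\sum_k c_k(2\gamma\xi_1/(1+\gamma^2))^k$, which converges since $2\gamma|\xi_1|/(1+\gamma^2)<1$ when $|\xi_1|<1$. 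After this expansion, only the odd-in-$\xi_1$ combinations survive, and each surviving term carries at least one factor of $\gamma$. Bounding the resulting series termwise with \Cref{lem:moment_sphere} and \Cref{fact:factorial_bound} should give an upper bound of the form $(1-\gamma)^{-s}s^{s/2}\gamma(\gamma+d^{-1/2})^{s_1-1}d^{-(s-s_1)/2}$.

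For the lower bound in the odd-$s_1$ case, I would try to show that the $k=0$ term, namely $(1+\gamma^2)^{-s/2}G(\bs)$ controlled by \Cref{prop:G(s)}, dominates the $k=1$ correction term $-s\gamma\,\EE[\xi_1(\xi_1+\gamma)^{s_1}\cdots]$, which has the opposite sign. To do this I would note that $\EE[\xi_1(\xi_1+\gamma)^{s_1}\cdots]\lesssim(\gamma+d^{-1/2})^{s_1+1}d^{-(s-s_1)/2}$, which is smaller than the main term by a factor of order $(\gamma+d^{-1/2})^2/\gamma$. This factor is small only when $\gamma$ itself is small. Since this is the regime the paper uses ($\gamma=o(1)$), I expect the $(1+\gamma)^{-s}$ and $s^{(s-1)/2}$ losses in the stated constant to be exactly what absorbs these corrections.
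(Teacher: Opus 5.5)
Your vanishing case and even-$s_1$ case are the paper's argument: sign symmetry of $\xi_2,\dots,\xi_d$ given $\xi_1$, then the pointwise sandwich $1-\gamma\le\norm{\xi+\gamma e_1}_2\le 1+\gamma$ on a nonnegative integrand, then \Cref{prop:G(s)}. Your odd-$s_1$ upper bound can also be completed. Set $x=2\gamma/(1+\gamma^2)$ and $\norm{\xi+\gamma e_1}_2^{-s}=(1+\gamma^2)^{-s/2}\sum_k c_k x^k\xi_1^k$. The $c_k$ alternate in sign, while every $M_k:=\EE[\xi_1^k(\xi_1+\gamma)^{s_1}\prod_{i\ge2}\xi_i^{s_i}]$ is nonnegative. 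So you may drop odd $k$, use $M_k\le G(\bs)$ for even $k$ (because $|\xi_1|\le 1$), and sum $\sum_k|c_k|x^k=(1-x)^{-s/2}$. This gives exactly $(1-\gamma)^{-s}G(\bs)$.

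The genuine gap is the odd-$s_1$ lower bound, which is the only delicate part of the lemma.

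\textbf{Why your estimate fails.} Your bound $M_1\lesssim(\gamma+d^{-1/2})^{s_1+1}d^{-(s-s_1)/2}$ is too crude. After the prefactor $s\gamma$, it only shows that the first-order correction is at most a factor of order $s(\gamma+d^{-1/2})^2$ of the main term $\gamma(\gamma+d^{-1/2})^{s_1-1}d^{-(s-s_1)/2}$. That factor is of constant order whenever $\gamma$ is.
\begin{itemize}
\item So the statement is not proved for the stated range $\gamma\in(0,1)$, and retreating to $\gamma=o(1)$ does not fix this.
\item The factors $(1+\gamma)^{-s}$ and $s^{(s-1)/2}$ cannot help either. They are multiplicative and cannot absorb a subtracted error comparable to the main term.
\item You also never control the terms with $k\ge2$.
\end{itemize}

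\textbf{The missing observation.} The extra $\xi_1$ coming from the normalization must pair with an odd power of $\xi_1$ from $(\xi_1+\gamma)^{s_1}$. So every surviving monomial has two more powers of $\xi_1$ than a matching monomial of $G(\bs)$, and it costs $(s+1)/d$, not $(\gamma+d^{-1/2})^2$. Indeed, by \Cref{fact:factorial_bound},
\[
M_1=\sum_{j\ \mathrm{odd}}\binom{s_1}{j}\gamma^{s_1-j}\,\EE\Big[\xi_1^{j+1}\prod_{i\ge2}\xi_i^{s_i}\Big]\le s_1\Big(\frac{s+1}{d}\Big)^{(s-s_1+2)/2}\Big(\gamma+\sqrt{\frac{s+1}{d}}\Big)^{s_1-1}.
\]
Three further facts handle all orders at once: $M_k\le M_1$ for odd $k$; the even-$k$ terms are nonnegative; and $\sum_{k\ge1}|c_k|x^k\le\frac{s}{2}x(1-x)^{-s/2-1}$. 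Together they give
\[
\EE\Big[\prod_{i=1}^d w_i^{s_i}\Big]\ge(1+\gamma)^{-s}G(\bs)-\frac{s\gamma}{(1-\gamma)^{s+2}}\,M_1 .
\]
Here the subtracted term is an $O_s\big(d^{-1}(1-\gamma)^{-(s+2)}\big)$ fraction of the first term, for every fixed $\gamma\in(0,1)$. This is the large-$d$ requirement hidden in the $\lesssim$.

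\textbf{Comparison with the paper.} This is exactly the paper's bound, but the paper obtains it without any series. It expands $(\xi_1+\gamma)^{s_1}$ instead of the normalization, and sandwiches the nonnegative even-in-$\xi_1$ monomials pointwise. For each odd monomial it uses the reflection $\xi_1\mapsto-\xi_1$, with $r_\pm=\norm{\pm\xi+\gamma e_1}_2$ and the exact identity $r_-^2-r_+^2=-4\gamma\xi_1$, to show
\[
-\frac{s\gamma}{(1-\gamma)^{s+2}}\EE\Big[\xi_1^{2j+2}\prod_{i\ge2}\xi_i^{s_i}\Big]\le\EE\Big[\xi_1^{2j+1}r_+^{-s}\prod_{i\ge2}\xi_i^{s_i}\Big]\le0 .
\]
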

\begin{proof}[Proof of \Cref{lem:polar-moment}]
Let $r := \norm{\xi + \gamma e_1}_2$, and denote the target quantity by $G(w,\bs)$ for convenience.
Then we can rewrite $G(w,\bs)$ as
\begin{align}
    G(w,\bs) = \EE \bigg[
        \prod_{i=1}^d w_i^{s_i}
    \bigg] 
    & = \EE \bigg[
        \frac{\left(\xi_1 + \gamma\right)^{s_1}}{r^s} \prod_{i=2}^d \xi_i^{s_i}
    \bigg].\label{eq:def_Gws}
\end{align}
By symmetry, we have $G(w,\bs)=0$ if some $s_i$ is odd for $i=2, 3, \ldots, d$.
It remains to consider the case where $s_2, s_3, \ldots, s_d$ are all even, and we still study the case for $s_1$ being even and odd separately.

\paragraph{Case 1: $s_1$ is even}
We first consider the simpler case where $s_1$ is even.
In this case, $(\xi_1+\gamma)^{s_1},\xi_2^{s_2},\ldots,\xi_d^{s_d}$ are all non-negative.
Also, by triangle inequality we have $r\geq 1-\gamma$.
Therefore,
\begin{align}
    G(w,\bs) &\leq \frac{1}{(1-\gamma)^s} \cdot \EE\bigg[(\xi_1+\gamma)^{s_1} \prod_{i=2}^d \xi_i^{s_i}\bigg] = \frac{C(\bs,\gamma)}{(1-\gamma)^s} \bigg(\gamma + \frac{1}{\sqrt{d}}\bigg)^{s_1} \bigg(\frac{1}{\sqrt{d}}\bigg)^{s-s_1}
\end{align}
where $C(\bs,\gamma)$ is given by \Cref{prop:G(s)}.
Similarly, since $r\leq 1+\gamma$, we also have 
\begin{align}
    G(w,\bs) &\geq \frac{C(\bs,\gamma)}{(1+\gamma)^s}\bigg(\gamma+\frac{1}{\sqrt{d}}\bigg)^{s_1} \bigg(\frac{1}{\sqrt{d}}\bigg)^{s-s_1}.
\end{align}
Combining the above two inequalities gives the desired result for the case where $s_1$ is even.

\paragraph{Case 2: $s_1$ is odd}
We now consider the more complicated case where $s_1$ is odd.
We first rewrite $G(w,\bs)$ as follows
\begin{align}
    G(w,\bs)
    & = \EE \bigg[
        \frac{\left(\xi_1 + \gamma\right)^{s_1}}{r^s} \prod_{i=2}^d \xi_i^{s_i}
    \bigg] \label{eq:polar-moment-1}\\
    & = \underbrace{\sum_{j=0}^{(s_1 -1)/2} \binom{s_1}{2j} \gamma^{s_1 - 2j} \cdot \EE \bigg[ \frac{\xi_1^{2j}}{r^s} \prod_{i=2}^d \xi_i^{s_i}\bigg]}_{\dr (\RNum{1})~even~terms} + 
    \underbrace{\sum_{j=0}^{(s_1 -1)/2} \binom{s_1}{2j+1} \gamma^{s_1 - 2j - 1} \cdot \EE \left[ \frac{\xi_1^{2j+1}}{r^s} \prod_{i=2}^d \xi_i^{s_i} \right]}_{\dr (\RNum{2})~odd~terms}. 
\end{align}
Let us first look at the odd terms of \eqref{eq:polar-moment-1}. 
Denote $r_+ := \norm{\xi + \gamma e_1}_2$ and $r_- := \norm{-\xi + \gamma e_1}_2$.
By symmetry, each of the odd terms in \eqref{eq:polar-moment-1} can be written as
\begin{align}
    \EE \bigg[ \frac{\xi_1^{2j+1}}{r^s} \prod_{i=2}^d \xi_i^{s_i} \bigg] 
    & = \frac 1 2 \cdot \EE\bigg[
        \left(\frac{1}{r_+^s} - \frac{1}{r_-^s}\right)\xi_1^{2j+1}\prod_{i=2}^d \xi_i^{s_i}
    \bigg] 
    = \frac 1 2 \cdot \EE\bigg[
        \frac{(r_-^2 - r_+^2) \sum_{l=0}^{s-1} r_-^l r_+^{s-1-l}}{r_+^s \cdot r_-^s \cdot (r_+ + r_-)} \cdot \xi_1^{2j+1} \prod_{i=2}^d \xi_i^{s_i}
    \bigg] \\
    & = - \gamma \cdot \EE\bigg[
        \frac{2 \sum_{l=0}^{s-1} r_-^l r_+^{s-1-l}}{r_+^s \cdot r_-^s \cdot (r_+ + r_-)} \cdot \xi_1^{2j+2} \prod_{i=2}^d \xi_i^{s_i}
    \bigg]\label{eq:polar-moment-odd-1}
\end{align}
where the last equality holds by noting that $r_-^2 - r_+^2 = -4\gamma \xi_1$.
Since both $r_+,r_-\geq 1-\gamma$, we have
\begin{align}
    \sup_{\xi\in\SSS^{d-1}}\frac{\frac{1}{s}\sum_{l=0}^{s-1}2r_-^l r_+^{s-1-l}}{r_+^s r_-^s(r_+ + r_-)}
    \leq \sup_{\xi\in\SSS^{d-1}} \max_{l=0,1,\ldots,s-1} \frac{1}{r_+^{l+1} r_-^{s-l} (r_+ + r_-)/2}
    \leq \frac{1}{(1-\gamma)^{s+2}}.
\end{align}
Applying the above bound to \eqref{eq:polar-moment-odd-1}, and noting that every quantity inside the expectation on the right-hand side is non-negative, we obtain that for every $j=0,1,\ldots,(s_1-1)/2$,
\begin{align}
    -\EE \bigg[ \frac{\xi_1^{2j+1}}{r^s} \prod_{i=2}^d \xi_i^{s_i} \bigg]
    &\leq \frac{s\gamma}{(1-\gamma)^{s+2}} \cdot \EE\bigg[\xi_1^{2j+2} \prod_{i=2}^d \xi_i^{s_i}\bigg]\\
    &\leq \frac{s\gamma}{(1-\gamma)^{s+2}} \cdot \bigg(\frac{s-s_1+2j+2}{d}\bigg)^{(s-s_1+2j+2)/2}\\
    &\leq \frac{s\gamma}{(1-\gamma)^{s+2}} \cdot \bigg(\frac{s+1}{d}\bigg)^{(s-s_1+2j+2)/2}
\end{align}
Collecting the above upper bound for all the odd terms in \eqref{eq:polar-moment-1}, we have
\begin{align}
    - \text{(\RNum{2})} 
    &\le \frac{s\gamma}{(1-\gamma)^{s+2}} \sum_{j=0}^{(s_1 -1)/2} \binom{s_1}{2j+1} \cdot \gamma^{s_1 - 2j-1} \cdot \left(\frac{s+1}{d}\right)^{(s-s_1 + 2j +2)/2} \\
    &\leq \frac{s_1s\gamma}{(1-\gamma)^{s+2}} \left(\frac{s+1}{d}\right)^{(s-s_1 +2)/2} \sum_{j=0}^{(s_1-1)/2} \binom{s_1-1}{2j} \cdot \gamma^{s_1-1 -2j} \cdot \bigg(\frac{s+1}{d}\bigg)^{j}\\
    &\leq \frac{s_1s\gamma}{(1-\gamma)^{s+2}} \left(\frac{s+1}{d}\right)^{(s-s_1 +2)/2} \sum_{j=0}^{s_1-1} \binom{s_1-1}{j} \cdot \gamma^{s_1-1-j} \cdot \bigg(\frac{s+1}{d}\bigg)^{j/2}
\end{align}
where the second inequality holds because $\binom{s_1}{2j+1}\leq \binom{s_1-1}{2j} \cdot s_1$ for all $j=0,1,\ldots,(s_1-1)/2$.
Thus, we get the following lower bound for $\text{(\RNum{2})}$:
\begin{align}
    \text{(\RNum{2})} \geq - \frac{s_1s\gamma}{(1-\gamma)^{s+2}} 
    \bigg(\gamma + \frac{\sqrt{s+1}}{\sqrt{d}}\bigg)^{s_1-1} 
    \bigg(\frac{\sqrt{s+1}}{\sqrt{d}}\bigg)^{s-s_1+2}. \label{eq:odd_terms_lower_bound}
\end{align}

Next, we study the even terms (\RNum{1}) in \eqref{eq:polar-moment-1}.
Similar to the analysis for the case of even $s_1$, applying \Cref{fact:factorial_bound} yields the following lower and upper bounds for every $j=0,1,\ldots,(s_1-1)/2$:
\begin{align}
    \EE\bigg[\frac{\xi_1^{2j}}{r^s}\prod_{i=2}^d\xi_i^{s_i}\bigg] 
    &\leq \frac{1}{(1-\gamma)^s} \cdot \EE\bigg[\xi_1^{2j}\prod_{i=2}^d \xi_i^{s_i}\bigg]
    \leq \frac{1}{(1-\gamma)^s}\bigg(\frac{s-1}{d}\bigg)^{(s-s_1+2j)/2},\\
    \EE\bigg[\frac{\xi_1^{2j}}{r^s}\prod_{i=2}^d\xi_i^{s_i}\bigg]
    &\geq \frac{1}{(1+\gamma)^s} \cdot \EE\bigg[\xi_1^{2j}\prod_{i=2}^d\xi_i^{s_i}\bigg]
    \geq \frac{1}{(1+\gamma)^s} \bigg(\frac{1}{d+s-1}\bigg)^{(s-s_1+2j)/2}
\end{align}
Collecting the upper bound for all the even terms in \eqref{eq:polar-moment-1}, we have
\begin{align}
    \text{(\RNum{1})} 
    &\le \frac{1}{(1-\gamma)^s} \sum_{j=0}^{(s_1-1)/2} \binom{s_1-1}{2j} \cdot \gamma^{s_1 - 2j} \cdot \left(\frac{s-1}{d}\right)^{(s-s_1 + 2j)/2} \\
    &\leq \frac{s_1\gamma}{(1-\gamma)^s} \left(\frac{s-1}{d}\right)^{(s-s_1)/2} 
    \sum_{j=0}^{s_1-1} \binom{s_1-1}{j} \cdot \gamma^{s_1-1 -j} \cdot \left(\frac{s-1}{d}\right)^{j/2}\\
    & = \frac{s_1\gamma}{(1-\gamma)^s} \bigg( \gamma + \frac{\sqrt{s-1}}{\sqrt{d}}\bigg)^{s_1-1} \left(\frac{\sqrt{s-1}}{\sqrt{d}}\right)^{s-s_1}, \label{eq:even_terms_upper_bound}
\end{align}
where the second inequality is because $\binom{s_1}{2j}\leq \binom{s_1-1}{j}\cdot s_1$ for all $j=0,1,\ldots,(s_1-1)/2$.
Similarly, collecting the lower bound for all $j=0,1,\ldots,(s_1-1)/2$, we have
\begin{align}
    \text{(\RNum{1})} &\geq \frac{1}{(1+\gamma)^s} \sum_{j=0}^{(s_1-1)/2} \binom{s_1-1}{2j} \cdot \gamma^{s_1 - 2j} \cdot \left(\frac{1}{d+s-1}\right)^{(s-s_1 + 2j)/2} \\
    &\geq \frac{\gamma}{5(1+\gamma)^s} \left(\frac{1}{d+s-1}\right)^{(s-s_1)/2} \sum_{j=0}^{s_1-1} \binom{s_1-1}{j} \cdot \gamma^{s_1-1-j} \cdot \left(\frac{1}{d+s-1}\right)^{j/2}\\
    &= \frac{\gamma}{5(1+\gamma)^s} \bigg( \gamma + \frac{1}{\sqrt{d+s-1}}\bigg)^{s_1-1} \left(\frac{1}{\sqrt{d+s-1}}\right)^{s-s_1}, \label{eq:even_terms_lower_bound}
\end{align}
where the second inequality follows from the same argument as the analysis for \eqref{eq:Gs_lb_0}.

Since $\text{\RNum{2}}\leq 0$ by \eqref{eq:polar-moment-odd-1}, it follows from the upper bound for (\RNum{1}) in \eqref{eq:even_terms_upper_bound} that
\begin{align}
    G(w,\bs) &\leq \frac{s_1\gamma}{(1-\gamma)^s} \bigg( \gamma + \frac{\sqrt{s-1}}{\sqrt{d}}\bigg)^{s_1-1} \left(\frac{\sqrt{s-1}}{\sqrt{d}}\right)^{s-s_1}\\
    &= \frac{s_1}{(1-\gamma)^s} \bigg(\frac{\gamma\sqrt{d}+\sqrt{s-1}}{\gamma\sqrt{d}+1}\bigg)^{s_1-1} (s-1)^{(s-s_1)/2} \cdot \gamma\bigg(\gamma + \frac{1}{\sqrt{d}}\bigg)^{s_1-1} \bigg(\frac{1}{\sqrt{d}}\bigg)^{s-s_1}\\
    &\leq \frac{s^{s/2}}{(1-\gamma)^s}\cdot \gamma \bigg(\gamma+\frac{1}{\sqrt{d}}\bigg)^{s_1-1} \bigg(\frac{1}{\sqrt{d}}\bigg)^{s-s_1}.\label{eq:Gws_upper_bound}
\end{align}
Combining the lower bound for (\RNum{1}) in \eqref{eq:even_terms_lower_bound} and the lower bound for (\RNum{2}) in \eqref{eq:odd_terms_lower_bound}, we have
\begin{align}
    G(w,\bs) &\geq \frac{\gamma}{5(1+\gamma)^s} \bigg( \gamma + \frac{1}{\sqrt{d+s-1}}\bigg)^{s_1-1} \left(\frac{1}{\sqrt{d+s-1}}\right)^{s-s_1}\\
    &\qquad - \frac{s_1s\gamma}{(1-\gamma)^{s+2}} 
    \cdot \bigg(\gamma + \frac{\sqrt{s+1}}{\sqrt{d}}\bigg)^{s_1-1} 
    \cdot \bigg(\frac{\sqrt{s+1}}{\sqrt{d}}\bigg)^{s-s_1+2}\\
    &= \frac{1}{5(1+\gamma)^s} \bigg(\frac{(\gamma\sqrt{d+s-1}+1)\sqrt{d}}{\sqrt{d+s-1}(\gamma\sqrt{d}+1)}\bigg)^{s_1-1} \bigg(\frac{\sqrt{d}}{\sqrt{d+s-1}}\bigg)^{s-s_1} \cdot \gamma\bigg(\gamma + \frac{1}{\sqrt{d}}\bigg)^{s_1-1} \bigg(\frac{1}{\sqrt{d}}\bigg)^{s-s_1}\\
    &\qquad - \frac{s_1s}{(1-\gamma)^{s+2}}\bigg(\frac{\gamma\sqrt{d}+\sqrt{s-1}}{\gamma\sqrt{d}+1}\bigg)^{s_1-1} (s+1)^{(s-s_1)/2}\frac{s+1}{d} \cdot \gamma\bigg(\gamma + \frac{1}{\sqrt{d}}\bigg)^{s_1-1} \bigg(\frac{1}{\sqrt{d}}\bigg)^{s-s_1}\\
    &\geq \bigg[\frac{s^{(s-1)/2}}{5(1+\gamma)^s} - \frac{s^2(s+1)^{(s+1)/2}}{d(1-\gamma)^{s+2}}\bigg]\cdot \gamma\bigg(\gamma + \frac{1}{\sqrt{d}}\bigg)^{s_1-1} \bigg(\frac{1}{\sqrt{d}}\bigg)^{s-s_1}.\label{eq:Gws_lower_bound}
\end{align}
Combining \eqref{eq:Gws_upper_bound} and \eqref{eq:Gws_lower_bound} gives the desired result.
\end{proof}

\subsection{Auxiliary Lemmas}\label{app:auxiliary lemma}

\begin{lemma}[Bernstein's inequality]
    \label{lem:bernstein}
    Let $X_1, \ldots, X_n$ be independent random variables with $|X_i - \EE[X_i]|\leq C$ for all $i\in[n]$.
    Then for any $t>0$, it holds that
    \begin{align}
        \PP\bigg(\bigg|\frac{1}{n}\sum_{i=1}^n X_i - \frac{1}{n}\sum_{i=1}^n \EE X_i\bigg| \geq t\bigg) \leq 2\exp\bigg(-\frac{n t^2/2}{n^{-1} \cdot \sum_{i=1}^n \Var[X_i] + Ct/3}\bigg),
    \end{align}
    or equivalently, for any $\delta\in(0,1)$,
    \begin{align}
        \PP\bigg(
            \bigg|\frac{1}{n}\sum_{i=1}^n X_i - \EE X_i\bigg| \leq  \sqrt{\frac{2  \cdot n^{-1} \sum_{i=1}^n \Var[X_i] \cdot \log \delta^{-1}}{n}} + \frac{C \log \delta^{-1}}{3n}
        \bigg) \ge 1- \delta.
    \end{align}
\end{lemma}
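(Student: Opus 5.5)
The statement to prove is the two-sided Bernstein inequality (\Cref{lem:bernstein}) for independent, centered-bounded variables. I would derive it from a Chernoff bound on the moment generating function.

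\textbf{Step 1: reduce to a sum of centered variables.} Set $Y_i = X_i - \EE X_i$, so that $|Y_i|\le C$, $\EE Y_i = 0$, and $\EE Y_i^2 = \Var[X_i]$. Write $\sigma^2 = n^{-1}\sum_i \Var[X_i]$. It is enough to bound the upper tail $\PP(\sum_i Y_i \ge nt)$. Applying the same bound to $-Y_i$ handles the lower tail, and the two tails together give the factor $2$.

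\textbf{Step 2: bound the moment generating function of each term.} For $\lambda\in(0,3/C)$, expand the exponential and use $|Y_i|^k \le C^{k-2}Y_i^2$ together with $k!\ge 2\cdot 3^{k-2}$:
\begin{align}
\EE e^{\lambda Y_i} \le 1 + \frac{\lambda^2 \EE Y_i^2}{2}\sum_{k\ge 2}\Big(\frac{\lambda C}{3}\Big)^{k-2} = 1+\frac{\lambda^2\EE Y_i^2}{2(1-\lambda C/3)} \le \exp\Big(\frac{\lambda^2 \EE Y_i^2}{2(1-\lambda C/3)}\Big).
\end{align}
By independence, the moment generating function of $\sum_i Y_i$ is at most $\exp\bigl(n\sigma^2\lambda^2/(2(1-\lambda C/3))\bigr)$.

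\textbf{Step 3: apply Markov's inequality and optimize over $\lambda$.} Markov's inequality applied to $e^{\lambda\sum_i Y_i}$ gives an exponent of $-\lambda nt + n\sigma^2\lambda^2/(2(1-\lambda C/3))$. I choose $\lambda = t/(\sigma^2 + Ct/3)$, which lies below $3/C$. With this choice, $1-\lambda C/3 = \sigma^2/(\sigma^2+Ct/3)$, and the exponent simplifies to $-nt^2/(2(\sigma^2+Ct/3))$. This is exactly the first display of the lemma.

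\textbf{Step 4: invert the tail bound.} Set the right-hand side equal to $\delta$, with $L=\log(2/\delta)$. This amounts to solving $nt^2 = 2L(\sigma^2 + Ct/3)$. The positive root satisfies
\begin{align}
t \le \sqrt{2\sigma^2 L/n} + 2CL/(3n),
\end{align}
by $\sqrt{a+b}\le\sqrt a+\sqrt b$.

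The second display of the lemma is stated with $\log\delta^{-1}$ in place of $\log(2/\delta)$ and constant $1/3$ in place of $2/3$. This mismatch in constants is the only delicate point. I would handle it by reading the lemma as holding up to absolute constants, which is how the paper uses it (always inside $\lesssim$). Alternatively, one can restate it with $\log(2/\delta)$ and a factor $2/3$ and note that the downstream bounds are unaffected.
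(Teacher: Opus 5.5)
Your Chernoff argument for the first display is correct. It is the standard textbook proof; the paper does not prove this lemma at all and simply quotes it as a known auxiliary fact. One small point: your claim that $\lambda=t/(\sigma^2+Ct/3)$ lies below $3/C$ needs $\sigma^2>0$. If $\sigma^2=0$, then every $Y_i=0$ almost surely and the bound is trivial.

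On the second display, the mismatch is not only one of constants, so ``reading the lemma up to absolute constants'' does not resolve it. As a two-sided statement with $\log\delta^{-1}$, it is false when $\delta$ is close to $1$.

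\emph{Counterexample.} Take $n=1$, $X_1$ Rademacher and $C=1$. Then $|X_1|=1$ surely. But $\sqrt{2\log\delta^{-1}}+\log\delta^{-1}/3<1$ at $\delta=0.7$, so the event in question has probability $0<1-\delta$. Multiplying the right-hand side by any fixed constant only pushes the failing range of $\delta$ toward $1$.

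Your alternative, replacing $\log\delta^{-1}$ by $L=\log(2/\delta)$, is therefore the necessary correction, not just a convenient one. It is harmless in the paper: there $\delta$ is always polynomially small in $d$, so $\log(2/\delta)\le 2\log\delta^{-1}$.

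The factor $2/3$, by contrast, is an artifact of inverting the already-weakened tail from the first display, and you can keep the lemma's $1/3$. Go back to your MGF bound with $v=n\sigma^2$, set $a=\sqrt{2L/v}$, and take $\lambda=a/(1+Ca/3)\in(0,3/C)$. Then $1-\lambda C/3=(1+Ca/3)^{-1}$ and
\[
\lambda\Bigl(\sqrt{2vL}+\frac{CL}{3}\Bigr)-\frac{v\lambda^2}{2(1-\lambda C/3)}=\frac{L(2+Ca/3)}{1+Ca/3}-\frac{L}{1+Ca/3}=L .
\]
Hence $\PP\bigl(\sum_i Y_i\ge\sqrt{2vL}+CL/3\bigr)\le e^{-L}$. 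A union bound over both tails, followed by dividing by $n$, gives the lemma's second display with the constant $1/3$ and with $\log(2/\delta)$ in place of $\log\delta^{-1}$.
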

For the vector case, by a union bound over all the coordinates, we have the following corollary.
\begin{corollary}[Vector version of Bernstein's inequality]
    \label{cor:vector_bernstein}
    Let $X_1, \ldots, X_n$ be independent random vectors in $\RR^d$ with $\|X_i - \EE[X_i]\|_\infty \leq C$ for all $i\in[n]$.
    Then for any $\delta\in(0,1)$, it holds with probability at least $1-\delta$ that
    \begin{align}
        \bigg\|\frac{1}{n}\sum_{i=1}^n X_i \bigg\|_2 \lesssim  \bigg\|\EE\bigg[\frac{1}{n}\sum_{i=1}^n X_i\bigg]\bigg\|_2 + \sqrt{\frac{n^{-1} \sum_{i=1}^n \trace(\Cov[X_i]) \cdot \log (d\delta^{-1})}{n}} + \frac{\sqrt d C \log (d\delta^{-1})}{n}.
    \end{align}
\end{corollary}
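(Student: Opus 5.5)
This is a vector Bernstein bound, so the plan is to apply the scalar \Cref{lem:bernstein} to each coordinate $j\in[d]$ with failure probability $\delta/d$ and then combine by a union bound. First center: write $\frac1n\sum_i X_i=\EE[\frac1n\sum_i X_i]+\frac1n\sum_i(X_i-\EE X_i)$. By the triangle inequality it suffices to bound $\|\frac1n\sum_i(X_i-\EE X_i)\|_2$ by the last two terms of the claimed bound.

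For fixed $j$, the scalars $X_{i,j}$ are independent and satisfy $|X_{i,j}-\EE X_{i,j}|\le\|X_i-\EE X_i\|_\infty\le C$. Their variances are $\Var[X_{i,j}]=\Cov[X_i]_{jj}$. \Cref{lem:bernstein} with $\delta$ replaced by $\delta/d$ gives, with probability at least $1-\delta/d$,
\[
\Big|\frac1n\sum_i (X_{i,j}-\EE X_{i,j})\Big|\le \sqrt{\frac{2\,n^{-1}\sum_i \Cov[X_i]_{jj}\,\log(d\delta^{-1})}{n}}+\frac{C\log(d\delta^{-1})}{3n}.
\]
A union bound over the $d$ coordinates makes all of these hold simultaneously with probability at least $1-\delta$.

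On this event, square each coordinate bound using $(a+b)^2\le 2a^2+2b^2$ and sum over $j$:
\[
\Big\|\frac1n\sum_i(X_i-\EE X_i)\Big\|_2^2\le \frac{4\,n^{-1}\sum_i\trace(\Cov[X_i])\log(d\delta^{-1})}{n}+\frac{2dC^2\log^2(d\delta^{-1})}{9n^2}.
\]
Here we used $\sum_j\Cov[X_i]_{jj}=\trace(\Cov[X_i])$. Taking square roots with $\sqrt{a+b}\le\sqrt a+\sqrt b$ yields the variance term $\sqrt{n^{-1}\sum_i\trace(\Cov[X_i])\log(d\delta^{-1})/n}$ and the term $\sqrt d\,C\log(d\delta^{-1})/n$, both up to absolute constants. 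Adding back $\|\EE[\frac1n\sum_iX_i]\|_2$ completes the argument.

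There is no real obstacle here. The only points needing care are the bookkeeping: the trace appears because the coordinate variances are summed, and the $\sqrt d$ factor comes from summing the $d$ squared range terms. The cost of the union bound is only the $\log d$ inside the logarithm.
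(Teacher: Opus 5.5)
Your proposal is correct and is the argument the paper intends: the paper justifies the corollary in one line, by applying the scalar Bernstein inequality (\Cref{lem:bernstein}) to each coordinate and taking a union bound over the $d$ coordinates. Your write-up fills in the bookkeeping the paper omits (failure probability $\delta/d$ per coordinate, summing the squared coordinate bounds to get the trace term and the $\sqrt d$ factor, and the triangle inequality for the mean term), and all of it checks out.
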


\begin{lemma}[Lemma I.3. in \citet{damian2024computational}]\label{lem:poly_tail_bound}
    Let $X_1,\ldots,X_n \in \RR^d$ be independent mean-zero random vectors such that for all $p \ge 2$, $\EE[\norm{X_i}^p]^{1/p} \le C p^{k/2}$ for some constants $k,C\geq 0$ and vector norm $\norm{\cdot}$.
    Define $\sigma^2 := n^{-1} \sum_{i=1}^n \EE[\norm{X_i}^2]$ and $Y := n^{-1} \cdot \sum_{i=1}^n X_i$.
    Then with probability at least $1-2\delta$,
    \begin{align}
        \norm{Y} \lesssim \sigma \cdot \sqrt{\frac{\log(1/\delta)}{n}} + \frac{C \log(1/\delta) \log(n/\delta)^{k/2}}{n}.
    \end{align}
    where $\lesssim$ only hides constant that depends on $k$.
\end{lemma}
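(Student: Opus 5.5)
The statement to prove is \Cref{lem:poly_tail_bound}, the concentration bound for averages of independent mean-zero vectors whose norms have polynomial-type moment growth $\EE[\norm{X_i}^p]^{1/p}\le Cp^{k/2}$. The plan is the standard truncation argument: split each $X_i$ into a bounded part, handled by a Bernstein-type inequality for norms of sums, and an unbounded tail part, which with high probability is identically zero.

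\textbf{Step 1: choose the truncation level.} Fix $\delta$ and set $R := C\,(e\log(n/\delta))^{k/2}$. By Markov's inequality applied with $p=\log(n/\delta)$,
\[
\Pr(\norm{X_i}>R)\le \bigl(Cp^{k/2}/R\bigr)^p\le e^{-p}=\delta/n .
\]
A union bound then gives, with probability at least $1-\delta$, that $\norm{X_i}\le R$ for every $i$.

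\textbf{Step 2: decompose each summand.} Write $X_i=A_i+B_i$ with $A_i := X_i\ind\{\norm{X_i}\le R\}-\EE[X_i\ind\{\norm{X_i}\le R\}]$. Since $\EE X_i=0$, the remainder $B_i$ consists of the tail $X_i\ind\{\norm{X_i}>R\}$ plus the mean shift $\EE[X_i\ind\{\norm{X_i}>R\}]$.

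\textbf{Step 3: control the mean shift.} By Cauchy--Schwarz,
\[
\norm{\EE[X_i\ind\{\norm{X_i}>R\}]}\le \EE[\norm{X_i}^2]^{1/2}\,\Pr(\norm{X_i}>R)^{1/2}\lesssim C\sqrt{\delta/n}.
\]
Taking a slightly larger $p$ in Step 1 makes this bound polynomially small in $\delta/n$, so it is absorbed into the second term of the claimed bound.

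\textbf{Step 4: control the tail part.} On the event of Step 1 the tail part $X_i\ind\{\norm{X_i}>R\}$ vanishes for every $i$. Hence on that event $\norm{Y}$ is at most $\norm{n^{-1}\sum_i A_i}$ plus the mean shift.

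\textbf{Step 5: apply Bernstein to the bounded part.} The $A_i$ are independent, mean zero, satisfy $\norm{A_i}\le 2R$, and have $\EE\norm{A_i}^2\le 4\EE\norm{X_i}^2$, so their average second moment is at most $4\sigma^2$. A Bernstein-type inequality for norms of sums of bounded independent vectors gives, with probability at least $1-\delta$,
\[
\norm{\tfrac1n\textstyle\sum_i A_i}\lesssim \sigma\sqrt{\log(1/\delta)/n}+R\log(1/\delta)/n .
\]
The expectation part of the norm bound is $\lesssim\sigma/\sqrt n$ by symmetrization in Hilbert-type norms; for general norms this is the cited result. Substituting $R\simeq C\log(n/\delta)^{k/2}$ gives exactly the claimed form, with failure probability at most $2\delta$ overall.

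\textbf{Main obstacle.} The only delicate point is the vector Bernstein step in Step 5 for a general norm. For the uses in this paper (the Euclidean norm and absolute values of scalar projections) the Euclidean or scalar Bernstein inequality suffices, with a $\sqrt d$-type expectation term already reflected in $\sigma$. Since the statement is quoted from \citet{damian2024computational}, I would otherwise appeal to their proof for arbitrary norms.
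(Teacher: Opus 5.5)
The paper gives no proof of this lemma; it imports it verbatim from \citet{damian2024computational}, so the question is whether your argument stands on its own. Your truncation-plus-Bernstein scheme is the right one, but two steps need repair.

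\textbf{Step~1.} With $R=C(e\log(n/\delta))^{k/2}$ and $p=\log(n/\delta)$, Markov's inequality gives $(Cp^{k/2}/R)^p=e^{-kp/2}$, not $e^{-p}$. This is at most $\delta/n$ only when $k\ge 2$; for $k=1$ the union bound yields $\sqrt{n\delta}$. Take instead $R:=eC\,p^{k/2}$ with $p=\max\{2,\log(n/\delta)\}$, so that $\Pr(\norm{X_i}>R)\le e^{-p}\le\delta/n$.

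\textbf{Step~3.} Your bound $C\sqrt{\delta/n}$ is not absorbed as it stands. Rather than Cauchy--Schwarz, use
\[
\EE[\norm{X_i}\ind\{\norm{X_i}>R\}]\le \EE[\norm{X_i}^p]/R^{p-1}\le R\,e^{-p}\le R\,\delta/n .
\]
This is dominated by $R\log(1/\delta)/n$ for $\delta<1/2$, with no need to change $p$.

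\textbf{The deferred step.} This is the more serious issue. For a general norm there is nothing to cite, because the statement is false. Let $X_i$ be uniform on $\{\pm1\}^d$ and take $\norm{\cdot}=\norm{\cdot}_\infty$. The moment hypothesis holds with $C=1$ for every $k$, and $\sigma=1$. Yet if $d\ge 2^{n+1}$, some coordinate of $Y$ equals $1$ with probability at least $1-e^{-2}$. For $\delta=1/4$ the claimed bound is $O_k(n^{-1/2})$, so it fails for large $n$.

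What breaks is exactly the expectation term in your Step~5. Talagrand/Bousquet-type inequalities control the deviation $\norm{\sum_i A_i}-\EE\norm{\sum_i A_i}$ in any norm, with weak variance at most $n\sigma^2$. But $\EE\norm{\sum_i A_i}$ itself need not be $\lesssim\sqrt n\,\sigma$.

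For a norm induced by an inner product it is. Independence and centering give $\EE\norm{\sum_i A_i}^2=\sum_i\EE\norm{A_i}^2\le n\sigma^2$, using $\EE\norm{A_i}^2\le\EE\norm{X_i}^2$. A Hilbert-space Bernstein inequality such as Pinelis' then gives your Step~5 bound directly. For absolute values, the scalar Bernstein inequality suffices.

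Every application in the paper is of this type: absolute values of scalar projections or coordinates, and the Euclidean norm of the centered aggregated gradient. So you should state and prove the lemma for inner-product norms, where your argument with the two repairs above is complete. Do not appeal to a general-norm version that cannot hold.
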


\begin{lemma}[Ratio bound of binomial expansion]
    \label{lem:binomial_ratio}
    For real numbers $a,b> 0$ and integer $s \ge 2$, define
    \begin{align}
        A_j := \binom{s}{j} \cdot a^{s-j} \cdot b^j, \quad \text{for }j= 0, 1, \ldots, s.
    \end{align}
    Then for all $j=1,2,\ldots,s-1$, it holds that $A_j \le 2(A_{j-1} + A_{j+1})$.
\end{lemma}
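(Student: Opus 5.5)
We need to prove the statement immediately above, which is \Cref{lem:binomial_ratio}: for $a,b>0$, $s\ge2$, and $A_j=\binom{s}{j}a^{s-j}b^j$, we claim $A_j\le 2(A_{j-1}+A_{j+1})$ for $1\le j\le s-1$.

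The plan is to compare $A_j$ with its neighbors through the ratios of consecutive terms. Since all $A_j>0$, we compute
\begin{align}
    \frac{A_{j-1}}{A_j}=\frac{j}{s-j+1}\cdot\frac{a}{b},\qquad \frac{A_{j+1}}{A_j}=\frac{s-j}{j+1}\cdot\frac{b}{a}.
\end{align}
Write $x=a/b>0$, $p=j/(s-j+1)$ and $q=(s-j)/(j+1)$. It then suffices to show $px+q/x\ge 1/2$ for every $x>0$.

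By AM--GM, $px+q/x\ge 2\sqrt{pq}$. So the key step is to lower bound
\begin{align}
    pq=\frac{j(s-j)}{(s-j+1)(j+1)}=\frac{j}{j+1}\cdot\frac{s-j}{s-j+1}.
\end{align}
For $1\le j\le s-1$, both $j\ge1$ and $s-j\ge1$ hold, so each factor is at least $1/2$. Hence $pq\ge 1/4$, which gives $px+q/x\ge 2\cdot\tfrac12=1$.

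This yields $A_{j-1}+A_{j+1}\ge A_j$, which is even stronger than needed, and the factor $2$ follows trivially. There is no genuine obstacle here. The only point needing care is that the boundary cases $j=1$ and $j=s-1$ are where the factor bound $1/2$ is attained, and they are still covered by the same estimate.
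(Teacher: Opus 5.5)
Your proposal is correct and is essentially the paper's argument. Both bound the product of the two neighbor ratios by the same estimate, $\frac{j}{j+1}\cdot\frac{s-j}{s-j+1}\ge \tfrac14$ (the paper writes this as $(1+1/j)(1+1/(s-j))\le 4$), and then apply a geometric-mean bound.

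The only difference is where the geometric-mean bound is applied. You use AM--GM on the sum $A_{j-1}/A_j+A_{j+1}/A_j$, which gives the slightly sharper $A_j\le A_{j-1}+A_{j+1}$. The paper instead bounds $\min\{A_j/A_{j-1},A_j/A_{j+1}\}$ by its geometric mean, giving $A_j\le 2\max\{A_{j-1},A_{j+1}\}$.
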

\begin{proof}[Proof of \Cref{lem:binomial_ratio}]
By the definition of $A_j$,
    \begin{align}
        \min\left\{ \frac{A_j}{A_{j-1}}, \frac{A_j}{A_{j+1}} \right\}
        & = \min\left\{ \frac{b}{a} \cdot \frac{s-j+1}{j}, \frac{a}{b} \cdot \frac{j+1}{s-j} \right\}
        \le \sqrt{\frac{b}{a} \cdot \frac{s-j+1}{j} \cdot \frac{a}{b} \cdot \frac{j+1}{s-j}} \\
        &\le \sqrt{\left(1 + \frac{1}{s-j}\right) \left(1 + \frac{1}{j}\right)}
        \le 2.
    \end{align}
    Hence, the proof is complete by the nonnegativity of $A_j$.
\end{proof}

\begin{proposition}\label{prop:power sum bound}
    For $\epsilon \in [0, 1/2)$ and $s, r\in\NN_0$ with $s\ge 2r - 1$, it holds that
    \begin{align}
        \sum_{j=0}^\infty (j + s) (j + s -1 ) \cdots (j + s - r + 1) \cdot  \epsilon^j \le 2 s\cdot (s - 1) \cdots (s - r + 1) \cdot \frac{1}{1-\epsilon}.
    \end{align}
\end{proposition}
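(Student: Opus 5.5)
The plan is to compare consecutive terms of the series and sum a geometric majorant. Write $a_j \defeq (j+s)(j+s-1)\cdots(j+s-r+1)\,\epsilon^j$, so that $a_0 = s(s-1)\cdots(s-r+1)$. For $r=0$ every $a_j=\epsilon^j$, and the claim reads $1/(1-\epsilon)\le 2/(1-\epsilon)$, which is trivial. For $r\ge1$ the ratio of consecutive terms is
\begin{align}
\frac{a_{j+1}}{a_j}=\epsilon\cdot\frac{j+s+1}{j+s-r+1}=\epsilon\Big(1+\frac{r}{j+s-r+1}\Big).
\end{align}
This ratio is nonincreasing in $j$, so it is at most its value at $j=0$, namely $\epsilon(s+1)/(s-r+1)$. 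Under $s\ge 2r-1$ we have $s-r+1\ge r$, hence $a_{j+1}/a_j\le 2\epsilon$. Iterating gives $a_j\le a_0(2\epsilon)^j$, and summing the geometric series gives
\begin{align}
\sum_{j\ge0}a_j\le \frac{a_0}{1-2\epsilon}.
\end{align}
This is at most $2a_0/(1-\epsilon)$ exactly when $1-\epsilon\le 2-4\epsilon$, i.e.\ for $\epsilon\le 1/3$. Together with the trivial case $r=0$, this settles the statement on $\epsilon\in[0,1/3]$.

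The main obstacle is the range $\epsilon\in(1/3,1/2)$. Before trying sharper tools there (for instance the closed form $\sum_{n\ge0}(n)_r x^n=r!\,x^r/(1-x)^{r+1}$ restricted to $n\ge s$), I would test the inequality numerically. At $r=2$, $s=3$ the hypothesis $s\ge 2r-1$ holds, and at $\epsilon=1/2$ the left side equals
\begin{align}
\sum_{j\ge0}\frac{(j+3)(j+2)}{2^{j}}=6+5\cdot2+6\cdot2=28,
\end{align}
using $\sum_j j^2 2^{-j}=6$ and $\sum_j j\,2^{-j}=\sum_j 2^{-j}=2$. The right side equals $2\cdot 6\cdot 2=24$. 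Both sides are continuous in $\epsilon$, so the inequality already fails for $\epsilon<1/2$ close to $1/2$. For $r=1$ the bound is tight: $\sum_j (j+s)\epsilon^j \le 2s/(1-\epsilon)$ holds on all of $[0,1/2)$, but this does not carry over to $r\ge 2$.

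So the statement as worded cannot be proved on all of $[0,1/2)$, and I would not attempt to. Instead I would prove the ratio argument above, which yields the stated bound with constant $2$ for $\epsilon\le1/3$, and the bound $a_0/(1-2\epsilon)$ for every $\epsilon<1/2$. I would then report the counterexample and repair the statement's use downstream, either by strengthening the hypothesis to $\epsilon\le 1/3$ or by replacing the right-hand side with $s(s-1)\cdots(s-r+1)/(1-2\epsilon)$. In the proof of \Cref{lem:psi-2nd-moment} the proposition is applied with $\epsilon=|\langle w,w'\rangle|\le\epsilon_1$, which is $o(1)$ on the nice events, so either repair leaves the constant $4$ in \eqref{eq:A_tau_3} valid, for example whenever $\epsilon_1\le1/4$.
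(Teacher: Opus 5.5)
Your counterexample is correct, so you are right not to attempt the statement as worded. Indeed $\sum_{j\ge0}(j+3)(j+2)x^j=2(1-x)^{-3}+2(1-x)^{-2}+2(1-x)^{-1}$. With $u=1/(1-x)$, the case $r=2$, $s=3$ reduces to $u^2+u\le 5$, which fails exactly for $x\in\bigl((9-\sqrt{21})/10,\,1/2\bigr)$, i.e.\ for $x$ above roughly $0.442$. At $x=1/2$ the two sides are $28$ and $24$.

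Your ratio argument is also sound. For $r\ge1$ the hypothesis gives $s-r+1\ge r\ge1$, so all $a_j>0$ and $a_{j+1}/a_j\le 2\epsilon$. The sum is therefore at most $a_0/(1-2\epsilon)$, and this is below $2a_0/(1-\epsilon)$ exactly when $\epsilon\le1/3$. That threshold cannot be improved uniformly in $(s,r)$. Take $s=2r-1$ and let $r\to\infty$; then $a_j/a_0\to(2\epsilon)^j$ for each fixed $j$. By Fatou, the normalized sum has $\liminf\ge1/(1-2\epsilon)$, which exceeds $2/(1-\epsilon)$ for every $\epsilon>1/3$.

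The paper's proof fails at the Leibniz expansion of $\frac{d^r}{dx^r}\bigl(x^s/(1-x)\bigr)$. It uses $\frac{d^k}{dx^k}(1-x)^{-1}=(-1)^k k!\,(1-x)^{-(k+1)}$, but the correct derivative carries no sign. Hence
\[
F(x)=\sum_{\tau=0}^{r}F_\tau(x),\qquad F_\tau(x)=\frac{r!}{\tau!}\,s(s-1)\cdots(s-\tau+1)\,\frac{x^{r-\tau}}{(1-x)^{r-\tau+1}}>0 .
\]
So there is no alternating cancellation, and the bound $F\le F_r+|F_0|$ drops positive middle terms. In your example $F_0=4$ and $F_1=F_2=12$, so the paper keeps only $16$ of the true $28$.

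With the sign repaired, the paper's route still works but yields your bound rather than the stated one. The ratio is $F_\tau/F_{\tau+1}=\frac{\tau+1}{s-\tau}\cdot\frac{x}{1-x}$, which is at most $\frac{x}{1-x}$ for $\tau\le r-1$ under $s\ge2r-1$. Summing the geometric series down from $F_r=s(s-1)\cdots(s-r+1)/(1-x)$ gives $F(x)\le s(s-1)\cdots(s-r+1)/(1-2x)$. The two routes thus agree on the correct statement; yours is more elementary and avoids the closed form entirely.

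Your downstream repair is adequate. In \Cref{lem:psi-2nd-moment} the proposition is used with $\epsilon=|\langle w,w'\rangle|\le\epsilon_1$, and $1/(1-2\epsilon_1)\le4$ keeps \eqref{eq:A_tau_3} valid whenever $\epsilon_1\le3/8$. Since $\epsilon_1=o(1)$ in every application, only the lemma's stated range $\epsilon_1\in(0,1/2)$ needs to be shrunk.
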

\begin{proof}[Proof of \Cref{prop:power sum bound}]
Denote $F(x)=\sum_{j=0}^\infty (j+s)(j+s-1)\cdots(j+s-r+1) x^j$ for $x\in(0,1)$.
The desired quantity on the left-hand side of the inequality is simply $F(\epsilon)$.
It can be verified using the expansion of $1/(1-x) = \sum_{j=0}^\infty x^j$ for $x\in(0,1)$ that
\begin{align}
    F(x) = \frac{\rd^r}{\rd x^r} \Big(\frac{x^{s}}{1 - x} \Big) \cdot x^{-(s-r)}.
\end{align}
Expanding this expression, we have
\begin{align}
    F(x) &= \sum_{\tau = 0}^r \binom{r}{\tau} \frac{\rd^\tau}{\rd x^\tau} \left( x^{s} \right) \cdot \frac{\rd^{r-\tau}}{\rd x^{r-\tau}} \left( \frac{1}{1-x} \right) \cdot x^{-(s-r)} \\
    &= \sum_{\tau = 0}^r \binom{r}{\tau} s (s-1)\cdots (s-\tau + 1) \cdot x^{s-\tau} \cdot \frac{(-1)^{r-\tau}(r-\tau)!}{(1-x)^{r-\tau + 1}} \cdot x^{-(s-r)}\\
    &= \sum_{\tau=0}^r \left(r \cdot (r - 1) \cdots (\tau + 1) \right) \cdot \left( s \cdot (s - 1) \cdots (s - \tau + 1) \right) \cdot (-1)^{r-\tau} \cdot \frac{x^{r-\tau}}{(1-x)^{r-\tau + 1}}.
\end{align}
Write the above summation as $F(x)=\sum_{\tau=0}^r F_\tau(x)$, where $F_\tau(x)$ is the $\tau$-th term in the summation.
Then for each $\tau=0,\ldots,r-1$, when $x\in(0,1)$, we have
\begin{align}
    \frac{F_{\tau}(x)}{F_{\tau+1}(x)} = - \frac{s-\tau}{\tau + 1} \cdot \frac{1 - x}{x} < -1.
\end{align}
Note that $F_r(x)$ is positive, and for each $k=1,\ldots,\lfloor r/2\rfloor$, $F_{r-2k+1}(x) < - F_{r-2k}(x) < 0$.
Since $F(x)$ is positive, it is then upper bounded by $F_\tau(x)+|F_0(x)|$, i.e.,
\begin{align}
    F(x) &\le s \cdot (s -1) \cdots (s - r +1) \cdot \frac{1}{1-x} + r! \cdot \frac{x^r}{(1-x)^{r+1}} \le 2 s\cdot (s - 1) \cdots (s - r + 1) \cdot \frac{1}{1-x}.
\end{align}
The proof is complete by setting $x = \epsilon$.
\end{proof}

\begin{lemma}[Gaussian-like tail bound for spherical coordinate]\label{lem:sphere_coordinate_tail}
Suppose $\xi\sim\unif(\SS^{d-1})$.
Then the first coordinate of $\xi$, denoted by $\xi_1$, satisfies that for any $t\geq 0$,
\begin{align}
    \Pr(\xi_1 \ge \sqrt{C d^{-1} \log d}) \le  \exp(-d/16) + d^{-C/4},
\end{align}
where $C>0$ is a constant.
\end{lemma}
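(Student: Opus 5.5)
We prove this tail bound through the Gaussian representation of the uniform measure on the sphere. We write $\xi = Z/\norm{Z}_2$ with $Z\sim\cN(0,I_d)$. For any threshold $\tau=\sqrt{Cd^{-1}\log d}$ we split the event according to whether the norm is atypically small:
\begin{align}
    \{\xi_1\ge \tau\} \subseteq \{\norm{Z}_2^2 \le d/4\} \cup \{Z_1 \ge \tau\sqrt{d}/2\}.
\end{align}
The inclusion holds because if $\norm{Z}_2^2> d/4$ and $\xi_1\ge\tau$, then $Z_1=\xi_1\norm{Z}_2\ge \tau\sqrt d/2$. A union bound then reduces the claim to two scalar tail estimates.

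For the first event we use a lower-tail chi-square bound. The Laurent--Massart inequality $\Pr(\norm{Z}_2^2\le d-2\sqrt{dx})\le e^{-x}$ with $x=9d/64$ gives $d-2\sqrt{dx}=d/4$, so this term is at most $\exp(-9d/64)\le\exp(-d/16)$. A Bernstein bound on $\sum_i (1-Z_i^2)$, whose summands are bounded above by $1$ and have variance $2$, gives the same order. This matches the first term $\exp(-d/16)$ in the statement; we only need to check that some explicit constant at least $1/16$ comes out.

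For the second event we use the Gaussian tail $\Pr(Z_1\ge u)\le e^{-u^2/2}$ with $u=\tau\sqrt d/2=\sqrt{C\log d}/2$. This gives $\exp(-C\log d/8)=d^{-C/8}$. That is weaker than the claimed $d^{-C/4}$, so the constants have to be adjusted.

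This is the main obstacle, and it is only a matter of bookkeeping. One fix is to split at $\norm{Z}_2^2\le d/2$ instead of $d/4$. Then $u=\tau\sqrt{d/2}$ and $u^2/2=C\log d/4$, which gives exactly $d^{-C/4}$. The price is that the chi-square deviation is now only $d/2$. Laurent--Massart with $2\sqrt{dx}=d/2$, i.e.\ $x=d/16$, yields $\exp(-d/16)$, again exactly matching the statement. So the split at $d/2$ produces both terms with the stated constants, and the proof is the union bound over these two events.
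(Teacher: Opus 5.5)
Your final argument is correct and is the paper's proof: split on $\|Z\|_2^2\le d/2$, apply the chi-square lower tail with $x=d/16$ to get $\exp(-d/16)$, and apply the Gaussian tail at $u=\tau\sqrt{d/2}$ to get $d^{-C/4}$. The preliminary split at $d/4$ only gives the weaker $d^{-C/8}$, and you were right to discard it.
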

\begin{proof}[Proof of \Cref{lem:sphere_coordinate_tail}]
Consider $z\sim\cN(0,I_d)$, and it holds that $\xi_1 \overset{d}{=} z_1/\|z\|_2$, where $z_1$ is the first coordinate of $z$.
Note that $\norm{z}_2^2 \sim \chi^2_d$, and by a standard tail bound for the $\chi^2_d$ distribution, we have
\begin{align}
    \Pr(\norm{z}_2^2 \le d - 2 \sqrt{d x}) \le \exp(-x), \quad\text{for any }x\geq 0.
\end{align}
By taking $x = d/16$, we get $\Pr(\norm{z}_2^2 \le d/2) \le \exp(-d/16)$.
Thus, applying a union bound,
\begin{align}
    \Pr(\xi_1 \ge t) &= \Pr\Big(\frac{z_1}{\norm{z}_2} \ge t\Big)
    \le \Pr\left(\norm{z}_2^2 \le d/2\right) + \Pr(z_1 \ge t \sqrt{d/2})\\
    &\le \exp(-d/16) + \exp(-t^2 d/4).
\end{align}
The proof is complete by taking $t = \sqrt{C d^{-1} \log d}$.
\end{proof}

\begin{lemma}[Hypergeometric moment and tail bound]\label{lem:hypergeometric-moment}
   Consider random variable $X\sim \mathrm{Hypergeometric}(d,k,k)$ with probability mass
   \begin{align}
        \Pr(X=x) = \frac{\binom{k}{x}\binom{d-k}{k-x}}{\binom{d}{k }}, \quad \text{for }x=1,2,\ldots k.
   \end{align}
   Suppose $k=o(\sqrt{d})$, then for any constant $s>0$, it holds that $\EE[X^s] \simeq k^2/d$.
   In addition, the following tail bound holds:
\begin{align}
    \Pr(X\ge \log k ) \lesssim  (k^2/d)^{\log k}.
\end{align}
\end{lemma}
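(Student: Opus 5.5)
The plan is to treat $X\sim\hypergeom(d,k,k)$ as the size of $\phi\cap\phi'$ for two independent uniform $k$-subsets of $[d]$, with $\phi'$ fixed. Then $X=\sum_{j\in\phi'}\ind\{j\in\phi\}$ and $\EE[X]=k^2/d=:\delta=o(1)$. For the moment claim with $s\ge1$, I will use the point-mass structure. Since $X$ is nonnegative integer valued, $\PP(X\ge1)\le \EE[X^s]\le \sum_{x\ge1}x^s\PP(X=x)$. It therefore suffices to show $\PP(X=1)\simeq\delta$ and $\PP(X=x)\le \delta^x$ up to factors that are summable against $x^s$.

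For the lower bound I will compute directly:
\[
\PP(X=1)=k\binom{d-k}{k-1}\Big/\binom{d}{k}.
\]
Writing the binomial ratio as a product, this equals $\frac{k^2}{d}\prod_{i=1}^{k-1}\bigl(1-\frac{k-1}{d-i}\bigr)$ up to a $(1+O(k/d))$ factor. The product is $\exp(-O(k^2/d))=1-o(1)$ because $k=o(\sqrt d)$. This gives $\PP(X=1)\gtrsim \delta$, and hence $\EE[X^s]\gtrsim\delta$.

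For the upper bound I will use the union/first-moment bound over $x$-subsets of $\phi'$:
\[
\PP(X\ge x)\le \binom{k}{x}\PP(\text{a fixed }x\text{-set}\subset\phi)=\binom{k}{x}\frac{k(k-1)\cdots(k-x+1)}{d(d-1)\cdots(d-x+1)}\le \frac{1}{x!}\Big(\frac{k^2}{d-x}\Big)^x .
\]
Since $x\le k\le d/2$, this is at most $(2\delta)^x/x!$. Then $\EE[X^s]\le\sum_{x\ge1}x^s(2\delta)^x/x!\lesssim_s\delta$, because $\delta=o(1)$ and $\sum x^s2^x/x!<\infty$. The case $s\in(0,1)$ is also covered: $X^s\ge\ind\{X\ge1\}$ gives the same lower bound, and $X^s\le X^{\lceil s\rceil}$ gives the same upper bound.

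The tail bound follows from the same inequality with $x=\lceil\log k\rceil$: $\PP(X\ge\log k)\le (2\delta)^{\log k}/\Gamma(\log k+1)$. The factor $2^{\log k}=k^{\log 2}$ is absorbed by $1/(\log k)!\le k^{-\Omega(\log\log k)}$ for large $k$, giving $\PP(X\ge \log k)\lesssim \delta^{\log k}$.

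The only delicate point is the lower bound on $\PP(X=1)$, which is exactly where $k=o(\sqrt d)$ is needed (otherwise $\delta$ is not small and $\EE[X^s]$ scales like $\delta^s$). The rest is routine bookkeeping of the falling-factorial ratios.
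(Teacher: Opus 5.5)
Your argument is correct. It shares the paper's overall skeleton: a lower bound via $\Pr(X=1)\gtrsim k^2/d$, an upper bound via a factorially decaying bound summed against $x^s$, and the tail bound from the same decay. However, you obtain the key estimates differently.

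The paper iterates the consecutive ratio $\Pr(X=x+1)/\Pr(X=x)$ and anchors it with $\Pr(X=0)=\Theta(1)$, obtained from Stirling's formula. From this it deduces both $\Pr(X=1)=\Pr(X=0)\cdot k^2/(d-2k+1)$ and the pointwise bound $\Pr(X=x)\lesssim \Pr(X=0)(k^2/d)^x/x!$.

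You instead compute $\Pr(X=1)$ as an explicit product. Your identity $\Pr(X=1)=\frac{k^2}{d}\prod_{i=1}^{k-1}\bigl(1-\frac{k-1}{d-i}\bigr)$ is in fact exact, so no Stirling is needed. You then control the upper tail by a union bound over the $x$-subsets of $\phi'$, which gives $\Pr(X\ge x)\le (2\delta)^x/x!$ directly. You can even get $\delta^x/x!$, since $(k-i)/(d-i)\le k/d$.

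Your route is more elementary and makes the constant bookkeeping in the tail bound transparent: the factor $2^{\lceil\log k\rceil}$ is visibly absorbed by $1/\lceil\log k\rceil!$. The paper's tail step writes the ratio only as $\lesssim k^2/d$ and raises it to the power $\log k$. That is legitimate only because of the $1/(x+1)$ factor in the exact ratio, which the display suppresses. The paper's ratio recursion, on the other hand, gives pointwise control of the entire mass function from a single identity.
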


\begin{proof}[Proof of \Cref{lem:hypergeometric-moment}]
We first notice that for $x\ge k^2/d$, since $k=o(\sqrt{d})$,
\begin{align}
    \frac{\Pr(X=x+1)}{\Pr(X=x)} &= \frac{(k-x)^2}{(x+1)(d-2k-x+1)}
    =\Big(1-\frac{k}{d}\Big)^2  \Big/ \Big(\frac{d}{k^2}+o\Big(\frac{d}{k^2}\Big)\Big)
    \lesssim \frac{k^2}{d}. \label{eq:hypergeo_ratio}
\end{align}
This immediately implies the tail bound:
\begin{align}
    \Pr(X\ge\log k) &\le  \sum_{j=\lceil \log k\rceil}^k  \Pr(X=j) \lesssim  \Big(\frac{k^2}{d}\Big)^{\log k}.
\end{align}
Next, for the moment $\EE[X^s]$, we first study the magnitude of $\PP(X=0)$, which, by Stirling's approximation, is given by
\begin{align}
    \Pr(X=0) &=  \frac{((d-k)!)^2 }{d!\cdot (d-2k)!}
    \simeq \frac {(d-k)^{2(d-k) + 1} \cdot e^{-2(d-k)}}{ d^{d+1/2} \cdot (d-2k)^{(d-2k) +1/2}\cdot  e^{-2(d-k)}} \\
    &=  \frac{(1-2k/d+k^2/d^2)^{d-k +1/2}}{ (1-2k/d)^{d-2k+1/2  }} \\
    &= \bigg(1+\frac{1}{(1-2k/d)\cdot d^2/k^2}\bigg)^{d-k+1/2} \bigg(1-\frac{2k}{d}\bigg)^k.
\end{align}
Since $k=o(\sqrt{d})$, we have $(1-2k/d)^k\simeq 1$.
Further applying the fact that $(1+1/m)^m = \Theta(1)$,
\begin{align}
    \Pr(X=0) &\simeq \bigg(1+ \frac{1}{(1-2k/d)\cdot d^2/k^2}\bigg)^{(1-2k/d)\cdot d^2/k^2 \cdot k^2/d}
    = \Theta(1).
\end{align}
As a consequence, using the first equality in \eqref{eq:hypergeo_ratio}, we can lower bound the expectation of $X^s$ by
\begin{align}
    \EE[X^s] &\ge  \Pr(X=1)
    = \Pr(X=0)\cdot \frac{k^2}{ d-2k+1}
    \gtrsim \frac{k^2}{d}.
\end{align}
For the upper bound, we again use the first equality in \eqref{eq:hypergeo_ratio} to get
\begin{align}
    \Pr(X=x+1) \leq \frac{k^2}{(x+1)(d-2k-x+1)}\cdot \Pr(X=x)
    \lesssim \frac{\Pr(X=x)}{x+1} \cdot \frac{k^2}{d}.
\end{align}
Recursive application of this inequality yields that $\PP(X=x) \lesssim \PP(X=0) \cdot (k^2/d)^x/x!$, and thus
\begin{align}
    \EE[X^s] &= \sum_{x=1}^k x^s\cdot \Pr(X=x)
    \le \Pr(X=0) \cdot \sum_{x=1}^k \frac{x^s}{x!} \Big(\frac{k^2}{d}\Big)^{x}
    \lesssim \frac{k^2}{d}.
\end{align}
Therefore, we conclude that $\EE[X^s] \simeq k^2/d$.
This completes the proof.
\end{proof}

\end{document}